\PassOptionsToPackage{dvipsnames}{xcolor}
\documentclass{article}

\PassOptionsToPackage{authoryear}{natbib}

 \usepackage[preprint]{neurips_2026}
\usepackage{bbm}
\usepackage{booktabs}
\usepackage{multirow}

\usepackage{microtype}
\usepackage{graphicx}
\usepackage{subcaption}
\usepackage{booktabs} %

\usepackage{amsmath}
\usepackage{nccmath}
\usepackage{comment}

\usepackage{tikz}
\usetikzlibrary{arrows}

\usetikzlibrary{calc} 

\usepackage{titlesec}
\usepackage{titletoc}

\usepackage{adjustbox}

\usepackage{multicol}

\usepackage{booktabs}         %

\usepackage{amsmath}          %
\usepackage{siunitx}          %
\usepackage{colortbl}

\usepackage{comment}
\usepackage[framemethod=TikZ]{mdframed}
\usepackage{makecell} 
\usepackage[normalem]{ulem}
\useunder{\uline}{\ul}{}

\usepackage[hidelinks]{hyperref}

\usepackage{enumitem}

\makeatletter
\newcommand{\shorteq}{\mathrel{\mkern0.2mu\mathpalette\shorteq@\relax\mkern0.2mu}}
\newcommand{\shorteq@}[2]{\scalebox{0.5}[1]{$\m@th#1=$}}

\usepackage[utf8]{inputenc}
\usepackage[most]{tcolorbox}
\usepackage{listings}
\usepackage{xcolor}
\usepackage{caption}
\usepackage{subcaption}

\makeatletter
\newcommand{\shortminus}{\mathrel{\mkern0.2mu\mathpalette\shortminus@\relax\mkern0.2mu}}
\newcommand{\shortminus@}[2]{\scalebox{0.5}[1]{$\m@th#1-$}}

\makeatletter
\newcommand{\shortnabla}{\mathrel{\mkern0.2mu\mathpalette\shortnabla@\relax\mkern0.2mu}}
\newcommand{\shortnabla@}[2]{\scalebox{0.8}[1]{$\m@th#1\nabla$}}

\usepackage{algorithm}
\usepackage{algorithm}                 %
\usepackage[noend]{algpseudocode}      %
\usepackage{algpseudocodex}    %

\usepackage{amsmath}
\usepackage{amssymb}
\usepackage{mathtools}
\usepackage{amsthm}

\usepackage[capitalize,noabbrev]{cleveref}

\usepackage{multicol}

\theoremstyle{plain}
\newtheorem{theorem}{Theorem}[section]
\newtheorem{proposition}[theorem]{Proposition}
\newtheorem{remark}[theorem]{Remark}
\newtheorem{lemma}[theorem]{Lemma}

\newtheorem{corollary}[theorem]{Corollary}
\theoremstyle{definition}
\newtheorem{definition}[theorem]{Definition}

 \usepackage{booktabs}
\usepackage{multirow}

\makeatletter
\newtheorem*{rep@theorem}{\rep@title}
\newcommand{\newreptheorem}[2]{%
\newenvironment{rep#1}[1]{%
 \def\rep@title{\bf #2 \ref{##1}}%
 \begin{rep@theorem}}%
 {\end{rep@theorem}}}
\makeatother

\newreptheorem{proposition}{Proposition}
\newreptheorem{corollary}{Corollary}
\newreptheorem{theorem}{Theorem}
\newreptheorem{lemma}{Lemma}
\newreptheorem{observation}{Observation}
\newreptheorem{remark}{Remark}

\mdfdefinestyle{highlightedBox}{
    linecolor=Periwinkle!40,          %
    backgroundcolor=Periwinkle!4,   %
    innertopmargin=5pt,       %
    innerbottommargin=5pt,    %
    roundcorner=5pt,           %
    innerleftmargin=7pt,
    innerrightmargin=7pt,
    rightmargin=-0pt,
    leftmargin=-0pt
}

\usepackage[most]{tcolorbox}
\colorlet{LightRed}{BrickRed!15!}
\colorlet{LightGreen}{ForestGreen!10!}
\colorlet{Lightgray}{gray!15!}
\tcbset{
        mygraybox/.style={
        on line, boxsep=4pt, left=0pt, right=0pt, top=0pt, bottom=0pt,colframe=white,,boxrule=0.1pt,colback=Lightgray, highlight math style={enhanced}
    }}
    \tcbset{
        mygreenbox/.style={
        on line, boxsep=4pt, left=0pt, right=0pt, top=0pt, bottom=0pt,colframe=white,,boxrule=0.1pt,colback=LightGreen, highlight math style={enhanced}
    }
}

\usepackage{stmaryrd}
\usepackage{trimclip}

\makeatletter
\DeclareRobustCommand{\shortto}{%
  \mathrel{\mathpalette\short@to\relax}%
}

\newcommand{\short@to}[2]{%
  \mkern2mu
  \clipbox{{.5\width} 0 0 0}{$\m@th#1\vphantom{+}{\shortrightarrow}$}%
  }
\makeatother

\usepackage{amsmath,amsfonts,bm}

\def\1{\bm{1}}

\tikzset{
  singlemathdoublearrow/.style={
    draw,
    line width=0.5pt,
    <->,
    >={[scale=0.6]Triangle} 
  }
}

\def\d{\mathrm{d}}

\DeclareMathAlphabet{\mathsfit}{\encodingdefault}{\sfdefault}{m}{sl}
\SetMathAlphabet{\mathsfit}{bold}{\encodingdefault}{\sfdefault}{bx}{n}

\def\gL{{\gL}}

\newcommand{\E}{\mathbb{E}}

\let\log\relax
\DeclareMathOperator{\log}{log}

\newcommand{\Var}{\mathrm{Var}}

\makeatletter
\newcommand{\bwd}{\mathpalette{\overarrowsmall@\leftarrowfill@}}
\newcommand{\overarrowsmall@}[3]{%
  \vbox{%
    \ialign{%
      ##\crcr
      #1{\smaller@style{#2}}\crcr
      \noalign{\nointerlineskip}%
      $\m@th\hfil#2#3\hfil$\crcr
    }%
  }%
}
\def\smaller@style#1{%
  \ifx#1\displaystyle\scriptstyle\else
    \ifx#1\textstyle\scriptstyle\else
      \scriptscriptstyle
    \fi
  \fi
}
\makeatother

\makeatletter
\newcommand{\fwd}{\mathpalette{\overrightarrowsmall@\rightarrowfill@}}
\newcommand{\overrightarrowsmall@}[3]{%
  \vbox{%
    \ialign{%
      ##\crcr
      #1{\smaller@style{#2}}\crcr
      \noalign{\nointerlineskip}%
      $\m@th\hfil#2#3\hfil$\crcr
    }%
  }%
}
\def\smaller@style#1{%
  \ifx#1\displaystyle\scriptstyle\else
    \ifx#1\textstyle\scriptstyle\else
      \scriptscriptstyle
    \fi
  \fi
}
\makeatother

\usepackage{fontawesome5}
\hypersetup{
   breaklinks=true,   
   colorlinks=true,  
   linkcolor=BrickRed,
   citecolor=Blue,
   urlcolor=Blue
}
\definecolor{red}{HTML}{ca0020}
\definecolor{lightred}{HTML}{f4a582}
\definecolor{lightblue}{HTML}{92c5de}
\definecolor{green}{HTML}{008837}
\definecolor{blue}{HTML}{2c7bb6}
\usepackage{tcolorbox}
\usepackage[threshold=1]{csquotes}
\definecolor{myred}{HTML}{e83722}
\newtcolorbox{coloredquote}{
  colback=gray!0,      %
  colframe=gray!0,     %
  left=6pt,             %
  right=6pt,            %
  top=6pt,              %
  bottom=6pt,           %
  breakable             %
}
\usepackage{wrapfig,lipsum,booktabs}
\usepackage{cancel}

\usepackage{algorithm}
\usepackage{svg}
\usepackage[hidelinks]{hyperref}
\usepackage{fontawesome5}
\newcommand{\paperlinks}[2]{%
\vspace{-0.35em}
\begin{center}
\faGithub\ {\color{MidnightBlue}\href{#1}{\textsc{code}}}
\hspace{1.2em}
\faGlobe\ {\color{MidnightBlue}\href{#2}{\textsc{website}}}
\end{center}
\vspace{-0.6em}
}

\crefname{proposition}{proposition}{propositions}
\Crefname{proposition}{Proposition}{Propositions}

\usepackage{makecell}
\usepackage[most]{tcolorbox}
\tcbuselibrary{theorems,breakable}
\usepackage{tikz}
\usetikzlibrary{arrows.meta,calc}
\newtcbtheorem[number within=section]{boxedprop}{Proposition}{
  enhanced,
  breakable,
  colback=LimeGreen!3,
  colframe=LimeGreen!20,
  coltitle=black,
  colbacktitle=LimeGreen!20,
  boxrule=0.7pt,
  arc=3pt,
  top=3pt,
  bottom=3pt,
  left=4pt,
  right=4pt,
  fonttitle=\bfseries,
  boxed title style={
    size=small,
    boxrule=0pt,
    colframe=LimeGreen!75,
    colback=LimeGreen!75
  },
  before skip=5pt,
  after skip=5pt
}{prop}
\newcolumntype{Y}{>{\raggedright\arraybackslash}X}
\defcitealias{he2025feat}{He and Du et al., 2025}
\usepackage{wrapfig}
\usepackage{tabularx}
\title{Free energy Estimation on Any State Space}

\author{%
Jiajun He$^{1,\dagger}$ \And Zijing Ou$^{2}$    \And Francisco Vargas$^{3}$   \And Yingzhen Li$^{2}$  \NEWLINE José Miguel Hernández-Lobato$^{1}$  \And Carles Domingo-Enrich$^{4}$   \And Yuanqi Du$^{4,\dagger}$ \AFF 
$^1$University of Cambridge, $^2$Imperial College London, \\$^3$Xaira Therapeutics, $^4$Microsoft Research New England\CORR
$^\dagger$Corresponding to \texttt{jh2383@cam.ac.uk} and \texttt{yuanqidu@microsoft.com}.
}

\begin{document}

\maketitle

\begin{abstract}
Free energy estimation is a fundamental yet challenging problem, from physics to statistics. Classical approaches rely on thermodynamic transformations, ranging from direct estimation, quasistatic integration, to finite-time averaging. Recent work \citepalias{he2025feat} learns neural transports to significantly accelerate the efficiency in the finite-time regime. 
In this paper, we generalize this framework to arbitrary state spaces. 
Building on this view, we develop a generalized neural transport learning approach for efficient estimation.
Experiments validate the effectiveness and efficiency of the proposed method beyond continuous settings, extending to discrete and multimodal spaces as well as autoregressive settings.
Beyond free energy estimation, we establish algebraic identities and reveal a group-theoretic structure linking infinitesimal time reversal and generalized Doob's $h$-transforms, showing that their compositions form a generalized dihedral group.
\end{abstract}

\section{Introduction}

Free energy estimation is a fundamental problem across physics, statistics, and machine learning. Defined as the negative logarithm of the normalization constant for an unnormalized density $\tilde{\pi}$,
\begin{equation}
\label{eq:fe}
F = - \log Z = -\log \int_\mathcal{X} \tilde{\pi}(x) \text{d}x.
\end{equation}
\looseness=-1
It plays a central role in characterizing probabilistic models and underlies a wide range of applications from molecular thermodynamics to Bayesian inference \citep{lelievre2010free,tuckerman2023statistical}. 

However, brute force integration of \Cref{eq:fe} in high dimensions is generally intractable.
A standard strategy is to recast the problem as a Monte Carlo (MC) estimation for relative free energy.
Let
$\tilde P$ and $\tilde Q$ be two finite, nonnegative measures on $\mathcal X$, with normalizing constants
$Z_P=\tilde P(\mathcal X)=\int_{\mathcal{X}} \tilde{P}(\text{d}x)$ and $Z_Q=\tilde Q(\mathcal X)=\int_{\mathcal{X}} \tilde{Q}(\text{d}x)$, 
and normalized
probability measures $P(\mathrm d x)=\tilde P(\mathrm d x)/Z_P$ and
$Q(\mathrm d x)=\tilde Q(\mathrm d x)/Z_Q$. Assuming $\tilde Q(\text{d}x)\ll \tilde P(\text{d}x)$,
the relative free energy is
\begin{equation}
\label{eq:general_fe}
\Delta F
= -\log (Z_Q/Z_P)
= -\log \mathbb E_{X\sim P}
\left[
{\mathrm d\tilde Q}/{\mathrm d\tilde P}(X)
\right].
\end{equation}
Here, $\mathrm d\tilde Q/\mathrm d\tilde P$ denotes the Radon-Nikodym
derivative between the two unnormalized measures. 
We keep this formulation
measure-theoretic, since the state space $\mathcal X$ does not always need to have a Lebesgue measure, and $P$ and $Q$ may therefore
not admit a canonical notion of density, while \Cref{eq:fe} is only the special case where $Z_P=1$ and $Q$ admit density to Lebesgue measure.
This perspective gives rise to a broad family of free energy estimators \citep{lelievre2010free}, ranging from direct sample reweighting to finite-time path averaging. Representative examples include free energy perturbation (FEP) and the Jarzynski equality (JE), which we discuss in \Cref{sec:is}.

Unfortunately, both FEP and JE face well-known limitations. 
Direct reweighting becomes unreliable in high dimensions when the overlap between the reference and target distributions is small, often necessitating a sequence of intermediate distributions to bridge the gap.
Finite-time nonequilibrium estimators such as JE typically suffer from high variance unless the underlying process is close to the quasistatic limit. In practice, this favorable regime is only attained when the transformation is infinitely slow, or when special time-reversible structures can be exploited.

To address this challenge, \citet{vaikuntanathan2008escorted} introduced an extension of JE based on a controlled transport, now known as the escorted Jarzynski estimator (EJE), to reduce estimation variance. 
More recently, this idea has attracted renewed attention in machine learning, where computational methods have been developed to learn such escorts efficiently \citep{vargas2023transport,zhong2024time,albergo2024nets}. These methods seek to construct a transport through a sequence of prescribed intermediate states. 
Existing approaches mainly differ in how these intermediate states are defined: some specify them through their log densities, while others define them through interpolating paths in sample space. 
Among these methods, FEAT \citepalias{he2025feat} provides a notable example of learning transport-based finite-time estimators efficiently by interpolating in the sample space.

However, these recent advances mostly focus on continuous state spaces with SDE-based transport, while the concept of free energy can appear in any state space.
To close this gap, in this paper,
\begin{enumerate}[leftmargin=*]
    \item We extend JE and EJE beyond the classical thermodynamic definition to general Markov processes, while preserving their underlying thermodynamic interpretation (\Cref{sec:EJE_crook}).
    \item We describe frameworks to learn efficient neural transport between distributions over arbitrary state spaces and to use them for the estimation of free energy, generalizing the central principle behind the success of FEAT estimators (\Cref{sec:learn,sec:inf_est_deltaF}).
    \item We empirically verify our proposed framework on different modalities, from discrete to multimodal distributions, and further broaden the notion of non-equilibrium transport to autoregressive models, substantially expanding the scope of these methods beyond traditional thermodynamic and continuous-state settings (\Cref{sec:exa_exp}).
    \item Beyond free energy estimation, we reveal a broader algebraic structure connecting infinitesimal time reversal and generalized Doob's $h$-transforms.
In particular, any finite composition of these maps reduces to either a single generalized Doob transform or a single infinitesimal reversal.
This property exposes an underlying group-theoretic structure: the transformations generated by these two operations form a group isomorphic to a generalized dihedral group (\Cref{sec:reversal_doob}).
\end{enumerate}

\vspace{-4pt}
\section{Background}
\label{sec:bg}
\vspace{-4pt}

In this section, we review the nonequilibrium approaches to free energy estimation based on finite-time stochastic transformations between two distributions. Without loss of generality, from now on, we assume the unnormalized density is a Boltzmann distribribution $\tilde{\pi}(x) \propto \exp(-U(x))$ to recover the thermodynamic picture of free energy. 
We further ignore the Boltzmann constant, i.e. $k_B T =1$.\vspace{-3pt}

\paragraph{Jarzynski Equality and Crooks Fluctuation Theorem.}
The Jarzynski equality  \citep[JE, ][]
{jarzynski1997nonequilibrium} expresses the free energy difference as an exponential average of work:\vspace{-3pt}
\begin{equation}
\label{eq:je}
\exp(-\Delta F)
=
\mathbb{E}_{\fwd{{P}}}\!\left[\exp\!\left(-W(X_{[0, 1]})\right)\right],
\qquad
W(X_{[0, 1]}) = \int_0^1 \partial_t U_t(X_t)\,\mathrm{d}t,
\end{equation}
where \(\fwd{{P}}\) denotes the path measure of a finite-time stochastic process (annealed Langevin) connecting two systems, and  \(W \) is the accumulated work along the process.
The Crooks fluctuation theorem \citep[CFT,][]{crooks1999entropy} generalizes JE by relating the forward and backward processes with work:
\begin{equation}
\label{eq:crooks}
{\mathrm{d}\bwd{{Q}}}/{\mathrm{d}\fwd{P}}(X_{[0, 1]})
=
\exp\!\left(-W(X_{[0, 1]})+\Delta F\right)
\end{equation}
with JE recovered when taking an expectation on both sides of the equality.

\paragraph{Escorted JE (EJE) and Crooks.}
To reduce the variance of JE estimators, \citet{vaikuntanathan2008escorted} introduced the escorted Jarzynski estimator, which augments the dynamics with an auxiliary transport $v_t$, or \emph{escort}, that guides trajectories towards the prefixed intermediate states. The resulting estimator preserves the JE form with a modification to the work,
\begin{equation}
\label{eq:escorted_je}
\exp(-\Delta F)
=
\mathbb{E}_{\fwd{P}^{v}}\!\left[\exp\!(-\widetilde{W})\right], \; \widetilde{W} = \int_0^1 \Big(\partial_t U_t   - \nabla \cdot v_t  + \nabla U_t  \cdot v_t \Big)\text{d}t  
\end{equation}
where \(\fwd{P}^{v}\) denotes the escorted forward path measure and \(\widetilde{W}\) is a generalization of the work. 
An analogous Crooks fluctuation theorem also holds,
$
{\mathrm{d}\bwd{Q}^{v}}/{\mathrm{d}\fwd{P}^{v}}(X_{[0, 1]})
=
\exp\!\left(-\widetilde{W}(X_{[0, 1]})+\Delta F\right).
$\vspace{-3pt}

\paragraph{Generator of Markov Process.}
Our work mainly uses generators to characterize Markov processes, so we
briefly provide their intuition here; see \Cref{app:assumption_definitation} for a more detailed
discussion.
For a time-inhomogeneous Markov process, the time-dependent generator is the infinitesimal object that determines the local evolution of the process.
More precisely, if $P_{s,t}$ denotes the Markov transition operator from time $s$ to $t$, then for a small time increment $\delta > 0$, 
\begin{align}
    P_{t,t+\delta} f(x)
    =
    f(x) + \delta\,\mathcal L_t f(x) + o(\delta).
\end{align}
Thus, $\mathcal L_t f(x)$ represents the instantaneous rate of change of the observable
$f$. By adjointness\footnote{We consider the adjoint with respect to the Lebesgue measure, $\int (\mathcal L_t f)(x)\,\rho(x)  \mathrm d x
=
\int f(x)\,(\mathcal L_t^\dagger \rho)(x) \mathrm d x.$},
the adjoint operator $\mathcal L_t^\dagger$ describes the corresponding infinitesimal evolution of
densities.

\section{Generalized Escorted Jarzynski and Crooks Fluctuation Theorem}\label{sec:EJE_crook}
In this section, we present the escorted Jarzynski equality and the Crooks fluctuation theorem for a general Markov process, aiming to estimate the free-energy difference between $U_A$ and $U_B$.
We start with a (time-inhomogeneous) continuous-time Markov process $(X_t)_{t\ge 0}$ with generator $(\mathcal L_t)_{t\ge 0}$.
Denote the law of $X_t$ as $p_t$.
The following conclusion generalizes the escorted JE for the free energy difference between $U_A$ and $U_B$:

\begin{proposition}[Generalized Escorted Jarzynski Equality]\label{prop:jarzynski}
Let \((U_t)_{t\in[0,1]}\) be an energy path interpolating between $U_A=:U_0$ and $U_B=:U_1$. Define
\begin{align}
    \gamma_t(x) := e^{-U_t(x)},
    \qquad
    Z_t := \int \gamma_t(x)\d x,
    \qquad
    \pi_t(x) := {\gamma_t(x)}/{Z_t}.
\end{align}
Let \(\mathcal L_t^F\) be the generator of a forward continuous-time Markov process, with adjoint \((\mathcal L_t^F)^\dagger\). Let \((X_t)_{t\in[0,1]}\) be the corresponding process whose marginal law \(p_t\) satisfies
\begin{align}\label{eq:P}
    \partial_t p_t = (\mathcal L_t^F)^\dagger p_t,
    \qquad
    p_0 = \pi_A,
    \qquad
    \pi_A \propto e^{-U_A},
\end{align}
and denote its path law by \(\fwd{P}\). Then
\begin{align}\label{eq:jarzynski}
    \Delta F
    =
    -\log {Z_B}/{Z_A}
    =
    -\log \E_{\fwd{P}}\!\left[\exp({-\widetilde{W}(X_{[0,1]})})\right],
\end{align}
where the ``generalized" work is defined as 
\begingroup
\refstepcounter{equation}\label{eq:general_work_cont}
\setlength{\fboxsep}{0pt}

\noindent
\colorbox{LimeGreen!25}{%
  \makebox[\textwidth][l]{%
    \makebox[\dimexpr\textwidth-4em\relax][c]{%
      $\displaystyle
      \widetilde{W}(X_{[0,1]})
      :=
      \underbrace{\int_0^1 \partial_t U_t(X_t)\,\mathrm{d}t}_{\mathrm{work}}
      +
      \underbrace{\int_0^1 \frac{((\mathcal L_t^F)^\dagger \gamma_t)(X_t)}{\gamma_t(X_t)}\,\mathrm{d}t}_{\mathrm{escorting}}.
      $}%
    \makebox[4em][r]{\textup{(\theequation)}}%
  }%
}
\par
\endgroup

\end{proposition}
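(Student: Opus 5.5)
The plan is a Feynman--Kac argument. I would introduce the weighted process $A_t := \int_0^t \big(\partial_s U_s(X_s) + ((\mathcal L_s^F)^\dagger\gamma_s)(X_s)/\gamma_s(X_s)\big)\,\mathrm ds$ and the time-dependent family of weighted measures
\begin{align*}
    \mu_t(f) := \E_{\fwd{P}}\!\left[f(X_t)\,e^{-A_t}\right].
\end{align*}
The target is to show that $\mu_t$ has Lebesgue density $\gamma_t/Z_A$ for every $t\in[0,1]$. Taking $f\equiv 1$ at $t=1$ then gives $\E_{\fwd{P}}[e^{-\widetilde W}] = Z_B/Z_A$, which is \Cref{eq:jarzynski}. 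At $t=0$ the claim holds because $A_0=0$ and $p_0=\pi_A=\gamma_0/Z_A$.

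\textbf{Step 1: evolution equation for $\mu_t$.} By the Feynman--Kac formula for the time-inhomogeneous Markov process with generator $\mathcal L_t^F$ and potential $V_t(x) := \partial_t U_t(x) + ((\mathcal L_t^F)^\dagger\gamma_t)(x)/\gamma_t(x)$, for test functions $f$ we have
\begin{align*}
    \tfrac{\mathrm d}{\mathrm dt}\mu_t(f) = \mu_t\big(\mathcal L_t^F f - V_t f\big).
\end{align*}
To derive this, I would apply Dynkin's formula to $f(X_t)$ and the product rule to $f(X_t)e^{-A_t}$; $A_t$ has finite variation, so no cross-variation term appears. In adjoint form, if $\mu_t$ has density $m_t$, this reads
\begin{align*}
    \partial_t m_t = (\mathcal L_t^F)^\dagger m_t - V_t m_t .
\end{align*}

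\textbf{Step 2: verify $m_t=\gamma_t/Z_A$ is a solution.} Substituting, the left-hand side is $\partial_t\gamma_t/Z_A = -\partial_tU_t\,\gamma_t/Z_A$. The right-hand side is
\begin{align*}
    \big((\mathcal L_t^F)^\dagger\gamma_t - \partial_tU_t\,\gamma_t - (\mathcal L_t^F)^\dagger\gamma_t\big)/Z_A = -\partial_tU_t\,\gamma_t/Z_A ,
\end{align*}
so the two sides agree. The escorting term is designed exactly to cancel the transport of $\gamma_t$ by the dynamics, and the work term accounts for the explicit time dependence of the energy.

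\textbf{Step 3: uniqueness.} The initial conditions match, so uniqueness of solutions of this linear (weighted Kolmogorov forward) equation gives $\mu_t = \gamma_t\,\mathrm dx/Z_A$. Uniqueness is the main obstacle. It needs regularity and integrability assumptions: $V_t$ bounded or suitably controlled, $\gamma_t$ in the domain of $(\mathcal L^F_t)^\dagger$, and well-posedness of the martingale problem. These are presumably the assumptions collected in \Cref{app:assumption_definitation}.

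To avoid a PDE uniqueness argument, I would first try a duality argument. Fix $g$ and let $u_s$ solve the backward equation
\begin{align*}
    \partial_s u_s + \mathcal L_s^F u_s - V_s u_s = 0, \qquad u_1 = g .
\end{align*}
Then $s\mapsto u_s(X_s)e^{-A_s}$ is a martingale, and $s\mapsto\int u_s\gamma_s\,\mathrm dx$ is constant by the computation in Step 2 and adjointness. Comparing the two conserved quantities at $s=0$ and $s=1$ yields $\E[g(X_1)e^{-A_1}] = \int g\gamma_1\,\mathrm dx/Z_A$. Taking $g\equiv 1$ gives exactly $Z_B/Z_A$.

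For discrete state spaces the same computation goes through with sums in place of integrals and the transposed rate matrix in place of $(\mathcal L_t^F)^\dagger$.
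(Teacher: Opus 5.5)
Your argument is correct, at the same formal level of regularity as the paper (which also leaves well-posedness and integrability implicit), but it takes a different route.

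The paper never proves the Jarzynski identity directly. It first proves the pathwise Crooks relation. It takes the backward generator $\mathcal L_t^B$ to be the infinitesimal $\gamma_t$-reversal of $\mathcal L_t^F$, with marginal $q_t$ and $q_1=\pi_B$. It identifies the time reversal of that backward process with the generalized Doob transform of $\mathcal L_t^F$ for $h_t=q_t/\pi_t$, and computes $\mathrm d\bwd{Q}/\mathrm d\fwd{P}$ from the Doob $h$-transform lemma. The marginal-prescription theorem then rewrites the weight $h_t^{-1}(\partial_t+\mathcal L_t^F)h_t$ as $V_t+\partial_t\log Z_t$. Jarzynski follows by integrating the Radon--Nikodym derivative against $\fwd{P}$.

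You instead use $V_t$ as a Feynman--Kac killing rate and show that the reweighted marginal is $\gamma_t/Z_A$. The two proofs are closer than they look. A short computation from $\partial_t q_t=-(\mathcal L_t^B)^\dagger q_t$ shows that $u_s=Z_B\,q_s/\gamma_s=(Z_B/Z_s)\,h_s$ solves your backward equation with $g\equiv 1$. So your dual function is the paper's Doob function up to a deterministic factor. You only use that $u_s(X_s)e^{-A_s}$ is a martingale, which is an expectation-level statement; the paper uses the same function to tilt the whole path measure.

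What your route buys:
\begin{itemize}
\item It needs only Dynkin's formula and the one-line identity $\partial_t\gamma_t=(\mathcal L_t^F)^\dagger\gamma_t-V_t\gamma_t$.
\item It requires no backward process, no positivity of $q_t$, and no path-space change of measure.
\item It gives the time-resolved identity $\E_{\fwd{P}}[f(X_t)e^{-A_t}]=Z_A^{-1}\int f\gamma_t\,\mathrm dx$ for every $t$, not only $t=1$.
\end{itemize}

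What the paper's route buys is the stronger pathwise statement $\mathrm d\bwd{Q}/\mathrm d\fwd{P}=\exp(-\widetilde W+\Delta F)$. The rest of the paper relies on this: the work is computed as a forward--backward Radon--Nikodym derivative, the backward and BAR estimators are built on it, and so are the dissipation/KL and variance remarks.
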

Similarly, we can generalize the Crooks fluctuation theorem as follows:
\begin{proposition}[Generalized Crooks Fluctuation Theorem]\label{prop:crooks}
Define the generator and its adjoint for a backward continuous-time Markov process\vspace{-5pt}
\begingroup
\refstepcounter{equation}\label{eq:bwd_crooks_cont}
\setlength{\fboxsep}{0pt} %

\noindent
\colorbox{Periwinkle!25}{%
  \makebox[\textwidth][l]{%
    \makebox[\dimexpr\textwidth-4em\relax][c]{%
      {$\displaystyle
      \mathcal L_t^B f
      :=
      \frac{1}{\gamma_t}(\mathcal L_t^F)^\dagger(\gamma_t f)
      -
      \frac{1}{\gamma_t}(\mathcal L_t^F)^\dagger(\gamma_t)\,f,
      \qquad
      (\mathcal L_t^B)^\dagger p
      =
      \gamma_t\,\mathcal L_t^F\!\left(\frac{p}{\gamma_t}\right)
      -
      \frac{p}{\gamma_t}\,(\mathcal L_t^F)^\dagger \gamma_t
      $}%
    }%
    \makebox[4em][r]{\textup{(\theequation)}}
  }%
}
\par
\endgroup
Let \(\bwd{Q}\) be the path law of the backward process with marginal law \(q_t\) satisfying
\begin{align}
    \partial_t q_t = -(\mathcal L_t^B)^\dagger q_t,
    \qquad
    q_1 = \pi_B,
    \qquad
    \pi_B \propto e^{-U_B}.
\end{align}
The RND between the forward ($\fwd{P}$ in \Cref{eq:P}) and this backward path laws is
\begin{align}\label{eq:crooks_eq}
    \frac{\mathrm{d}\bwd{Q}}{\mathrm{d}\fwd{P}}(X_{[0,1]})
    =
    \exp \bigl(- \widetilde{W}(X_{[0,1]}) + \Delta F\bigr).
\end{align}
\end{proposition}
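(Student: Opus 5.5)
The plan is to recognize $\mathcal L_t^B$ as the time reversal of a suitably Feynman--Kac-weighted forward process, and then read off the path RND. Write $\phi_t := ((\mathcal L_t^F)^\dagger\gamma_t)/\gamma_t$ for the escorting rate. Define the Doob-type operator
\begin{align*}
\mathcal K_t f := \frac{1}{\gamma_t}(\mathcal L_t^F)^\dagger(\gamma_t f).
\end{align*}
By definition, $\mathcal L_t^B = \mathcal K_t - \phi_t$. The identity $\mathcal K_t 1 = \phi_t$ shows that $\mathcal L_t^B 1 = 0$, so $\mathcal L_t^B$ is conservative and defines a genuine Markov generator. Its adjoint is $\gamma_t \mathcal L_t^F(p/\gamma_t) - p\,\phi_t$, which is the displayed formula in \cref{prop:crooks}. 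This step is a direct check.

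Next, I would compute the RND on finite-dimensional marginals at times $0=t_0<\dots<t_n=1$ and pass to the limit $\max_k|t_{k+1}-t_k|\to 0$. Let $\delta = t_{k+1}-t_k$. The forward transition kernel is $P_{t,t+\delta}(x,\mathrm dy)$. The short-time expansion of the backward kernel $Q_{t+\delta,t}(y,\mathrm dx)$ is given by $\mathcal L_t^B$. Duality of the adjoint gives, to first order in $\delta$, the detailed-balance-type relation
\begin{align*}
\gamma_{t}(y)\,Q_{t+\delta,t}(y,\mathrm dx)\,\mathrm dy \;=\; \gamma_t(x)\,e^{-\delta\,\phi_t(x)}\,P_{t,t+\delta}(x,\mathrm dy)\,\mathrm dx\,\bigl(1+o(\delta)\bigr).
\end{align*}
To see this, test against $f(x)g(y)$. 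The left side gives $\int g\gamma_t f + \delta\int g\gamma_t \mathcal L_t^B f$. The right side gives $\int f\gamma_t g + \delta\int f\gamma_t(\mathcal L^F_t g - \phi_t g)$. These agree because
\begin{align*}
\int g\,\gamma_t\,\mathcal K_t f = \int g\,(\mathcal L_t^F)^\dagger(\gamma_t f) = \int \gamma_t f\,\mathcal L_t^F g .
\end{align*}
Multiplying these relations along the partition telescopes the $\gamma$'s. The interior factors of the form $\gamma_{t_{k+1}}/\gamma_{t_k}$, evaluated at the same point, produce $\exp(-\int\partial_t U_t\,\mathrm dt)$. The endpoint ratio $\pi_B(X_1)/\pi_A(X_0)$ is combined with these factors using $\pi_A=\gamma_0/Z_A$ and $\pi_B=\gamma_1/Z_B$. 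The product of the $e^{-\delta\phi}$ terms converges to $\exp(-\int_0^1\phi_t(X_t)\,\mathrm dt)$. Altogether this gives $\frac{\mathrm d\bwd Q}{\mathrm d\fwd P}=\frac{Z_A}{Z_B}e^{-\widetilde W}=e^{-\widetilde W+\Delta F}$.

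A cleaner alternative to this limit, which I would try first, is to prove the identity through a martingale / Feynman--Kac argument. The goal is to show $\E_{\fwd P}[\,G(X)\,e^{-\widetilde W}\,]\,Z_A/Z_B = \E_{\bwd Q}[G]$ for cylinder functionals $G$. This reduces, by induction on the number of time points, to the following single-interval statement. The function $u_s(x):=\E_{\fwd P}[\,g(X_t)\,e^{-\int_s^t(\partial_r U_r+\phi_r)\mathrm dr}\mid X_s=x]$ solves the backward equation
\begin{align*}
\partial_s u = -(\mathcal L_s^F - \partial_sU_s - \phi_s)u,
\end{align*}
while $\gamma_s q_s$-type densities evolve by the adjoint of the same operator. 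Differentiating $s\mapsto \int \gamma_s(x)\,u_s(x)\,\rho_s(x)\,\mathrm dx$, with $\rho_s$ the backward density ratio, then gives zero. Taking $G\equiv 1$ recovers \cref{prop:jarzynski}, which makes the two statements consistent.

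The main obstacle is rigor rather than algebra. Justifying the passage from finite-dimensional kernels to path space for a general (possibly jump, discrete, or hybrid) process requires several conditions. First, $\gamma_t$ and $\gamma_t f$ must lie in the domain of $(\mathcal L_t^F)^\dagger$. Second, $\phi_t$ must be locally integrable along paths so that the Feynman--Kac weight is well defined. Third, the backward martingale problem must be well posed, so that the law $\bwd Q$ is uniquely determined by $\mathcal L^B_t$. I would state these as standing assumptions, presumably those of \Cref{app:assumption_definitation}. Under those assumptions the Feynman--Kac route avoids controlling the $o(\delta)$ errors uniformly.
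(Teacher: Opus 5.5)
Your route differs from the paper's. The paper takes no limit. By the double-reversal identity (\Cref{thm:dual_reversal_doob}), the time reversal of $\mathcal L^B_t$ with respect to its own marginal $q_t$ is the generalized Doob transform of $\mathcal L^F_t$ with $h_t=q_t/\pi_t$. The paper then imports the exact $h$-transform path RND (\Cref{thm:conservative_doob_h_transform}) and uses \Cref{thm:prescribing_h_transform_marginal} to rewrite $h_t^{-1}(\partial_t+\mathcal L^F_t)h_t=\pi_t^{-1}\bigl((\mathcal L^F_t)^\dagger\pi_t-\partial_t\pi_t\bigr)=\phi_t+\partial_tU_t+\partial_t\log Z_t$. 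The boundary factors cancel because $h_1=1$ and $q_0/p_0=h_0$.

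Your telescoping argument is essentially the continuum limit of Proposition~\ref{prop:discretetime} together with \eqref{eq:generator_expansion}, which the paper uses only as intuition. The algebra is right, but the step has the weakness you suspect. Testing against $f(x)g(y)$ only shows that the two one-step measures agree weakly to first order. That property depends on the kernels only through their generators, so it does not justify multiplying $\sim 1/\delta$ pointwise factors $(1+o(\delta))$. For example, Euler--Maruyama kernels for the diffusion case satisfy the same weak relation. Yet their one-step log-ratios differ from $-\delta\phi_t$ by fluctuating $O(\delta)$ terms such as $\tfrac12\xi^\top\nabla^2U_t\,\xi$ with $\xi=y-x$, which sum to the right limit only through a quadratic-variation argument.

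The Feynman--Kac route is the right repair, but as sketched it pairs the wrong objects. Set $V_t:=\partial_tU_t+\phi_t$.

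\begin{itemize}
\item It is $\gamma_s$ itself, not ``$\gamma_sq_s$'', that solves the adjoint equation $\partial_s\gamma_s=(\mathcal L^F_s-V_s)^\dagger\gamma_s$. This follows at once from $(\mathcal L^F_s)^\dagger\gamma_s=\phi_s\gamma_s$.
\item The ratio $\rho_s=q_s/\gamma_s$ solves the same backward equation $\partial_s\rho_s=-(\mathcal L^F_s-V_s)\rho_s$ as your $u_s$, with $\rho_1=1/Z_B$. In the paper's notation, $\rho_s=h_s/Z_s$.
\item Hence $\int\gamma_su_s\rho_s\,\mathrm dx=\int u_sq_s\,\mathrm dx$ is not conserved. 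Already for $\mathcal L^F\equiv0$ it equals $\int g\,\pi_B\,\gamma_t/\gamma_s\,\mathrm dx$, which depends on $s$.
\end{itemize}

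The third factor must instead be the backward semigroup applied to a test function, $a_r:=Q_{r,s}f$ for $r\ge s$, which satisfies $\partial_ra_r=\mathcal L^B_ra_r$. Write $K_{r,t}g:=\E_{\fwd{P}}\bigl[g(X_t)e^{-\int_r^tV_\tau\mathrm d\tau}\mid X_r=\cdot\,\bigr]$. Your own adjoint computation then gives
\begin{align*}
\frac{\mathrm d}{\mathrm dr}\int\gamma_r\,a_r\,K_{r,t}g\,\mathrm dx=\int\gamma_r\,a_r\,K_{r,t}g\,\bigl(V_r-\partial_rU_r-\phi_r\bigr)\,\mathrm dx=0.
\end{align*}
This yields the exact kernel identity $\gamma_s(x)K_{s,t}(x,\mathrm dy)\,\mathrm dx=\gamma_t(y)Q_{t,s}(y,\mathrm dx)\,\mathrm dy$, which is the finite-$\delta$ form of your short-time relation. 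It telescopes exactly on every partition via $\pi_A=\gamma_0/Z_A$ and $\pi_B=\gamma_1/Z_B$. No $o(\delta)$ control is needed, and $q_t$ enters only through $q_1=\pi_B$.

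Compared with the paper, this corrected version is self-contained: it effectively proves the needed $h$-transform RND in this instance, and it contains Jarzynski as the case $G\equiv1$. The paper's proof is shorter once the lemma is granted, and it exhibits the reversal/Doob structure of \Cref{sec:reversal_doob}.
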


\begin{remark}\label{remark:special_case}We note two special cases of Propositions~\ref{prop:jarzynski} and~\ref{prop:crooks}.
\begin{enumerate}[leftmargin=*]
    \item When the marginal of the forward process satisfies $p_t=\pi_t$, the integrand in \Cref{eq:general_work_cont} becomes constant:
    $\partial_t U_t(X_t)
        +
       \gamma_t^{-1}(X_t){((\mathcal L_t^F)^\dagger \gamma_t)(X_t)}
        =
        \partial_t U_t(X_t)
        +
        {p_t^{-1}(X_t)}{\partial_t p_t(X_t)}
        =
        -\partial_t \log Z_t$. 
    Moreover, the backward process in \Cref{eq:bwd_crooks_cont} is exactly the \emph{time-reversal} of the forward process, and its marginal satisfies $ 
        q_t = p_t = \pi_t $. 
    This corresponds to the equilibrium setting.
    Also, when $\pi_t$ is marginal, \Cref{eq:bwd_crooks_cont} recovers Nelson's relation \citep{9a350bc3-5552-3c07-bec5-2119235ec937} in generator form.

    \item When the forward dynamics is stationary at $\pi_t$ for each $t$, namely, $(\mathcal L_t^F)^\dagger \pi_t = 0$,  the escort term vanishes, which reduces to the standard Jarzynski equality. In addition, the backward generator is also stationary at $\pi_t$, $(\mathcal L_t^B)^\dagger \pi_t = 0$,   yielding the standard Crooks fluctuation theorem.
\end{enumerate}
\end{remark}
Additionally, the generalized work in \Cref{eq:general_work_cont} is not specific to a single forward-backward pair. 
In particular, there are infinitely many forward-backward pairs that give rise to the same functional:
\begin{remark}\label{remark:non-uniqueness}
    If $\mathcal K_t$ is any generator satisfying detailed balance with respect to $\pi_t$, namely $\mathcal K_t^\dagger(\pi_t f)=\pi_t \mathcal K_t f$ for all $f$, then for any $\alpha\in\mathbb R$ we can define modified dynamics by $\partial_t p_t=(\mathcal L_t^F+\alpha\mathcal K_t)^\dagger p_t$ and $\partial_t q_t=-(\mathcal L_t^B+\alpha\mathcal K_t)^\dagger q_t$, and the generalized work functional remains unchanged.
\end{remark}

In the following, we present two perspectives that provide intuition for the conclusion above. 
First, we present the corresponding discrete-time formulation,  which makes clear what the escort represents and how the choice of backward kernel should be interpreted.
Second, we interpret the result through the generalized Doob's $h$-transform, which provides a path-weight perspective to sketch the proof.

\paragraph{\faLightbulb[regular] Generalized EJE and Crooks in Discrete time} 
We now present the discrete-time EJE and Crooks. We state a compact version here and provide the full version in
Proposition~\ref{prop:discretetime_restate}.
\begin{proposition}[Discrete-time Jarzynski and Crooks, short version]\label{prop:discretetime}
Let $\{U_n\}_{n=0}^N$ be an discrete-time energy path, and define
$\gamma_n, Z_n, 
    \pi_n(x) 
$ in the same way as in  Proposition~\ref{prop:jarzynski}.
Assuming the forward processes have transition kernels $\{P_{F,n}\}_{n=1}^N$, the discrete-time generalized work is given by 
\begingroup
\refstepcounter{equation}\label{eq:general_work_disc}
\setlength{\fboxsep}{0pt}

\noindent
\colorbox{LimeGreen!25}{%
  \makebox[\textwidth][l]{%
    \makebox[\dimexpr\textwidth-4em\relax][c]{%
      $\displaystyle
      \widetilde{W}(X_{0:N})
      :=
      \sum_{n=1}^N
      \left[
          U_n(X_n)-U_{n-1}(X_n)
          +
          \log \frac{(P_{F,n}\#\pi_{n-1})(X_n)}{\pi_{n-1}(X_n)}
      \right]
      $}%
    \makebox[4em][r]{\textup{(\theequation)}}%
  }%
}
\par
\endgroup
Assume the backward process has transition kernels $\{P_{B,n}\}_{n=1}^N$ and 
\begingroup
\refstepcounter{equation}\label{eq:discrete_requirement}
\setlength{\fboxsep}{0pt}

\noindent
\colorbox{Periwinkle!25}{%
  \makebox[\textwidth][l]{%
    \makebox[\dimexpr\textwidth-4em\relax][c]{%
      $\displaystyle
      \pi_{n-1}(x_{n-1})\,P_{F,n}(x_n\mid x_{n-1})
      =
      (P_{F,n}\#\pi_{n-1})(x_n)\,P_{B,n}(x_{n-1}\mid x_n)
      .$}%
    \makebox[4em][r]{\textup{(\theequation)}}%
  }%
}
\par
\endgroup
Then the Crooks Fluctuation Theorem holds:
\begin{align}
    \frac{\d\bwd{Q}}{\d\fwd{P}}(X_{0:N})= \frac{
        \pi_B(x_N)\,\prod_{n=1}^N P_{B,n}(x_{n-1}\mid x_n)
    }{
        \pi_A(x_0)\,\prod_{n=1}^N P_{F,n}(x_n\mid x_{n-1})
    }
    =
    \exp\bigl(-\widetilde{W}(X_{0:N})+\Delta F\bigr).
\end{align}
\end{proposition}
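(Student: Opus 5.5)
The plan is a direct telescoping computation on the ratio of path densities. First, I would write both path laws on $\mathcal X^{N+1}$ with respect to a common reference: $\fwd{P}(x_{0:N})=\pi_A(x_0)\prod_{n=1}^N P_{F,n}(x_n\mid x_{n-1})$, and $\bwd{Q}(x_{0:N})=\pi_B(x_N)\prod_{n=1}^N P_{B,n}(x_{n-1}\mid x_n)$. This gives the first equality in the statement, interpreted as a Radon--Nikodym derivative on the support of $\fwd{P}$. Next, I would use \Cref{eq:discrete_requirement} factor by factor. On the support, each backward kernel can be written as
\begin{align*}
P_{B,n}(x_{n-1}\mid x_n)=\frac{\pi_{n-1}(x_{n-1})\,P_{F,n}(x_n\mid x_{n-1})}{(P_{F,n}\#\pi_{n-1})(x_n)}.
\end{align*}
Substituting this into the ratio cancels every forward kernel, leaving
\begin{align*}
\frac{\d\bwd{Q}}{\d\fwd{P}}=\frac{\pi_B(x_N)}{\pi_A(x_0)}\prod_{n=1}^N\frac{\pi_{n-1}(x_{n-1})}{(P_{F,n}\#\pi_{n-1})(x_n)}.
\end{align*}

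The second step is to telescope. I would insert $\prod_{n=1}^N \pi_n(x_n)/\pi_n(x_n)$ into this expression. Using $\pi_0=\pi_A$ and $\pi_N=\pi_B$, the product $\prod_{n}\pi_{n-1}(x_{n-1})/\pi_n(x_n)$ collapses to $\pi_A(x_0)/\pi_B(x_N)$, which cancels the prefactor. What remains is $\prod_n \pi_n(x_n)/(P_{F,n}\#\pi_{n-1})(x_n)$. I would then split each factor as
\begin{align*}
\frac{\pi_n(x_n)}{\pi_{n-1}(x_n)}\cdot\frac{\pi_{n-1}(x_n)}{(P_{F,n}\#\pi_{n-1})(x_n)}.
\end{align*}
Writing $\pi_n=e^{-U_n}/Z_n$, the first factor equals $\exp(-(U_n(x_n)-U_{n-1}(x_n)))\,Z_{n-1}/Z_n$. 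The second factor is exactly $\exp$ of minus the escort term in \Cref{eq:general_work_disc}. The normalizing constants telescope to $Z_0/Z_N=Z_A/Z_B=e^{\Delta F}$, and this yields $\exp(-\widetilde W+\Delta F)$.

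The algebra itself is routine. The step I expect to need the most care is the measure-theoretic bookkeeping. On a general state space, ``$P_{F,n}(x_n\mid x_{n-1})$'' and the pushforward density $(P_{F,n}\#\pi_{n-1})$ must be taken as densities with respect to a common base measure on $\mathcal X$ (counting measure, Lebesgue measure, or a product of these for multimodal spaces). Condition \Cref{eq:discrete_requirement} should then be read as an equality of joint measures on $\mathcal X\times\mathcal X$, i.e.\ a Bayes-rule disintegration. With that reading, the division is legitimate $\fwd{P}$-almost surely, because $(P_{F,n}\#\pi_{n-1})(x_n)>0$ wherever $\fwd{P}$ has mass. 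I would state this absolute-continuity and positivity assumption explicitly, in the spirit of \Cref{app:assumption_definitation}, before doing the cancellation.
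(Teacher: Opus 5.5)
Your proposal is correct and follows essentially the same route as the paper's proof: you substitute each backward kernel using \Cref{eq:discrete_requirement} so the forward kernels cancel, then insert matching factors (the paper inserts $\pi_{n-1}(x_n)/\pi_{n-1}(x_n)$, you insert $\pi_n(x_n)/\pi_n(x_n)$, which is equivalent) to telescope into $\prod_n \frac{\pi_n(x_n)}{\pi_{n-1}(x_n)}\cdot\frac{\pi_{n-1}(x_n)}{(P_{F,n}\#\pi_{n-1})(x_n)}$, and finally collect $Z_0/Z_N=e^{\Delta F}$. Your explicit tracking of the normalizing constants, and your note that $(P_{F,n}\#\pi_{n-1})(x_n)>0$ holds $\fwd{P}$-almost surely because $\pi_{n-1}$ is strictly positive, tighten the paper's brief ``cancel normalization factors'' step without changing the argument.
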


From this proposition, we see that the generalized work can be interpreted as the \emph{unnormalized density ratio} between the forward and backward path measures, while the backward path is constructed by reversing the forward kernel at each step. We summarize the correspondence between the discrete- and continuous-time settings in \Cref{tab:escort_summary_compact}.
We also note that the continuous-time escorting term can be obtained by taking a first-order expansion:
\begin{align}
\label{eq:generator_expansion}
\log \frac{(P_{F,t+\delta}\#\pi_t)(x)}{\pi_t(x)}
&= \log  \frac{\pi_t(x) + \delta (\mathcal L_t^F)^\dagger \pi_t(x) + o(\delta) }{\pi_t(x)} = \delta \frac{(\mathcal L_t^F)^\dagger \pi_t(x)}{\pi_t(x)} + o(\delta).
\end{align}
The forward-backward relation can be interpreted following the same Taylor argument, which we make explicit in the proof of Proposition~\ref{prop:pi_reversal} in the Appendix.

\begin{figure}[t]
\centering

\begin{minipage}[t]{0.41\textwidth}
\centering
\vspace{4pt}

{\fontsize{6.5pt}{7.2pt}\selectfont
\setlength{\tabcolsep}{2.5pt}
\renewcommand{\arraystretch}{0.82}

\begin{tabularx}{0.95\linewidth}{
@{}
>{\raggedright\arraybackslash\bfseries}p{0.15\linewidth}
>{\columncolor{Periwinkle!25}}X
>{\columncolor{LimeGreen!25}}p{0.3\linewidth}
@{}
}
\toprule
&
\textbf{Crooks Relation}
&
\textbf{Escort}
\\
\midrule
Discrete time
&
\(
\begin{aligned}
&\pi_{n-1}P_{F,n}
\\[-1pt]
&=
(P_{F,n}\#\pi_{n-1})P_{B,n}
\end{aligned}
\)
&
\(
\log
\dfrac{P_{F,n}\#\pi_{n-1}}{\pi_{n-1}}
\)
\\\midrule 
Generator form
&
\(
\begin{aligned}
&\mathcal L_t^B f
=
\gamma_t^{-1}(\mathcal L_t^F)^\dagger(\gamma_t f)
\\[-2pt]
& 
-\gamma_t^{-1}(\mathcal L_t^F)^\dagger(\gamma_t)f
\end{aligned}
\)
&
\(
\dfrac{
((\mathcal L_t^F)^\dagger\gamma_t)(X_t)
}{
\gamma_t(X_t)
}
\)
\\
\bottomrule
\end{tabularx}
}

\captionof{table}{Discrete-time and generator forms of the escorted relation.}
\label{tab:escort_summary_compact}
\end{minipage}
\hfill
\begin{minipage}[t]{0.58\textwidth}
\centering
\vspace{0pt}
\includegraphics[width=\linewidth]{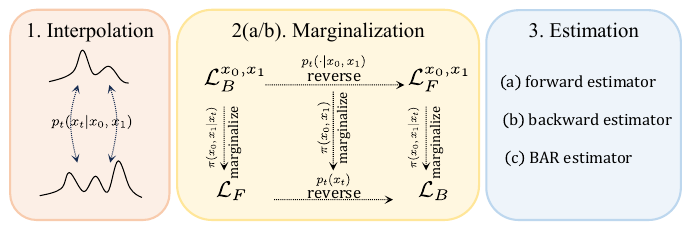}
\captionof{figure}{Pipeline of learning the transport.}
\label{fig:your_figure}
\end{minipage}

\vspace{-13pt}
\end{figure}

\paragraph{\faLightbulb[regular] Generalized Doob's $h$-transform and Generalized EJE} 
Generalized Doob's $h$-transform provides a general perspective and approach to handle path weight. We state 
generalized Doob's $h$ in  \Cref{thm:conservative_doob_h_transform}, and connect it with the adjoint and time reversal in \Cref{appendix:theorem}.
In the following theorem, we establish a connection between the generalized Doob's \(h\)-transform and its marginal prescription.
\begin{tcolorbox}[
    colback=gray!8,
    colframe=gray!40,
    boxrule=0.5pt,
    arc=4pt,
    left=1pt,
    right=1pt,
    top=2pt,
    bottom=2pt,
    breakable
]
\begin{theorem}[Generalized Doob's $h$-transform and Marginal prescription for it]
Let $(\mathcal L_t)_{t\in[0,1]}$ be a family of time-inhomogeneous Markov
generators, with adjoints $(\mathcal L_t^\dagger)_{t\in[0,1]}$.
Let
$h_t:E\to(0,\infty)$ be sufficiently regular. 
The generalized Doob's $h$-transform generator is given by
\begin{align}
    \mathcal L_t^h f
    =
    h_t^{-1}\mathcal L_t(h_t f)
    -
    h_t^{-1}f\,\mathcal L_t h_t .
\end{align}
Let $\pi_t$ be a strictly positive function and define $\rho_t^h = h_t \pi_t$.
Then $\rho_t^h$ solves the forward equation of the $h$-transformed process, i.e., 
\begin{align}
    \partial_t \rho_t^h = (\mathcal L_t^h)^\dagger \rho_t^h , \qquad \textbf{if and only if} \qquad
    {h_t}^{-1}\Big(\partial_t h_t+\mathcal L_th_t\Big) 
    =
    \pi_t^{-1}\Big(\mathcal L_t^\dagger \pi_t-\partial_t \pi_t\Big).
\end{align}
\end{theorem}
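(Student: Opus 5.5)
The plan is to compute the adjoint of the $h$-transformed generator explicitly, apply it to $\rho_t^h=h_t\pi_t$, and compare with $\partial_t\rho_t^h$. The whole argument is algebraic and uses only the definition of the adjoint (with respect to the Lebesgue/reference measure, as in the footnote of \Cref{sec:bg}) together with the product rule in $t$.

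\textbf{Step 1: the adjoint of $\mathcal L_t^h$.} Write $\mathcal L_t^h f = h_t^{-1}\mathcal L_t(h_t f) - V_t f$, where $V_t := h_t^{-1}\mathcal L_t h_t$ is a multiplication operator. For a test density $\rho$,
\begin{align*}
\int (\mathcal L_t^h f)\,\rho
= \int \mathcal L_t(h_t f)\,\frac{\rho}{h_t} - \int f\,V_t\rho
= \int f\Big[h_t\,\mathcal L_t^\dagger\Big(\frac{\rho}{h_t}\Big) - V_t\rho\Big].
\end{align*}
This gives
\begin{align*}
(\mathcal L_t^h)^\dagger \rho = h_t\,\mathcal L_t^\dagger\!\big(\rho/h_t\big) - \rho\,h_t^{-1}\mathcal L_t h_t,
\end{align*}
which mirrors the formula for $(\mathcal L_t^B)^\dagger$ in \Cref{prop:crooks}.

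\textbf{Step 2: evaluate at $\rho_t^h = h_t\pi_t$.} Since $\rho_t^h/h_t=\pi_t$, we get
\begin{align*}
(\mathcal L_t^h)^\dagger\rho_t^h = h_t\,\mathcal L_t^\dagger \pi_t - \pi_t\,\mathcal L_t h_t .
\end{align*}
By the product rule, $\partial_t\rho_t^h = \pi_t\partial_t h_t + h_t\partial_t\pi_t$. The forward equation $\partial_t\rho_t^h=(\mathcal L_t^h)^\dagger\rho_t^h$ is therefore equivalent to
\begin{align*}
\pi_t\big(\partial_t h_t + \mathcal L_t h_t\big) = h_t\big(\mathcal L_t^\dagger\pi_t - \partial_t\pi_t\big).
\end{align*}
Dividing by $h_t\pi_t>0$ gives exactly the stated condition. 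Because every step is an equivalence, this proves both directions at once.

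\textbf{Main obstacle.} The delicate point is Step 1. There we must justify that the adjoint identity holds with $h_t f$ and $\rho/h_t$ inside the domain of $\mathcal L_t$ and $\mathcal L_t^\dagger$, and that the multiplication by $h_t^{\pm1}$ can be moved across the pairing. I would handle this through the ``sufficiently regular'' assumption on $h_t$ and the standing domain assumptions of \Cref{app:assumption_definitation}. For instance, one can take $h_t$ bounded away from $0$ and $\infty$, $C^1$ in $t$, with $h_t\mathcal D\subset\mathcal D$, and read the forward equation weakly, tested against $f$ in a core. Under these assumptions the manipulations above are rigorous. Positivity of $h_t$ and $\pi_t$ is used only in the final division.
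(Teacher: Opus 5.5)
Your proof is correct and follows the paper's argument essentially line for line. The paper also substitutes $\rho_t^h=h_t\pi_t$ into the adjoint formula $(\mathcal L_t^h)^\dagger g = h_t\mathcal L_t^\dagger(h_t^{-1}g)-h_t^{-1}(\mathcal L_t h_t)\,g$, expands $\partial_t(h_t\pi_t)$ by the product rule, and divides by $h_t\pi_t$. The only differences are cosmetic: the paper quotes that adjoint formula from its generalized Doob-transform lemma instead of deriving it as you do in Step 1, and it writes out the two implications separately rather than as one chain of equivalences.
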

\end{tcolorbox}
This result is an extension of results derived in previous work in the scenario of marginal prescription \citep{chetrite2013nonequilibrium,denker2024deft,heng2025simulating}. 
This proposition allows us to write the unknown marginal density $\rho^h$ in the form of the known predefined $\pi_t$.
The generalized EJE and Crooks can be derived from this proposition, as we can view the backward process with the $h$-transformed process with known marginal.
We discuss more conclusions and intuition revolving this theorem in detail in \Cref{appendix:theorem}.

In summary, this section generalizes the escorted Jarzynski and Crooks in both continuous and discrete time to the general state space. 
However, although \Cref{eq:jarzynski}   holds for any energy path \(U_t\) and any dynamics, in practice, when estimating it using Monte Carlo samples, different choices lead to different statistical efficiencies.
More precisely, by \Cref{eq:crooks_eq}, we can further define the dissipated work \(W_{\mathrm{diss}}:=\widetilde{W}-\Delta F\), and
\( \E_{\fwd{P}}[W_{\mathrm{diss}}]
= \E_{\fwd{P}}\!\left[\log \frac{\d \fwd{P}}{\d \bwd{Q}}\right]
= D_{\mathrm{KL}}(\fwd{P}\,\|\,\bwd{Q}) \ge 0\),  \( \E_{\bwd{Q}}[-W_{\mathrm{diss}}]
= \E_{\bwd{Q}}\!\left[\log \frac{\d \bwd{Q}}{\d \fwd{P}}\right]
= D_{\mathrm{KL}}(\bwd{Q}\,\|\,\fwd{P}) \ge 0\),
which is \emph{the second law of thermodynamics} in non-equilibrium thermodynamics.
In fact, the KL lower bounds the variance of the estimator:
\begin{remark}[Variance-dissipation Relation]
\label{remark:variance_bound}
    Denote the  free energy estimate  as  \(\widehat {Z} := \widehat {\exp(-\Delta F)} = \frac1N \sum_{i=1}^N \exp({-\widetilde{W}(X^{(i)}_{[0,1]})})\) with \(X^{(i)}_{[0,1]} \sim \fwd{P}\). Then
    \begin{align}
        \Var(\widehat Z)=
\mathcal{O}\left(\tfrac1N
\chi^2(\bwd{Q}\,\|\,\fwd{P})  
\right)\geq \mathcal{O}\left(\tfrac1ND_{\mathrm{KL}}(\bwd{Q}\,\|\,\fwd{P}) \right).
    \end{align}
\end{remark}
Therefore, our goal is to reduce the gap between the forward and backward process, and thus improve the variance of the free-energy estimator. 
Ideally, we want to find dynamics that directly generate samples from \(\pi_t \propto e^{-U_t}\) at each time \(t\). Equivalently, we seek forward and backward dynamics that are time-reversals and bridges between $U_A$ and $U_B$. 
In this ideal setup, the dissipated work is 0, and the estimate has zero variance.
In \Cref{sec:learn}, we discuss how to construct such dynamics by learning a neural transport.
However, before discussing this, we first present a broader relation for the transformations of generators in the next section for completeness.
\vspace{-4pt}
\section{Beyond Free Energy: Infinitesimal Reversal and Generalized Doob's $h$}\label{sec:reversal_doob}
\vspace{-4pt}
In this section, we discuss results that go beyond the generalized EJE and Crooks fluctuation theorem, highlighting a connection between a generalized notion of time reversal and generalized Doob's $h$-transform.
These broader results arise from unifying the two perspectives developed above for understanding the generalized EJE: the Doob's $h$-transform perspective and the discrete-time perspective.
This unification further admits a more general and abstract algebraic interpretation.

\subsection{Algebraic Identities}
We first introduce the definition of infinitesimal time reversal.
\begin{tcolorbox}[
    colback=gray!8,
    colframe=gray!40,
    boxrule=0.5pt,
    arc=4pt,
    left=1pt,
    right=1pt,
    top=2pt,
    bottom=2pt,
    breakable
]
\begin{definition}[Infinitesimal $\pi_t$-reversal]
Let $(\mathcal L_t)_{t\in[0,1]}$ be a time-inhomogeneous Markov generator, and let $\mathcal L_t^\dagger$ denote its adjoint with
respect to a reference measure $\mu$. Let $\pi_t$ be a strictly positive
density with respect to $\mu$. The infinitesimal reverse generator with
respect to $\pi_t$ is defined by
\begin{align}\label{eq:11111}
    \bwd{\mathcal L}_t^\pi f
    &=
  {\pi_t}^{-1}\mathcal L_t^\dagger(\pi_t f)
    -
  {\pi_t}^{-1}{\mathcal L_t^\dagger \pi_t}f .
\end{align}
\end{definition}
\end{tcolorbox}

Here, we introduce \Cref{eq:11111} as a definition directly for simplification.
In Proposition~\ref{prop:pi_reversal}, we derive this form and show that $ \bwd{\mathcal L}_t^\pi$ is exactly the infinitesimal generator associated with the backward operator $\bwd {P}^{\pi_t}_{t+\delta,t}$ which satisfies the following equation:
\begin{align}
    \pi_t(x) \, P_{t,t+\delta}(x,\mathrm dy)\,\mu(\mathrm dx)
    =
    (\pi_t P_{t,t+\delta})(y)\,\mu(\mathrm dy)\,
    \bwd {P}^{\pi_t}_{t+\delta,t}(y,\mathrm dx).
\end{align}
where $P_{t,t+\delta}$ is the forward operator associated with $\mathcal L_t$.
\par
For simplicity, we define the infinitesimal time reversal and the generalized Doob's $h$-transform stated in  \Cref{thm:conservative_doob_h_transform} as two maps on generators.
For strictly positive function $h_t$ and  $\pi_t$, define
\begin{align}
   R_{\pi}: \mathcal L_t \mapsto \bwd{\mathcal L}_t^\pi, \quad H_{h}: \mathcal L_t \mapsto \mathcal L_t^h,
\end{align}
Then, we have the following conclusions:

\begin{tcolorbox}[
    colback=gray!8,
    colframe=gray!40,
    boxrule=0.5pt,
    arc=4pt,
    left=1pt,
    right=1pt,
    top=2pt,
    bottom=2pt,
    breakable
]
\begin{theorem}[Algebraic identities for reversal and Doob transforms]\label{theorem:algebra}
Let $p,q,h,h'$ be strictly positive densities/functions with respect to the reference measure $\mu$.
Then, as maps on generators,
\begin{align}
    R_p \circ R_q
    &=
    H_{h:=p/q}.
\end{align}
In particular, taking $p=q$ gives
\begin{align}
    R_p \circ R_p
    &=
    H_{h:=1}
    =
    I,
\end{align}
where $I$ denotes the identity map on generators.

Using the associativity of composition,  the reversal and generalized Doob transforms satisfy
\begin{align}
    R_p \circ H_{h}
    =
    R_{p/h},\quad
    H_{h} \circ R_q
    =
    R_{qh},
   \quad
    H_h \circ H_{h'}
    =
    H_{hh'}.
\end{align}
\end{theorem}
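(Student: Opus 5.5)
The identities follow from direct computation once we record how the adjoint interacts with each map; everything reduces to a ``twisted conjugation'' calculus. Throughout, $\mathcal L$ denotes a generator at a fixed time $t$, and we suppress $t$, since all maps act pointwise in time.

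\textbf{Step 1: adjoints of the two maps.} For $g>0$, write $M_g$ for multiplication by $g$, and set
\[
C_g(\mathcal L) := M_g^{-1}\,\mathcal L\, M_g - M_{g^{-1}\mathcal L g}.
\]
Then $H_h(\mathcal L)=C_h(\mathcal L)$ and $R_\pi(\mathcal L)=C_\pi(\mathcal L^\dagger)$. Since multiplication operators are self-adjoint with respect to $\mu$, the adjoint of $C_g(\mathcal L)$ is
\[
C_g(\mathcal L)^\dagger = M_g\,\mathcal L^\dagger\, M_g^{-1} - M_{g^{-1}\mathcal L g}.
\]
Note also that $C_g(\mathcal L)\mathbf 1=0$, so the result is again conservative. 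I would check this on test functions, using the paper's standing assumptions (Appendix) that all compositions are well defined on a common core.

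\textbf{Step 2: $R_p\circ R_q=H_{p/q}$.} Write $\mathcal K=R_q(\mathcal L)=C_q(\mathcal L^\dagger)$. By Step 1,
\[
\mathcal K^\dagger=M_q\,\mathcal L\,M_q^{-1} - M_c, \qquad c=q^{-1}\mathcal L^\dagger q.
\]
Then $R_p(\mathcal K)=C_p(\mathcal K^\dagger)$. The constant-potential term $M_c$ cancels inside any $C_g$, because $C_g(\mathcal A - M_c)=C_g(\mathcal A)$: indeed $g^{-1}(\mathcal A - c)g = g^{-1}\mathcal A g - c$, and this subtraction cancels against the conjugated $-M_c$. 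Hence
\[
R_p(\mathcal K)=C_p\bigl(M_q\,\mathcal L\,M_q^{-1}\bigr).
\]
Unwinding the definition,
\[
C_p\bigl(M_q\,\mathcal L\,M_q^{-1}\bigr)f = p^{-1}q\,\mathcal L\!\left(\frac{p}{q}f\right) - p^{-1}q\,\mathcal L\!\left(\frac{p}{q}\right)f,
\]
which is exactly $H_{p/q}(\mathcal L)f$. Taking $p=q$ gives $H_1$, and $H_1=I$ because $\mathcal L\mathbf 1=0$.

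\textbf{Step 3: the remaining identities.} The composition $H_h\circ H_{h'}=H_{hh'}$ is the same kind of computation, without adjoints: $C_h\circ C_{h'}=C_{hh'}$, again using that constant potentials drop out. The mixed identities then follow from associativity together with Step 2 and involutivity. From $R_p\circ R_{qh}=H_{p/(qh)}$, composing on the left by $R_p$ and using $R_p\circ R_p=I$, we get
\[
R_{qh}=R_p\circ H_{p/(qh)}.
\]
Choosing $p$ appropriately gives $R_p\circ H_h=R_{p/h}$ after renaming. Similarly,
\[
H_h\circ R_q=(R_{qh}\circ R_q)\circ R_q=R_{qh}.
\]
Alternatively, $H_h\circ R_q=R_{qh}$ follows directly from $C_h\circ C_q=C_{hq}$ applied to $\mathcal L^\dagger$, and $R_p\circ H_h$ follows from the Step 1 adjoint formula.

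\textbf{Main obstacle.} The algebra is routine. The delicate point is justifying the adjoint computation in Step 1 and the cancellation of $M_c$ as operator identities on a common domain, including that $\mathcal L^\dagger$ applied to $\pi f$ makes sense and that the transformed operators are again generators. I would defer this to the regularity assumptions in \Cref{app:assumption_definitation}, verifying the identities on a core of test functions, and in the discrete-state case checking them entrywise as rate matrices, where everything is finite-dimensional.
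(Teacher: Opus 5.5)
Your proof is correct and follows essentially the paper's route. The paper likewise computes the adjoint of the first reversal, $\widetilde{\mathcal L}^\dagger g = q\,\mathcal L(g/q) - (q^{-1}\mathcal L^\dagger q)\,g$, and substitutes it into the second reversal, where the $q^{-1}\mathcal L^\dagger q$ terms cancel exactly as in your constant-potential lemma; it then gets the mixed identities from $R_p\circ R_p=I$ and associativity. Your $C_g$ notation is a tidy repackaging rather than a different argument, and it usefully makes explicit that $H_1=I$ requires $\mathcal L\mathbf 1=0$, which holds automatically for every image $C_g(\cdot)$.
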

\end{tcolorbox}
These identities can be abstracted by the following diagram.
We drop the subscripts for simplicity.
\[
\begin{tikzpicture}[
    line width=0.3pt,
    every node/.style={font=\normalsize},
    arr/.style={-{Stealth[length=3mm,width=1.3mm]}}
]

\begin{scope}[xshift=0cm]
    \coordinate (A) at (0,1.2);
    \coordinate (B) at (1.2,1.2);
    \coordinate (C) at (0,0);
    \coordinate (D) at (1.2,0);

    \draw[arr] (A) -- (B);
    \draw[arr] (A) -- (C);
    \draw[arr] (B) -- (D);
    \draw[arr] (C) -- (D);
    \draw[arr] (A) -- (D);

    \node[above] at ($(A)!0.5!(B)$) {$R$};
    \node[left]  at ($(A)!0.5!(C)$) {$H$};
    \node[right] at ($(B)!0.5!(D)$) {$R$};
    \node[below] at ($(C)!0.5!(D)$) {$H$};
    \node at ($(A)!0.5!(D)$) {$H$};
\end{scope}

\begin{scope}[xshift=5cm]
    \coordinate (A) at (0,1.2);
    \coordinate (B) at (1.2,1.2);
    \coordinate (C) at (0,0);
    \coordinate (D) at (1.2,0);

    \draw[arr] (A) -- (B);
    \draw[arr] (A) -- (C);
    \draw[arr] (B) -- (D);
    \draw[arr] (C) -- (D);
    \draw[arr] (A) -- (D);

    \node[above] at ($(A)!0.5!(B)$) {$R$};
    \node[left]  at ($(A)!0.5!(C)$) {$H$};
    \node[right] at ($(B)!0.5!(D)$) {$H$};
    \node[below] at ($(C)!0.5!(D)$) {$R$};
    \node at ($(A)!0.5!(D)$) {$R$};
\end{scope}

\end{tikzpicture}
\]
Here, the nodes represent generators, and the edges represent maps between generators.
Since we suppress the subscripts in the diagram, the same symbol denotes the same family of maps rather than the same specific map.
By composing maps, we obtain the following general conclusions.
\begin{tcolorbox}[
    colback=gray!8,
    colframe=gray!40,
    boxrule=0.5pt,
    arc=4pt,
    left=1pt,
    right=1pt,
    top=2pt,
    bottom=2pt,
    breakable
]
\begin{corollary}[Reduction of finite compositions]
Any finite composition of reversal maps $R$ and generalized Doob transforms $H$ reduces to either a single generalized Doob transform or a single reversal map.
More precisely, let
\begin{align}
    T
    =
    A_n\circ A_{n-1}\circ \cdots \circ A_1,
\end{align}
where each $A_i$ is either of the form $R_{p_i}$ or $H_{h_i}$.
Then $T$ can be written as
\begin{align}
    T
    =
    \begin{cases}
        H_g, & \text{if the number of reversal maps appearing in } T \text{ is even},\\
        R_g, & \text{if the number of reversal maps appearing in } T \text{ is odd},
    \end{cases}
\end{align}
for some strictly positive function $g$.
\end{corollary}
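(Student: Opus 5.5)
The plan is induction on the length $n$ of the composition, using only the identities of Theorem~\ref{theorem:algebra} as rewriting rules. We prove the slightly stronger statement that $T$ equals $H_g$ or $R_g$ for some strictly positive $g$, with the case determined by the parity of the number of reversal maps among $A_1,\dots,A_n$.

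\emph{Base case.} For $n=1$ the claim is immediate: $A_1$ is already of the form $H_{h_1}$ (zero reversals, even) or $R_{p_1}$ (one reversal, odd).

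\emph{Inductive step.} Write $T = A_n\circ T'$ with $T'=A_{n-1}\circ\cdots\circ A_1$. By the induction hypothesis, $T'$ is either $H_{g'}$ (if it contains an even number of reversals) or $R_{g'}$ (if odd), for some strictly positive $g'$. There are four cases, each settled by one identity from Theorem~\ref{theorem:algebra}.
\begin{itemize}
\item If $A_n=H_h$ and $T'=H_{g'}$, then $T=H_{hg'}$; the parity is unchanged and remains even.
\item If $A_n=H_h$ and $T'=R_{g'}$, then $T=R_{g'h}$; the parity is unchanged and remains odd.
\item If $A_n=R_p$ and $T'=H_{g'}$, then $T=R_{p/g'}$; the parity flips from even to odd.
\item If $A_n=R_p$ and $T'=R_{g'}$, then $T=H_{p/g'}$; the parity flips from odd to even.
\end{itemize}
In every case the new subscript is a product or quotient of strictly positive functions, so it is again strictly positive. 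The hypotheses of Theorem~\ref{theorem:algebra} are therefore met at the next step, and the induction closes. Since an $H$ factor never changes the parity and an $R$ factor always flips it, the final form is $H$ exactly when the total number of reversals is even.

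\emph{Associativity.} Associativity of composition of maps on generators is used implicitly to regroup $A_n\circ(A_{n-1}\circ\cdots\circ A_1)$. This is automatic because all the $A_i$ are genuine maps on the same class of generators.

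\emph{Main obstacle: well-definedness.} The one subtle point is that each map must send the relevant class of generators into itself, so that compositions are defined. In particular, $R_q$ and $H_h$ applied to an admissible generator should again give a Markov generator with the regularity assumed in Theorem~\ref{theorem:algebra}. We take this from the standing assumptions in \Cref{app:assumption_definitation}. Alternatively, we interpret the corollary purely at the level of the operator formulas, where Theorem~\ref{theorem:algebra} already gives the identities as algebraic equalities, so no further analysis is needed. We also note that the "densities" in Theorem~\ref{theorem:algebra} need not be normalized, since the formulas are invariant under positive rescaling. Hence quotients such as $p/g'$ are legitimate subscripts for $R$ as well as for $H$.
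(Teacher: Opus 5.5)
Your induction is correct and follows the paper's own route: the corollary comes from repeatedly applying the four identities $H_h\circ H_{h'}=H_{hh'}$, $H_h\circ R_q=R_{qh}$, $R_p\circ H_h=R_{p/h}$, $R_p\circ R_q=H_{p/q}$ of Theorem~\ref{theorem:algebra}. Each $H$ factor preserves the $H$/$R$ type and each $R$ factor flips it, and your four cases match these identities with the correct composition order. The paper only says the result follows ``by composing maps'', so your write-up is a careful formalization of that same argument.
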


\end{tcolorbox}

\subsection{Group Theory of Reversals and Generalized Doob's $h$-Transforms}

The identities above show that the collection of reversal maps and generalized Doob transforms forms a group under composition.
Let
\begin{align}\label{eq:G}
    \mathcal G
    =
    \{H_h : h>0\}
    \cup
    \{R_p : p>0\},
\end{align}
Then $(\mathcal G, \circ)$ is a group, since:
\begin{itemize}[leftmargin=*]
    \item \textbf{Closure.}
    \Cref{theorem:algebra} shows that any composition of two elements of $\mathcal G$ is again in $\mathcal G$.

    \item \textbf{Associativity.}
    The operation is a composition of maps on generators, which is associative.

    \item \textbf{Identity.}
    The identity element is $H_1=I$.

    \item \textbf{Inverses.}
    Each element has an inverse:
    $
        H_h^{-1} = H_{1/h},  
        R_p^{-1} = R_p,
    $
    as $H_h\circ H_{1/h}=R_p\circ R_p=I$.
\end{itemize}
The generalized Doob transforms $\{H_h:h>0\}$ form a subgroup of $\mathcal G$, while the reversals $\{R_p:p>0\}$ do not.
Additionally, as proved in \Cref{app:group_iso_proof}, $\mathcal G$ is isomorphic to the
generalized dihedral group, where the generalized $h$-transform is analogous to rotation and reversal is analogous to reflection:
\begin{tcolorbox}[
    colback=gray!8,
    colframe=gray!40,
    boxrule=0.5pt,
    arc=4pt,
    left=1pt,
    right=1pt,
    top=2pt,
    bottom=2pt,
    breakable
]
\begin{proposition}[Generalized dihedral structure]\label{thm:group_iso}
Let $\mathcal F_+$ denote the multiplicative group of strictly positive functions, and let
\begin{align}
    A = \mathcal F_+/\mathbb R_{>0}
\end{align}
be the quotient by positive constants.
Then $\mathcal G$ defined in \Cref{eq:G} is isomorphic to the generalized dihedral group
\begin{align}
  \mathcal G  \cong\operatorname{Dih}(A)
    =
    A\rtimes \mathbb Z_2,
\end{align}
where the nontrivial element of $\mathbb Z_2$ acts on $A$ by inversion.
\end{proposition}
\end{tcolorbox}
The isomorphism lets us import properties from the generalized dihedral group directly into $\mathcal{G}$.

In summary, this section introduces a general structural view of infinitesimal reversal and generalized Doob's $h$-transform.
By choosing one of the reversal marginals to coincide with the energy path, the abstract identities above recover the generalized EJE and Crooks fluctuation theorem introduced in the previous section.
In the next section, we return to free energy estimation and develop learning objectives for generative models that make the generalized EJE practically effective.

\section{Learning the Transport Models}\label{sec:learn}

Propositions~\ref{prop:jarzynski} and~\ref{prop:crooks} provide us with two identities for free energy estimation.
We can either (1) choose an energy path $U_t$ and a forward process, and calculate work with \Cref{eq:general_work_cont}, or (2) choose the forward-backward process pair, and calculate the work with the forward-backward Radon-Nikodym derivative in \Cref{eq:crooks_eq}.
Similar to \citetalias{he2025feat}, we consider the second way to estimate the free energy with a forward-backward pair. 
This allows us to skip defining or learning the energy path $U_t$ explicitly, yielding a more robust and scalable estimator.

There are also two main paradigms for learning a forward-backward pair: (1) we can specify the energy path $U_t$ first, and learn the forward and backward generators so that the resulting paths follow this energy in optimality; (2) we first obtain samples drawn from the two endpoint distributions, define an interpolating path between samples, and then learn the forward and backward generators that match this interpolant, with the corresponding energy defined only implicitly.
While the former allows us to define more thermodynamic-related energy paths, the path may suffer from teleportation \citep{mate2023learning}, and learning can be more challenging and expensive.
The latter, on the other hand, is generally more scalable and compatible with recent advances in diffusion and flow models \citep{song2020score,lipman2022flow,benton2024denoising,holderrieth2024generator,albergo2025stochastic}.
Therefore, in the following, we will focus our discussion on the second option.

We start by defining the interpolant as a conditional distribution $x_t\sim p_t(x_t|x_0, x_1)$ given a pair of samples from a coupling of two boundary distributions $(x_0, x_1)\sim \pi(x_0, x_1)$.
Examples include 
\begin{align}\label{eq:int1}
    &{\text{Average: } }x_t = \alpha_t x_0 + \beta_t x_1 + \gamma_t \epsilon, \quad \epsilon \perp (x_0, x_1),\quad \epsilon \sim \mathcal{N}(0, I) \\&{\text{Mixture: } }x_t \sim  \alpha_t \delta_{x_0}(x_t) + \beta_t \delta_{x_1}(x_t) ,\quad \text{or}\quad x_t \sim  \alpha_t \delta_{x_0}(x_t) + \beta_t \delta_{x_1}(x_t) + \gamma_t \delta_{M}(x_t)\label{eq:int2}
\end{align}
We then seek to learn a forward process and a backward process such that, at optimum, (i) both generate the marginal density \(p_t(x_t)=\mathbb{E}_{(x_0,x_1)\sim\pi}[p_t(x_t\mid x_0,x_1)]\) for every \(t\in[0,1]\), and (ii) the two processes are time-reversal.
Note that (ii) is a stronger requirement than (i).

\begingroup
\setlength{\fboxsep}{3pt}%
\noindent\makebox[\linewidth][c]{%
  \colorbox{gray!20}{%
    \parbox{0.94\linewidth}{%
      \raggedright\normalfont
      \textbf{Step 1:} \textit{Define a coupling $\pi(x_0, x_1)$ and an interpolant path
      $x_t \sim p_t(x_t | x_0, x_1)$, and learn a pair of time-reversal dynamics with marginal $p_t(x_t)=\mathbb{E}_{(x_0,x_1)\sim\pi}[p_t(x_t| x_0,x_1)]$.}%
    }%
  }%
}%
\endgroup

We discuss two frameworks to achieve this in the following.
In the first method, we directly learn the forward and backward processes, while in the second method, we learn one direction only and learn the marginal statistics to reverse it.
We now explain these two frameworks respectively:

\paragraph{\faHandPointRight[regular] Learning forward and backward processes}

We extend generator matching \citep{holderrieth2024generator} to the bridge setting, where we learn the transport process directly between two endpoint distributions.
Specifically, we first choose a coupling $\pi(x_0,x_1)$ between the two endpoint distributions, and then define conditional forward and backward generators,
$\mathcal{L}_{F,t}^{x_0,x_1}$ and $\mathcal{L}_{B,t}^{x_0,x_1}$,
associated with the interpolant path introduced in \Cref{eq:int1,eq:int2}. That is,
\begin{align}
    \partial_t p_t(x_t |x_0,x_1)
    = \bigl(\mathcal{L}_{F,t}^{x_0,x_1}\bigr)^\dagger p_t(x_t | x_0,x_1), \quad 
    \partial_t p_t(x_t |x_0,x_1)
    = -\bigl(\mathcal{L}_{B,t}^{x_0,x_1}\bigr)^\dagger p_t(x_t |x_0,x_1).
\end{align}
We then aim to learn the marginal forward and backward generators by averaging the conditional generators over the posterior of the endpoints given the current state:
\begin{align}\label{eq:marginal_generator}
    \mathcal{L}_t^F f(x_t)
   = \mathbb{E}_{x_0, x_1|x_t}\!\left[
        \mathcal{L}_{F,t}^{x_0,x_1} f(x_t)
    \right], \quad 
    \mathcal{L}_t^B f(x_t)
   = \mathbb{E}_{x_0, x_1|x_t}\!\left[
        \mathcal{L}_{B,t}^{x_0,x_1} f(x_t)
    \right].
\end{align}
One can verify that the marginal density
\(
p_t(x_t)=\mathbb{E}_{(x_0,x_1)\sim\pi}[p_t(x_t | x_0,x_1)]
\)
evolves according to 
\begin{align}\label{eq:marginal}
    \partial_t p_t(x_t)
    = (\mathcal{L}_t^F)^\dagger p_t(x_t), \quad 
    \partial_t p_t(x_t)
    = -(\mathcal{L}_t^B)^\dagger p_t(x_t).
\end{align}
Following \citet{holderrieth2024generator}, we can parameterize our network for the marginal generator and define a Bregman divergence $D$ as the objective.

However, this workflow has an important caveat. \emph{Even if two generators both satisfy \Cref{eq:marginal}, so that they induce the same marginal density at every time \(t\), this does NOT imply that they are time reversals of one another}.  
Fortunately, the following proposition shows that if we define the conditional generators to be time reversals of each other, then the corresponding marginal generators are also time reversals, which we prove in \Cref{app:proof_marginal_time_rev}. 
This saves us from having to verify the marginal construction directly: it suffices to design the conditional generators to be time-reversal pairs, which is fully tractable.
\begin{tcolorbox}[
    colback=gray!8,
    colframe=gray!40,
    boxrule=0.5pt,
    arc=4pt,
    left=1pt,
    right=1pt,
    top=2pt,
    bottom=2pt,
    breakable
]
\begin{proposition}[Marginalization maintains time-reversal]\label{prop:marginal_reversal}
Let \(p_t(x_t|x_0,x_1)\) be a conditional interpolant, and let
\(\mathcal{L}_{F,t}^{x_0,x_1}\) and \(\mathcal{L}_{B,t}^{x_0,x_1}\)
be the corresponding forward and backward conditional generators. Assume that, for every \((x_0,x_1)\), they are time reversals of each other, i.e.
\begin{align}
    \resizebox{0.93\linewidth}{!}{$
    \mathcal{L}_{B,t}^{x_0,x_1} f
    =
    \frac{1}{p_t(x_t|x_0,x_1)}
    (\mathcal{L}_{F,t}^{x_0,x_1})^\dagger
    \!\left(
        p_t(x_t|x_0,x_1) f
    \right)
    -
    \frac{1}{p_t(x_t| x_0,x_1)}
    (\mathcal{L}_{F,t}^{x_0,x_1})^\dagger
    p_t(x_t|x_0,x_1)\, f
    $}
\end{align}
Then \(\mathcal{L}_{F,t}\) and \(\mathcal{L}_{B,t}\) defined by \Cref{eq:marginal_generator} are also time-reversal with respect to \(p_t(x_t)\), namely
\begin{align}\label{eq:time-reversal}
    \mathcal{L}_{B,t} f
    =
 {p_t}^{-1}
    \mathcal{L}_{F,t}^\dagger (p_t f)
    -
   {p_t}^{-1}
    \mathcal{L}_{F,t}^\dagger p_t \, f .
\end{align}
\end{proposition}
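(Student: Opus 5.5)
We will test the claimed identity in its weak (adjoint) form against an arbitrary observable, because the marginal generators in \Cref{eq:marginal_generator} are defined by posterior averaging. That averaging is linear in the operator but not in its adjoint. Write $\pi_t(x_t,x_0,x_1)=\pi(x_0,x_1)p_t(x_t|x_0,x_1)$. The posterior is then $\pi_t(x_0,x_1|x_t)=\pi(x_0,x_1)p_t(x_t|x_0,x_1)/p_t(x_t)$. The first step is to compute the adjoint of the marginal forward generator. For test functions $f,g$,
\begin{align}
\int g\,\mathcal L_{F,t} f\,\d x_t
=\int g(x_t)\,\E_{x_0,x_1|x_t}\!\left[\mathcal L^{x_0,x_1}_{F,t}f(x_t)\right]\d x_t
=\E_{\pi(x_0,x_1)}\!\left[\int \frac{g\,p_t(\cdot|x_0,x_1)}{p_t}\,\mathcal L^{x_0,x_1}_{F,t}f\,\d x_t\right].
\end{align}
From this we obtain $\mathcal L_{F,t}^\dagger \rho = \E_{\pi}\big[(\mathcal L^{x_0,x_1}_{F,t})^\dagger(\rho\,p_t(\cdot|x_0,x_1)/p_t)\big]$.

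The second step specializes this formula to $\rho=p_t f$ and to $\rho=p_t$. The factor $p_t$ cancels in each case, which gives
\begin{align}
\mathcal L_{F,t}^\dagger(p_t f)=\E_{\pi}\!\left[(\mathcal L^{x_0,x_1}_{F,t})^\dagger\big(p_t(\cdot|x_0,x_1)f\big)\right],\qquad
\mathcal L_{F,t}^\dagger p_t=\E_{\pi}\!\left[(\mathcal L^{x_0,x_1}_{F,t})^\dagger p_t(\cdot|x_0,x_1)\right].
\end{align}
Subtracting the two expressions and dividing by $p_t$, the right-hand side of \Cref{eq:time-reversal} becomes
\[
\E_{\pi}\Big[\tfrac{p_t(x_t|x_0,x_1)}{p_t(x_t)}\cdot \tfrac{1}{p_t(x_t|x_0,x_1)}\big((\mathcal L^{x_0,x_1}_{F,t})^\dagger(p_t(\cdot|x_0,x_1)f)-f\,(\mathcal L^{x_0,x_1}_{F,t})^\dagger p_t(\cdot|x_0,x_1)\big)\Big].
\]
By the conditional time-reversal hypothesis, the bracketed term equals $\mathcal L^{x_0,x_1}_{B,t}f(x_t)$. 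The weight $\pi\,p_t(x_t|x_0,x_1)/p_t(x_t)$ is exactly the posterior. The whole expression is therefore $\E_{x_0,x_1|x_t}[\mathcal L^{x_0,x_1}_{B,t}f(x_t)]=\mathcal L_{B,t}f$, which is the claim.

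The main obstacle is justifying the adjoint computation of the first step rigorously. This requires Fubini to exchange the integral over $x_t$ with the expectation over $(x_0,x_1)$, and it requires $g\,p_t(\cdot|x_0,x_1)/p_t$ to lie in the domain of the conditional adjoint. Strict positivity of $p_t$ is also needed on the support where $p_t(\cdot|x_0,x_1)>0$. I would state these as standing regularity assumptions, in the style of \Cref{app:assumption_definitation}. For mixture interpolants with Dirac components, I would interpret densities with respect to the reference measure $\mu$ (counting or mixed), under which the same computation goes through verbatim.
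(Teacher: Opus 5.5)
Your proposal is correct and follows the paper's proof essentially step for step. The paper also derives $\mathcal L_{F,t}^\dagger r=\int(\mathcal L_{F,t}^{z})^\dagger\big(r\,\pi_t(z\mid\cdot)\big)\,\mathrm dz$ with $z=(x_0,x_1)$ by exchanging integrals, then specializes to $r=p_tf$ and $r=p_t$ using $p_t(x)\,\pi_t(z\mid x)=\pi(z)\,p_t(x\mid z)$, and matches the result against the posterior-averaged conditional backward generator; your remarks on domains and positivity are regularity assumptions the paper leaves implicit.
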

\end{tcolorbox}
This proposition leads to the following learning strategy:

\begingroup
\setlength{\fboxsep}{3pt}%
\noindent\makebox[\linewidth][c]{%
  \colorbox{gray!20}{%
    \parbox{0.94\linewidth}{%
      \raggedright\normalfont
      \textbf{Step 2(A):} \textit{Define a pair of forward and backward conditional generators which are time-reversal with
      $p_t(x_t | x_0, x_1)$, and learn to marginalize them.}%
    }%
  }%
}%
\endgroup

\vspace{-10pt}
\paragraph{\faHandPointRight[regular] Learning one direction plus its marginal statistics}
Another way to learn the transports is to learn a forward process, plus some statistics that allows we to obtain the backward process from the forward process.
From \Cref{eq:time-reversal}, we can see that we only need to learn the marginal $p_t$\footnote{Here, the marginal does not only mean the marginal energy or density. Instead, it can be any sufficient statistics to reverse a process. For example, for Ito's SDE, we only need the score; and for discrete CTMC, we only need the concrete score, i.e., density ratio between two states.}. 
\par
Since we can directly sample from $p_t$, one could in principle use implicit score matching \citep[ISM;][]{hyvarinen2005estimation}; see also \citet{benton2024denoising} for a generalization to arbitrary state spaces. However, ISM typically requires expensive computations, such as evaluating divergences or summing/integrating over the entire state space. A more practical alternative is denoising score matching \citep{vincent2011connection,benton2024denoising}, adapted to our interpolant setting, as discribed next.

Assume we are given an interpolant such that the conditional distribution $p(x_t | x_0, x_1)$ is tractable. For example, one may first construct an interpolating state $I_t$, such as a linear interpolation $I_t = \alpha_t x_0 + \beta_t x_1$ or a mixture $I_t \sim \alpha_t \delta_{x_0} + \beta_t \delta_{x_1}$, and then apply a tractable corruption process such as Gaussian corruption or random masking, described by $\partial_\tau q_\tau(\cdot |I) = \mathcal{A} q_\tau(\cdot| I)$ with $q_0(\cdot|I) = \delta_I$. 
Note that the original interpolant $p(x_t  |x_0, x_1)$ can then be written as $p(x_t |x_0, x_1) = \int q_{\tau = f(t)}(x_t |I)\, p_t^I(I \mid x_0, x_1)\,\mathrm{d}I$, where $p_t^I$ denotes the law of the uncorrupted interpolant and $f(t)$ controls the amount of corruption at each time $t$, with $f(0)=f(1)=0$.
Then, the DSM objective is given by
\begin{align}
 \mathcal{J}_{\text{DSM}}(\varphi_t) =  \mathbb{E}_{x_t, I_t} \left[
    \frac{\mathcal{A} ( q_{\tau}(x_t|I_t) / \varphi_t(x_t) )}{q_{\tau}(x_t|I_t) / \varphi_t(x_t)} - \mathcal{A} \log (
q_{\tau}(x_t|I_t) / \varphi_t(x_t)
    )\Big|_{\tau=f(t)}
    \right] 
\end{align}
and $\arg\min_{\varphi_t}   \mathcal{J}_{\text{DSM}} = p_t$, which leads to the following learning strategy:

\begingroup
\setlength{\fboxsep}{3pt}%
\noindent\makebox[\linewidth][c]{%
  \colorbox{gray!20}{%
    \parbox{0.94\linewidth}{%
      \raggedright\normalfont
      \textbf{Step 2(B):} \textit{Learn a forward generator and the marginal statistics to obtain the time-reversal.}%
    }%
  }%
}%
\endgroup

We note that, although both approaches are valid, their practical difficulty can vary substantially across settings. 
For example, in continuous state spaces, both strategies can be feasible in practice, while directly learning the score is a more natural choice \citep{albergo2025stochastic}. 
In contrast, in discrete state spaces, the second strategy of learning the marginal (in the form of the concrete score) can be much more expensive, since constructing the backward generator from the forward and the marginal concrete score generally requires summing over all states unless additional structure is imposed, e.g., local equivariance \citep{holderrieth2025leaps}.

\section{Inference and Estimation} \label{sec:inf_est_deltaF}

After training the forward and backward transports, we sample trajectories and evaluate the generalized work \(\widetilde W\) using the forward-backward Radon-Nikodym derivative in \Cref{eq:crooks_eq}.
As we have two dynamics, we can sample from each of them and obtain two estimators for the free energy:
with \(X^{F,(i)}_{[0,1]}\sim \fwd P\) and \(X^{B,(j)}_{[0,1]}\sim \bwd Q\), the free energy can be estimated by the following estimators
\begin{align}\label{eq:fwd_bwd_estimator}
    \widehat{\Delta F}_{F}
    =
    -\log
    \frac1{N}
    \sum_{i=1}^{N}
    \exp\!\left(-\widetilde W(X^{F,(i)}_{[0,1]})\right),
   \quad
    \widehat{\Delta F}_{B}
    =
    \log
    \frac1{N}
    \sum_{j=1}^{N}
    \exp\!\left(\widetilde W(X^{B,(j)}_{[0,1]})\right).
\end{align}
Following the same principle by \citetalias{he2025feat}, we can combine both directions for a Bennett Acceptance Ratio (BAR)-style estimator \citep{BENNETT1976245}
\begin{align}\label{eq:bar}
    \widehat{\Delta F}_{\mathrm{BAR}}(C)
    =
    C+
    \log
    \frac{
        \sum_{j=1}^{N}
        \phi\!\left(-\widetilde W(X^{B,(j)}_{[0,1]})+C\right)
    }{
        \sum_{i=1}^{N}
        \phi\!\left(\widetilde W(X^{F,(i)}_{[0,1]})-C\right)
    },
    \qquad
    \phi(z)=\frac{1}{1+\exp(z)} .
\end{align}
When $C=\Delta F$, this estimator minimizes the MSE error, thus often referred as \textit{the minimum variance estimator}.
In practice, we initialize \(C\) by averaging the two one-sided estimates and iterate, setting $C$ to be the current estimate of $\Delta F$, until convergence. 
This leads to the final step of the pipeline:

\begingroup
\setlength{\fboxsep}{3pt}%
\noindent\makebox[\linewidth][c]{%
  \colorbox{gray!20}{%
    \parbox{0.94\linewidth}{%
      \raggedright\normalfont
      \textbf{Step 3:} \textit{Calculate generalized work for both forward and backward trajectories, and obtain the minimal variance estimator using \Cref{eq:fwd_bwd_estimator} or \Cref{eq:bar}.}%
    }%
  }%
}%
\endgroup

\section{Examples and Experiments}\label{sec:exa_exp}

In this section, we make the pipeline described above concrete. We provide examples in Euclidean space, discrete space, and autoregressive models.
We also illustrate that this framework is compatible with multimodal space and can be applied to transport with momentum. 

In this section, we present the main conclusions and experimental results.
In \Cref{app:everything_here}, we explain in detail how the general conclusions and learning framework introduced in \Cref{sec:EJE_crook,sec:learn} lead to the dynamics and the corresponding learning objectives.
In \Cref{sec:hyperparameters}, we describe the systems we work on, the experimental settings we choose, and the hyperparameters.

\paragraph{Euclidean State Space}
Proosition~\ref{prop:crooks} in the form of SDE recovers the escorted Langevin dynamics:
\begin{align}
\text{d}X_t &= v_t(X_t) \text{d}t +\sigma_t^2 s_t(X_t) \text{d}t + \sqrt{2}\sigma_t \fwd{\text{d}W_t}, \qquad \mathcal{L}_{F,t} f = (v_t + \sigma_t^2 s_t) \cdot \nabla f + \sigma_t^2 \Delta f \\
\text{d}X_t &= v_t(X_t) \text{d}t - \sigma_t^2s_t(X_t) \text{d}t + \sqrt{2}\sigma_t \bwd{\text{d}W_t}, \qquad \mathcal{L}_{B,t} f = (-v_t + \sigma_t^2 s_t) \cdot \nabla f + \sigma_t^2 \Delta f
\end{align} 
where the score $s = \nabla \log \pi_t$ in \Cref{prop:jarzynski}.
Choosing interpolant $x_t \sim \alpha_t x_0 + \beta_t x_1+\gamma_t \epsilon$, and learn the score and vector field for this path, our framework leads to the standard stochastic interpolant \citep{albergo2025stochastic} and FEAT \citepalias{he2025feat}.
As this is the setup for FEAT and has been verified by \citetalias{he2025feat} and \citet{schebek2025assessing}, we do not repeat their experiments here.

Below, we consider cases beyond those studied in previous works.
Since our method can be viewed as a generalization of FEAT~\citepalias{he2025feat}, we keep the same naming convention, prefixing FEAT with the corresponding modality to specify the setting.

\begin{table}[t]
\centering
\begin{minipage}[t]{0.52\textwidth}
\centering
\caption{Ising models using CTMC-FEAT. 
The standard deviation is obtained by training three networks and running estimators for each.}
\label{tab:estimated_values}
\scriptsize
\setlength{\tabcolsep}{4pt}
\renewcommand{\arraystretch}{0.88}
\resizebox{\linewidth}{!}{%
\begin{tabular}{llcc}
\toprule
\textbf{Dim} $D$ & \textbf{Method} & \makecell{\textbf{Estimates}\\($\Delta F / D \times 10^3$)}& \makecell{\textbf{Reference value}\\($\Delta F / D \times 10^3$)} \\
\midrule
\multirow{3}{*}{15$\times$15 (225)}
    & DFM         & -145.44 {\tiny $\pm 0.44$} & \multirow{3}{*}{-145.41} \\
    & GM-no mask  & -145.41 {\tiny $\pm 0.33$} &  \\
    & GM-w.\ mask & -145.14 {\tiny $\pm 0.57$} &  \\
\midrule
\multirow{3}{*}{25$\times$25 (625)}
    & DFM         & -145.30 {\tiny $\pm 0.28$} & \multirow{3}{*}{-144.88} \\
    & GM-no mask  & -144.45 {\tiny $\pm 0.91$} &  \\
    & GM-w.\ mask & -144.74 {\tiny $\pm 1.92$} &  \\
\midrule
\multirow{3}{*}{32$\times$32 (1024)}
    & DFM         & -143.89 {\tiny $\pm 0.60$} & \multirow{3}{*}{-144.84} \\
    & GM-no mask  & -143.93 {\tiny $\pm 0.75$} &  \\
    & GM-w.\ mask & -144.62 {\tiny $\pm 1.93$} &  \\
\bottomrule
\end{tabular}%
}
\end{minipage}
\hfill
\begin{minipage}[t]{0.45\textwidth}
\centering
\caption{Ising models w. AR-FEAT. The standard deviation is obtained by training three networks and running estimators for each.}
\label{tab:ar_feat_results}
\scriptsize
\setlength{\tabcolsep}{4pt}
\renewcommand{\arraystretch}{0.88}
\resizebox{\linewidth}{!}{%
\begin{tabular}{llcc}
\toprule
\textbf{Dim} $D$ & \textbf{$\beta$} & \makecell{\textbf{AR FEAT}\\($\Delta F / D \times 10^3$)} & \makecell{\textbf{Reference value}\\($\Delta F / D \times 10^3$)} \\
\midrule
\multirow{3}{*}{15$\times$15 (225)}
    & $0.2$ & -734.52 $\pm$  0.01 &  -734.53 \\
    & $0.4$ & -879.90 $\pm$ 0.03 &  -879.94 \\
    & $0.6$ & -1213.22 $\pm$  0.01 & -1213.21 \\
\midrule
\multirow{3}{*}{25$\times$25 (625) }
    & $0.2$ & -734.51 $\pm$  0.01 &  -734.53\\
    & $0.4$ & -879.43 $\pm$  0.02 &  -879.41 \\
    & $0.6$ & -1211.25 $\pm$  0.01 &  -1211.24 \\
\midrule
\multirow{3}{*}{32$\times$32 (1024)}
    & $0.2$ & -734.52  $\pm$ 0.04 & -734.53 \\
    & $0.4$ & -879.40  $\pm$ 0.04 & -879.37\\
    & $0.6$ & -1210.73 $\pm$ 0.01 & -1210.81 \\
\bottomrule
\end{tabular}%
}
\end{minipage}\vspace{-10pt}
\end{table}

\paragraph{Discrete State Space}  Proosition~\ref{prop:crooks} gives us the following forward and backward process:
\begin{align}
p^F_t(x, y) = \delta_x(y) + R^F_t(x,y)\Delta t + o(\Delta t), \qquad \mathcal{L}_{F,t} f(x) = \sum_y R_t^F(x, y)f(y)\\
p^B_t(x, y) = \delta_x(y) + R^B_t(x,y)\Delta t+ o(\Delta t), \qquad \mathcal{L}_{B,t} f(x) = \sum_y R_t^B(x, y)f(y)
\end{align}
where the forward and backward rate matrices are related by the concrete score of $\pi_t$.

Choosing a mixture-style interpolant $x_t \!\sim\! \alpha_t \delta_{x_0}(x_t) \!+\! \beta_t \delta_{x_1}(x_t) \!+\! \gamma_t \delta_M (x_t)$, we can learn both the forward and backward with generator matching.
As there exist many different paths that could realize the same interpolant, we have a big design space, as we discuss in \Cref{sec:appendix_ctmc_feat_theory}.
Here, we consider three variants for learning the generator: (1) discrete flow matching \citep{gat2024discrete} without masking (i.e., $\gamma_t = 0$), which we discussed in \hyperref[par:ctmc_case1]{CTMC Case 1} with loss described in \Cref{eq:appendix_case1_celoss_1,eq:appendix_case1_celoss_2}; (2) Generator Matching without masking, which we discussed in \hyperref[par:ctmc_case1]{CTMC Case 1} with loss described in \Cref{eq:case1_gm,eq:case1_gm2}; and (3) Generator Matching with masking (i.e., $\gamma_t \neq 0, t\in(0,1)$), which we discussed in \hyperref[par:ctmc_case2]{CTMC Case 2} with loss described in \Cref{eq:appendix_case2_GMF,eq:appendix_case2_GMB}.

We evaluate these three settings by estimating free energy differences between Ising models with inverse temperature $\beta=0.2$ and $\beta=0.4$ across different lattice dimensions.
For fair comparisons, all methods employ similar architectures with different hyperparameters (see \cref{sec:appendix_disc_feat_ising} for details).
Reference values are computed analytically using the method in \citet{ferdinand1969bounded}.
Results in \cref{tab:estimated_values} demonstrate that all three methods achieve a relatively reliable estimation of the free energy difference, with the no-mask choices slightly more robust.
In \Cref{tab:discrete_feat_0.2_0.6_estimated_value}, we further consider more challenging transport settings with $\beta=0.2$ and $\beta=0.6$, where the discrepancy between the masking and no-mask choices becomes more pronounced.

\paragraph{Autoregressive Transport}
As discussed in the discrete-time Crooks in Proposition~\ref{prop:discretetime} and the importance-sampling perspective in \Cref{sec:is}, AR models can also be used for free-energy estimation.
We here evaluate AR transport on Ising models across different dimensions and inverse temperatures.
In contrast to the transports between two target distributions, an AR model transports an empty sequence to a full sequence.
Consequently, it estimates the absolute free energy rather than the free energy difference.
As shown in \Cref{tab:ar_feat_results}, AR transport achieves very strong performance.
We hypothesize that this is because AR training provides a particularly strong learning signal, allowing the model to more easily capture dependencies across positions in the target distribution.

Besides these standard settings, we further demonstrate the flexibility of our framework. 
In particular, we show that it is compatible with multi-modal state spaces and can also be extended to momentum-augmented dynamics. 
These additional validations broaden the applicability of our framework.

 \begin{figure}[t]
    \centering
 \begin{minipage}[t]{0.38\textwidth}
\centering
\caption{Estimated and reference free energy difference for Ising fluid with number of particles $N=55$ and different attraction parameters $R$.
The standard deviation is computed by training the network once but running the estimator 3 times using 2,000 samples.}
\label{tab:ising_fluid_res}
\vspace{1pt}
\resizebox{\linewidth}{!}{%
\begin{tabular}{lc c c}
\toprule
$N$
& $R$ 
& \begin{tabular}{c}Estimated\\values\end{tabular}
& \begin{tabular}{c}Reference\\values\end{tabular} \\
\midrule
\multirow{3}{*}{55}
& $0.5 \leftrightarrow 1.0$ & 319.72 $\pm$ 0.23 & 319.84 \\
& $0.5 \leftrightarrow 1.5$ & 412.64 $\pm$ 0.24 &  412.82 \\
& $0.5 \leftrightarrow 2.0$ & 455.16  $\pm$ 0.16 & 455.36 \\
\bottomrule
\end{tabular}%
}
    \end{minipage}
    \begin{minipage}[t]{0.255\textwidth}
        \centering
        \vspace{-5pt}
        \includegraphics[width=\linewidth]{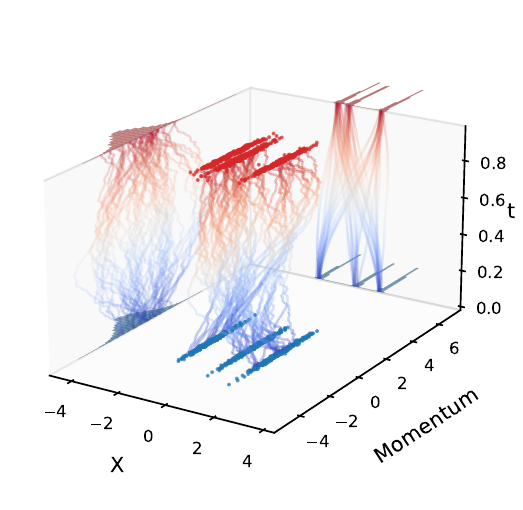}\vspace{0pt}
        \captionof{figure}{Illustration of transport with momentum between two 1D GMMs.  }
        \label{fig:underdamped_vis_1D}
    \end{minipage}
    \begin{minipage}[t]{0.35\textwidth}
        \centering
        \vspace{0pt}
        \includegraphics[width=\linewidth]{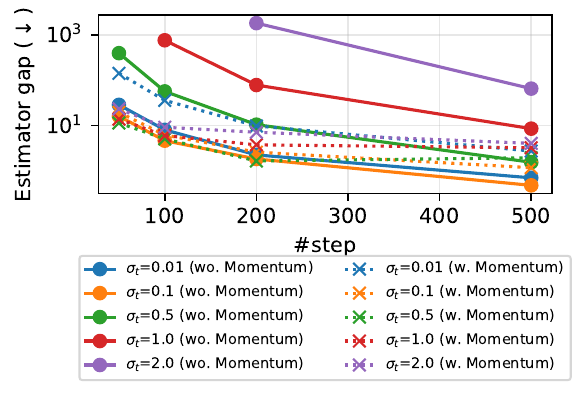}
        \captionof{figure}{Comparing standard SDE  and transport with momentum for free energy of 40D GMMs. }
        \label{fig:underdamped_vis_40D}\vspace{-20pt}
    \end{minipage}
    \vspace{-5pt}
\end{figure}

\paragraph{Multi-modal State Space} 
Another interesting and important case for free-energy estimation is when the system contains more than one modality.
Here, we demonstrate the applicability of our framework in this regime.
Specifically, we consider a component-wise construction that combines Langevin dynamics for the continuous variables with a CTMC for the discrete variables.
The resulting Markov process has a generator as the sum of the two component generators, as detailed in \Cref{appendix:composition}.

We consider an Ising fluid system \citep{omelyan2004ising} with different attraction parameters $R$. The continuous component represents the particles' coordinates, while the discrete part represents the spin.
For simplicity, we learn the transport between a common prior, consisting of a centered Gaussian and a fully masked spin, to each system, and obtain the free energy difference by subtracting the value for each system.
See \Cref{app_details_ising_fluid} for more details on the system, and \Cref{fig:IsingFluid_55,fig:spin-distance-lambda} for visualization of the systems.
We evaluate $N=55$ in \Cref{tab:ising_fluid_res}.
The reference values are obtained with MBAR.
We can see that our method provides a reliable estimate.

\paragraph{Transport with Momentum} 
Besides the flexibility of modality, our framework also supports different transport.
Here, we consider an ``underdamped" Langevin as the Markov process.
In this class of transport, we augment the state space with a momentum space and inject stochasticity into the momentum.
In \Cref{fig:underdamped_vis_1D}, we illustrate this transport between two one-dimensional target distributions.
From Proposition \ref{prop:crooks}, we obtain the forward and backward process in the following form:
\begin{align}
&\text{d}X_t = Y_t \text{d}t, \qquad
\text{d}Y_t = v_t(X_t, Y_t) \text{d}t +\sigma_t^2 s_t(X_t, Y_t) \text{d}t + \sqrt{2}\sigma_t \fwd{\text{d}W_t}\\
&\text{d}X_t = Y_t\bwd{\text{d}t},\qquad\text{d}Y_t = v_t(X_t, Y_t) \text{d}t - \sigma_t^2s_t(X_t, Y_t) \text{d}t + \sqrt{2}\sigma_t \bwd{\text{d}W_t} 
\end{align}
To learn this transport, as we describe in 
\Cref{app:learn_underdamp}, we take a cubic Hermite interpolant:
\begin{align}
    x_t
    =
    h_{00}(t)x_0
    +
    h_{10}(t)y_0
    +
    h_{01}(t)x_1
    +
    h_{11}(t)y_1 , \quad  y_t=\dot x_t.
\end{align}
with $ 
    h_{00}(t)=2t^3-3t^2+1,  
    h_{10}(t)=t^3-2t^2+t,  
    h_{01}(t)=-2t^3+3t^2, 
    h_{11}(t)=t^3-t^2 $ and objective defined in \Cref{loss_underdamp1,loss_underdamp2}.

To verify the effectiveness of this method, we evaluate it for the free energy difference between two 40-dimensional Gaussian Mixture models and compare standard SDE transport in FEAT.
Since both standard SDE transport and momentum-augmented transport can be run with different choices of \(\sigma_t>0\) using the same learned network, we study their performance across $\sigma_t$ levels and different numbers of discretization steps. 
As both yield good estimates of the free energy, we report the gap between the forward and backward estimators in \Cref{eq:fwd_bwd_estimator}.
As shown,  both achieve comparable performance, while momentum-augmented transport is slightly more robust to larger noises.

\vspace{-5pt}
\section{Conclusion}
\vspace{-5pt}
In this paper, we generalize the escorted Jarzynski equality and the Crooks fluctuation theorem to arbitrary state spaces, and develop a general framework for learning transport models in this setting.
Our framework enables the use of generative models for efficient free energy estimation, significantly broadening the class of transport processes for nonequilibrium free energy computation.

\paragraph{Limitations and future works. } While efficient and broadly applicable, our framework is still limited by the expressiveness the chosen network parameterization and may require domain-specific tuning in practice. 
In addition, as discussed in \Cref{app:heavy_tailed}, heavy-tailed target distributions (e.g., Cauchy) can pose difficulties for finite-time transport models. 
Although such heavy-tailed behavior is less typical for Boltzmann distributions, it may arise in Bayesian inference settings.

\section*{Acknowledgment}
JH acknowledges support from the University
of the Cambridge Harding Distinguished Postgraduate Scholar Programme.
ZO is supported by the Lee Family Scholarship and would like to thank Alexander Ganose for insightful discussions on potential applications in materials design.
JMHL acknowledges funding from AI Hub in Generative Models, under grant EP/Y028805/1.

\bibliographystyle{abbrvnat}
\bibliography{ref}

\clearpage
\appendix
\vbox{%
    \hsize\textwidth
    \linewidth\hsize
    \vskip 0.1in
    \hrule height 4pt
  \vskip 0.25in
  \vskip -\parskip%
    \centering
    {\LARGE\bf {Appendix:\\
    Free energy Estimation on Any State Space} \par}
     \vskip 0.29in
  \vskip -\parskip
  \hrule height 1pt
  \vskip 0.09in%
  }

\startcontents[sections]
\printcontents[sections]{l}{1}{\setcounter{tocdepth}{3}}
\newpage

\section{Related Works on Neural Free Energy Estimation}

Probabilistic methods in machine learning have been increasingly used to accelerate free energy estimation. In this section, we review several major directions.

Targeted FEP seeks to learn an invertible map between two states so as to reduce the variance of free energy estimates. Both discrete-time normalizing flows and continuous-time flow matching methods have been used for this purpose, relying on the change-of-variables formula to account for the work associated with the transformation \citep{wirnsberger2020targeted,zhao2023bounding,deepbar}.\looseness=-1

Thermodynamic integration (TI) estimates free energy differences by integrating along an interpolation between two boundary states in the quasi-static limit. NeuralTI \citep{mate2024neural,mate2024solvation} parameterizes this interpolation using diffusion models or stochastic interpolants, while simultaneously learning the energy associated with each intermediate state.

The main class of methods considered in this paper is based on finite-time averaging, or Jarzynski equalities, which use continuous-time non-equilibrium processes to bridge the two boundary states. Within this class, one line of work fixes the non-equilibrium process and seeks to reduce dissipation along the process \citep{doucet2022score,rosa2025nonadiabatic}. A complementary line of work learns an additional escort to reduce dissipation not only along the prefixed non-equilibrium process, but also across the intermediate states during the transformation. Most approaches construct this transformation along an annealing path between the energies of the two boundary states \citep{vargas2023transport,zhong2024time,albergo2024nets}. 
FEAT constructs the transformation through a stochastic interpolant between samples from the two boundary states \citepalias{he2025feat}.
\citet{zhang2025accelerated} combine parallel tempering on the path with escorted Jarzynski for free energy estimation.

When estimating the relative free energy between two equilibrium states from samples drawn from each state, the Bennett acceptance ratio (BAR) \citep{BENNETT1976245} provides an estimator, minimizing the mean-squared error. 
\Cref{eq:bar} extends this principle to non-equilibrium path ensembles, , which was later developed by \citet{shirts2003equilibrium,PhysRevE.80.031111,minh2009optimal,vaikuntanathan2011escorted} and recently \citetalias{he2025feat}. 
For settings involving more than two states, the multistate Bennett acceptance ratio (MBAR) \citep{shirts2008statistically} further generalizes BAR by combining samples from all states into a statistically efficient estimate of their relative free energies.

Recently, \citet{guo2025complexity} conducted complexity analysis for the standard Jarzynski equality and anneal importance sampling, providing theoretical guidance. 
A future direction of our work is to conduct a similar analysis for our generalized formulation.

\section{Generalization of Free Energy estimation from Importance Sampling}
\label{sec:is}

We expand the general perspective for estimating free energy difference between two systems in the perspective of importance sampling from \Cref{eq:general_fe}. 
\begin{equation}
\Delta F
= -\log (Z_Q/Z_P)
= -\log \mathbb E_{X\sim P}
\left[
{\mathrm d\tilde Q}/{\mathrm d\tilde P}(X)
\right].
\end{equation}
Free energy perturbation (FEP) directly reweights the proposal samples with the target density
\begin{tcolorbox}[
  colback=orange!5,
  colframe=orange!60!black,
  boxrule=0.2pt,
  arc=2mm,
  left=1mm,right=2mm,top=1mm,bottom=1mm
]
\begin{align}
&\text{FEP:}\qquad \mathrm d \tilde{Q} /\mathrm d x= e^{-U_B}, \mathrm d \tilde{P}/\mathrm d x = e^{-U_A}, \mathrm d P/\mathrm d x = \pi_A,\\
&\Delta F
=
-\log \mathbb E_{X\sim \pi_A}
\left[
\exp\bigl(-U_B(X)+U_A(X)\bigr)
\right]
\end{align}
\end{tcolorbox}

Targeted FEP learns an (invertible) transport map $T$ from the proposal distribution to the target distribution, then reweight the transported samples under the target density while taking into account the work induced by the transport
\begin{tcolorbox}[
  colback=orange!5,
  colframe=orange!60!black,
  boxrule=0.2pt,
  arc=2mm,
  left=1mm,right=2mm,top=1mm,bottom=1mm
]
\begin{align}
&\text{Targeted FEP:}\qquad \mathrm d \tilde{Q}/\mathrm d x = e^{-U_B}, \mathrm d \tilde{P}/ \mathrm d x = e^{-U_A}|\det \nabla T|^{-1}, \mathrm d P/\mathrm d  x = \pi_A \\
&\Delta F
=
-\log \mathbb E_{X\sim \pi_A}
\left[
\exp\bigl(-U_B(T(X))+U_A(X)\bigr)
\left|\det \nabla T(X)\right|
\right]
\end{align}
\end{tcolorbox}
Jarzynski equality and escorted Jarzynski equality augment the proposal with a path measure induced by a finite-time stochastic process, then reweight path samples following the weights determined by the Crooks as in \Cref{eq:crooks}

\begin{tcolorbox}[
  colback=orange!5,
  colframe=orange!60!black,
  boxrule=0.2pt,
  arc=2mm,
  left=1mm,right=2mm,top=1mm,bottom=1mm
]
\begin{align}
\text{JE and EJE:}\qquad 
 \mathrm d\tilde{Q} &= e^{-U_B(X_1)}\text{d}\bwd{Q}(\cdot|X_1), \;\; \mathrm d \tilde{P} = e^{-U_A(X_0)}\text{d}\fwd{P}(\cdot|X_0), \;\; P = \overrightarrow{P}\\
\Delta F&
=
-\log \mathbb E_{X_{[0,1]}\sim \overrightarrow{P}}
\left[
e^{-U_B(X_1)}\text{d}\bwd{Q}(\cdot|X_1)/e^{-U_A(X_0)}\text{d}\fwd{P}(\cdot|X_0))
\right]
\end{align}
\end{tcolorbox}
while the key difference between JE and EJE is that EJE uses a different stochastic process with \textit{escort} for the transport.

Now we consider autoregressive models (AR) as transport. As AR begins with all empty tokens, we consider the absolute free energy case or the relative free energy between an unnormalized density and a normalized density. 

We define the vocabulary set $\mathcal{V}$ and extended vocabulary set $\bar{\mathcal{V}} := \mathcal{V} \, \cup \{\varnothing\}$ including the empty token. Now we define an extended state space $\mathcal{X}^N := \bar{\mathcal{V}}^N$ with maximum of $N$ tokens.  The autoregressive models factorize as 
\begin{equation}
p(X_{N}) = p(X_0)\prod_{i=1}^N p(X_i|X_{i-1})
\end{equation}
where each state grows from previous state by filling empty tokens from the vocabulary set, i.e. $X_i \in \mathcal{V}^i \times \{\varnothing\}^{N-i} \subset \bar{\mathcal{V}}^N$.

\begin{tcolorbox}[
  colback=orange!5,
  colframe=orange!60!black,
  boxrule=0.2pt,
  arc=2mm,
  left=1mm,right=2mm,top=1mm,bottom=1mm
]
\begin{align}
&\text{AR transport:} \;\; \tilde Q= e^{-U_B(X_N)}\prod_{i=1}^N q(X_{i-1}|X_i),\; \tilde P = p(X_0)\prod_{i=1}^N p(X_i|X_{i-1}) ,\; P = p
\\
& \Delta F = -\log Z_B
=
-\log \mathbb E_{X_N\sim p}
\left[
\frac{\exp(-U_B(X_{N}))}{p(X_0)}\frac{\prod_{i=1}^N q(X_{i-1}|X_i)}{\prod_{i=1}^N p(X_i|X_{i-1})}\right]
\end{align}
\end{tcolorbox}
Now, we need to consider the choice of the ``backward kernel'', i.e. $q(X_{i-1}|X_{i})$, for AR. It turns out the optimal ``backward kernel'' is the deterministic deletion kernel, $D_i(x_{0:i},\varnothing_{i+1:N}) = (x_{0:i-1}, \varnothing_{i:N})$.
\begin{equation}
q(X_{i-1}|X_{i}) = \bold{1}\{X_{i-1} = D_i(X_i)\}
\end{equation}
This follows from the optimality of the AR model, where the free energy estimator holds not only in expectation but almost surely. It can also be noted by rewriting the free energy estimator as
\begin{equation}
\Delta F = -\log Z_Q
= -\log \mathbb{E}_{X_N\sim p} \left[\frac{\exp(-U_B(X_{N}))}{p(X_N)} \right]
\end{equation}
where with the optimal AR $Z_B = \exp(-U_B(X_{N}))/p^*(X_N) $.

\section{Assumptions, Definitions, and Preliminaries}\label{app:assumption_definitation}
In our paper, we consider transport as a Markov process. 
The key concept is the Markov semigroup, which we defined below.
\begin{definition}[Markov semigroup]
Let $(E,\mathcal B)$ be a measurable state space, and let $(P_t)_{t\ge 0}$ be a family of linear operators acting on a space $\mathcal F$ of bounded measurable functions $f:E\to\mathbb R$. We say that $(P_t)_{t\ge 0}$ is a Markov semigroup if, for all $s,t\ge 0$,
\begin{enumerate}
    \item $P_0 f=f$ for all $f\in\mathcal F$;
    \item $P_{t+s}=P_tP_s$;
    \item $P_t f\ge 0$ whenever $f\ge 0$;
    \item $P_t\mathbf 1=\mathbf 1$.
\end{enumerate}
\end{definition}
The Markov process can often be characterized through its infinitesimal generator, defined below.

\begin{definition}[Infinitesimal generator of a Markov process]
Let $(P_t)_{t\ge 0}$ be a Markov semigroup acting on bounded measurable functions
$f:E\to\mathbb R$. Its infinitesimal generator $\mathcal L$ is defined pointwise by
\begin{align}
        \mathcal L f(x)
    :=
    \lim_{t\downarrow 0}
    \frac{P_t f(x)-f(x)}{t},
\end{align}
for all functions $f$ such that the above limit exists for every $x\in E$.
\end{definition}

The same definition applies to time-inhomogeneous Markov processes.

\begin{definition}[Time-inhomogeneous Markov transition operators]
Let $(E,\mathcal B)$ be a measurable state space, and let
$(P_{s,t})_{0\le s\le t}$ be a family of linear operators acting on a space
$\mathcal F$ of bounded measurable functions $f:E\to\mathbb R$. We say that
$(P_{s,t})_{0\le s\le t}$ is a time-inhomogeneous Markov transition family if,
for all $0\le s\le t\le u$,
\begin{enumerate}
    \item $P_{s,s}f=f$ for all $f\in\mathcal F$;
    \item $P_{s,u}=P_{s,t}P_{t,u}$;
    \item $P_{s,t}f\ge 0$ whenever $f\ge 0$;
    \item $P_{s,t}\mathbf 1=\mathbf 1$.
\end{enumerate}
\end{definition}

For the Markov process $(X_t)_{t\in[0,1]}$, the transition operator is interpreted as
\begin{align}
     P_{s,t}f(x)
    =
    \mathbb E[f(X_t)\mid X_s=x],
    \qquad 0\le s\le t.
\end{align}

\begin{definition}[Time-dependent infinitesimal generator]
Let $(P_{s,t})_{0\le s\le t}$ be a time-inhomogeneous Markov transition family.
Its time-dependent infinitesimal generator $\mathcal L_t$ is defined pointwise by
\begin{align}
       \mathcal L_t f(x)
    :=
    \lim_{\delta\downarrow 0}
    \frac{P_{t,t+\delta}f(x)-f(x)}{\delta},
\end{align}
 
for all functions $f$ such that the above limit exists.
\end{definition}

The generator $\mathcal L$ describes how test functions, or observables, change
under the Markov transition. By duality, the same Markov evolution induces an
evolution on probability measures or densities. This dual evolution is described
by the adjoint operator of the generator.

\begin{definition}[Adjoint of a Markov generator]
Let $\mathcal L$ be the infinitesimal generator of a Markov process on a state
space $(E,\mathcal B)$, acting on a class of test functions. Let $\mu$ be a
reference measure on $E$. The adjoint operator $\mathcal L^\dagger$ is defined
through the duality relation
\begin{align}
    \int_E (\mathcal L f)(x)\, g(x)\,\mu(\mathrm dx)
    =
    \int_E f(x)\,(\mathcal L^\dagger g)(x)\,\mu(\mathrm dx),
\end{align}
for suitable functions $f$ and $g$ for which the above expression is
well-defined.

Equivalently, if $\langle f,g\rangle_\mu := \int_E f(x)g(x)\,\mu(\mathrm dx)$,
then $\mathcal L^\dagger$ is characterized by
\begin{align}
    \langle \mathcal L f, g\rangle_\mu
    =
    \langle f,\mathcal L^\dagger g\rangle_\mu .
\end{align}

If $\rho_t$ denotes the density of the law of $X_t$ with respect to $\mu$, then we have
\begin{align}
    \partial_t \rho_t
    =
    \mathcal L^\dagger \rho_t, 
\end{align}
which is known as the forward Kolmogorov equation. 
For diffusion processes, this
equation is also commonly called the Fokker-Planck equation.

For a time-inhomogeneous Markov process with time-dependent generator
$\mathcal L_t$, we simply have
\begin{align}
    \partial_t \rho_t
    =
    \mathcal L_t^\dagger \rho_t .
\end{align}
\end{definition}

\begin{remark}[Change of reference measure]\label{eq:change_of_reference}
The adjoint operator depends on the choice of reference measure. To make this
dependence explicit, write $\mathcal L^{\dagger,\mu}$ for the adjoint with
respect to $\mu$. Suppose that $\mu$ is absolutely continuous with respect to the
Lebesgue measure and write
\begin{align}
    \mu(\mathrm dx) = m(x)\,\mathrm dx .
\end{align}
Then the adjoint with respect to $\mu$ and the adjoint with respect to the
Lebesgue measure are formally related by
\begin{align}
    \mathcal L^{\dagger,m} g
    =
    \frac{1}{m}\,
    \mathcal L^{\dagger,\mathrm{Leb}}(mg),
\end{align}
whenever the expression is well-defined. In particular,
$\mathcal L^{\dagger,\mu}g$ should be interpreted as the infinitesimal change of
the density $g$ with respect to the reference measure $\mu$, not as a normalized
density itself.

In this work, unless otherwise stated, we take $\mu$ to be the Lebesgue measure
and write $\mathcal L^\dagger$ for $\mathcal L^{\dagger,\mathrm{Leb}}$.
\end{remark}

The generalized Doob's $h$-transform \citep{chetrite2011two,chetrite2013nonequilibrium,chetrite2015nonequilibrium} allows us to derive the path weight for the tilted path and its generator, which we will use in the following to derive the generalized EJE.
\begin{lemma}[Generalized Doob's $h$-transform]
\label{thm:conservative_doob_h_transform}
Let $(X_t)_{t\in[0,1]}$ be a time-inhomogeneous Markov process on a state space
$(E,\mathcal B)$ with path measure $P$ and time-dependent generator
$(\mathcal L_t)_{t\in[0,1]}$. Let $h_t:E\to(0,\infty)$ be strictly positive and
sufficiently regular. Define
\begin{align}
    \mathcal L_t^h f
    :=
    h_t^{-1}\mathcal L_t(h_t f)
    -
    h_t^{-1}f\,\mathcal L_t h_t .
    \label{eq:conservative_h_transform_generator}
\end{align}
Assume that $(\mathcal L_t^h)_{t\in[0,1]}$ generates a time-inhomogeneous Markov
The path measure $P^h$ is absolutely continuous with respect
to $P$, with Radon--Nikodym derivative
\begin{align}
    \frac{\mathrm d  P^h}{\mathrm d P}
    (x_{(0,1]}|x_0)
    =
    \frac{h_1(x_1)}{h_0(x_0)}
    \exp\left(
        -\int_0^1
        \left[
            h_t^{-1}\partial_t h_t
            +
            h_t^{-1}\mathcal L_t h_t
        \right](x_t)
        \,\mathrm dt
    \right).
    \label{eq:conservative_h_transform_rnd}
\end{align}
The adjoint generator of the $h$-transformed process is
\begin{align}
    (\mathcal L_t^h)^\dagger g
    =
    h_t\,\mathcal L_t^\dagger\!\left({h_t}^{-1} {g}\right)
    -
   {h_t}^{-1}  {\mathcal L_t h_t} \,g .
    \label{eq:h_transform_adjoint_generator}
\end{align}
\end{lemma}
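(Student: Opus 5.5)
The lemma has three claims: that $\mathcal L_t^h$ is a generator, the path RND formula, and the adjoint formula. I would prove them in reverse order of difficulty.

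\textbf{Generator property and adjoint.} First I check that $\mathcal L_t^h$ is again a Markov generator. It is linear. It annihilates constants, since $\mathcal L_t^h\mathbf 1 = h_t^{-1}\mathcal L_t h_t - h_t^{-1}\mathcal L_t h_t = 0$. It satisfies the positive maximum principle whenever $\mathcal L_t$ does: at a maximum $x^*$ of $f$, the function $h_t(f - f(x^*))\le 0$ vanishes at $x^*$, and $\mathcal L_t^h f(x^*) = h_t^{-1}\mathcal L_t(h_t(f-f(x^*)))(x^*)$. This last identity uses $\mathcal L_t^h\mathbf 1=0$.

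The adjoint is a one-line duality computation. For test functions $f,g$,
\[
\langle \mathcal L_t^h f, g\rangle_\mu = \langle \mathcal L_t(h_t f), g/h_t\rangle_\mu - \langle f, g\,h_t^{-1}\mathcal L_t h_t\rangle_\mu = \langle f, h_t\mathcal L_t^\dagger(g/h_t) - h_t^{-1}(\mathcal L_t h_t)\,g\rangle_\mu .
\]
This is exactly \eqref{eq:h_transform_adjoint_generator}.

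\textbf{Path RND.} I would argue via a martingale change of measure. Set
\[
M_t := \frac{h_t(X_t)}{h_0(X_0)}\exp\!\Big(-\int_0^t \big[h_s^{-1}(\partial_s+\mathcal L_s)h_s\big](X_s)\,\mathrm ds\Big).
\]
By Dynkin's formula, $h_t(X_t)-\int_0^t(\partial_s+\mathcal L_s)h_s(X_s)\,\mathrm ds$ is a $P$-local martingale. The standard exponential-martingale lemma (Itô/product rule for a semimartingale times a finite-variation process, valid for general Markov processes in the sense of Palmowski--Rolski) then shows that $M_t$ is a positive $P$-local martingale with $M_0=1$.

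Assuming the regularity hypotheses make it a true martingale, I define $\mathrm dP^h := M_1\,\mathrm dP$ conditionally on $X_0$. It remains to identify the generator under $P^h$. For a test function $f$, compute $\frac{\mathrm d}{\mathrm d\delta}\big|_{0}\,\mathbb E_P[f(X_{t+\delta})M_{t+\delta}/M_t\mid X_t=x]$. Expanding to first order in $\delta$ gives
\[
h_t^{-1}(x)\big[(\partial_t+\mathcal L_t)(h_t f) - f\,(\partial_t+\mathcal L_t)h_t\big](x).
\]
The $\partial_t$ terms cancel, leaving $\mathcal L_t^h f(x)$. By uniqueness of the martingale problem for $\mathcal L^h$, which is assumed when we say $\mathcal L^h$ generates a process, $P^h$ is the law of the $h$-transformed process, and the RND is $M_1$. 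This is \eqref{eq:conservative_h_transform_rnd}.

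\textbf{Main obstacle.} The delicate step is upgrading $M$ from a local martingale to a true martingale, together with uniqueness of the martingale problem. I would absorb both into the ``sufficiently regular'' and ``generates a Markov process'' hypotheses, via localization and a Novikov/Kazamaki-type or bounded-$h$ condition, rather than prove them in full generality.
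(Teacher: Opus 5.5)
Your argument is correct. The paper does not prove this lemma itself; it imports it from Chetrite and Touchette, who establish it by exactly your route: the Palmowski--Rolski exponential martingale $M_t$, the change of measure $\mathrm dP^h = M_1\,\mathrm dP$ (conditionally on $X_0$), identification of the transformed generator as $\mathcal L_t^h$, and true-martingale and well-posedness issues absorbed into regularity assumptions. Your duality computation also correctly supplies the adjoint formula that the paper later uses without proof in Theorem~\ref{thm:prescribing_h_transform_marginal}.
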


\section{Theorems and Proofs}
\label{appendix:theorem}

\subsection{Infinitesimal  reversal and its connection to adjoint}
\begin{tcolorbox}[
    colback=gray!8,
    colframe=gray!40,
    boxrule=0.5pt,
    arc=4pt,
    left=1pt,
    right=1pt,
    top=2pt,
    bottom=2pt,
    breakable
]
\begin{proposition}[Infinitesimal $\pi_t$-reversal]
\label{prop:pi_reversal}
Let $(\mathcal L_t)_{t\in[0,1]}$ be a time-inhomogeneous Markov generator, and let $\mathcal L_t^\dagger$ denote its adjoint with
respect to a reference measure $\mu$. Let $\pi_t$ be a strictly positive
density with respect to $\mu$. The infinitesimal reverse generator with
respect to $\pi_t$ is
\begin{align}
    \bwd{\mathcal L}_t^\pi f
    &=
    \frac{1}{\pi_t}\mathcal L_t^\dagger(\pi_t f)
    -
    \frac{\mathcal L_t^\dagger \pi_t}{\pi_t}f .
    \label{eq:pi_reversal_generator}
\end{align}
\end{proposition}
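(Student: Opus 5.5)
The plan is to derive \Cref{eq:pi_reversal_generator} as the infinitesimal generator of the one-step backward kernel $\bwd{P}^{\pi_t}_{t+\delta,t}$. This kernel is defined by the discrete-time reversal identity
\begin{align}
    \pi_t(x)\,P_{t,t+\delta}(x,\mathrm dy)\,\mu(\mathrm dx)
    =
    (\pi_t P_{t,t+\delta})(y)\,\mu(\mathrm dy)\,\bwd{P}^{\pi_t}_{t+\delta,t}(y,\mathrm dx),
\end{align}
which is the continuous analogue of \Cref{eq:discrete_requirement}. Here $\pi_t P_{t,t+\delta}$ is the $\mu$-density of the pushforward of $\pi_t\mu$. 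By Bayes' rule, the backward operator acting on a test function $f$ is
\begin{align}
    \bwd{P}^{\pi_t}_{t+\delta,t} f(y)
    =
    \frac{\int \pi_t(x) f(x)\,P_{t,t+\delta}(x,\mathrm dy)\,\mu(\mathrm dx)/\mu(\mathrm dy)}{(\pi_t P_{t,t+\delta})(y)} .
\end{align}
The numerator is the $\mu$-density at $y$ of the measure $\pi_t f\,\mu$ pushed forward one step. The denominator is the same object with $f\equiv 1$.

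The key step is to expand both numerator and denominator to first order in $\delta$, using the adjoint. For any suitable $g$, the density of $g\mu$ transported over $[t,t+\delta]$ is $g+\delta\,\mathcal L_t^\dagger g+o(\delta)$. This is just the forward Kolmogorov equation $\partial_t\rho=\mathcal L_t^\dagger\rho$ applied with initial datum $g$, and it is the same expansion used in \Cref{eq:generator_expansion}. To justify it weakly, I would pair with a test function $\phi$: $\int \phi\,(g\mu)P_{t,t+\delta}=\int (P_{t,t+\delta}\phi)\,g\,\mathrm d\mu=\int(\phi+\delta\mathcal L_t\phi)g\,\mathrm d\mu+o(\delta)$, and then apply the duality definition of $\mathcal L^\dagger_t$ with respect to $\mu$. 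Taking $g=\pi_t f$ and $g=\pi_t$ gives
\begin{align}
    \bwd{P}^{\pi_t}_{t+\delta,t} f
    =
    \frac{\pi_t f+\delta\,\mathcal L_t^\dagger(\pi_t f)+o(\delta)}{\pi_t+\delta\,\mathcal L_t^\dagger\pi_t+o(\delta)}
    =
    f+\delta\Bigl(\frac{\mathcal L_t^\dagger(\pi_t f)}{\pi_t}-\frac{\mathcal L_t^\dagger\pi_t}{\pi_t}f\Bigr)+o(\delta).
\end{align}
Here I expand $1/(1+\delta a)$ using strict positivity of $\pi_t$. Subtracting $f$, dividing by $\delta$ and letting $\delta\downarrow0$ recovers \Cref{eq:pi_reversal_generator}, matching the definition of the time-dependent generator in \Cref{app:assumption_definitation}.

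As sanity checks, I would verify that the result is a genuine Markov generator. First, it annihilates constants: $\bwd{\mathcal L}^\pi_t\mathbf 1=0$, consistent with $\bwd{P}\mathbf 1=\mathbf 1$. Second, it coincides with $H_{1/\pi}$ applied to $\mathcal L_t^\dagger$, in line with \Cref{thm:conservative_doob_h_transform}. I would also note that when $\pi_t$ is the actual marginal, it reduces to the Nelson relation, as in \Cref{remark:special_case}.

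The main obstacle is making the first-order expansion rigorous pointwise rather than weakly. Passing from the weak statement (pairing with $\phi$) to a pointwise expansion of densities needs regularity: $\pi_t f$ must lie in the domain of $\mathcal L_t^\dagger$, and the $o(\delta)$ remainder must be uniform enough to survive the division. I would handle this by stating these conditions as the standing ``sufficiently regular'' assumptions, as the paper does for Lemma~\ref{thm:conservative_doob_h_transform}. Under those assumptions I would carry out the argument at the level of the weak formulation, identifying the generator via $\int \phi\,\pi_t\,\bwd{\mathcal L}^\pi_t f\,\mathrm d\mu$ for all test functions $\phi$.
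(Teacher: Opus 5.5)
Your argument is correct and is essentially the paper's second (``normalization'') proof: your Bayes numerator divided by $\pi_t$ is the paper's importance-weighted operator $Q^{\pi_t}_{t,\delta}f$, and $\bwd P^{\pi_t}_{t+\delta,t}f = Q^{\pi_t}_{t,\delta}f/Q^{\pi_t}_{t,\delta}\mathbf 1$ is expanded with the same $1/(1+\delta a)$ step, while your weak-formulation fallback is the paper's first proof. One small slip in your sanity check: the formula equals the Doob transform of $\mathcal L_t^\dagger$ with $h=\pi_t$, i.e.\ $H_{\pi}$ applied to $\mathcal L_t^\dagger$, not $H_{1/\pi}$ (in the diffusion case the latter would give drift $-b-2\sigma^2\nabla\log\pi_t$ instead of the correct $-b+2\sigma^2\nabla\log\pi_t$).
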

\end{tcolorbox}

Before proving the formula, we clarify what ``reversal with respect to $\pi_t$''
means. Fix a time $t$ and a positive density $\pi_t$. We initialize the forward
transition from $X_t\sim \pi_t$, evolve one infinitesimal step using
$P_{t,t+\delta}$, and then reverse this one-step joint law by Bayes'
rule. That is, we define the reverse kernel
$\overleftarrow P^{\pi_t}_{t+\delta,t}$ through
\begin{align}
    \pi_t(x) \,P_{t,t+\delta}(x,\mathrm dy)
\mathrm dx      =
    (\pi_t P_{t,t+\delta})(y) \mathrm dy
    \bwd {P}^{\pi_t}_{t+\delta,t}(y,\mathrm dx).
\end{align}
The infinitesimal $\pi_t$-reversal generator is the generator of this reverse
kernel, defined by the expansion
\begin{align}
    \bwd{P}^{\pi_t}_{t+\delta,t}f(y)
    =
    f(y)
    +
    \delta\,\bwd{\mathcal L}_t^\pi f(y)
    +
    o(\delta),
    \qquad \delta\downarrow 0.
\end{align}
When $\pi_t$ is the actual marginal density of the forward process, this is the
``time-reversal" generator. 
For a general choice of $\pi_t$, it
is a local ``reversal" rather than the time-reversal of the full
process.

\begin{proof}
Let \(P_{t,t+\delta}\) denote the forward transition operator generated by
\(\mathcal L_t\). For small \(\delta>0\), we have the infinitesimal expansion
\begin{align}
    P_{t,t+\delta}g
    =
    g
    +
    \delta\,\mathcal L_t g
    +
    o(\delta),
    \qquad \delta\downarrow 0 .
    \label{eq:forward_semigroup_expansion}
\end{align}
Moreover, in the weak sense,
\begin{align}
    \pi_t P_{t,t+\delta}
    =
    \pi_t
    +
    \delta\,\mathcal L_t^\dagger \pi_t
    +
    o(\delta).
    \label{eq:forward_density_expansion}
\end{align}

By definition, the \(\pi_t\)-reversal kernel
\(\bwd P^{\pi_t}_{t+\delta,t}\) is the reverse kernel of the one-step joint law
initialized from \(X_t\sim \pi_t\):
\begin{align}
    \pi_t(x) P_{t,t+\delta}(x,\mathrm dy)\,\mathrm dx
    =
    (\pi_t P_{t,t+\delta})(y)\,\mathrm dy\,
    \bwd P^{\pi_t}_{t+\delta,t}(y,\mathrm dx).
\end{align}
Its infinitesimal generator is defined by
\begin{align}
    \bwd P^{\pi_t}_{t+\delta,t} f
    =
    f
    +
    \delta\,\bwd{\mathcal L}_t^\pi f
    +
    o(\delta),
    \qquad \delta\downarrow 0 .
    \label{eq:backward_semigroup_expansion}
\end{align}

We now derive the integral characterization of
\(\bwd{\mathcal L}_t^\pi\). Let \(f\) and \(g\) be test functions. Using the
forward representation of the one-step joint law, we have
\begin{align}
    \mathbb E[f(X_t)g(X_{t+\delta})]
    &=
    \int_E
    f(x)\,P_{t,t+\delta}g(x)\,\pi_t(x) \mathrm d x.
\end{align}
By \eqref{eq:forward_semigroup_expansion},
\begin{align}
    \mathbb E[f(X_t)g(X_{t+\delta})]
    &=
    \int_E
    f
    \left(g+\delta\mathcal L_t g+o(\delta)\right)
    \pi_t\mathrm dx
    \\
    &=
    \int_E fg\,\pi_t \mathrm d x
    +
    \delta\int_E f\,\mathcal L_t g\,\pi_t \mathrm d x
    +
    o(\delta).
    \label{eq:forward_two_time_expansion}
\end{align}

On the other hand, using the reversed representation of the same one-step joint
law, we obtain
\begin{align}
    \mathbb E[f(X_t)g(X_{t+\delta})]
    &=
    \int_E
    g(y)\,
    \bwd P^{\pi_t}_{t+\delta,t}f(y)\,
    (\pi_t P_{t,t+\delta})(y) \mathrm d y.
\end{align}
Using \eqref{eq:backward_semigroup_expansion} and
\eqref{eq:forward_density_expansion}, this becomes
\begin{align}
    \mathbb E[f(X_t)g(X_{t+\delta})]
    &=
    \int_E
    g
    \left(
        f+\delta\bwd{\mathcal L}_t^\pi f+o(\delta)
    \right)
    \left(
        \pi_t+\delta\mathcal L_t^\dagger\pi_t+o(\delta)
    \right)
    \mathrm d x
    \\
    &=
    \int_E fg\,\pi_t \mathrm d x
    +
    \delta\int_E g\,\bwd{\mathcal L}_t^\pi f\,\pi_t \mathrm d x
    +
    \delta\int_E fg\,\mathcal L_t^\dagger\pi_t \mathrm dx
    +
    o(\delta).
    \label{eq:backward_two_time_expansion}
\end{align}

Since \eqref{eq:forward_two_time_expansion} and
\eqref{eq:backward_two_time_expansion} are expansions of the same quantity, we
obtain
\begin{align}
    \int_E g\,\bwd{\mathcal L}_t^\pi f\,\pi_t\mathrm d  x
    =
    \int_E f\,\mathcal L_t g\,\pi_t \mathrm d  x
    -
    \int_E fg\,\mathcal L_t^\dagger\pi_t \mathrm d x.
    \label{eq:pi_reverse_integral_form}
\end{align}
This is the integral characterization of the infinitesimal
\(\pi_t\)-reversal.

It remains to identify the operator explicitly. By adjointness with respect to
Lesbague measure,
\begin{align}
    \int_E f\,\mathcal L_t g\,\pi_t \mathrm d  x
    =
    \int_E \mathcal L_t g\,(\pi_t f) \mathrm d x
    =
    \int_E g\,\mathcal L_t^\dagger(\pi_t f) \mathrm d  x .
\end{align}
Substituting this into \eqref{eq:pi_reverse_integral_form}, we get
\begin{align}
    \int_E g\,\bwd{\mathcal L}_t^\pi f\,\pi_t\mathrm  dx
    =
    \int_E g
    \left[
        \mathcal L_t^\dagger(\pi_t f)
        -
        f\,\mathcal L_t^\dagger\pi_t
    \right] \mathrm  dx.
\end{align}
Equivalently,
\begin{align}
    \int_E g\,\bwd{\mathcal L}_t^\pi f\,\pi_t\mathrm  dx
    =
    \int_E g
    \left[
        \frac{1}{\pi_t}\mathcal L_t^\dagger(\pi_t f)
        -
        \frac{\mathcal L_t^\dagger\pi_t}{\pi_t}f
    \right]
    \pi_t \mathrm d x .
\end{align}
Since this holds for all test functions \(g\), we identify
\begin{align}
    \bwd{\mathcal L}_t^\pi f
    =
    \frac{1}{\pi_t}\mathcal L_t^\dagger(\pi_t f)
    -
    \frac{\mathcal L_t^\dagger\pi_t}{\pi_t}f .
\end{align}
This proves the formula.
\end{proof}

\begin{remark}
We note that this infinitesimal reversal has a very similar form to the adjoint.
They only differ in an extra term.
In fact, this is because they have a deeper connection:
The infinitesimal reversal is simply a normalized version of $\pi$-weighted adjoint.
The first term in \Cref{eq:pi_reversal_generator}, i.e., 
\begin{align}
    \frac{1}{\pi_t}\mathcal L_t^\dagger(\pi_t f)
\end{align}
comes directly from the $\pi_t$-adjoint of the forward transition. 
It is
not yet a Markov generator because it does not necessarily send $\mathbf 1$ to
zero. 
The second term is exactly the normalization correction coming from
importance-normalizing the reverse kernel:
\begin{align}
    \frac{1}{\pi_t}\mathcal L_t^\dagger(\pi_t ) f =     \frac{1}{\pi_t}\mathcal L_t^\dagger(\pi_t \mathbf{1}) f
\end{align}
\end{remark}

We can also obtain a derivation using this normalization perspective to prove Proposition~\ref{prop:pi_reversal}:
\begin{proof}
Fix a small $\delta>0$, we start from:
\begin{align}
    \pi_t(x)\,\mathrm dx\,P_{t,t+\delta}(x,\mathrm dy)
    =
    (\pi_t P_{t,t+\delta})(y)\,\mathrm dy\,
    \bwd{P}^{\pi_t}_{t+\delta,t}(y,\mathrm dx).
\end{align}
Therefore, for test functions $f$ and $g$,
\begin{align}
    \int_E f(x)P_{t,t+\delta}g(x)\,\pi_t(x)\,\mathrm dx
    =
    \int_E g(y)\,
    \bwd{P}^{\pi_t}_{t+\delta,t}f(y)\,
    (\pi_t P_{t,t+\delta})(y)\,\mathrm dy .
\end{align}
Applying importance sampling to the RHS
  with proposal $\pi_t(y)\,\mathrm dy$, we obtain
\begin{align}\label{eq:is_qqqq}
    \int_E f(x)P_{t,t+\delta}g(x)\,\pi_t(x)\,\mathrm dx
    =
    \int_E g(y)\,Q_{t,\delta}^{\pi_t}f(y)\,
    \pi_t(y)\,\mathrm dy,
\end{align}
where
\begin{align}
    Q_{t,\delta}^{\pi_t}f(y)
    :=
    \bwd{P}^{\pi_t}_{t+\delta,t}f(y)
    \frac{(\pi_t P_{t,t+\delta})(y)}{\pi_t(y)} .
\end{align}

From \Cref{eq:is_qqqq}, we can see that $Q_{t,\delta}^{\pi_t}$ is the adjoint
of the forward transition with respect to the measure $\pi_t(x)\,\mathrm dx$.
This becomes explicit by using the infinitesimal expansions
\begin{align}
    P_{t,t+\delta}g
    =
    g+\delta\,\mathcal L_t g+o(\delta),
\end{align}
and
\begin{align}
    Q_{t,\delta}^{\pi_t}f
    =
    f
    +
    \delta\,\mathcal L_t^Q f
    +
    o(\delta).
\end{align}
Substituting these expansions into \Cref{eq:is_qqqq} gives
\begin{align}
    \int_E f(x)\mathcal L_t g(x)\pi_t(x)\,\mathrm dx
    =
    \int_E g(x)\mathcal L_t^Q f(x)\pi_t(x)\,\mathrm dx .
\end{align}
Equivalently, 
\begin{align}
    \langle f,\mathcal L_t g\rangle_{\pi_t}
    =
    \langle \mathcal L_t^Q f,g\rangle_{\pi_t}.
\end{align}
Thus $\mathcal L_t^Q$ is the adjoint of $\mathcal L_t$ with respect to the
reference measure $\pi_t(x)\,\mathrm dx$. Using the change-of-reference formula
in \Cref{eq:change_of_reference}, we obtain
\begin{align}
    \mathcal L_t^Q f
    =
    \mathcal L_t^{\dagger,\pi_t} f
    =
    \frac{1}{\pi_t}\mathcal L_t^\dagger(\pi_t f),
\end{align}
where $\mathcal L_t^\dagger$ denotes the adjoint with respect to Lebesgue
measure.

Recall that
\begin{align}
    Q_{t,\delta}^{\pi_t}f(y)
    :=
    \bwd{P}^{\pi_t}_{t+\delta,t}f(y)
    \frac{(\pi_t P_{t,t+\delta})(y)}{\pi_t(y)} ,
\end{align}
i.e., $Q_{t,\delta}^{\pi_t}$ is an importance-weighted version of the
reverse kernel. 
Since the backward kernel is normalized, i.e., $\bwd{P}^{\pi_t}_{t+\delta,t}\mathbf{1} = \mathbf{1}$, the weight
\begin{align}
    \frac{(\pi_t P_{t,t+\delta})(y)}{\pi_t(y)} =  Q_{t,\delta}^{\pi_t}\mathbf 1(y)
\end{align}
Hence, the actual reverse kernel is recovered by dividing out this normalizing
factor:
\begin{align}
    \bwd{P}^{\pi_t}_{t+\delta,t}f
    =
    \frac{Q_{t,\delta}^{\pi_t}f}
    {Q_{t,\delta}^{\pi_t}\mathbf 1}.
\end{align}

Therefore,
\begin{align}
    \bwd{P}^{\pi_t}_{t+\delta,t}f
    &=
    \frac{
        f+\delta\,\mathcal L_t^Q f+o(\delta)
    }{
        1+\delta\,\mathcal L_t^Q\mathbf 1+o(\delta)
    } \\
    &=
    f
    +
    \delta
    \left(
        \mathcal L_t^Q f
        -
        f\,\mathcal L_t^Q\mathbf 1
    \right)
    +
    o(\delta).
\end{align}
The second equation is due to the Taylor expension of $1/(1+x)$ at $x=0$:
\begin{align}
    \frac{1}{1+\delta a+o(\delta)}
=
1-\delta a+o(\delta)
\end{align}
and then drop higher-order terms in $\delta$.
Hence the infinitesimal generator of the normalized reverse kernel is
\begin{align}
    \bwd{\mathcal L}_t^\pi f
    &=
    \mathcal L_t^Q f
    -
    f\,\mathcal L_t^Q\mathbf 1 \\
    &=
    \frac{1}{\pi_t}\mathcal L_t^\dagger(\pi_t f)
    -
    \frac{\mathcal L_t^\dagger\pi_t}{\pi_t}f .
\end{align}
\end{proof}

\subsection{Infinitesimal reversal and its connection to Generalized Doob's $h$-transform}

The infinitesimal reversal has a deep connection to the generalized Doob's $h$-transform, 
as we stated in  \Cref{sec:reversal_doob}.
Here, we state a more detailed version.

\begin{tcolorbox}[
    colback=gray!8,
    colframe=gray!40,
    boxrule=0.5pt,
    arc=4pt,
    left=1pt,
    right=1pt,
    top=2pt,
    bottom=2pt,
    breakable
]
\begin{theorem}[Double infinitesimal reversal and generalized Doob's $h$-transform]
\label{thm:dual_reversal_doob}
Let $(\mathcal L_t)_{t\in[0,1]}$ be a family of time-inhomogeneous Markov
generators, and let $p_t$ and $q_t$ be strictly positive densities. 
Apply the infinitesimal reversal formula in \Cref{eq:pi_reversal_generator}:
first define
the infinitesimal reversal of $\mathcal L_t$ with respect to $q_t$ by
\begin{align}
    \widetilde{\mathcal L}_t f
    := \bwd{\mathcal L}_t^{q_t} f = 
    \frac{1}{q_t}\mathcal L_t^\dagger(q_t f)
    -
    \frac{\mathcal L_t^\dagger q_t}{q_t}f .
    \label{eq:first_q_reversal}
\end{align}
Then define the infinitesimal reversal of $\widetilde{\mathcal L}_t$ with
respect to $p_t$ by
\begin{align}
    \widehat{\mathcal L}_t f
    :=\bwd{\widetilde{\mathcal L}_t}^{p_t} f = 
    \frac{1}{p_t}\widetilde{\mathcal L}_t^\dagger(p_t f)
    -
    \frac{\widetilde{\mathcal L}_t^\dagger p_t}{p_t}f .
    \label{eq:second_p_reversal}
\end{align}
Then $\widehat{\mathcal L}_t$ is the generator of the Doob's $h$-transform of $\mathcal L_t$ with
\begin{align}
    h_t
    =
    \frac{p_t}{q_t}.
\end{align}
That is,
\begin{align}
    \widehat{\mathcal L}_t f
    =
    \mathcal L_t^h f
     =
    h_t^{-1}\mathcal L_t(h_t f)
    -
    h_t^{-1}f\,\mathcal L_t h_t .
    \label{eq:dual_reversal_doob_result}
\end{align}
\end{theorem}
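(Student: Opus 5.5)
The plan is a direct algebraic computation at the level of operators. It has two ingredients: computing the adjoint of the first reversal $\widetilde{\mathcal L}_t$, and then substituting that adjoint into the second reversal formula. Throughout, adjoints are taken with respect to the fixed reference measure (Lebesgue), $t$ is fixed, and we drop it from the notation. Multiplication by a function is self-adjoint. Hence, writing $M_g$ for multiplication by $g$, we have
\begin{align}
    \widetilde{\mathcal L} = M_{q^{-1}}\,\mathcal L^\dagger\, M_q - M_{q^{-1}\mathcal L^\dagger q}.
\end{align}

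\textbf{Step 1: adjoint of $\widetilde{\mathcal L}$.} Taking adjoints, using $(ABC)^\dagger = C^\dagger B^\dagger A^\dagger$ and $(\mathcal L^\dagger)^\dagger=\mathcal L$ on the class of test functions, gives
\begin{align}
    \widetilde{\mathcal L}^\dagger g = q\,\mathcal L\!\left(q^{-1}g\right) - \frac{\mathcal L^\dagger q}{q}\,g .
\end{align}
This can also be checked directly: for test functions $f,g$, expand $\int g\,\widetilde{\mathcal L}f\,\mathrm dx$ and move $\mathcal L^\dagger$ onto $q^{-1}g$. This is the same computation that produces \Cref{eq:h_transform_adjoint_generator} in \Cref{thm:conservative_doob_h_transform}.

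\textbf{Step 2: substitute into the $p$-reversal.} Next evaluate $\widetilde{\mathcal L}^\dagger(pf)$ and $\widetilde{\mathcal L}^\dagger p$ using the formula from Step 1, with $h=p/q$:
\begin{align}
    \widetilde{\mathcal L}^\dagger(pf) = q\,\mathcal L(hf) - \frac{\mathcal L^\dagger q}{q}\,pf,
    \qquad
    \widetilde{\mathcal L}^\dagger p = q\,\mathcal L h - \frac{\mathcal L^\dagger q}{q}\,p .
\end{align}
Divide by $p$ and subtract $f\,\widetilde{\mathcal L}^\dagger p/p$. The two terms involving $\mathcal L^\dagger q/q$ cancel exactly, and since $q/p=h^{-1}$, what remains is
\begin{align}
    \widehat{\mathcal L} f = h^{-1}\mathcal L(hf) - h^{-1} f\,\mathcal L h = \mathcal L^h f .
\end{align}
This is \Cref{eq:dual_reversal_doob_result}.

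\textbf{Main obstacle.} The algebra itself is routine. The only delicate point is Step 1, the identity $(\mathcal L^\dagger)^\dagger=\mathcal L$ together with the manipulation of the adjoint of a composition with multiplication operators. This needs $q^{-1}g$, $pf$, $hf$ and related functions to lie in a common domain on which both $\mathcal L$ and $\mathcal L^\dagger$ act and the duality pairing is valid. I would handle this as the paper does elsewhere, by working formally on a suitable class of test functions (``sufficiently regular'' $p$ and $q$) and establishing the adjoint identity through the weak pairing $\int (\widetilde{\mathcal L}f) g = \int f\,(\widetilde{\mathcal L}^\dagger g)$ rather than pointwise. The result then holds in the same weak sense used in the proof of Proposition~\ref{prop:pi_reversal}.
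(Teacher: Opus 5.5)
Your proposal is correct and follows essentially the same route as the paper. The paper also first computes $\widetilde{\mathcal L}^\dagger g = q\,\mathcal L(g/q) - \frac{\mathcal L^\dagger q}{q}\,g$ via the duality pairing, then substitutes $pf$ and $p$ into the second reversal, where the $\mathcal L^\dagger q/q$ terms cancel and leave $h^{-1}\mathcal L(hf) - h^{-1}f\,\mathcal L h$ with $h=p/q$.
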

\end{tcolorbox}

A corollary of this, which provides a different perspective to understand this result, is stated below:
\begin{tcolorbox}[
    colback=gray!8,
    colframe=gray!40,
    boxrule=0.5pt,
    arc=4pt,
    left=1pt,
    right=1pt,
    top=2pt,
    bottom=2pt,
    breakable
]
\begin{corollary}[Reversal of an $h$-transform]
\label{cor:reversal_h_transform}
Let $(\mathcal L_t)_{t\in[0,1]}$ be a family of time-inhomogeneous Markov
generators, and let $h_t:E\to(0,\infty)$ be strictly positive. Define the generalized Doob's $h$-transform generator by
\begin{align}
    \mathcal L_t^h f
    :=
    h_t^{-1}\mathcal L_t(h_t f)
    -
    h_t^{-1}f\,\mathcal L_t h_t .
\end{align}
Let $p_t$ be a strictly positive density, and define
\begin{align}
    q_t
    =
    \frac{p_t}{h_t}.
\end{align}
Then the infinitesimal reversal of $\mathcal L_t^h$ with respect to
$p_t$ is equal to the infinitesimal reversal of $\mathcal L_t$ with
respect to $q_t$:
\begin{align}
    \bwd{(\mathcal L_t^h)}^{p_t} f
    =
    \bwd{\mathcal L}_t^{q_t} f .
\end{align}
\end{corollary}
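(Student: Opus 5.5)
The plan is to derive the corollary from Theorem~\ref{thm:dual_reversal_doob} together with the involution property $R_p\circ R_p=I$, rather than recompute adjoints directly. Take the theorem with its densities renamed: reversing $\mathcal L_t$ first with respect to $q_t$ and then with respect to $p_t$ yields $\mathcal L_t^{h}$ with $h_t=p_t/q_t$. In operator notation this reads $R_{p}\circ R_{q}=H_{p/q}$. With $q_t=p_t/h_t$ as in the statement, $p_t/q_t=h_t$, so $R_p R_q \mathcal L_t=\mathcal L_t^h$.

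Next apply $R_p$ to both sides, obtaining $R_pR_pR_q\mathcal L_t = R_p\mathcal L_t^h$. The claim then follows once we know $R_p\circ R_p=I$. This identity is itself the case $q=p$ of Theorem~\ref{thm:dual_reversal_doob}: it gives $H_{1}$, and $H_1=I$ because
\[
\mathcal L_t^{1}f=\mathcal L_t f - f\,\mathcal L_t\mathbf 1=\mathcal L_t f,
\]
using the Markov normalization $\mathcal L_t\mathbf 1=0$. Hence $R_p\mathcal L_t^h=R_q\mathcal L_t$, which is exactly $\bwd{(\mathcal L_t^h)}^{p_t}=\bwd{\mathcal L}_t^{q_t}$.

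The one subtlety is that the involution must be applied to the generator $\widetilde{\mathcal L}_t=R_q\mathcal L_t$. This requires $\widetilde{\mathcal L}_t$ to be a legitimate Markov generator annihilating constants. That holds, since
\[
\widetilde{\mathcal L}_t\mathbf 1=q_t^{-1}\mathcal L_t^\dagger q_t-q_t^{-1}\mathcal L_t^\dagger q_t=0.
\]
Its adjoint is taken with respect to the same reference measure, as in the theorem.

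As a sanity check, and as a self-contained alternative, one could verify the identity directly. First compute the Lebesgue adjoint of the $h$-transform from Lemma~\ref{thm:conservative_doob_h_transform}:
\[
(\mathcal L_t^h)^\dagger g=h_t\mathcal L_t^\dagger(h_t^{-1}g)-h_t^{-1}(\mathcal L_th_t)g.
\]
Substituting $g=p_tf$ and $g=p_t$ into Eq.~\eqref{eq:pi_reversal_generator}, the zeroth-order terms $h_t^{-1}(\mathcal L_th_t)f$ cancel between the two pieces. What remains is
\[
\frac{h_t}{p_t}\Bigl[\mathcal L_t^\dagger(q_tf)-f\,\mathcal L_t^\dagger q_t\Bigr]=q_t^{-1}\mathcal L_t^\dagger(q_tf)-q_t^{-1}(\mathcal L_t^\dagger q_t)f,
\]
which is $\bwd{\mathcal L}_t^{q_t}f$. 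The main point to get right in this direct route is the bookkeeping of the $h_t$ factors in the adjoint formula. I would present the short algebraic derivation from the theorem as the proof and mention the direct computation as confirmation.
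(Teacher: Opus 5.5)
Your main argument is correct and is essentially the paper's own proof: combine $R_p\circ R_q = H_{p/q}$ from Theorem~\ref{thm:dual_reversal_doob} with the involution $R_p\circ R_p = I$ (its $p=q$ case) to get $R_p\mathcal L_t^h = R_pR_pR_q\mathcal L_t = R_q\mathcal L_t$. Your check that $R_q\mathcal L_t$ annihilates constants, so that $H_1$ really acts as the identity on it, spells out a step the paper leaves implicit, and your direct adjoint computation is also correct.
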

\end{tcolorbox}

We first prove \Cref{thm:dual_reversal_doob}.
\begin{proof}
The proof follows directly from algebra.
Fix $t\in[0,1]$ and suppress the subscript $t$ to simplify notation. Write
\begin{align}
    \widetilde{\mathcal L} f
    =
    \frac{1}{q}\mathcal L^\dagger(qf)
    -
    \frac{\mathcal L^\dagger q}{q}f .
\end{align}
We first compute the adjoint of $\widetilde{\mathcal L}$. For suitable test
functions $f$ and $g$,
\begin{align}
    \int_E (\widetilde{\mathcal L}f)(x)g(x)\,\mathrm dx
    &=
    \int_E
    \frac{1}{q(x)}
    \mathcal L^\dagger(qf)(x)g(x)\,\mathrm dx
    -
    \int_E
    \frac{\mathcal L^\dagger q(x)}{q(x)}
    f(x)g(x)\,\mathrm dx \\
    &=
    \int_E
    f(x)
    \left[
        q\,\mathcal L\!\left(\frac{g}{q}\right)
        -
        \frac{\mathcal L^\dagger q}{q}g
    \right](x)
    \,\mathrm dx .
\end{align}
Therefore,
\begin{align}
    \widetilde{\mathcal L}^\dagger g
    =
    q\,\mathcal L\!\left(\frac{g}{q}\right)
    -
    \frac{\mathcal L^\dagger q}{q}g .
    \label{eq:tilde_adjoint}
\end{align}
Now apply the second reversal with respect to $p$:
\begin{align}
    \widehat{\mathcal L}f
    &=
    \frac{1}{p}\widetilde{\mathcal L}^\dagger(pf)
    -
    \frac{\widetilde{\mathcal L}^\dagger p}{p}f .
\end{align}
Using \Cref{eq:tilde_adjoint}, we have
\begin{align}
    \frac{1}{p}\widetilde{\mathcal L}^\dagger(pf)
    &=
    \frac{1}{p}
    \left[
        q\,\mathcal L\!\left(\frac{pf}{q}\right)
        -
        \frac{\mathcal L^\dagger q}{q}pf
    \right] \\
    &=
    \frac{q}{p}
    \mathcal L\!\left(\frac{p}{q}f\right)
    -
    \frac{\mathcal L^\dagger q}{q}f ,
\end{align}
and
\begin{align}
    \frac{\widetilde{\mathcal L}^\dagger p}{p}f
    &=
    \frac{1}{p}
    \left[
        q\,\mathcal L\!\left(\frac{p}{q}\right)
        -
        \frac{\mathcal L^\dagger q}{q}p
    \right]f \\
    &=
    \frac{q}{p}
    \mathcal L\!\left(\frac{p}{q}\right)f
    -
    \frac{\mathcal L^\dagger q}{q}f .
\end{align}
Therefore, we have
\begin{align}
    \widehat{\mathcal L}f
    &=
    \frac{q}{p}
    \mathcal L\!\left(\frac{p}{q}f\right)
    -
    \frac{\mathcal L^\dagger q}{q}f
    -
    \left[
        \frac{q}{p}
        \mathcal L\!\left(\frac{p}{q}\right)f
        -
        \frac{\mathcal L^\dagger q}{q}f
    \right] \\
    &=
    \frac{q}{p}
    \mathcal L\!\left(\frac{p}{q}f\right)
    -
    \frac{q}{p}
    f\,\mathcal L\!\left(\frac{p}{q}\right).
\end{align}
Finally, using 
\begin{align}
    h=\frac{p}{q}
\end{align}
we obtain
\begin{align}
    \widehat{\mathcal L}_t f
    =
    h_t^{-1}\mathcal L_t(h_t f)
    -
    h_t^{-1}f\,\mathcal L_t h_t
    =
    \mathcal L_t^h f .
\end{align}
\end{proof}

This provides us with a very powerful tool to analyze infinitesimal reversal and derive path weight using generalized Doob's $h$ weight in \Cref{thm:conservative_doob_h_transform}.
We also have the following corollary.

We then derive the Corollary:
\begin{proof}
The proof is twofold. First, applying infinitesimal reversal twice with respect
to the same density gives back the original generator. This follows from
\Cref{thm:dual_reversal_doob} by taking $p_t=q_t$, so that
\begin{align}
    h_t
    =
    \frac{p_t}{q_t}
    =
    1.
\end{align}
By \Cref{thm:dual_reversal_doob}, applying reversal first with respect to $q_t$
and then with respect to $p_t$ gives the $h$-transformed generator
$\mathcal L_t^h$. 
Therefore, reversing $\mathcal L_t^h$ once more with respect
to $p_t$ cancels the second reversal and leaves the reversal of
$\mathcal L_t$ with respect to $q_t$. Hence
\begin{align}
    \bwd{(\mathcal L_t^h)}^{p_t} f
    =
    \bwd{\mathcal L}_t^{q_t} f .
\end{align}
\end{proof}

Another corollary  is stated below:
\begin{tcolorbox}[
    colback=gray!8,
    colframe=gray!40,
    boxrule=0.5pt,
    arc=4pt,
    left=1pt,
    right=1pt,
    top=2pt,
    bottom=2pt,
    breakable
]
\begin{corollary}[Doob's relation between two reversals]
\label{cor:relation_2_reversal}
Let $(\mathcal L_t)_{t\in[0,1]}$ be a family of time-inhomogeneous Markov
generators, and  $p_t$ and $q_t$  be a strictly positive density.
Then the infinitesimal reversal of $\mathcal L_t$ with respect to
$p_t$ is equal to the infinitesimal reversal of $\mathcal L_t$ with
respect to $q_t$, followed by a Doob's $h$ transform, i.e., 
\begin{align}\label{eq:_}
    \bwd{\mathcal L_t}^{p_t} f
    =
   ( \bwd{\mathcal L}_t^{q_t} )^h f .
\end{align}
with 
\begin{align}
    h_t = p_t / q_t
\end{align}
\end{corollary}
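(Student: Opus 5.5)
The corollary should follow from \Cref{thm:dual_reversal_doob} together with the involution property of reversal. Write $\widetilde{\mathcal L}_t := \bwd{\mathcal L}_t^{q_t}$ for the reversal with respect to $q_t$. \Cref{thm:dual_reversal_doob} says that reversing $\widetilde{\mathcal L}_t$ with respect to $p_t$ gives the Doob transform of $\mathcal L_t$ with $h=p/q$. That is the reverse of the order we need. So I apply the same theorem with the roles rearranged: take the base generator to be $\widetilde{\mathcal L}_t$, reverse it first with respect to $q_t$, and then with respect to $p_t$. This gives
\begin{align}
    \bwd{\Big(\bwd{\widetilde{\mathcal L}}_t^{\,q_t}\Big)}^{p_t} = \widetilde{\mathcal L}_t^{\,h}, \qquad h_t = p_t/q_t .
\end{align}

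Next I simplify the left-hand side. By \Cref{thm:dual_reversal_doob} with equal densities, where $h=1$ and so $H_1=I$, we have $\bwd{\widetilde{\mathcal L}}_t^{\,q_t} = \bwd{(\bwd{\mathcal L}_t^{q_t})}^{q_t} = \mathcal L_t$. The left-hand side therefore collapses to $\bwd{\mathcal L}_t^{p_t}$, and we get $\bwd{\mathcal L}_t^{p_t} = (\bwd{\mathcal L}_t^{q_t})^h$ with $h_t=p_t/q_t$, which is the claim. In the operator notation of \Cref{sec:reversal_doob}, the argument is $R_p = R_p\circ (R_q\circ R_q) = (R_p\circ R_q)\circ R_q = H_{p/q}\circ R_q$.

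The main obstacle is that \Cref{thm:dual_reversal_doob} is stated for a Markov generator $\mathcal L_t$. To apply it with $\widetilde{\mathcal L}_t$ as the base, I need $\widetilde{\mathcal L}_t$ to be an admissible generator with an adjoint. For this I rely on two facts. First, $\widetilde{\mathcal L}_t \mathbf 1 = q^{-1}\mathcal L^\dagger q - q^{-1}\mathcal L^\dagger q = 0$. Second, its adjoint is given explicitly by \Cref{eq:tilde_adjoint}. The proof of \Cref{thm:dual_reversal_doob} is purely algebraic and uses only linearity and the existence of the adjoint, so it carries over.

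As a safeguard, I would also verify the identity directly. Substitute $\mathcal L' = \bwd{\mathcal L}^{q}$ into $h^{-1}\mathcal L'(hf) - h^{-1} f\,\mathcal L' h$ with $h=p/q$. Expanding gives
\[
\frac{q}{p}\Big[\tfrac1q \mathcal L^\dagger(pf) - \tfrac{\mathcal L^\dagger q}{q}\tfrac{p}{q}f\Big] - \frac{q}{p}f\Big[\tfrac1q\mathcal L^\dagger p - \tfrac{\mathcal L^\dagger q}{q}\tfrac pq\Big].
\]
The $\mathcal L^\dagger q$ terms cancel, and what remains is $p^{-1}\mathcal L^\dagger(pf) - p^{-1}f\,\mathcal L^\dagger p = \bwd{\mathcal L}^{p} f$. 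This direct computation is short and avoids the admissibility issue altogether, so I would present it as the main proof and give the group-theoretic argument as intuition.
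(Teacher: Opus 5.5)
Your proposal is correct and follows the paper's proof: the paper likewise first argues that $(\bwd{\mathcal L}_t^{q_t})^h$ with $h_t=p_t/q_t$ amounts to reversing with respect to $q_t$, $q_t$, then $p_t$, and that the double $q_t$-reversal is the identity; it then gives the same direct expansion, in which the $\mathcal L_t^\dagger q_t$ terms cancel. Your remark that \Cref{thm:dual_reversal_doob} is purely algebraic, so it can be applied with $\bwd{\mathcal L}_t^{q_t}$ as the base generator, fills in a point the paper leaves implicit.
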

\end{tcolorbox}
\begin{proof}
To prove this, we can note that when $h=p_t/q_t$, we can view it as applying time reversal with $q_t$ first, and then applying time reversal with $p_t$.
Therefore, the RHS for \Cref{eq:_} is applying time reversal with $q_t$, $q_t$ and $p_t$ sequentially.
Applying $q_t$ twice yields the identity map, and hence it is equivalent to applying the reversal with $p_t$.
\end{proof}
Alternatively, we can also directly prove it with algebra:
\begin{proof}
   \begin{align}
    \bwd{\mathcal{L}}^{p_t}f = p_t^{-1}\mathcal{L}^\dagger(p_tf) - p_t^{-1}\mathcal{L}^\dagger(p_t ) f, \quad     \bwd{\mathcal{L}}^{q_t}f = q_t^{-1}\mathcal{L}^\dagger(q_t f) -q_t^{-1}\mathcal{L}^\dagger(q_t) f
\end{align}
Then, 
\begin{align}
     (\bwd{\mathcal{L}}^{q_t})^h  f&= h^{-1}  \bwd{\mathcal{L}}^{q_t} (hf) -  h^{-1}  \bwd{\mathcal{L}}^{q_t} (h ) f \\
     &= h^{-1} q_t^{-1}\mathcal{L}^\dagger(q_t h f) - h^{-1}  q_t^{-1}\mathcal{L}^\dagger(q_t h ) f -  h^{-1} q_t^{-1}\mathcal{L}^\dagger(q_t h )f + h^{-1}  q_t^{-1}\mathcal{L}^\dagger(q_t h ) f\\
     &= h^{-1} q_t^{-1}\mathcal{L}^\dagger(q_t h f) -  h^{-1} q_t^{-1}\mathcal{L}^\dagger(q_t h )f 
\end{align}
when we choose $h_t = p_t/q_t$, we can see this is exactly $  \bwd{\mathcal{L}}^{q_t}f = q_t^{-1}\mathcal{L}^\dagger(q_t f) -q_t^{-1}\mathcal{L}^\dagger(q_t) f$.
\end{proof}

\subsection{Marginal law and generalized Doob's $h$-transform}
The conclusions above are algebraic: so far, we have not imposed that either
$p_t$ or $q_t$ is the actual marginal law of a Markov process. In practice,
however, at least one of these density-like objects will be pinned down. For example, suppose pin one of the densities to be the marginal law.
The following result gives the necessary and sufficient condition for this to
hold.

\begin{tcolorbox}[
    colback=gray!8,
    colframe=gray!40,
    boxrule=0.5pt,
    arc=4pt,
    left=1pt,
    right=1pt,
    top=2pt,
    bottom=2pt,
    breakable
]
\begin{theorem}[Marginal prescription for generalized Doob's $h$-transform]\label{thm:prescribing_h_transform_marginal}
Let $(\mathcal L_t)_{t\in[0,1]}$ be a family of time-inhomogeneous Markov
generators, with adjoints $(\mathcal L_t^\dagger)_{t\in[0,1]}$. Let
$h_t:E\to(0,\infty)$ be strictly positive and sufficiently regular, and define
the Doob's $h$-transform generator by
\begin{align}
    \mathcal L_t^h f
    =
    h_t^{-1}\mathcal L_t(h_t f)
    -
    h_t^{-1}f\,\mathcal L_t h_t .
\end{align}
Let $q_t$ be a strictly positive density-like function and define
\begin{align}
    \rho_t^h =
    h_t q_t .
\end{align}
Then $\rho_t^h$ solves the forward equation of the $h$-transformed process,
\begin{align}
    \partial_t \rho_t^h
    =
    (\mathcal L_t^h)^\dagger \rho_t^h ,
\end{align}
\textbf{if and only if}
\begin{align}
    \frac{(\partial_t+\mathcal L_t)h_t}{h_t}
    =
    \frac{\mathcal L_t^\dagger q_t-\partial_t q_t}{q_t}.
    \label{eq:h_prescribed_marginal_condition}
\end{align}
\end{theorem}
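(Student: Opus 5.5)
The plan is to compute $(\mathcal L_t^h)^\dagger \rho_t^h$ directly from the adjoint formula in \Cref{thm:conservative_doob_h_transform}, compare it with $\partial_t \rho_t^h$, and observe that the difference is $\rho_t^h$ multiplied by the gap between the two sides of \eqref{eq:h_prescribed_marginal_condition}. Since $\rho_t^h = h_t q_t$ is strictly positive, that gap vanishes exactly when the two sides agree, which gives both directions at once.

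First, I would check the adjoint \eqref{eq:h_transform_adjoint_generator} myself, since it is the only nontrivial input. Start from $\int (\mathcal L^h f) g = \int h^{-1}\mathcal L(hf)\, g - \int h^{-1}(\mathcal L h) f g$. In the first term move $\mathcal L$ onto $g/h$; this gives $\int f\, h\,\mathcal L^\dagger(g/h)$, and hence $(\mathcal L^h)^\dagger g = h\mathcal L^\dagger(g/h) - h^{-1}(\mathcal L h)\, g$. Then plug in $g=\rho^h = hq$. The quotient $g/h$ collapses to $q$, so
\begin{align}
(\mathcal L_t^h)^\dagger \rho_t^h = h_t\,\mathcal L_t^\dagger q_t - q_t\,\mathcal L_t h_t .
\end{align}
On the other side, the product rule gives $\partial_t\rho_t^h = q_t\partial_t h_t + h_t\partial_t q_t$.

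Next, subtract the two expressions and divide by $h_t q_t>0$:
\begin{align}
\frac{\partial_t\rho_t^h-(\mathcal L_t^h)^\dagger\rho_t^h}{h_tq_t}
= \frac{(\partial_t+\mathcal L_t)h_t}{h_t} - \frac{\mathcal L_t^\dagger q_t-\partial_t q_t}{q_t}.
\end{align}
The forward equation holds if and only if the left side vanishes, and the left side vanishes if and only if \eqref{eq:h_prescribed_marginal_condition} holds. This proves the equivalence pointwise in $(t,x)$.

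The only step needing care is the adjoint computation. It requires the duality pairing to be valid for $hf$ and $g/h$, which is a regularity assumption absorbed into ``sufficiently regular'' together with the test-function class for which $\mathcal L^\dagger$ is defined. I would state this assumption explicitly and otherwise treat the argument as formal algebra, in the same spirit as the proof of \Cref{thm:dual_reversal_doob}.
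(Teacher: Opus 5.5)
Your proposal is correct and takes essentially the same approach as the paper: both compute $(\mathcal L_t^h)^\dagger(h_tq_t)=h_t\mathcal L_t^\dagger q_t-q_t\mathcal L_t h_t$ from the adjoint formula for the $h$-transformed generator and compare it with the product rule for $\partial_t(h_tq_t)$. The differences are cosmetic: the paper proves the two implications separately, whereas you combine them into one identity for $(\partial_t\rho_t^h-(\mathcal L_t^h)^\dagger\rho_t^h)/(h_tq_t)$, and you re-derive the adjoint formula instead of citing it from the lemma.
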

\end{tcolorbox}

\begin{proof}
    Let's prove it in both directions.

\paragraph{(1)}
Assume that $\rho_t^h=h_tq_t$ solves the forward equation
\begin{align}
    \partial_t\rho_t^h
    =
    (\mathcal L_t^h)^\dagger\rho_t^h .
\end{align}
Using the adjoint formula for the $h$-transformed generator,
\begin{align}
    (\mathcal L_t^h)^\dagger g
    =
    h_t\,\mathcal L_t^\dagger(h_t^{-1}g)
    -
    h_t^{-1}(\mathcal L_t h_t)g ,
\end{align}
we obtain
\begin{align}
    (\mathcal L_t^h)^\dagger\rho_t^h
    &=
    h_t\,\mathcal L_t^\dagger q_t
    -
    q_t\,\mathcal L_t h_t .
\end{align}
On the other hand,
\begin{align}
    \partial_t\rho_t^h
    =
    \partial_t(h_tq_t)
    =
    q_t\,\partial_t h_t
    +
    h_t\,\partial_t q_t .
\end{align}
Therefore,
\begin{align}
    q_t\,\partial_t h_t
    +
    h_t\,\partial_t q_t
    =
    h_t\,\mathcal L_t^\dagger q_t
    -
    q_t\,\mathcal L_t h_t .
\end{align}
Therefore, 
\begin{align}
    q_t(\partial_t+\mathcal L_t)h_t
    =
    h_t(\mathcal L_t^\dagger q_t-\partial_t q_t).
\end{align}
Therefore, 
\begin{align}
    \frac{(\partial_t+\mathcal L_t)h_t}{h_t}
    =
    \frac{\mathcal L_t^\dagger q_t-\partial_t q_t}{q_t}.
\end{align}
\paragraph{(2)}
Now, let's start from:
\begin{align}
    \frac{(\partial_t+\mathcal L_t)h_t}{h_t}
    =
    \frac{\mathcal L_t^\dagger q_t-\partial_t q_t}{q_t}.
\end{align}
Multiplying both sides by $h_tq_t$  
\begin{align}
    q_t\,\partial_t h_t
    +
    q_t\,\mathcal L_t h_t
    =
    h_t\,\mathcal L_t^\dagger q_t
    -
    h_t\,\partial_t q_t .
\end{align}
Hence, 
\begin{align}
    q_t\,\partial_t h_t
    +
    h_t\,\partial_t q_t
    =
    h_t\,\mathcal L_t^\dagger q_t
    -
    q_t\,\mathcal L_t h_t .
\end{align}
LHS is  
\begin{align}
    q_t\,\partial_t h_t
    +
    h_t\,\partial_t q_t
    =
    \partial_t(h_tq_t)
    =
    \partial_t\rho_t^h .
\end{align}
Using $\rho_t^h=h_tq_t$, RHS is
\begin{align}
    h_t\,\mathcal L_t^\dagger q_t
    -
    q_t\,\mathcal L_t h_t
    &=
    h_t\,\mathcal L_t^\dagger(h_t^{-1}\rho_t^h)
    -
    h_t^{-1}(\mathcal L_t h_t)\rho_t^h \\
    &=
    (\mathcal L_t^h)^\dagger\rho_t^h .
\end{align}
Therefore,
\begin{align}
    \partial_t\rho_t^h
    =
    (\mathcal L_t^h)^\dagger\rho_t^h .
\end{align}
\end{proof}

\begin{remark}[Marginal prescription for generalized Doob's $h$-transform]
By \Cref{thm:prescribing_h_transform_marginal}, we note two cases when $\rho_t^h$ solves the forward equation of $h$-transformed process: 
\begin{enumerate}[leftmargin=*]
\item For a general density function $q_t$, the generalized Doob's $h$-weight becomes $\frac{\mathcal{L}_t^\dagger q_t}{q_t} - \partial_t \log q_t$.
\item When $q_t$ further solves the forward equation of the base process, i.e. $\mathcal{L}_t^\dagger q_t = \partial_tq_t$,   the generalized Doob's $h$-weight is zero.
\end{enumerate}
\end{remark}

\subsection{Generalized EJE, Crooks and Score Matching Objectives as Corollary of \Cref{thm:prescribing_h_transform_marginal} and Generalized Doob's $h$-transform}

\Cref{thm:prescribing_h_transform_marginal} plays a central role in our conclusion.
In this section, we show that the generalized EJE, Crooks can be directly derived from \Cref{thm:prescribing_h_transform_marginal} together with generalized Doob's $h$-transform.
\Cref{thm:prescribing_h_transform_marginal} also provides a new perspective for interpreting score matching objectives.

\paragraph{Generalized Escorted Jarzynski and Crooks fluctuation theorem}
We first prove Proposition~\ref{prop:crooks}, and Proposition~\ref{prop:jarzynski} follows by directly taking expectation.

\begin{proposition}[Generalized Crooks fluctuation theorem, restated]
\label{prop:crooks_restatement}
Let $(\mathcal L_t^F)_{t\in[0,1]}$ be the forward time-inhomogeneous Markov
generator, and let $(\mathcal L_t^F)^\dagger$ denote its adjoint. 
Let $\gamma_t$ be a strictly positive
density-like function. 
Define the backward generator by the infinitesimal
$\gamma_t$-reversal of $\mathcal L_t^F$:
\begin{align}
    \mathcal L_t^B f
    :=
    \frac{1}{\gamma_t}(\mathcal L_t^F)^\dagger(\gamma_t f)
    -
    \frac{1}{\gamma_t}(\mathcal L_t^F)^\dagger(\gamma_t)\,f .
    \label{eq:bwd_crooks_cont_restate}
\end{align}

Let $\bwd Q$ be the path law of the backward process whose marginal density
$q_t$ evolves backward in time according to
\begin{align}
    \partial_t q_t
    =
    -(\mathcal L_t^B)^\dagger q_t,
    \qquad
    q_1= \pi_1 :=\pi_B,
    \qquad
    \pi_B \propto \exp(-U_B).
\end{align}
Let $\fwd P$ be the path law of the backward process whose marginal density
$p_t$ evolves backward in time according to
\begin{align} 
    \partial_t p_t = (\mathcal L_t^F)^\dagger p_t,
    \qquad
    p_0 = \pi_0 := \pi_A,
    \qquad
    \pi_A \propto e^{-U_A},
\end{align}
Then the Radon-Nikodym derivative between the backward path law $\bwd Q$ and
the forward path law $\fwd P$  is
\begin{align}
    \frac{\mathrm d\bwd Q}{\mathrm d\fwd P}(X_{[0,1]})
    =
    \exp\bigl(-\widetilde{W}(X_{[0,1]})+\Delta F\bigr).
    \label{eq:crooks_eq_restate}
\end{align}
where
\begin{align}
     \widetilde{W}(X_{[0,1]})
      =
     {\int_0^1 \partial_t U_t(X_t)\,\mathrm{d}t}
      +
\int_0^1 \frac{((\mathcal L_t^F)^\dagger \gamma_t)(X_t)}{\gamma_t(X_t)}\,\mathrm{d}t.
\end{align}
\end{proposition}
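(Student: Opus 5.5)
Our plan is to view the backward process $\bwd Q$ as a generalized Doob $h$-transform of the time reversal of $\fwd P$, and then use \Cref{thm:conservative_doob_h_transform} to read off the path weight. Let $p_t$ be the true marginal of the forward process. By Proposition~\ref{prop:pi_reversal}, with $\pi_t$ replaced by $p_t$, the generator $\bwd{\mathcal L}^{p}_t := R_p(\mathcal L^F_t)$ is the genuine time reversal of $\mathcal L^F_t$. So the reversed path law $\bwd P$, obtained by starting from $p_1$ at time $1$ and running $R_p(\mathcal L^F_t)$ backward, coincides with $\fwd P$ as a measure on paths. By Corollary~\ref{cor:relation_2_reversal}, or equivalently \Cref{theorem:algebra} with $H_h\circ R_p = R_{ph}$, we have $\mathcal L^B_t = R_\gamma(\mathcal L^F_t) = H_{h}(R_p(\mathcal L^F_t))$ with $h_t=\gamma_t/p_t$. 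Hence $\bwd Q$ is the $h$-transform of the reversed forward process, tilted at the terminal (time-$1$) endpoint.

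Next we apply \Cref{thm:conservative_doob_h_transform} in reversed time $s=1-t$. Conditioned on $X_1$, this gives
\begin{align}
\frac{\mathrm d\bwd Q(\cdot\mid X_1)}{\mathrm d\fwd P(\cdot\mid X_1)} = \frac{h_0(X_0)}{h_1(X_1)}\exp\Bigl(-\int_0^1 h_t^{-1}\bigl(-\partial_t h_t+\bwd{\mathcal L}^p_t h_t\bigr)(X_t)\,\mathrm dt\Bigr),
\end{align}
where the sign of $\partial_t$ flips because the backward process runs in decreasing $t$. To evaluate the exponent we use the identity $\bwd{\mathcal L}^p_t h = p^{-1}[(\mathcal L^F_t)^\dagger(ph) - h(\mathcal L^F_t)^\dagger p]$. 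With $ph=\gamma$ and $(\mathcal L^F_t)^\dagger p=\partial_t p$, this gives $h^{-1}\bwd{\mathcal L}^p_t h = \gamma^{-1}(\mathcal L^F_t)^\dagger\gamma - \partial_t\log p$. We also have $h^{-1}\partial_t h=\partial_t\log\gamma-\partial_t\log p = -\partial_t U_t - \partial_t\log p$. The $\partial_t\log p$ terms cancel, so the integrand equals $\partial_t U_t + \gamma_t^{-1}(\mathcal L_t^F)^\dagger\gamma_t$. This is exactly the integrand of $\widetilde W$. Equivalently, we could invoke \Cref{thm:prescribing_h_transform_marginal} in reversed time to identify the weight as $\pi^{-1}(\mathcal L^\dagger\pi - \partial\pi)$.

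Finally we multiply by the endpoint marginals, $\mathrm d\bwd Q/\mathrm d\fwd P = \frac{q_1(X_1)}{p_1(X_1)}\times(\text{conditional ratio})$. Substituting $q_1=\pi_B=\gamma_1/Z_B$, $h_1=\gamma_1/p_1$, $h_0=\gamma_0/p_0$ and $p_0=\gamma_0/Z_A$, the boundary factors reduce to $Z_A/Z_B=e^{\Delta F}$. This gives \Cref{eq:crooks_eq_restate}, and taking $\fwd P$-expectations yields Proposition~\ref{prop:jarzynski}.

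We expect the main obstacle to be the orientation and sign bookkeeping when applying \Cref{thm:conservative_doob_h_transform} to a process run backward in time. The lemma is stated for forward-time generators conditioned on the initial point, so we must carefully reparametrize $s=1-t$, which flips $\partial_t$ and swaps which endpoint is conditioned on. We must also justify that $\fwd P$ and the reversal of $p_1$ under $R_p(\mathcal L^F)$ agree as path measures, beyond merely sharing marginals. As a sanity check we would run the discrete-time analogue of Proposition~\ref{prop:discretetime}, where the product telescopes explicitly.
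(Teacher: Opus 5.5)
Your proof is correct, but it runs the paper's argument in the opposite time direction rather than reproducing it.

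\emph{The paper's route.} It reverses the backward process. It describes $\bwd Q$ in forward time (its reversal with respect to $q_t$) as the Doob transform of $\mathcal L^F_t$ with $h_t=q_t/\pi_t$, and applies the Doob lemma conditioned on $X_0$. It then evaluates the exponent with the marginal-prescription theorem, which applies because $h_t\pi_t=q_t$ is the marginal of the transformed process. The boundary terms vanish outright: $h_1=q_1/\pi_1=1$, and $h_0=q_0/p_0$ cancels the initial-law ratio.

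\emph{Your route.} You reverse the forward process, write $\mathcal L^B_t=H_{\gamma/p}(R_p\mathcal L^F_t)$, condition on $X_1$, and compute the exponent directly from $(\mathcal L^F_t)^\dagger p_t=\partial_t p_t$. Your orientation bookkeeping is right: the sign flip on $\partial_t$, the factor $h_0/h_1$, the endpoint ratio $q_1/p_1$, and the boundary value $Z_A/Z_B$.

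\emph{Trade-off.} The paper's tilt involves the intractable backward marginal $q_t$ but needs no boundary computation. Yours involves only the forward marginal and needs no prescribing theorem. Because you tilt with the unnormalized $\gamma_t$, you get exactly the work integrand, with $\Delta F$ appearing explicitly as a boundary factor. In the paper's route, the exponent $\pi_t^{-1}\bigl((\mathcal L^F_t)^\dagger\pi_t-\partial_t\pi_t\bigr)$ exceeds the work integrand by $\partial_t\log Z_t$. That extra term is what integrates to $-\Delta F$, a step the paper leaves implicit.

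\emph{Caveat on your aside.} The prescribing theorem does not apply verbatim in your orientation. With base $R_p\mathcal L^F_t$ and $h_t=\gamma_t/p_t$, the product $h_t\pi_t$ is not the marginal $q_t$ of $\bwd Q$, so the theorem's hypothesis fails. It fits the paper's orientation, or yours with base and transform swapped ($h_t=p_t/\pi_t$ applied to $\mathcal L^B_t$). Your direct computation is the right tool for your route.

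Both routes rest on the fact you flagged: a Markov process and its reversal with respect to its true marginal have the same path law. The paper uses it for $\bwd Q$, and you use it for $\fwd P$.
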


\begin{proof}
    We start from 
\begin{align}
     \mathcal L_t^B f
      =
      \frac{1}{\gamma_t}(\mathcal L_t^F)^\dagger(\gamma_t f)
      -
      \frac{1}{\gamma_t}(\mathcal L_t^F)^\dagger(\gamma_t)\,f,
      \qquad
      (\mathcal L_t^B)^\dagger p
      =
      \gamma_t\,\mathcal L_t^F\!\left(\frac{p}{\gamma_t}\right)
      -
      \frac{p}{\gamma_t}\,(\mathcal L_t^F)^\dagger \gamma_t
\end{align}
We assume its marginal density is $q_t$:
\begin{align}
    \partial_t q_t = -(\mathcal L_t^B)^\dagger q_t,
    \qquad
    q_1 = \pi_1 := \pi_B,
    \qquad
    \pi_B \propto e^{-U_B}.
\end{align}
From \Cref{thm:prescribing_h_transform_marginal}, the time-reversal of this backward process (denoted as \(\fwd{Q}\)) has the same generator as Doob's $h$-transform of $\mathcal L_t^F$, with $h_t=  q_t / \pi_t$.
By generalized Doob's $h$ in \Cref{thm:conservative_doob_h_transform}, we have
\begin{align}
    \frac{\mathrm{d}\fwd{Q}}{\mathrm{d}\fwd{P}}(X_{(0, 1]}|X_0) = \frac{h_1}{h_0}\exp(-\int \left[
            h_t^{-1}\partial_t h_t
            +
            h_t^{-1}\mathcal L_t h_t
        \right] \mathrm{d} t)
\end{align}
Also, $h_1=q_1/\pi_1=1$ and $p_0 = \pi_0$, hence 
\begin{align}
    \frac{\mathrm{d}\bwd{Q}}{\mathrm{d}\fwd{P}}(X_{[0, 1]} ) &=   \frac{\mathrm{d}\fwd{Q}}{\mathrm{d}\bwd{Q}}(X_{[0, 1]} )   \frac{\mathrm{d}\bwd{Q}}{\mathrm{d}\fwd{P}}(X_{(0, 1]}|X_0 ) \frac{q_0(X_0)}{p_0(X_0)} \\
    &=  \frac{\mathrm{d}\fwd{Q}}{\mathrm{d}\fwd{P}}(X_{(0, 1]}|X_0 ) h_0(X_0)\\
    &= \exp(-\int \left[
            h_t^{-1}\partial_t h_t
            +
            h_t^{-1}\mathcal L_t h_t
        \right] \mathrm{d} t)
\end{align}
By \Cref{thm:prescribing_h_transform_marginal}, we have
\begin{align}
            h_t^{-1}\partial_t h_t
            +
            h_t^{-1}\mathcal L_t h_t &= \pi_t^{-1}\mathcal L_t^\dagger \pi_t - \pi_t^{-1}\partial_t \pi_t
\end{align}
which proves the Crooks fluctuation theorem.
\end{proof}

\paragraph{Score matching objectives}
\Cref{thm:prescribing_h_transform_marginal} also connects with score-matching objectives.
Different from the Jarzynskis discussed above, where we fix the path $\pi_t$, we aim to learn a path of marginal densities  $\varphi_t$ (in the form of score or concrete score) to align with the marginal law of some pre-defined forward process.
Assume the diffusion model has a forward generator $\mathcal{L}$, corresponding to a known forward corruption process with marginal $p_t$, and we aim to learn $\varphi_t=p_t$.

\paragraph{Explicit score matching}
Let's consider the generator obtained by taking infinitesimal reversal w.r.t $p_t$ and $\varphi_t$:
\begin{align}
    \bwd{\mathcal{L}}^{p_t}f = p_t^{-1}\mathcal{L}^\dagger(p_tf) - p_t^{-1}\mathcal{L}^\dagger(p_t ) f, \quad     \bwd{\mathcal{L}}^{\varphi_t}f = \varphi_t^{-1}\mathcal{L}^\dagger(\varphi_t f) - \varphi_t^{-1}\mathcal{L}^\dagger(\varphi_t) f
\end{align}
Using results in Proposition~\ref{cor:relation_2_reversal}, we have that $\bwd{\mathcal{L}}^{p_t}$ is the Doob's-$h=p_t/\varphi_t$ transform of 
$\bwd{\mathcal{L}}^{\varphi_t}$.

Additionally, as we require $p_t$ to be the true marginal, using \Cref{thm:prescribing_h_transform_marginal}, we have
\begin{align}
    \frac{(-\partial_t+ \bwd{\mathcal{L}}^{\varphi_t})h_t}{h_t}
    =
    \frac{\mathcal (\bwd{\mathcal{L}}^{\varphi_t})^\dagger\varphi_t + \partial_t \varphi_t}{\varphi_t}.
\end{align}
Note the sign flip as the process is running backward in time.
If we want $\varphi_t$ to be the marginal, i.e., $(\bwd{\mathcal{L}}^{\varphi_t})^\dagger\varphi_t + \partial_t \varphi_t=0$, it suffices for us to impose $\frac{(-\partial_t+ \bwd{\mathcal{L}}^{\varphi_t})h_t}{h_t}=0$. 
\par
Taking expectations over time and state space:
\begin{align}
    &\int_0^1\E_{p_t} \left[\frac{(\partial_t - \bwd{\mathcal{L}}^{\varphi_t})h_t}{h_t} \right]\mathrm{d} t, \quad 
\end{align}
Note that
\begin{align}
   \int p_t(x) h_t^{-1}  \bwd{\mathcal{L}}^{\varphi_t} (h_t) \mathrm d x &=  \int p_t h_t^{-1}  \varphi_t^{-1}\mathcal{L}^\dagger(\varphi_t h_t)\mathrm d x -  \int p_t h_t^{-1}  \varphi_t^{-1}\mathcal{L}^\dagger(\varphi_t) h_t\mathrm d x\\
   &=\int p_t h_t^{-1}  \varphi_t^{-1} \partial_t p_t\mathrm d x -  \int p_t h_t^{-1}  \varphi_t^{-1}\mathcal{L}^\dagger(\varphi_t) h_t\mathrm d x\\
   &=\cancel{\int  \partial_t p_t\mathrm d x }-  \int p_t h_t^{-1}  \varphi_t^{-1}\mathcal{L}^\dagger(\varphi_t) h_t\mathrm d x\\
   &=-  \int h_t\mathcal{L}^\dagger(\varphi_t)  \mathrm d x\\
   &=-  \int \varphi_t \mathcal{L} (h_t)  \mathrm d x\\
    &=-  \int p_t h_t^{-1} \mathcal{L} (h_t)  \mathrm d x
\end{align}
Hence, 
\begin{align}
    &\int_0^1\E_{p_t} \left[\frac{(\partial_t - \bwd{\mathcal{L}}^{\varphi_t})h_t}{h_t} \right]\mathrm{d} t\\
    =&\int_0^1\E_{p_t} \left[\frac{(\partial_t +  {\mathcal{L}_t})h_t}{h_t} \right]\mathrm{d} t\\
   =&\int_0^1\E_{p_t} \left[ \partial_t \log h_t  +  h_t ^{-1}{\mathcal{L}_t} h_t \right]\mathrm{d} t
\end{align} 
With Dynkin's formula, we have $\mathrm{d} \log h_t = \partial_t \log h_t \mathrm{d} t  + \mathcal{L}_t \log h_t \mathrm{d} t  + \mathrm{d} M$, and hence
\begin{align}
   &\int_0^1\E_{p_t} \left[ \partial_t \log h_t  +  h_t ^{-1}{\mathcal{L}_t} h_t \right]\mathrm{d} t\\ =&    \int_0^1\E_{p_t} \left[ \mathrm{d} \log h_t  +  h_t ^{-1}{\mathcal{L}_t} h_t \mathrm{d} t  - \mathcal{L}_t \log h_t \mathrm{d} t  \right]\\
   =&\underbrace{\int_0^1\E_{p_t} \left[ h_t ^{-1}{\mathcal{L}_t} h_t \mathrm{d} t  - \mathcal{L}_t \log h_t \mathrm{d} t  \right]}_{\text{ESM}} + \E_{p_1}[\log h_1] - \E_{p_0}[\log h_0] 
\end{align}

\paragraph{Denoising score matching}
Consider the generator obtained by taking infinitesimal reversal w.r.t $p_t(x_t|x_0)$ and $\varphi_t$:
\begin{align}
    &\bwd{\mathcal{L}}^{p_t(\cdot|x_0)}f = p_t(\cdot|x_0)^{-1}\mathcal{L}^\dagger(p_t(\cdot|x_0)f) - p_t(\cdot|x_0)^{-1}\mathcal{L}^\dagger(p_t(\cdot|x_0) ) f, \\    &\bwd{\mathcal{L}}^{\varphi_t}f = \varphi_t^{-1}\mathcal{L}^\dagger(\varphi_t f) - \varphi_t^{-1}\mathcal{L}^\dagger(\varphi_t) f
\end{align}
Using results in Proposition~\ref{cor:relation_2_reversal}, we have that $\bwd{\mathcal{L}}^{p_t(\cdot|x_0)}$ is the Doob's-$h=p_t(\cdot|x_0)/\varphi_t$ transform of 
$\bwd{\mathcal{L}}^{\varphi_t}$.
As $p_t(\cdot|x_0)$ is the true marginal for the backward process, using \Cref{thm:prescribing_h_transform_marginal}, we have
\begin{align}
    \frac{(-\partial_t+ \bwd{\mathcal{L}}^{\varphi_t})h_t}{h_t}
    =
    \frac{\mathcal (\bwd{\mathcal{L}}^{\varphi_t})^\dagger\varphi_t + \partial_t \varphi_t}{\varphi_t}.
\end{align}
If we want $\varphi_t$ to be the marginal, i.e., $(\bwd{\mathcal{L}}^{\varphi_t})^\dagger\varphi_t + \partial_t \varphi_t=0$, it suffices for us to impose $\frac{(-\partial_t+ \bwd{\mathcal{L}}^{\varphi_t})h_t}{h_t}=0$. 
Taking expectations over time and state space:
\begin{align}
    &\int_0^1\E_{x_0, x_t} \left[\frac{(\partial_t - \bwd{\mathcal{L}}^{\varphi_t})h_t}{h_t} \right]\mathrm{d} t, \quad 
\end{align}
Similar to the ESM case, we have
\begin{align}
  & \int p_0(x)p_t(x_t|x_0) h_t^{-1}  \bwd{\mathcal{L}}^{\varphi_t} (h_t) \mathrm d x \\=  &\int p_0(x)p_t(x_t|x_0) h_t^{-1}  \varphi_t^{-1}\mathcal{L}^\dagger(\varphi_t h_t)\mathrm d x -  \int p_0(x)p_t(x_t|x_0) h_t^{-1}  \varphi_t^{-1}\mathcal{L}^\dagger(\varphi_t) h_t\mathrm d x\\
    &=-  \int p_{0}p_{t}(x_t|x_0) h_t^{-1} \mathcal{L} (h_t)  \mathrm d x
\end{align}
We then apply Dynkin's formula to obtain the DSM formulation with boundary conditions.

\subsection{Proof of \Cref{thm:group_iso}}\label{app:group_iso_proof}

\begin{proof}
The algebraic identities
\begin{align}
    H_hH_{h'} &= H_{hh'}, &
    H_hR_q &= R_{hq}, &
    R_pH_h &= R_{p/h}, &
    R_pR_q &= H_{p/q}
\end{align}
show that $\mathcal G$ is closed under composition.
Associativity follows from associativity of composition.
The identity element is $H_1=I$, and the inverses are
\begin{align}
    H_h^{-1}=H_{1/h},
    \qquad
    R_p^{-1}=R_p.
\end{align}
Thus $\mathcal G$ is a group.

Now define
\begin{align}
    \Phi:\mathcal G\to A\rtimes \mathbb Z_2
\end{align}
by
\begin{align}
    \Phi(H_h)=([h],0),
    \qquad
    \Phi(R_p)=([p],1),
\end{align}
where $[h]$ denotes the equivalence class of $h$ modulo positive constants.
The multiplication in $A\rtimes \mathbb Z_2$ is
\begin{align}
    (a,z_1)(b,z_2)
    =
    \bigl(a b^{(-1)^{z_1}},z_1+z_2 \bmod 2\bigr).
\end{align}
Using this rule, the four products above are mapped respectively to
\begin{align}
    ([h],0)([h'],0) &= ([hh'],0),\\
    ([h],0)([q],1) &= ([hq],1),\\
    ([p],1)([h],0) &= ([p/h],1),\\
    ([p],1)([q],1) &= ([p/q],0).
\end{align}
Hence $\Phi$ preserves multiplication.
It is also bijective, since every element of $A\rtimes\mathbb Z_2$ is uniquely of the form $([h],0)$ or $([p],1)$.
Therefore,
\begin{align}
    \mathcal G \cong A\rtimes \mathbb Z_2
    =
    \operatorname{Dih}(A).
\end{align}
\end{proof}

\subsection{Discrete-time Generalized EJE and Crooks}
\begin{proposition}[Discrete-time Jarzynski equality and Crooks relation, restate]
\label{prop:discretetime_restate}
Let $\{U_n\}_{n=0}^N$ be an energy path with $U_0=U_A$ and $U_N=U_B$, and define
\begin{align}
    \gamma_n(x)
    :=
    e^{-U_n(x)},
    \qquad
    Z_n
    :=
    \int \gamma_n(x)\,\mathrm dx,
    \qquad
    \pi_n(x)
    :=
    \frac{\gamma_n(x)}{Z_n}.
\end{align}
Let $\{P_{F,n}\}_{n=1}^N$ be forward transition kernels, and let $\fwd P$ be
the forward path law
\begin{align}
    X_0\sim \pi_A\propto e^{-U_A},
    \qquad
    X_n\mid X_{n-1}\sim P_{F,n}(\cdot\mid X_{n-1}),
    \qquad n=1,\dots,N,
\end{align}
 Define the pushforward density
\begin{align}
    (P_{F,n}\#\pi_{n-1})(x)
    :=
    \int P_{F,n}(x\mid x')\,\pi_{n-1}(x')\,\mathrm dx'.
\end{align}
Then the discrete-time generalized work is
\begin{align}
    W(X_{0:N})
   =
    \sum_{n=1}^N
    \left[
        U_n(X_n)-U_{n-1}(X_n)
        +
        \log
        \frac{(P_{F,n}\#\pi_{n-1})(X_n)}
        {\pi_{n-1}(X_n)}
    \right],
    \label{eq:general_work_disc_restate}
\end{align}
 The discrete-time
Jarzynski equality is
\begin{align}
    \Delta F
    =
    -\log \frac{Z_B}{Z_A}
    =
    -\log
    \mathbb E_{\fwd P}
    \left[
        \exp\bigl(-W(X_{0:N})\bigr)
    \right].
    \label{eq:jarzynski_discrete_time_restate}
\end{align}

Moreover, assume that there exist backward kernels
$\{P_{B,n}\}_{n=1}^N$ satisfying, for each $n=1,\dots,N$,
\begin{align}
    \pi_{n-1}(x_{n-1})\,P_{F,n}(x_n\mid x_{n-1})
    =
    (P_{F,n}\#\pi_{n-1})(x_n)\,
    P_{B,n}(x_{n-1}\mid x_n).
    \label{eq:discrete_requirement_restate}
\end{align}
Let $\bwd Q$ be the corresponding backward path law
\begin{align}
    X_N\sim \pi_B\propto e^{-U_B},
    \qquad
    X_{n-1}\mid X_n\sim P_{B,n}(\cdot\mid X_n),
    \qquad n=N,\dots,1,
\end{align}
Then the discrete-time Crooks relation is
\begin{align}
    \frac{\mathrm d\bwd Q}{\mathrm d\fwd P}(X_{0:N})
    &=
    \frac{
        \pi_B(X_N)\prod_{n=1}^N P_{B,n}(X_{n-1}\mid X_n)
    }{
        \pi_A(X_0)\prod_{n=1}^N P_{F,n}(X_n\mid X_{n-1})
    } \\
    &=
    \exp\bigl(-W(X_{0:N})+\Delta F\bigr).
    \label{eq:crooks_discrete_time}
\end{align}
\end{proposition}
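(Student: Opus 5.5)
Both claims follow from a direct telescoping computation on the path densities, so no continuous-time machinery is needed.

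\textbf{Crooks relation.} Use \eqref{eq:discrete_requirement_restate} to write each backward kernel as
\begin{align}
P_{B,n}(x_{n-1}\mid x_n)=\frac{\pi_{n-1}(x_{n-1})P_{F,n}(x_n\mid x_{n-1})}{(P_{F,n}\#\pi_{n-1})(x_n)},
\end{align}
wherever the pushforward is positive. Substituting this into the ratio in \eqref{eq:crooks_discrete_time}, every forward kernel $P_{F,n}$ cancels, and we are left with
\begin{align}
\frac{\pi_B(X_N)}{\pi_A(X_0)}\prod_{n=1}^N\frac{\pi_{n-1}(X_{n-1})}{(P_{F,n}\#\pi_{n-1})(X_n)}.
\end{align}
Next, multiply and divide each factor by $\pi_{n-1}(X_n)$. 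The product then splits into two pieces. The first is $\prod_{n}\pi_{n-1}(X_{n-1})/\pi_{n-1}(X_n)$. The second is $\prod_n \pi_{n-1}(X_n)/(P_{F,n}\#\pi_{n-1})(X_n)$, which is exactly $\exp(-\text{escort terms})$. The first product telescopes after reindexing: combined with $\pi_B(X_N)/\pi_A(X_0)$, it gives $\prod_{n=1}^N \pi_n(X_n)/\pi_{n-1}(X_n)$, using $\pi_0=\pi_A$ and $\pi_N=\pi_B$. Finally, insert $\pi_n=e^{-U_n}/Z_n$. Each factor becomes $e^{-(U_n(X_n)-U_{n-1}(X_n))}\,Z_{n-1}/Z_n$, and the $Z$ ratios telescope to $Z_0/Z_N=e^{\Delta F}$. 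Collecting everything yields $\exp(-W+\Delta F)$.

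\textbf{Jarzynski equality.} Taking the $\fwd P$-expectation of the Crooks identity gives
\begin{align}
\mathbb E_{\fwd P}\bigl[e^{-W}\bigr]e^{\Delta F}=\mathbb E_{\fwd P}\Bigl[\frac{\mathrm d\bwd Q}{\mathrm d\fwd P}\Bigr]=1,
\end{align}
which is \eqref{eq:jarzynski_discrete_time_restate}. The middle equality holds provided $\bwd Q\ll\fwd P$; this is the one point needing care. A cleaner route avoids any absolute-continuity assumption: integrate $e^{-W}$ against $\fwd P$ directly. Integrate sequentially from $x_N$ down to $x_0$. At step $n$, the weight $e^{-W}$ contributes the factor $\pi_{n-1}(x_n)/(P_{F,n}\#\pi_{n-1})(x_n)$, which converts the marginal $P_{F,n}\#\pi_{n-1}$ back into $\pi_{n-1}$. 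Then the factor $e^{-(U_n-U_{n-1})(x_n)}$ turns $\pi_{n-1}$ into $\pi_n\,Z_n/Z_{n-1}$. By induction the marginal is $\pi_n$ at every step, and the integral ends as $\int\pi_N\cdot Z_N/Z_0=Z_B/Z_A$.

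\textbf{Main obstacle.} The difficulty is only bookkeeping: keeping the telescoping indices straight ($\pi_{n-1}$ evaluated at $X_{n-1}$ versus at $X_n$), and handling the support issue where $P_{F,n}\#\pi_{n-1}=0$. The latter I would dispose of by noting that such points have $\fwd P$-measure zero.
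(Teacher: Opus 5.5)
Your Crooks computation is exactly the paper's: substitute $P_{B,n}$ from the requirement, cancel the forward kernels, multiply and divide by $\pi_{n-1}(X_n)$, telescope, and cancel the $Z_n$. The paper likewise gets Jarzynski by taking the $\fwd{P}$-expectation.

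Where you go wrong is the support issue. Discarding points with $(P_{F,n}\#\pi_{n-1})(x_n)=0$ because they are $\fwd{P}$-null is fine for Crooks, which is only asserted $\fwd{P}$-a.e. It is, however, precisely what breaks Jarzynski, and your ``cleaner route'' does not avoid it. In your induction the conversion step actually reads $(P_{F,n}\#\pi_{n-1})(x_n)\cdot\pi_{n-1}(x_n)/(P_{F,n}\#\pi_{n-1})(x_n)=\pi_{n-1}(x_n)\,\mathbf 1\{(P_{F,n}\#\pi_{n-1})(x_n)>0\}$. So any $\pi_{n-1}$-mass where the pushforward vanishes is lost and never recovered, and you get $\mathbb E_{\fwd{P}}[e^{-W}]<Z_B/Z_A$. (That induction also propagates the weighted marginal forward, integrating out $x_0,x_1,\dots$ in that order, not from $x_N$ down.)

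Here is a concrete failure on $\mathbb R$. Take $N=1$, $U_0=U_1=x^2/2$, and $P_{F,1}(\cdot\mid x)=\mathrm{Unif}[0,1]$ for every $x$.
\begin{itemize}
\item Then $P_{F,1}\#\pi_0=\mathbf 1_{[0,1]}$, so $e^{-W}=\pi_0(X_1)$ $\fwd{P}$-a.s.
\item Hence $\mathbb E_{\fwd{P}}[e^{-W}]=\int_0^1\pi_0<1=Z_1/Z_0$.
\item Backward kernels satisfying the requirement still exist ($P_{B,1}(\cdot\mid x_1)=\pi_0$ for $x_1\in[0,1]$), and Crooks holds $\fwd{P}$-a.e.
\item The failure is that $\bwd{Q}$ charges $X_1\notin[0,1]$, so $\bwd{Q}\not\ll\fwd{P}$.
\end{itemize}
Both of your routes therefore need the same extra hypothesis: $P_{F,n}\#\pi_{n-1}>0$ almost everywhere for every $n$. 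This holds automatically if, for instance, the kernels have everywhere-positive densities, and since $\pi_n>0$ it is exactly the condition giving $\bwd{Q}\ll\fwd{P}$. The paper leaves this positivity implicit. You should state it as an assumption rather than claim it can be dispensed with; with it, your proof is complete.
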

\begin{proof}
    \begin{align} &\frac{
        \pi_B(X_N)\prod_{n=1}^N P_{B,n}(X_{n-1}\mid X_n)
    }{
        \pi_A(X_0)\prod_{n=1}^N P_{F,n}(X_n\mid X_{n-1})
    }\\
    =
        &\frac{ P_{B, 1}(x_0|x_{1})\cdots P_{B, N}(x_{N-1}|x_N)\pi_N(x_N)}{\pi_0(x_0) P_{F, 1}(x_1|x_{0})\cdots P_{F, N}(x_N|x_{N-1})}\\
        =&\frac{1}{\pi_0(x_0)}\frac{P_{B, 1}(x_0|x_{1})}{ P_{F, 1}(x_1|x_{0})} \cdots \frac{P_{B, N}(x_{N-1}|x_N)}{ P_{F, N}(x_N|x_{N-1})}\pi_N(x_N)\\
        =& \frac{1}{\pi_0(x_0)}
        \frac{P_{B, 1}(x_0|x_{1})}{ P_{F, 1}(x_1|x_{0})} \frac{\pi_0(x_0) P_{F, 1}(x_1|x_{0})} { P_{F, 1}\#\pi_0 (x_1) P_{B, 1}(x_0|x_{1})}
        \cdots \nonumber\\&  \quad \quad  \quad \quad \quad \quad \quad \quad  \cdots 
        \frac{P_{B, N}(x_{N-1}|x_N)}{ P_{F, N}(x_N|x_{N-1})} \frac{\pi_{N-1}(x_{N-1}) P_{F, N}(x_N|x_{N-1})}  { P_{F, N}\#\pi_{N-1} (x_N) P_{B, N}(x_{N-1}|x_{N})}
        \pi_N(x_N)\\
        =&\frac{1}{\pi_0(x_0)}
        \frac{\pi_0(x_0)} { P_{F, 1}\#\pi_0 (x_1)}  \frac{\pi_1(x_1)} { P_{F, 2}\#\pi_1 (x_2)}
        \cdots 
       \frac{\pi_{N-1}(x_{N-1}) }  { P_{F, N}\#\pi_{N-1} (x_N) }
        \pi_N(x_N)\\
         =&\frac{1}{\pi_0(x_0)}
        \frac{\pi_0(x_0)\pi_0(x_1)} { P_{F, 1}\#\pi_0 (x_1)\pi_0(x_1)}  
        \frac{\pi_1(x_1) \pi_1(x_2)} { P_{F, 2}\#\pi_1 (x_2)\pi_1(x_2)}
        \cdots 
       \frac{\pi_{N-1}(x_{N-1}) \pi_{N-1}(x_N)}  { P_{F, N}\#\pi_{N-1} (x_N) \pi_{N-1}(x_N)}
        \pi_N(x_N)\\
        =& \frac{ \pi_0(x_1)} { P_{F, 1}\#\pi_0 (x_1) } 
         \frac{\pi_1(x_1) } { \pi_0(x_1)}
        \frac{\pi_1(x_2)} { P_{F, 2}\#\pi_1 (x_2)}\frac{\pi_2(x_2)}{\pi_1(x_2)}
        \cdots 
       \frac{\pi_{N-1}(x_N)}  { P_{F, N}\#\pi_{N-1} (x_N)}
     \frac{ \pi_N(x_N)}{ \pi_{N-1}(x_N)}
    \end{align}
  Therefore, taking log-space and canceling normalization factors for $\pi$-s, we have
\begin{align}
    W = (U_n(x_n) - U_{n-1}(x_n) ) + \log\frac{P_{F, n}\#\pi_{n-1} (x_n)}{\pi_{n-1} (x_n)} 
\end{align}
Then, taking expectation over the work, we obtain the discrete-time Jarzynski.
\end{proof}

\subsection{Proof of Remark~\ref{remark:variance_bound} (Variance-dissipation Relation)}

\begin{remark}[Variance-dissipation Relation, restate]
    Denote the  free energy estimate  as  \(\widehat {Z} := \widehat {\exp(-\Delta F)} = \frac1N \sum_{i=1}^N \exp({-\widetilde{W}(X^{(i)}_{[0,1]})})\) with \(X^{(i)}_{[0,1]} \sim \fwd{P}\). Then
    \begin{align}
        \Var(\widehat Z)=
\mathcal{O}\left(\tfrac1N
\chi^2(\bwd{Q}\,\|\,\fwd{P})  
\right)\geq \mathcal{O}\left(\tfrac1ND_{\mathrm{KL}}(\bwd{Q}\,\|\,\fwd{P}) \right)
    \end{align}
\end{remark}
\begin{proof}
    \begin{align}
        \Var(\widehat Z) &= \E\left[( \frac1N \sum_{i=1}^N \exp(-W^{(i)}))^2 \right] + \mathrm{const.}\\
        &=\E\left[( \frac1N \sum_{i=1}^N \frac{\mathrm{d}\bwd{Q}}{\mathrm{d}\fwd{P}})^2 \right]+ \mathrm{const.}\\
       &= \frac1N\Var( \frac{\mathrm{d}\bwd{Q}}{\mathrm{d}\fwd{P}})+ \mathrm{const.}\\
       &= \frac1N\int \left(\frac{\mathrm{d}\bwd{Q}}{\mathrm{d}\fwd{P}}-1\right)^2\mathrm{d}\fwd{P} + \mathrm{const.}\\
       &=\mathcal{O}\left(\frac1N
\chi^2(\bwd{Q}\,\|\,\fwd{P})  
\right)
    \end{align}
    
We then prove the lower bound of this.
Let
\begin{align}
    r
    =
    \frac{\mathrm d\bwd Q}{\mathrm d\fwd P}.
\end{align}
and we have 
\begin{align}
    D_{\mathrm{KL}}(\bwd Q\|\fwd P)
    &=
    \int r\log r\,\mathrm d\fwd P,\\
    \chi^2(\bwd Q\|\fwd P)
    &=
    \int (r-1)^2\,\mathrm d\fwd P
    =
    \int r^2\,\mathrm d\fwd P - 1.
\end{align}
We note that 
\begin{align}
    \log u \le u-1,
    \qquad u>0.
\end{align}
and 
\begin{align}
    u\log u
    \le
    u(u-1)
    =
    u^2-u .
\end{align}
Set $u$ to $r$ and take expectation, 
\begin{align}
    D_{\mathrm{KL}}(\bwd Q\|\fwd P)
    &=
    \int r\log r\,\mathrm d\fwd P \\
    &\le
    \int (r^2-r)\,\mathrm d\fwd P \\
    &=
    \int r^2\,\mathrm d\fwd P - \int r\,\mathrm d\fwd P \\
    &=
    \int r^2\,\mathrm d\fwd P - 1 \\
    &=
    \chi^2(\bwd Q\|\fwd P),
\end{align}

\end{proof}

\subsection{Proof of Proposition~\ref{prop:marginal_reversal} (marginalization maintains time-reversal)}
\label{app:proof_marginal_time_rev}

\begin{proposition}[Marginalization maintains time-reversal, restate]
Let \(p_t(x_t|x_0,x_1)\) be a conditional interpolant, and let
\(\mathcal{L}_{F,t}^{x_0,x_1}\) and \(\mathcal{L}_{B,t}^{x_0,x_1}\)
be the corresponding forward and backward conditional generators. Assume that, for every \((x_0,x_1)\), they are time reversals of each other, i.e.
\begin{align}
    \resizebox{0.93\linewidth}{!}{$
    \mathcal{L}_{B,t}^{x_0,x_1} f
    =
    \frac{1}{p_t(x_t|x_0,x_1)}
    (\mathcal{L}_{F,t}^{x_0,x_1})^\dagger
    \!\left(
        p_t(x_t|x_0,x_1) f
    \right)
    -
    \frac{1}{p_t(x_t| x_0,x_1)}
    (\mathcal{L}_{F,t}^{x_0,x_1})^\dagger
    p_t(x_t|x_0,x_1)\, f
    $}
\end{align}
Then \(\mathcal{L}_{F,t}\) and \(\mathcal{L}_{B,t}\) defined by \Cref{eq:marginal_generator} are also time-reversal with respect to \(p_t(x_t)\), namely
\begin{align}\label{eq:time-reversal_restate}
    \mathcal{L}_{B,t} f
    =
    \frac{1}{p_t}
    \mathcal{L}_{F,t}^\dagger (p_t f)
    -
    \frac{1}{p_t}
    \mathcal{L}_{F,t}^\dagger p_t \, f .
\end{align}
\end{proposition}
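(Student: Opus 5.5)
The plan is to verify the claimed identity in weak form: pair both sides with an arbitrary test function $g$ against Lebesgue measure and show that they agree for all $f,g$. Write $p_t(x)=\int p_t(x|z)\,\pi(\mathrm dz)$ with $z=(x_0,x_1)$, and $p(z|x)=p_t(x|z)\pi(z)/p_t(x)$. By \Cref{eq:marginal_generator}, $p_t\,\mathcal L_{F,t}f=\int p_t(\cdot|z)\,\mathcal L_{F,t}^{z}f\,\pi(\mathrm dz)$, and likewise for $\mathcal L_{B,t}$. Multiplying the target identity \Cref{eq:time-reversal_restate} by $p_t$, it reads
\begin{align}
p_t\,\mathcal L_{B,t}f=\mathcal L_{F,t}^\dagger(p_t f)-f\,\mathcal L_{F,t}^\dagger p_t .
\end{align}
So it suffices to show, for every test $g$,
\begin{align}
\int g\,p_t\,\mathcal L_{B,t}f\,\mathrm dx=\int p_t f\,\mathcal L_{F,t}g\,\mathrm dx-\int p_t\,\mathcal L_{F,t}(fg)\,\mathrm dx+\int p_t\, g\,\mathcal L_{F,t}f\,\mathrm dx\cdot 0 ,
\end{align}
where I have moved the adjoints onto $g$ and $fg$. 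Concretely, $\int g\,\mathcal L^\dagger_{F,t}(p_tf)=\int p_t f\,\mathcal L_{F,t}g$ and $\int g f\,\mathcal L_{F,t}^\dagger p_t=\int p_t\,\mathcal L_{F,t}(fg)$. Hence the right-hand side becomes $\int p_t\,[f\mathcal L_{F,t}g-\mathcal L_{F,t}(fg)]\,\mathrm dx$.

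The key point is that this right-hand side is \emph{linear in the pair $(p_t,\mathcal L_{F,t})$ through the product $p_t\mathcal L_{F,t}$}. Since $p_t\mathcal L_{F,t}\phi=\int p_t(\cdot|z)\mathcal L^z_{F,t}\phi\,\pi(\mathrm dz)$ for every $\phi$, the right-hand side equals
\begin{align}
\int\pi(\mathrm dz)\int p_t(x|z)\,[f\mathcal L^z_{F,t}g-\mathcal L^z_{F,t}(fg)]\,\mathrm dx .
\end{align}
Now apply the same adjoint manipulation in reverse for each fixed $z$. The hypothesis (the conditional reversal relation) gives, after multiplying by $p_t(\cdot|z)$ and pairing with $g$,
\begin{align}
\int g\,p_t(\cdot|z)\mathcal L^z_{B,t}f=\int p_t(\cdot|z)[f\mathcal L^z_{F,t}g-\mathcal L^z_{F,t}(fg)].
\end{align}
Integrating over $\pi(\mathrm dz)$ and using $p_t\mathcal L_{B,t}f=\int p_t(\cdot|z)\mathcal L^z_{B,t}f\,\pi(\mathrm dz)$ yields the left-hand side. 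Since $g$ is arbitrary, the identity holds pointwise (a.e.).

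The main obstacle is not algebraic, because the nonlinearity in $p_t$ (division by $p_t$) disappears once everything is multiplied by $p_t$. The real issue is justifying the interchanges: Fubini between $\pi(\mathrm dz)$ and $\mathrm dx$, and integration by parts for each conditional generator on the chosen test-function class. I will state these as the standing regularity assumptions, so that $f,g,fg$ lie in the common domain and the conditional adjoints hold uniformly in $z$. I will also check, as a sanity step, that $\mathcal L_{B,t}\mathbf 1=0$, which follows immediately from the displayed formula.
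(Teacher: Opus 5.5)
Your proof is correct, but it takes a different route from the paper's.

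\textbf{The paper's route.} The paper argues pointwise. It first derives an explicit formula for the adjoint of the marginal generator, $\mathcal L_{F,t}^\dagger r=\int(\mathcal L_{F,t}^z)^\dagger\bigl(r(\cdot)\,\pi_t(z\mid\cdot)\bigr)\,\mathrm dz$. It then substitutes $r=p_tf$ and $r=p_t$, and uses $p_t(x)\pi_t(z\mid x)=\pi(z)p_t^z(x)$ to rewrite the candidate reversal as $\frac{1}{p_t}\int\pi(z)\bigl[(\mathcal L_{F,t}^z)^\dagger(p_t^zf)-f\,(\mathcal L_{F,t}^z)^\dagger p_t^z\bigr]\,\mathrm dz$. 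Finally it recognizes this expression as the posterior average of $\mathcal L_{B,t}^zf$.

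\textbf{Your route.} You stay entirely in weak form. After multiplying by $p_t$ and pairing with $g$, you move every adjoint onto the test functions $g$ and $fg$, so the marginal adjoint is never identified. The only structural input is that $p_t\mathcal L_{F,t}$ and $p_t\mathcal L_{B,t}$ are the $\pi$-mixtures $\int\pi(\mathrm dz)\,p_t^z\mathcal L_{F,t}^z$ and $\int\pi(\mathrm dz)\,p_t^z\mathcal L_{B,t}^z$. This is the same Bayes identity the paper uses, applied once on each side.

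\textbf{What each buys.} Your version makes the mechanism transparent: once the division by $p_t$ is cleared, the marginal identity is literally a $\pi$-average of the conditional ones. It also isolates exactly the regularity needed: Fubini, integration by parts against $g$ and $fg$, and a separating class of test functions. The paper's version produces a reusable pointwise formula for $\mathcal L_{F,t}^\dagger$ as a mixture of conditional adjoints. From that formula, for example, the marginal forward equation in \eqref{eq:marginal} follows directly by taking $r=p_t$.

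As a cosmetic fix, delete the stray term $+\int p_t\,g\,\mathcal L_{F,t}f\,\mathrm dx\cdot 0$ in your second display; it plays no role.
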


\begin{proof}
Write
\begin{align}
    z := (x_0,x_1),
    \qquad
    p_t^z(x) := p_t(x\mid x_0,x_1),
    \qquad
    \pi(z) := \pi(x_0,x_1).
\end{align}
The marginal density is
\begin{align}
    p_t(x)
    =
    \int p_t^z(x)\,\pi(z)\,\mathrm dz,
\end{align}
and the posterior over endpoints given the current state is
\begin{align}
    \pi_t(z\mid x)
    =
    \frac{\pi(z)p_t^z(x)}{p_t(x)}.
\end{align}
By the definition of the marginal generators in \Cref{eq:marginal_generator},
\begin{align}
    \mathcal L_{F,t}f(x)
    =
    \int
    \mathcal L_{F,t}^z f(x)\,
    \pi_t(z\mid x)\,\mathrm dz,
    \qquad
    \mathcal L_{B,t}f(x)
    =
    \int
    \mathcal L_{B,t}^z f(x)\,
    \pi_t(z\mid x)\,\mathrm dz .
\end{align}

We first compute the adjoint of the marginal forward generator. For any test
function $f$ and density $r$, we have
\begin{align}
    \int r(x)\,\mathcal L_{F,t}f(x)\,\mathrm dx
    &=
    \int r(x)
    \left[
        \int
        \mathcal L_{F,t}^z f(x)\,
        \pi_t(z\mid x)\,\mathrm dz
    \right]
    \mathrm dx \\
    &=
    \int
    \int
    r(x)\pi_t(z\mid x)\,
    \mathcal L_{F,t}^z f(x)
    \,\mathrm dx\,\mathrm dz \\
    &=
    \int
    \int
    f(x)\,
    (\mathcal L_{F,t}^z)^\dagger
    \!\left(
        r(\cdot)\pi_t(z\mid \cdot)
    \right)(x)
    \,\mathrm dx\,\mathrm dz \\
    &=
    \int
    f(x)
    \left[
        \int
        (\mathcal L_{F,t}^z)^\dagger
        \!\left(
            r(\cdot)\pi_t(z\mid \cdot)
        \right)(x)
        \,\mathrm dz
    \right]
    \mathrm dx .
\end{align}
Therefore,
\begin{align}
    \mathcal L_{F,t}^\dagger r(x)
    =
    \int
    (\mathcal L_{F,t}^z)^\dagger
    \!\left(
        r(\cdot)\pi_t(z\mid \cdot)
    \right)(x)
    \,\mathrm dz .
    \label{eq:marginal_adjoint_formula}
\end{align}

Now take $r=p_t f$. Since
\begin{align}
    p_t(x)\pi_t(z\mid x)
    =
    \pi(z)p_t^z(x),
\end{align}
we get
\begin{align}
    \mathcal L_{F,t}^\dagger(p_t f)(x)
    =
    \int
    \pi(z)
    (\mathcal L_{F,t}^z)^\dagger
    \!\left(
        p_t^z f
    \right)(x)
    \,\mathrm dz .
    \label{eq:marginal_adjoint_ptf}
\end{align}
We can move $\pi(z)$ outside the integral as the generator operates on $x$.
Similarly,
\begin{align}
    \mathcal L_{F,t}^\dagger p_t(x)
    =
    \int
    \pi(z)
    (\mathcal L_{F,t}^z)^\dagger
    p_t^z(x)
    \,\mathrm dz .
    \label{eq:marginal_adjoint_pt}
\end{align}
Therefore, the time-reversal of the marginal forward generator is
\begin{align}
    &\frac{1}{p_t(x)}
    \mathcal L_{F,t}^\dagger(p_t f)(x)
    -
    \frac{1}{p_t(x)}
    \mathcal L_{F,t}^\dagger p_t(x)\,f(x)
   \\ =
    &\frac{1}{p_t(x)}
    \int
    \pi(z)
    \left[
        (\mathcal L_{F,t}^z)^\dagger(p_t^z f)(x)
        -
        (\mathcal L_{F,t}^z)^\dagger p_t^z(x)\,f(x)
    \right]
    \mathrm dz .
    \label{eq:time_reversal_marginal_rhs}
\end{align}

On the other hand, we assume
\begin{align}
    \mathcal L_{B,t}^z f(x)
    =
    \frac{1}{p_t^z(x)}
    (\mathcal L_{F,t}^z)^\dagger(p_t^z f)(x)
    -
    \frac{1}{p_t^z(x)}
    (\mathcal L_{F,t}^z)^\dagger p_t^z(x)\,f(x).
\end{align}
Thus the marginal backward generator satisfies
\begin{align}
    \mathcal L_{B,t}f(x)
    &=
    \int
    \mathcal L_{B,t}^z f(x)\,
    \pi_t(z\mid x)\,\mathrm dz \\
    &=
    \int
    \left[
        \frac{1}{p_t^z(x)}
        (\mathcal L_{F,t}^z)^\dagger(p_t^z f)(x)
        -
        \frac{1}{p_t^z(x)}
        (\mathcal L_{F,t}^z)^\dagger p_t^z(x)\,f(x)
    \right]
    \frac{\pi(z)p_t^z(x)}{p_t(x)}
    \,\mathrm dz \\
    &=
    \frac{1}{p_t(x)}
    \int
    \pi(z)
    \left[
        (\mathcal L_{F,t}^z)^\dagger(p_t^z f)(x)
        -
        (\mathcal L_{F,t}^z)^\dagger p_t^z(x)\,f(x)
    \right]
    \mathrm dz .
\end{align}
Comparing this with \Cref{eq:time_reversal_marginal_rhs}, we obtain
\begin{align}
    \mathcal L_{B,t} f(x)
    =
    \frac{1}{p_t(x)}
    \mathcal L_{F,t}^\dagger(p_t f)(x)
    -
    \frac{1}{p_t(x)}
    \mathcal L_{F,t}^\dagger p_t(x)\,f(x).
\end{align}
\end{proof}

\subsection{Connection between Generator Matching and Denoising Score Matching}\label{app:GM_DSM}
We consider two approaches in the main text:
we either learn a backward process directly, or learn the marginal to reverse the forward process. 
We will show that learning the backward process with generator matching is the same as learning the marginal.

For simplicity, we consider the case of a denoising Markov model.
For the bridge model we discussed above, we could simply choose $x_0 = I_t$, and define a new axis of time $\tau$ from $I$ to $x$. 
Assume we have a forward generator $\mathcal{L}$, corresponding to a known forward process.
Assume the law of forward process, conditional on the initial sample $x_0$, is $p_t^{x_0}$.
The backward conditional generator is given by
\begin{align}
   \tilde{\mathcal{L}}_t^{x_0} f = (p_t^{x_0})^{-1} 
(\mathcal{L}_t)^\dagger (p_t^{x_0} f) - (p_t^{x_0})^{-1}  (\mathcal{L}_t)^\dagger  (p_t^{x_0})f,
\end{align}
We define another backward process as 
\begin{align}
    \tilde{\mathcal{L}}_t f = \varphi_t^{-1} 
(\mathcal{L}_t)^\dagger (\varphi_t f) - \varphi_t^{-1}  (\mathcal{L}_t)^\dagger  (\varphi_t)f, 
\end{align}

Recall the DSM objective:
\begin{align}
    \mathcal{J}_{\text{DSM}}(\varphi_t) =\mathbb{E}_{x_0, x_t} \left[
    \frac{\mathcal{L}_t( p(x_t|x_0) / \varphi_t(x_t) )}{p(x_t|x_0) / \varphi_t(x_t)} - \mathcal{L}_t\log (
p(x_t|x_0) / \varphi_t(x_t)
    )
    \right] 
\end{align}
We now assume an additive perturbation in log-space of $\varphi$:
\begin{align}
    \varphi'_t = \varphi_t \exp(-\epsilon \phi)
\end{align}
Then, we have
\begin{align}
     \mathcal{J}_{\text{DSM}}(\varphi'_t) =\mathbb{E}_{x_0| x_t} \left[
    \frac{\mathcal{L}_t( \exp(\epsilon \phi(x_t) ) p(x_t|x_0) / \varphi_t(x_t) )}{\exp(\epsilon \phi(x_t) ) p(x_t|x_0) / \varphi_t(x_t)} - \mathcal{L}_t\log (
\exp(\epsilon \phi(x_t) ) p(x_t|x_0) / \varphi_t(x_t)
    )
    \right] 
\end{align}
Take the gradient w.r.t. $\epsilon$, we have 
\begin{align}
\frac{\partial}{\partial \epsilon}\mathcal{J}_{\text{DSM}}(\varphi'_t)
\\=
\mathbb{E}_{x_0, x_t} \Bigg[
&
\frac{
\mathcal{L}_t\!\left(
\phi(x_t)\exp(\epsilon \phi(x_t))\, p(x_t|x_0) / \varphi_t(x_t)
\right)
}{
\exp(\epsilon \phi(x_t))\, p(x_t|x_0) / \varphi_t(x_t)
}
\\
&\quad -
\frac{
\mathcal{L}_t\!\left(
\exp(\epsilon \phi(x_t))\, p(x_t|x_0) / \varphi_t(x_t)
\right)
}{
\left(\exp(\epsilon \phi(x_t))\, p(x_t|x_0) / \varphi_t(x_t)\right)^2
}
\phi(x_t)\exp(\epsilon \phi(x_t))\, p(x_t|x_0) / \varphi_t(x_t)
\\
&\quad -
\mathcal{L}_t \phi(x_t)
\Bigg].
\end{align}
Hence,
\begin{align}
\frac{\partial}{\partial \epsilon}\mathcal{J}_{\text{DSM}}(\varphi'_t)
=
\mathbb{E}_{x_0, x_t} \Bigg[
&
\frac{
\mathcal{L}_t\!\left(
\phi(x_t)\exp(\epsilon \phi(x_t))\, p(x_t|x_0) / \varphi_t(x_t)
\right)
}{
\exp(\epsilon \phi(x_t))\, p(x_t|x_0) / \varphi_t(x_t)
}
\\
&\quad -
\frac{
\mathcal{L}_t\!\left(
\exp(\epsilon \phi(x_t))\, p(x_t|x_0) / \varphi_t(x_t)
\right)
}{
\exp(\epsilon \phi(x_t))\, p(x_t|x_0) / \varphi_t(x_t)
}\,\phi(x_t)
-
\mathcal{L}_t \phi(x_t)
\Bigg].
\end{align}
Therefore, the G\^ateaux derivative of DSM objective is given by
\begin{align}
    &\left.\frac{d}{d\epsilon}\mathcal J_{\mathrm{DSM}}(\varphi_t')\right|_{\epsilon=0} \\= &\mathbb{E}_{x_0, x_t} \Bigg[
\frac{
\mathcal{L}_t\!\left(
\phi(x_t)  p(x_t|x_0) / \varphi_t(x_t)
\right)
}{
 p(x_t|x_0) / \varphi_t(x_t)
}-
\frac{
\mathcal{L}_t\!\left(
 p(x_t|x_0) / \varphi_t(x_t)
\right)
}{
 p(x_t|x_0) / \varphi_t(x_t)
}\,\phi(x_t)
-
\mathcal{L}_t \phi(x_t)
\Bigg]\\
=& \mathbb{E}_{x_0} \Bigg[ \int
\frac{
\mathcal{L}_t\!\left(
\phi(x_t)  p(x_t|x_0) / \varphi_t(x_t)
\right)
}{
 1 / \varphi_t(x_t)
}-
\frac{
\mathcal{L}_t\!\left(
 p(x_t|x_0) / \varphi_t(x_t)
\right)
}{
 1 / \varphi_t(x_t)
}\,\phi(x_t)
-
p(x_t|x_0)  \mathcal{L}_t \phi(x_t)\d x_t
\Bigg]\\
=& \mathbb{E}_{x_0} \Bigg[ \int
\frac{
\mathcal{L}_t\!\left(
\phi(x_t)  p(x_t|x_0) / \varphi_t(x_t)
\right)
}{
 1 / \varphi_t(x_t)
} -
\frac{
\mathcal{L}_t\!\left(
 p(x_t|x_0) / \varphi_t(x_t)
\right)
}{
 1 / \varphi_t(x_t)
}\,\phi(x_t)
\\ &-
p(x_t|x_0)  \mathcal{L}_t \phi(x_t) + p(x_t|x_0) \phi(x_t)  \mathcal{L}_t 1 \d x_t
\Bigg]\\
=&\mathbb{E}_{x_0} \Bigg[ \int
\phi(x_t)  p(x_t|x_0) / \varphi_t(x_t)\mathcal{L}^\dagger_t\!\left(
\varphi_t(x_t)
\right) -
  p(x_t|x_0) / \varphi_t(x_t) \mathcal{L}_t^\dagger \left(
\varphi_t(x_t)\phi(x_t)
\right)
\\ &-
  \phi(x_t)  \mathcal{L}^\dagger_t p(x_t|x_0) +  \mathcal{L}^\dagger_t (p(x_t|x_0) \phi(x_t) )  \d x_t
\Bigg]\\
=& \mathbb{E}_{x_0, x_t} \Bigg[  
\phi(x_t)   / \varphi_t(x_t)\mathcal{L}^\dagger_t\!\left(
\varphi_t(x_t)
\right) -
1 / \varphi_t(x_t) \mathcal{L}_t^\dagger \left(
\varphi_t(x_t)\phi(x_t)
\right)
\\ &-
  \phi(x_t) /p(x_t|x_0) \mathcal{L}^\dagger_t p(x_t|x_0) +  1/p(x_t|x_0)\mathcal{L}^\dagger_t (p(x_t|x_0) \phi(x_t) )  
\Bigg]\\
=& \mathbb{E}_{x_0, x_t} [
-\tilde{\mathcal{L}}_t \phi + \tilde{\mathcal{L}}_t^{x_0} \phi
]
\end{align}
Therefore, optimizing the DSM objective is weakly equivalent to matching the generators.

\section{Concrete Instantiations and Examples of the Generalized EJE, Crooks and Learning Methods}
\label{app:everything_here}
In the main text, we state the generalized EJE and Crooks fluctuation theorem in an abstract Markov process form, and discuss learning strategies at the same level of generality.
Here, we instantiate these results in several representative settings.
We consider continuous Euclidean dynamics based on deterministic flows, overdamped SDEs, and momentum-augmented SDEs, as well as finite-state Continuous-Time Markov Chains (CTMC) dynamics.

\subsection{Instantiate the Generalized EJE and Crooks}

\subsubsection{Deterministic Flow in Continuous Euclidean Space}

We first instantiate Propositions~\ref{prop:jarzynski}~and~\ref{prop:crooks} for flow-based transport in Euclidean space.
Let \(X_t\in\mathbb R^d\) follow the time-dependent ODE
\begin{align}
    \frac{\mathrm d X_t}{\mathrm dt}
    =
    v_t(X_t),
    \qquad 
    X_0\sim \pi_A .
\end{align}
The corresponding generator is
\begin{align}
    \mathcal L_t^F f
    =
    v_t \cdot \nabla f,
    \qquad 
    (\mathcal L_t^F)^\dagger p
    =
    -\nabla\cdot(v_t p).
\end{align}
Therefore,
\begin{align}
    \frac{((\mathcal L_t^F)^\dagger \gamma_t)(x)}{\gamma_t(x)}
    &=
    -\frac{\nabla\cdot(v_t(x)\gamma_t(x))}{\gamma_t(x)}
\\
    &=
    -\nabla\cdot v_t(x)
    -
    v_t(x)\cdot \nabla \log \gamma_t(x)
\\
    &=
    -\nabla\cdot v_t(x)
    +
    v_t(x)\cdot \nabla U_t(x).
\end{align}
Plugging this into \Cref{eq:general_work_cont}, the generalized work becomes
\begin{align}
    \widetilde W(X_{[0,1]})
    =
    \int_0^1
    \left[
        \partial_t U_t(X_t)
        +
        v_t(X_t)\cdot \nabla U_t(X_t)
        -
        \nabla\cdot v_t(X_t)
    \right]\mathrm dt .
\end{align}
By the definition of total derivative, we have
\begin{align}
    \frac{\mathrm d}{\mathrm dt}U_t(X_t)
    =
    \partial_t U_t(X_t)
    +
    v_t(X_t)\cdot\nabla U_t(X_t),
\end{align}
and hence we have
\begin{align}\label{eq:work_flow}
    \widetilde W(X_{[0,1]})
    =
    U_1(X_1)-U_0(X_0)
    -
    \int_0^1 \nabla\cdot v_t(X_t)\,\mathrm dt .
\end{align}
Equivalently, if \(T\) denotes the deterministic map from \(X_0\) to \(X_1\), then
\begin{align}
    \widetilde W(X_{[0,1]})
    =
    U_1(T(X_0))-U_0(X_0)
    -
    \log\left|
        \det \nabla T(X_0)
    \right|.
\end{align}
This recovers the standard formulation of target FEP \citep{wirnsberger2020targeted,zhao2023bounding,deepbar}.

The corresponding backward generator from \Cref{eq:bwd_crooks_cont} can be derived similarly:
\begin{align}
    (\mathcal L_t^F)^\dagger(\gamma_t f)
    =
    -\nabla\cdot(v_t\gamma_t f),
    \qquad
    (\mathcal L_t^F)^\dagger\gamma_t
    =
    -\nabla\cdot(v_t\gamma_t).
\end{align}
Hence
\begin{align}
    \mathcal L_t^B f
    &=
    -\frac{1}{\gamma_t}\nabla\cdot(v_t\gamma_t f)
    +
    \frac{f}{\gamma_t}\nabla\cdot(v_t\gamma_t)
\\
    &=
    -\frac{1}{\gamma_t}
    \left[
        f\nabla\cdot(v_t\gamma_t)
        +
        \gamma_t v_t\cdot \nabla f
    \right]
    +
    \frac{f}{\gamma_t}\nabla\cdot(v_t\gamma_t)
\\
    &=
    -v_t\cdot \nabla f .
\end{align}
Therefore,
\begin{align}
    (\mathcal L_t^B)^\dagger q
    =
    \nabla\cdot(v_t q).
\end{align}
The backward marginal equation in Proposition~\ref{prop:crooks} is then
\begin{align}
    \partial_t q_t
    =
    -(\mathcal L_t^B)^\dagger q_t
    =
    -\nabla\cdot(v_t q_t),
    \qquad 
    q_1=\pi_B .
\end{align}
Equivalently, the backward path is the same as the forward path:
\begin{align}
    \frac{\mathrm d X_t}{\mathrm dt}
    =
    v_t(X_t),
    \qquad 
    X_1\sim \pi_B,
\end{align}
but running from \(t=1\) to \(t=0\).

However, in the deterministic-flow setting, the forward and backward transition kernels are Dirac measures. 
Therefore, the path-space RND cannot be computed through a ratio of transition densities, and we can only calculate the work with \Cref{eq:work_flow}.
Therefore, this strategy is typically computationally expensive due to the divergence term in \Cref{eq:work_flow}.

\subsubsection{Diffusion Process in Continuous Euclidean Space}

We next instantiate Propositions~\ref{prop:jarzynski}~and~\ref{prop:crooks} for diffusion-based transport, which recover the stochastic interpolant \citep{albergo2025stochastic}.
Precisely, we use the following process:
\begin{align}
    \mathrm d X_t
    =
    b_t(X_t)\,\mathrm dt
    +
    \sqrt{2}\sigma_t\,\fwd{\mathrm d W_t},
    \qquad
    X_0\sim \pi_A .
\end{align}
The corresponding generator and adjoint are
\begin{align}
    \mathcal L_t^F f
    =
    b_t\cdot \nabla f
    +
    \sigma_t^2 \Delta f,
    \qquad
    (\mathcal L_t^F)^\dagger p
    =
    -\nabla\cdot(b_t p)
    +
    \sigma_t^2 \Delta p .
\end{align}
Plugging this generator into \Cref{eq:bwd_crooks_cont}, we obtain
\begin{align}
    \mathcal L_t^B f
    &=
    \frac{1}{\gamma_t}
    \left[
        -\nabla\cdot(b_t\gamma_t f)
        +
        \sigma_t^2 \Delta(\gamma_t f)
    \right]
    -
    \frac{f}{\gamma_t}
    \left[
        -\nabla\cdot(b_t\gamma_t)
        +
        \sigma_t^2 \Delta\gamma_t
    \right]
\\
    &=
    -b_t\cdot\nabla f
    +
    \sigma_t^2 \Delta f
    +
    2\sigma_t^2 \nabla\log\gamma_t\cdot\nabla f
\\
    &=
    \left(
        -b_t
        -
        2\sigma_t^2\nabla U_t
    \right)\cdot\nabla f
    +
    \sigma_t^2\Delta f .
\end{align}
Thus, if the forward drift is \(b_t\), the backward generator has drift
\begin{align}
    -b_t - 2\sigma_t^2\nabla U_t .
\end{align}
i.e., the diffusion running backward in time is 
\begin{align}
    \mathrm d X_t
    =
   ( b_t(X_t) + 2\sigma_t^2\nabla U_t )\,\mathrm dt
    +
    \sqrt{2}\sigma_t\,\bwd{\mathrm d W_t},
    \qquad
    X_1\sim \pi_B .
\end{align}

A more symmetric parametrization is to decompose \(b_t\):
\begin{align}
    \mathrm d X_t
    =
    \left(
        v_t(X_t)
        -
        \sigma_t^2\nabla U_t(X_t)
    \right)\mathrm dt
    +
    \sqrt{2}\sigma_t\,\fwd{\mathrm d W_t},
    \qquad
    X_0\sim \pi_A .
\end{align}
By the calculation above, the corresponding backward dynamics is 
\begin{align}
    \mathrm d X_t
    =
    \left(
        v_t(X_t)
        +
        \sigma_t^2\nabla U_t(X_t)
    \right)\mathrm dt
    +
    \sqrt{2}\sigma_t\,\bwd{\mathrm d  W_t},
    \qquad
    X_1\sim \pi_B ,
\end{align}

We now derive the generalized work for this symmetric choice.
The forward generator is
\begin{align}
    \mathcal L_t^F f
    =
    \left(
        v_t-\sigma_t^2\nabla U_t
    \right)\cdot\nabla f
    +
    \sigma_t^2\Delta f .
\end{align}
and hence
\begin{align}
   &\quad  \frac{((\mathcal L_t^F)^\dagger \gamma_t)(x)}{\gamma_t(x)}
   \\ &=
    -\nabla\cdot
    \left(
        v_t(x)-\sigma_t^2\nabla U_t(x)
    \right)+
    \left(
        v_t(x)-\sigma_t^2\nabla U_t(x)
    \right)
    \cdot\nabla U_t(x)+
    \sigma_t^2
    \left(
        \|\nabla U_t(x)\|^2
        -
        \Delta U_t(x)
    \right)
\\
    &=
    -\nabla\cdot v_t(x)
    +
    v_t(x)\cdot\nabla U_t(x).
\end{align}
and hence the generalized work simplifies to
\begin{align}
    \widetilde W(X_{[0,1]})
    =
    \int_0^1
    \left[
        \partial_t U_t(X_t)
        +
       v_t(X_t)\cdot\nabla U_t(X_t)
        -
        \nabla\cdot v_t(X_t)
    \right]\mathrm dt .
\end{align}
Thus, under the symmetric diffusion parametrization, we recover the standard formula of escorted Jarzynski \citep{vaikuntanathan2008escorted}.

\subsubsection{Momentum-Augmented Transport in Continuous Space}\label{sec:underdamp_form}

We next instantiate Propositions~\ref{prop:jarzynski}~and~\ref{prop:crooks} for momentum-augmented transport.
Let \(Z_t=(X_t,Y_t)\in\mathbb R^d\times\mathbb R^d\), and define an augmented energy $H_t(x,y)$ on the joint space, as well as the unnormalized density
\begin{align}
    \Gamma_t(x,y) = e^{-H_t(x,y)}, \quad     \Pi_t(x,y) =\frac{ e^{-H_t(x,y)} }{\int e^{-H_t(x,y)} \mathrm{d}xy}
\end{align}
For two endpoints, we can set $Y_0 \perp X_0, Y_0 \sim \mathcal{N}$ and $Y_0 \perp X_1, Y_1 \sim \mathcal{N}$, so that the free energy between $H_0$ and $H_1$ remains the same as the free energy between $U_0$ and $U_1$.

Consider the forward momentum-augmented diffusion
\begin{align}
    \mathrm d X_t
    &=
    Y_t\,\mathrm dt,
    \\
    \mathrm d Y_t
    &=
    B_t(X_t,Y_t)\,\mathrm dt
    +
    \sqrt{2}\sigma_t\,\fwd{\mathrm d W_t},
    \qquad
    (X_0,Y_0)\sim \Pi_A .
\end{align}
The corresponding generator and adjoint are
\begin{align}
    \mathcal L_t^F f
    &=
    y\cdot\nabla_x f
    +
    B_t\cdot\nabla_y f
    +
    \sigma_t^2\Delta_y f,
    \\
    (\mathcal L_t^F)^\dagger p
    &=
    -\nabla_x\cdot(yp)
    -
    \nabla_y\cdot(B_t p)
    +
    \sigma_t^2\Delta_y p .
\end{align}
Plugging this generator into \Cref{eq:bwd_crooks_cont}, we obtain
\begin{align}
    \mathcal L_t^B f
    &=
    \frac{1}{\Gamma_t}
    \left[
        -\nabla_x\cdot(y\Gamma_t f)
        -
        \nabla_y\cdot(B_t\Gamma_t f)
        +
        \sigma_t^2\Delta_y(\Gamma_t f)
    \right]
    \notag\\
    &\quad
    -
    \frac{f}{\Gamma_t}
    \left[
        -\nabla_x\cdot(y\Gamma_t)
        -
        \nabla_y\cdot(B_t\Gamma_t)
        +
        \sigma_t^2\Delta_y\Gamma_t
    \right]
    \\
    &=
    -y\cdot\nabla_x f
    -
    B_t\cdot\nabla_y f
    +
    \sigma_t^2\Delta_y f
    +
    2\sigma_t^2\nabla_y\log\Gamma_t\cdot\nabla_y f
    \\
    &=
    -y\cdot\nabla_x f
    +
    \left(
        -B_t
        -
        2\sigma_t^2\nabla_y H_t
    \right)\cdot\nabla_y f
    +
    \sigma_t^2\Delta_y f .
\end{align}
where $\nabla_yH_t =- \nabla_y \log \Gamma_t$.
Therefore, the diffusion running backward in time is
\begin{align}
    \mathrm d X_t
    &=
    Y_t\, \bwd{\mathrm d t},
    \\
    \mathrm d Y_t
    &=
    \left(
        B_t(X_t,Y_t)
        +
        2\sigma_t^2\nabla_y H_t(X_t,Y_t)
    \right)\mathrm dt
    +
    \sqrt{2}\sigma_t\,\bwd{\mathrm d W_t},
    \qquad
    (X_1,Y_1)\sim \Pi_B .
\end{align}

Similar to the diffusion case, a more symmetric parametrization is to decompose the momentum drift:
\begin{align}
    B_t(x,y)
    =
    v_t(x,y)
    -
    \sigma_t^2\nabla_y H_t(x,y).
\end{align}
Then the forward dynamics becomes
\begin{align}
    \mathrm d X_t
    &=
    Y_t\,\mathrm dt,
    \\
    \mathrm d Y_t
    &=
    \left(
        v_t(X_t,Y_t)
        -
        \sigma_t^2\nabla_y H_t(X_t,Y_t)
    \right)\mathrm dt
    +
    \sqrt{2}\sigma_t\,\fwd{\mathrm d W_t},
    \qquad
    (X_0,Y_0)\sim \Pi_A .
\end{align}
By the calculation above, the corresponding backward dynamics is
\begin{align}
    \mathrm d X_t
    &=
    Y_t\,\bwd{\mathrm dt},
    \\
    \mathrm d Y_t
    &=
    \left(
        v_t(X_t,Y_t)
        +
        \sigma_t^2\nabla_y H_t(X_t,Y_t)
    \right)\mathrm dt
    +
    \sqrt{2}\sigma_t\,\bwd{\mathrm d W_t},
    \qquad
    (X_1,Y_1)\sim \Pi_B .
\end{align}

We now derive the generalized work for this symmetric choice.
The forward generator is
\begin{align}
    \mathcal L_t^F f
    &=
    y\cdot\nabla_x f
    +
    \left(
        v_t-\sigma_t^2\nabla_y H_t
    \right)\cdot\nabla_y f
    +
    \sigma_t^2\Delta_y f .\\
     (\mathcal L_t^F)^\dagger p
    &=
    -\nabla_x\cdot(yp)
    -
    \nabla_y\cdot( \left(
        v_t-\sigma_t^2\nabla_y H_t
    \right) p)
    +
    \sigma_t^2\Delta_y p .
\end{align}
Therefore, using 
 \(\nabla_x\cdot y=0\), we have
\begin{align}
    &\quad
    \frac{((\mathcal L_t^F)^\dagger\Gamma_t)(x,y)}{\Gamma_t(x,y)}
    \\
    &=
    -\nabla_x\cdot y
    +
    y\cdot\nabla_x H_t(x,y)
    -
    \nabla_y\cdot
    \left(
        v_t(x,y)-\sigma_t^2\nabla_y H_t(x,y)
    \right)
    \notag\\
    &\quad
    +
    \left(
        v_t(x,y)-\sigma_t^2\nabla_y H_t(x,y)
    \right)\cdot\nabla_y H_t(x,y)
    +
    \sigma_t^2
    \left(
        \|\nabla_y H_t(x,y)\|^2
        -
        \Delta_y H_t(x,y)
    \right)
    \\
    &=
    y\cdot\nabla_x H_t(x,y)
    -
    \nabla_y\cdot v_t(x,y)
    +
    v_t(x,y)\cdot\nabla_y H_t(x,y),
\end{align}
Hence the generalized work simplifies to
\begin{align}
    \widetilde W(Z_{[0,1]})
    =
    \int_0^1
    \left[
        \partial_t H_t(Z_t)
        +
        Y_t\cdot\nabla_x H_t(Z_t)
        +
        v_t(Z_t)\cdot\nabla_y H_t(Z_t)
        -
        \nabla_y\cdot v_t(Z_t)
    \right]\mathrm dt .
\end{align}

\paragraph{Integrator of Momentum-Augmented Transport}

Unlike SDE without momentum, we now have two parts: one is the ODE for sample position, and the other is the SDE for momentum.
We now describe how to discretize these dynamics and how to calculate the forward-backward path RND.
Following the notation of \citet{blessing2025underdamped}, the dynamics can be split into only two parts:
\begin{align}
    \begin{bmatrix}
        \mathrm d X_t\\
        \mathrm d Y_t
    \end{bmatrix}
    =
    \underbrace{
    \begin{bmatrix}
        Y_t\\
        0
    \end{bmatrix}
    \mathrm dt}_{A}
    +
    \underbrace{
    \begin{bmatrix}
        0\\
        \left(
            v_t(X_t,Y_t)
            -
            \sigma_t^2\nabla_y H_t(X_t,Y_t)
        \right)\mathrm dt
        +
        \sqrt{2}\sigma_t\,\fwd{\mathrm d W_t}
    \end{bmatrix}}_{O}.
\end{align}
\(A\) is the deterministic position update, and \(O\) is the stochastic momentum update.
For compactness, define
\begin{align}
    G_t^F(x,y)
    :=
    v_t(x,y)
    -
    \sigma_t^2\nabla_y H_t(x,y),
\end{align}
Similarly, from the backward dynamics, we can also split it into O and A steps:
\begin{align}
    \begin{bmatrix}
        \mathrm d X_t\\
        \mathrm d Y_t
    \end{bmatrix}
    =
    \underbrace{
    \begin{bmatrix}
        Y_t\\
        0
    \end{bmatrix}
    \bwd{\mathrm dt}}_{A}
    +
    \underbrace{
    \begin{bmatrix}
        0\\
        \left(
            v_t(X_t,Y_t)
            +
            \sigma_t^2\nabla_y H_t(X_t,Y_t)
        \right)\mathrm dt
        +
        \sqrt{2}\sigma_t\,\bwd{\mathrm d W_t}
    \end{bmatrix}}_{O}.
\end{align}
and we also write
\begin{align}
    G_t^B(x,y)
    :=
    v_t(x,y)
    +
    \sigma_t^2\nabla_y H_t(x,y).
\end{align}

Now, let's assume we discretize the time horizon with $N$ boundaries:
\begin{align}
    0=t_0<t_1<\cdots<t_N=1,
    \qquad
    \delta_t=t_{i+1}-t_i,
    \qquad
    \sigma_i=\sigma_{t_i}.
\end{align}
A forward transition goes from \(Z_i=(x_i,y_i)\) to \(Z_{i+1}=(x_{i+1},y_{i+1})\).
A backward transition goes from \(Z_{i+1}\) back to \(Z_i\).
Instead of directly doing the transition for the entire $Z$ space, we split the transition into two parts, one for A and one for O.
Depending on the order of the A and O steps, we have the AO or OA integrators:

\textsc{(1) AO integrator.}  The AO integrator first applies \(A\), then \(O\).

Starting from \((x_i,y_i)\), the \(A\)-step is deterministic:
\begin{align}
    x_i'
    =
    x_i+\delta_i y_i,
    \qquad
    y_i'=y_i .
\end{align}
Then the \(O\)-step updates only the momentum:
\begin{align}
    y_{i+1}
    &\sim
    \mathcal N
    \left(
        y_i'
        +
       \delta_i G_{t_i}^F(x_i',y_i'),
        2\sigma_i^2\delta_i I
    \right),
    \\
    x_{i+1}
    &=
    x_i' .
\end{align}
Therefore,  the forward AO transition kernel is
\begin{align}
    \fwd p_{i+1|i}^{\mathrm{AO}}(Z_{i+1}|Z_i)
    =
    \bm{\delta}_{x_i+\delta_i y_i}(x_{i+1})
    \,
    \mathcal N
    \left(
        y_{i+1}
        \,\middle|\,
       y_i'
        +
       \delta_i G_{t_i}^F(x_i',y_i'),
        2\sigma_i^2\delta_iI
    \right).
\end{align}
where $\bm{\delta}$ is the Delta-measure.

\textsc{(2) OA integrator.} The OA integrator first applies \(O\), then \(A\).

Starting from \((x_i,y_i)\), the \(O\)-step updates only momentum:
\begin{align}
    y_i'
    &\sim
    \mathcal N
    \left(
        y_i
        +
        \delta_iG_{t_i}^F(x_i,y_i),
        2\sigma_i^2 \delta_iI
    \right),
    \\
    x_i'
    &=
    x_i .
\end{align}
Then the \(A\)-step updates only position:
\begin{align}
    x_{i+1}
    =
    x_i'+ \delta_i y_i',
    \qquad
    y_{i+1}=y_i' .
\end{align}
Therefore, the OA transition kernel is 
\begin{align}
    \fwd p_{i+1|i}^{\mathrm{OA}}(Z_{i+1}|Z_i)
    =
    \bm{\delta}_{x_i+ \delta_i y_{i+1}}(x_{i+1})
    \,
    \mathcal N
    \left(
       y_i
        +
       \delta_iG_{t_i}^F(x_i,y_i),
        2\sigma_i^2 \delta_iI
    \right).
\end{align}

\textsc{(3) Backward Integrator of AO.}
The reverse of a forward AO step is a backward OA step.
Starting from \(Z_{i+1}=(x_{i+1},y_{i+1})\), we first apply the backward \(O\)-step at fixed \(x_{i+1}\):
\begin{align}
    y_i
    \sim
    \mathcal N
    \left(
        y_{i+1}
        -
        \delta_i G_{t_{i+1}}^B(x_{i+1},y_{i+1}),
        2\sigma_{i+1}^2\delta_i I
    \right).
\end{align}
We then invert the deterministic \(A\)-step:
\begin{align}
    x_i
    =
    x_{i+1}-\delta_i y_i .
\end{align}
Therefore, the backward transition kernel is
\begin{align}
    \bwd p_{i|i+1}^{\mathrm{OA}}(Z_i|Z_{i+1})
    &=
    \bm{\delta}_{x_{i+1}-\delta_i y_i}(x_i)
    \,
    \mathcal N
    \left(
        y_i
        \,\middle|\,
        y_{i+1}
        -
        \delta_i G_{t_{i+1}}^B(x_{i+1},y_{i+1}),
        2\sigma_{i+1}^2\delta_i I
    \right).
\end{align}

\textsc{(4) Backward Integrator of OA.}
The reverse of a forward OA step is a backward AO step.
Starting from \(Z_{i+1}=(x_{i+1},y_{i+1})\), we first invert the deterministic \(A\)-step:
\begin{align}
    x_i
    =
    x_{i+1}-\delta_i y_{i+1}.
\end{align}
We then apply the backward \(O\)-step at fixed \(x_i\):
\begin{align}
    y_i
    \sim
    \mathcal N
    \left(
        y_{i+1}
        -
        \delta_i G_{t_{i+1}}^B(x_i,y_{i+1}),
        2\sigma_{i+1}^2\delta_i I
    \right).
\end{align}
Therefore, the backward transition kernel is
\begin{align}
    \bwd p_{i|i+1}^{\mathrm{AO}}(Z_i|Z_{i+1})
    &=
    \bm{\delta}_{x_{i+1}-\delta_i y_{i+1}}(x_i)
    \,
    \mathcal N
    \left(
        y_i
        \,\middle|\,
        y_{i+1}
        -
        \delta_i G_{t_{i+1}}^B(x_i,y_{i+1}),
        2\sigma_{i+1}^2\delta_i I
    \right).
\end{align}

\textsc{Density ratio between AO and OA.} We now consider the density ratio between a forward path discretized by OA and a backward path discretized by AO.
The key point is that the deterministic \(A\)-step is volume-preserving:
\begin{align}
    (x,y) \mapsto (x+\delta_i y,y)
\end{align}
has Jacobian determinant equal to one.
Therefore, the \(A\)-step contributes no Jacobian correction to the transition-density ratio.
All nontrivial density contributions come from the Gaussian \(O\)-steps.

For one forward OA step and its backward AO reverse, we have
\begin{align}
    \log
    \frac{
        \fwd p_{i+1|i}^{\mathrm{OA}}(Z_{i+1}|Z_i)
    }{
        \bwd p_{i|i+1}^{\mathrm{AO}}(Z_i|Z_{i+1})
    }
    &=
    \log
    \mathcal N
    \left(
        y_{i+1}
        \,\middle|\,
        \mu_i^{F,\mathrm{OA}},
        2\sigma_i^2\delta_i I
    \right)
    -
    \log
    \mathcal N
    \left(
        y_i
        \,\middle|\,
        \mu_i^{B,\mathrm{AO}},
        2\sigma_{i+1}^2\delta_i I
    \right),
\end{align}
where
\begin{align}
    \mu_i^{F,\mathrm{OA}}&=
    y_i
    +
    \delta_i G_{t_i}^F(x_i,y_i)
    \\
    \mu_i^{B,\mathrm{AO}}
    &=
    y_{i+1}
    -
    \delta_i G_{t_{i+1}}^B(x_i,y_{i+1})
\end{align}

The discrete path ratio for work is then obtained by summing only the Gaussian log-density differences, together with the endpoint density ratio:
\begin{multline}\label{eq:RND_AOOA}
      \log
    \frac{\mathrm d\fwd P}{\mathrm d\bwd Q}(Z_{0:N})
   \\ =
    \log\frac{\Pi_A(Z_0)}{\Pi_B(Z_N)}
    +
    \sum_{i=0}^{N-1}
    \left[
        \log  \mathcal N
    \left(
        y_{i+1}
        \,\middle|\,
        \mu_i^{F,\mathrm{OA}},
        2\sigma_i^2\delta_i I
    \right)
        -
        \log \mathcal N
    \left(
        y_i
        \,\middle|\,
        \mu_i^{B,\mathrm{AO}},
        2\sigma_{i+1}^2\delta_i I
    \right)
    \right].
\end{multline}

Momentum, or generally, auxiliary variables have been introduced to build generative models with the goal to introduce more flexible parameterization and faster inference \citep{dockhorn2021score,du2023flexible,singhal2023diffuse,blessing2025underdamped,chen2023generative}. They have also been baked into bridge models for smooth trajectory inference \citep{chen2023deep,theodoropoulos2025momentum}.

\subsubsection{Continuous-Time Markov Chains on Finite State Spaces}

We finally instantiate Propositions~\ref{prop:jarzynski}~and~\ref{prop:crooks} for finite discrete spaces.
Let's define the transition rate matrix \(R_t^F(i,j)\ge 0\), for \(i\neq j\).
We also define the diagonal entries by
\begin{align}
    R_t^F(i,i)
    =
    -\sum_{j\neq i}R_t^F(i,j).
\end{align}
Then the corresponding generator can be written compactly as
\begin{align}
    \mathcal L_t^F f(i)
    =
    \sum_{j\in\mathcal X}
    R_t^F(i,j)f(j).
\end{align}
The adjoint with respect to counting measure is
\begin{align}
    (\mathcal L_t^F)^\dagger p(i)
    =
    \sum_{j\in\mathcal X}
    R_t^F(j,i)p(j).
\end{align}
Therefore,
\begin{align}
    \frac{((\mathcal L_t^F)^\dagger\gamma_t)(i)}{\gamma_t(i)}
    &=
    \sum_{j\in\mathcal X}
    R_t^F(j,i)
    \frac{\gamma_t(j)}{\gamma_t(i)}
\\
    &=
    \sum_{j\in\mathcal X}
    R_t^F(j,i)
    \exp\left(
        U_t(i)-U_t(j)
    \right).
\end{align}
and the term
$  \exp\left(
        U_t(i)-U_t(j)
    \right)$ are known as the concrete score or discrete score \citep{lou2023discrete}.
Plugging this into \Cref{eq:general_work_cont}, the generalized work becomes
\begin{align}
    \widetilde W(X_{[0,1]})
    =
    \int_0^1
    \left[
        \partial_t U_t(X_t)
        +
        \sum_{j\in\mathcal X}
        R_t^F(j,X_t)
        \exp\left(
            U_t(X_t)-U_t(j)
        \right)
    \right]\mathrm dt .
\end{align}

The corresponding backward generator from \Cref{eq:bwd_crooks_cont} can be derived similarly:
\begin{align}
    (\mathcal L_t^F)^\dagger(\gamma_t f)(i)
    =
    \sum_{j\in\mathcal X}
    R_t^F(j,i)\gamma_t(j)f(j),
    \qquad
    (\mathcal L_t^F)^\dagger\gamma_t(i)
    =
    \sum_{j\in\mathcal X}
    R_t^F(j,i)\gamma_t(j).
\end{align}
Hence
\begin{align}
    \mathcal L_t^B f(i)
    &=
    \frac{1}{\gamma_t(i)}
    \sum_{j\in\mathcal X}
    R_t^F(j,i)\gamma_t(j)f(j)
    -
    \frac{f(i)}{\gamma_t(i)}
    \sum_{j\in\mathcal X}
    R_t^F(j,i)\gamma_t(j)
\\
    &=
    \sum_{j\in\mathcal X}
    R_t^F(j,i)
    \frac{\gamma_t(j)}{\gamma_t(i)}
    \left[
        f(j)-f(i)
    \right]
\\
    &=
    \sum_{j\in\mathcal X}
    R_t^F(j,i)
    \exp\left(
        U_t(i)-U_t(j)
    \right)
    \left[
        f(j)-f(i)
    \right].
\end{align}
Therefore, the backward process 's rate matrix is given by  
\begin{align}
    R_t^B(i,j)
    =
    R_t^F(j,i)
    \frac{\gamma_t(j)}{\gamma_t(i)}
    =
    R_t^F(j,i)
    \exp\left(
        U_t(i)-U_t(j)
    \right), i\neq j
\end{align}
and
\begin{align}
    R_t^B(i,i)
    =
    -\sum_{j\neq i}R_t^B(i,j).
\end{align}
The backward marginal equation in Proposition~\ref{prop:crooks} is then
\begin{align}
    \partial_t q_t
    =
    -(\mathcal L_t^B)^\dagger q_t,
    \qquad
    q_1=\pi_B .
\end{align}

\subsection{Exemplify Learning Methods for the Transport}

In this section, we instantiate the learning strategy for several concrete cases. 
For completeness, we also include the diffusion-process setting. 
However, since this case has already been studied in \citepalias{he2025feat}, we do not repeat the corresponding experiments in this work.

We also emphasize that the learning design is not unique. 
For example, we can design many different interpolant paths bridging between two distributions.
Moreover, even for a fixed interpolant, the generator realizing the same marginal path is generally non-unique. 
Our goal here is therefore not to provide a systematic study of all possible choices, but rather to illustrate the general recipe through representative examples. 
Regardless of the specific design, the learning principle discussed in \Cref{sec:learn} remains unchanged and applies generally.

\subsubsection{Diffusion Process in Continuous Euclidean Space}
\label{appendix:continuous_euclidean_transport}

We first consider the diffusion process setting in continuous Euclidean space.
Following \citet{albergo2025stochastic}, we define a stochastic interpolant between endpoint samples
\((x_0,x_1)\sim \pi(x_0,x_1)\), with \(x_0\sim \pi_A\) and \(x_1\sim \pi_B\):
\begin{align}
    x_t
    =
    \alpha_t x_0+\beta_t x_1+\gamma_t\epsilon,
    \qquad
    \epsilon\sim\mathcal N(0,I),
\end{align}
where \(\alpha_0=1,\beta_0=0,\gamma_0=0\) and
\(\alpha_1=0,\beta_1=1,\gamma_1=0\).
For fixed \((x_0,x_1)\), this gives the conditional density path
\begin{align}
    p_t(x|x_0,x_1)
    =
    \mathcal N
    \left(
        x
        \,\middle|\,
        I_t(x_0,x_1),\gamma_t^2 I
    \right),
    \qquad
    I_t(x_0,x_1)=\alpha_t x_0+\beta_t x_1 .
\end{align}
The corresponding conditional velocity and conditional score realizing this path are
\begin{align}\label{eq:si}
    v_t^{x_0,x_1}(x)
    &=
    \dot\alpha_t x_0+\dot\beta_t x_1
    +
    \frac{\dot\gamma_t}{\gamma_t}
    \left(
        x-I_t(x_0,x_1)
    \right),
    \\
    s_t^{x_0,x_1}(x)
    &:=
    \nabla_x\log p_t(x|x_0,x_1)
    =
    -\frac{x-I_t(x_0,x_1)}{\gamma_t^2}.
\end{align}

Now, we will use the two strategies discussed in \Cref{sec:learn} (step 2(A) and step 2(B) ) to obtain the learning objective.
\paragraph{Strategy 1: directly learn the forward and backward generators.}

For each fixed pair \((x_0,x_1)\), the conditional forward and backward generators are given by 
\begin{align}
    \mathcal L_{F,t}^{x_0,x_1} f
    &=
    \left(
        v_t^{x_0,x_1}
        +
        \sigma_t^2 s_t^{x_0,x_1}
    \right)\cdot\nabla f
    +
    \sigma_t^2\Delta f,
    \\
    \mathcal L_{B,t}^{x_0,x_1} f
    &=
    \left(
        -v_t^{x_0,x_1}
        +
        \sigma_t^2 s_t^{x_0,x_1}
    \right)\cdot\nabla f
    +
    \sigma_t^2\Delta f .
\end{align}
By construction, these two conditional generators are time reversals with respect to
\(p_t(x|x_0,x_1)\).
Therefore, by Proposition~\ref{prop:marginal_reversal}, their marginalization will also be time reversals.

Therefore, we may parameterize the forward and backward drifts
\(b_{\psi,t}^F(x)\) and \(b_{\phi,t}^B(x)\). 
The marginal generators are then given by
\begin{align}
    \mathcal L_{F,t} f
    &=
    b_{\psi,t}^F \cdot \nabla f
    +
    \sigma_t^2\Delta f,
    \\
    \mathcal L_{B,t} f
    &=
    b_{\phi,t}^B \cdot \nabla f
    +
    \sigma_t^2\Delta f .
\end{align}
Matching them to the conditional generators, we obtain the objectives
\begin{align}\label{eq:fwd_sde_1}
    \mathcal J_F(\psi)
    &=
    \mathbb E_{t,x_0,x_1,\epsilon}
    \left[
        \left\|
        b_{\psi,t}^F(x_t)
        -
        \left(
            v_t^{x_0,x_1}(x_t)
            +
            \sigma_t^2 s_t^{x_0,x_1}(x_t)
        \right)
        \right\|^2
    \right],
    \\
    \mathcal J_B(\phi)
    &=
    \mathbb E_{t,x_0,x_1,\epsilon}
    \left[
        \left\|
        b_{\phi,t}^B(x_t)
        -
        \left(
            -v_t^{x_0,x_1}(x_t)
            +
            \sigma_t^2 s_t^{x_0,x_1}(x_t)
        \right)
        \right\|^2
    \right].
\end{align}

In practice, we can also use the symmetric diffusion parametrization, i.e.,
parameterize the forward and backward drifts through a shared vector field
\(v_{\psi,t}\) and a shared score \(s_{\theta,t}\). 
Then the forward and backward generators for the marginal path \(p_t\) are
\begin{align}
    \mathcal L_t^F f
    &=
    \left(
        v_{\psi,t}
        +
        \sigma_t^2 s_{\theta,t}
    \right)\cdot\nabla f
    +
    \sigma_t^2\Delta f,
    \\
    \mathcal L_t^B f
    &=
    \left(
        -v_{\psi,t}
        +
        \sigma_t^2 s_{\theta,t}
    \right)\cdot\nabla f
    +
    \sigma_t^2\Delta f .
\end{align}
Matching these two generators to the conditional forward and backward generators gives
\begin{align}
    \mathcal J_{\mathrm{sym}}(\psi,\theta)
    &=
    \mathbb E_{t,x_0,x_1,\epsilon}
    \Bigg[
        \left\|
        v_{\psi,t}(x_t)
        +
        \sigma_t^2 s_{\theta,t}(x_t)
        -
        \left(
            v_t^{x_0,x_1}(x_t)
            +
            \sigma_t^2 s_t^{x_0,x_1}(x_t)
        \right)
        \right\|^2
        \notag\\
    &\qquad\qquad\qquad
        +
        \left\|
        -v_{\psi,t}(x_t)
        +
        \sigma_t^2 s_{\theta,t}(x_t)
        -
        \left(
            -v_t^{x_0,x_1}(x_t)
            +
            \sigma_t^2 s_t^{x_0,x_1}(x_t)
        \right)
        \right\|^2
    \Bigg].
\end{align}
Equivalently, this can be written as directly matching the marginal vector field and score:
\begin{align}
    \mathcal J_v(\psi)
    &=
    \mathbb E_{t,x_0,x_1,\epsilon}
    \left[
        \left\|
        v_{\psi,t}(x_t)
        -
        v_t^{x_0,x_1}(x_t)
        \right\|^2
    \right],
    \\
    \mathcal J_s(\theta)
    &=
    \mathbb E_{t,x_0,x_1,\epsilon}
    \left[
        \left\|
        s_{\theta,t}(x_t)
        -
        s_t^{x_0,x_1}(x_t)
        \right\|^2
    \right],
\end{align}
which recovers the objective by \citet{albergo2025stochastic} and used in \citepalias{he2025feat}.
In practice, we may also reweight the objective with time-dependent weights to balance the learning at different times.

\paragraph{Strategy 2: learn one direction and the marginal score.}

A second strategy is to learn one direction together with the marginal statistic needed to reverse it.

Recall that the Gaussian interpolant can be viewed as first defining the uncorrupted interpolating state
\begin{align}
    I_t
    =
    \alpha_t x_0+\beta_t x_1,
\end{align}
and then applying a Gaussian corruption
\begin{align}
    x_t
    =
    I_t+\gamma_t\epsilon,
    \qquad
    \epsilon\sim\mathcal N(0,I).
\end{align}
We can view this corruption as a new stochastic process, with a time-independent corruption generator
\begin{align}
    \mathcal A =\frac12\Delta .
\end{align}
For each time $t$, we will calculate the marginal of this new corruption process with
\begin{align}
    q_{\tau}(x|I_t)
    =
    \mathcal N(x|I_t,\gamma_t^2  I).
\end{align}
Therefore, the marginal interpolant can be written as
\begin{align}
    p_t(x)
    =
    \int \mathcal{N}(x|I_t,\gamma_t^2  I)\,p_t^I(I)\,\mathrm d I .
\end{align}

Applying the general DSM objective from \Cref{sec:learn}, and using
\(\mathcal A =\frac12\Delta\), we obtain
\begin{align}
    \frac{
        \mathcal A 
        \left(
            q_\tau(x_t|I_t)/\varphi_t(x_t)
        \right)
    }{
        q_\tau(x_t|I_t)/\varphi_t(x_t)
    }
    -
    \mathcal A 
    \log
    \left(
        q_\tau(x_t|I_t)/\varphi_t(x_t)
    \right)
    =
    \frac12
    \left\|
        \nabla_x\log q_\tau(x_t|I_t)
        -
        \nabla_x\log\varphi_t(x_t)
    \right\|^2 .
\end{align}
Thus, if we parameterize
\begin{align}
    s_{\theta,t}(x)
    \approx
    \nabla_x\log \varphi_t(x),
\end{align}
Then the DSM objective reduces to the usual denoising score-matching objective
\begin{align}
    \mathcal J_s(\theta)
    =
    \mathbb E_{t,x_0,x_1,\epsilon}
    \left[
        \left\|
            s_{\theta,t}(x_t)
            -
            \nabla_x\log \mathcal{N}(x|I_t,\gamma_t^2  I)
        \right\|^2
    \right].
\end{align}
In practice, we may also reweight the objective with time-dependent weights to balance the learning at different times. 
At the optimum, this gives the marginal score
\begin{align}
    s_{\theta,t}(x)
    =
    \nabla_x\log p_t(x).
\end{align}
With the score being learned, we then only need to learn one direction, either the forward drift or the backward drift. 
Similar to the first strategy, we can either parameterize the entire forward drift \(b_t^F\) and obtain the backward drift as
\begin{align}
    b_t^B(x)
    =
    - b_t^F(x)
    +
    2\sigma_t^2 s_{\theta,t}(x),
\end{align}
or parameterize a vector field \(v_t\) and recover the forward and backward drifts by
\begin{align}
    b_t^F(x)
    &=
    v_t(x)
    +
    \sigma_t^2 s_{\theta,t}(x),
    \\
    b_t^B(x)
    &=
    -v_t(x)
    +
    \sigma_t^2 s_{\theta,t}(x),
\end{align}
where \(s_{\theta,t}\) denotes the learned marginal score.

\par
Additionally,  note that the vector field and score do not depend on $\sigma_t$, and the above discussion holds for any $\sigma_t>0$.
Therefore, after learning $v_t$ and $s_{\theta, t}$, we can tune the value of $\sigma_t$, which can yield different performance.

\subsubsection{Momentum-Augmented Transport in Continuous Space}\label{app:learn_underdamp}

We now explain the training strategy for momentum-augmented transport.
Recall from \Cref{sec:underdamp_form}:
Let \(Z_t=(X_t,Y_t)\in\mathbb R^d\times\mathbb R^d\), and define an augmented energy $H_t(x,y)$ on the joint space, as well as the unnormalized density
\begin{align}
    \Gamma_t(x,y) = e^{-H_t(x,y)}, \quad     \Pi_t(x,y) =\frac{ e^{-H_t(x,y)} }{\int e^{-H_t(x,y)} \mathrm{d}xy}
\end{align}
Also recall, that under the symmetric parametrization, the forward and backward dynamics is
\begin{align}
    \mathrm d X_t
    &=
    Y_t\,\mathrm dt,
    \\
    \mathrm d Y_t
    &=
    \left(
        v_t(X_t,Y_t)
        -
        \sigma_t^2\nabla_y H_t(X_t,Y_t)
    \right)\mathrm dt
    +
    \sqrt{2}\sigma_t\,\fwd{\mathrm d W_t},
    \qquad
    (X_0,Y_0)\sim \Pi_A .
\end{align}
and
\begin{align}
    \mathrm d X_t
    &=
    Y_t\,\bwd{\mathrm dt},
    \\
    \mathrm d Y_t
    &=
    \left(
        v_t(X_t,Y_t)
        +
        \sigma_t^2\nabla_y H_t(X_t,Y_t)
    \right)\mathrm dt
    +
    \sqrt{2}\sigma_t\,\bwd{\mathrm d W_t},
    \qquad
    (X_1,Y_1)\sim \Pi_B .
\end{align}
Similar to the overdamped  diffusion process case, we learn the score 
$    s_t^y(x,y)
=  \nabla_y \log \Gamma_t(x,y) =
    -\nabla_y H_t(x,y),$
and the vector field $v_t$ to obtain the forward and backward processes.
The difference is that the vector field is now an acceleration field in the momentum equation, and the required score is only the momentum score
\(\nabla_y\log \Gamma_t(x,y)\), rather than the full phase-space score.

\paragraph{Define the Stochastic Interpolant with Momentum}
To achieve this, we still consider constructing an interpolant between samples $x_0$ and $x_1$.
Different from the standard stochastic interpolant, we augment the space with two endpoint momenta:
\begin{align}
    x_0\sim \pi_A,
    \qquad
    x_1\sim \pi_B,
    \qquad
    y_0\sim \mathcal N(0,I),
    \qquad
    y_1\sim \mathcal N(0,I),
\end{align}
with \(y_0,y_1\) independent of the positions.
This gives the endpoint laws as
\begin{align}\label{eq:msi_law}
    \Gamma_A(x,y)=\pi_A(x)\mathcal N(y|0,I),
    \qquad
    \Gamma_B(x,y)=\pi_B(x)\mathcal N(y|0,I).
\end{align}
Different from the standard stochastic interpolant, we need an interpolant for both $x_t$ and $y_t$, so that $\dot x_t = y_t$.
This constrains our design space to a large extent.

To this end, we define a cubic Hermite interpolant.
Let
\begin{align}
    h_{00}(t)=2t^3-3t^2+1,\quad 
    h_{10}(t)=t^3-2t^2+t,\quad 
    h_{01}(t)=-2t^3+3t^2,\quad 
    h_{11}(t)=t^3-t^2 .
\end{align}
We then define
\begin{align}
    x_t
    =
    h_{00}(t)x_0
    +
    h_{10}(t)y_0
    +
    h_{01}(t)x_1
    +
    h_{11}(t)y_1 , \quad  y_t=\dot x_t.
\end{align}
Additionally, to learn the vector field for $y$, we also need to calculate the time derivative for $y$, leading to the acceleration:
\begin{align}
    a_t=\dot y_t=\ddot x_t .
\end{align}
More explicitly, we have
\begin{align}
    y_t
    &=
    (6t^2-6t)x_0
    +
    (3t^2-4t+1)y_0
    +
    (-6t^2+6t)x_1
    +
    (3t^2-2t)y_1,
    \\
    a_t
    &=
    (12t-6)x_0
    +
    (6t-4)y_0
    +
    (-12t+6)x_1
    +
    (6t-2)y_1 .
\end{align}
One can verify that:
\begin{align}
    x_{t=0}=x_0,\qquad y_{t=0}=y_0,
    \qquad
    x_{t=1}=x_1,\qquad y_{t=1}=y_1,
\end{align}
which ensures that, at both end, the momentum and position are independent, following the law defined in  \Cref{eq:msi_law}.

\paragraph{Learning Momentum-Augmented Transport with Stochastic Interpolant}
We now derive how to learn the two quantities needed for the momentum-augmented dynamics:
the acceleration field \(v_t\) and the momentum score \(s_t^y\).
For simplicity, we use strategy 1, where we first write down the generator condition on the samples and marginalize them.

More precisely, for a fixed endpoint pair \((x_0,x_1)\), let 
\(\rho_t(z|x_0,x_1)\) denote the conditional law of \(Z_t=(X_t,Y_t)\), where the randomness comes from the endpoint momenta \(y_0,y_1\).
Then, we write down the forward and backward dynamics (and their generators) that realize this path law and are time reversals of each other.

We define the conditional acceleration field and the conditional momentum score as follows:
\begin{align}
    v_t^{x_0,x_1}(z)
   & = a(x_0, x_1, z),
    \\
    s_t^{y,x_0,x_1}(z)
    &=
    \nabla_y \log \rho_t(z|x_0,x_1).
\end{align}
where $a(x_0, x_1, z)$ denote the function to solve the acceleration from $x_0, x_1$ and $z=(x_t, y_t)$.
This is possible: for fixed \((x_0,x_1)\), the map from \((y_0,y_1)\) to \((x_t,y_t)\) is linear and invertible.
After solving \((y_0,y_1)\), we can obtain $a =  
    \ddot h_{00}(t)x_0
    +
    \ddot h_{10}(t)y_0
    +
    \ddot h_{01}(t)x_1
    +
    \ddot h_{11}(t)y_1 $.
This is similar to the previous stochastic case in \Cref{eq:si}.

For simplicity, we will later write \(y_0,y_1\) solved from \(z=(x_t,y_t)\) as \(y_0(z),y_1(z)\).
With this notation, we have
\begin{align}
      v_t^{x_0,x_1}(z)
   & = \ddot h_{00}(t)x_0
    +
    \ddot h_{10}(t)y_0(z)
    +
    \ddot h_{01}(t)x_1
    +
    \ddot h_{11}(t)y_1(z) 
    \\
    s_t^{y,x_0,x_1}(z)
    &=
    \nabla_y \log \rho_t(z|x_0,x_1)\\
   &=  \nabla_y \log 
   \mathcal{N}\left(
y_0(z)
    \right) + \nabla_y \log 
   \mathcal{N}\left(
y_1(z)
    \right)\\
    &= \nabla_y \left( -\frac12\|y_0(z)\|^2
    -
    \frac12\|y_1(z)\|^2\right)\\
    &= 
    -
        \nabla_y y_0(z)
\cdot y_0(z)
    -
    \nabla_y y_1(z)
\cdot y_1(z).
\end{align}

For \(z=(x,y)\), the endpoint momenta \(y_0,y_1\) can be solved from
\begin{align}
    \begin{bmatrix}
        x-h_{00}(t)x_0-h_{01}(t)x_1\\
        y-\dot h_{00}(t)x_0-\dot h_{01}(t)x_1
    \end{bmatrix}
    =
    \begin{bmatrix}
        h_{10}(t) & h_{11}(t)\\
        \dot h_{10}(t) & \dot h_{11}(t)
    \end{bmatrix}
    \begin{bmatrix}
        y_0\\
        y_1
    \end{bmatrix}.
\end{align}
Since
\begin{align}
    h_{10}(t)\dot h_{11}(t)-h_{11}(t)\dot h_{10}(t)
    =
    -t^2(1-t)^2,
\end{align}
we obtain
\begin{align}
    y_0(z)
    &=
    \frac{
        \dot h_{11}(t)
        \left[
            x-h_{00}(t)x_0-h_{01}(t)x_1
        \right]
        -
        h_{11}(t)
        \left[
            y-\dot h_{00}(t)x_0-\dot h_{01}(t)x_1
        \right]
    }{
        -t^2(1-t)^2
    },
    \\
    y_1(z)
    &=
    \frac{
        -\dot h_{10}(t)
        \left[
            x-h_{00}(t)x_0-h_{01}(t)x_1
        \right]
        +
        h_{10}(t)
        \left[
            y-\dot h_{00}(t)x_0-\dot h_{01}(t)x_1
        \right]
    }{
        -t^2(1-t)^2
    }.
\end{align}
and hence
\begin{align}
    \nabla_y y_0(z) = \frac{h_{11}(t)}{t^2(1-t)^2}I = \frac{1}{t-1}
\end{align}
\begin{align}
    \nabla_y y_1(z) = -\frac{h_{10}(t)}{t^2(1-t)^2}I = -\frac{1}{t}
\end{align}
Hence
\begin{align}
     s_t^{y,x_0,x_1}(z)
   = 
    -
        \nabla_y y_0(z)
\cdot y_0(z)
    -
    \nabla_y y_1(z)
\cdot y_1(z) = \frac{1}{1-t} y_0(z) + \frac{1}{t} y_1(z) 
\end{align}

Similarly, since \(y_0,y_1\sim\mathcal N(0,I)\), the conditional density
\(\rho_t(z|x_0,x_1)\) is obtained by change of variables from the Gaussian density of
\((y_0,y_1)\). Differentiating this density with respect to \(y\), we obtain the conditional
momentum score
\begin{align}\label{eq:cond_score_hermite}
    s_t^{y,x_0,x_1}(z)
    &:=
    \nabla_y\log \rho_t(z|x_0,x_1)
    \\
    &=
    \frac{y_0(z)}{1-t}
    +
    \frac{y_1(z)}{t}.
\end{align}

The conditional forward dynamics is defined by
\begin{align}
    \mathrm d X_t
    &=
    Y_t\,\mathrm dt,
    \\
    \mathrm d Y_t
    &=
    \left(
        v_t^{x_0,x_1}(X_t,Y_t)
        +
        \sigma_t^2 s_t^{y,x_0,x_1}(X_t,Y_t)
    \right)\mathrm dt
    +
    \sqrt{2}\sigma_t\,\fwd{\mathrm d W_t}.
\end{align}
The corresponding conditional generator is
\begin{align}
    \mathcal L_{F,t}^{x_0,x_1} f
    =
    y\cdot\nabla_x f
    +
    \left(
        v_t^{x_0,x_1}
        +
        \sigma_t^2 s_t^{y,x_0,x_1}
    \right)\cdot\nabla_y f
    +
    \sigma_t^2\Delta_y f .
\end{align}
Also, the conditional backward dynamics, which is the time-reversal of the forward process, is
\begin{align}
    \mathrm d X_t
    &=
    Y_t\,\bwd{\mathrm dt},
    \\
    \mathrm d Y_t
    &=
    \left(
        v_t^{x_0,x_1}(X_t,Y_t)
        -
        \sigma_t^2 s_t^{y,x_0,x_1}(X_t,Y_t)
    \right)\mathrm dt
    +
    \sqrt{2}\sigma_t\,\bwd{\mathrm d W_t}.
\end{align}
with generator
\begin{align}
    \mathcal L_{B,t}^{x_0,x_1} f
    =
  -  y\cdot\nabla_x f
    +
    \left(
       - v_t^{x_0,x_1}
        +
        \sigma_t^2 s_t^{y,x_0,x_1}
    \right)\cdot\nabla_y f
    +
    \sigma_t^2\Delta_y f .
\end{align}
Similar to the stochastic interpolant case, we learn to marginalize these generators.
We parameterize the acceleration field \(v_{\psi,t}\) and the momentum score \(s_{\theta,t}^y\).
Then the learned generators are
\begin{align}
    \mathcal L_{\psi,\theta,t}^{F} f
    &=
    y\cdot\nabla_x f
    +
    \left(
        v_{\psi,t}
        +
        \sigma_t^2 s_{\theta,t}^y
    \right)\cdot\nabla_y f
    +
    \sigma_t^2\Delta_y f,
    \\
    \mathcal L_{\psi,\theta,t}^{B} f
    &=
    -y\cdot\nabla_x f
    +
    \left(
        -v_{\psi,t}
        +
        \sigma_t^2 s_{\theta,t}^y
    \right)\cdot\nabla_y f
    +
    \sigma_t^2\Delta_y f .
\end{align}
Matching these to the conditional forward and backward generators gives
\begin{align}
    &\mathcal J_{\mathrm{sym}}(\psi,\theta)
   \\ =
  &  \mathbb E_{t,x_0,x_1,y_0,y_1}
    \Bigg[
        \left\|
            v_{\psi,t}(x_t,y_t)
            +
            \sigma_t^2 s_{\theta,t}^y(x_t,y_t)
            -
            \left(
                 v_t^{x_0,x_1} (x_t,y_t)
        +
        \sigma_t^2 s_t^{y,x_0,x_1} (x_t,y_t)
            \right)
        \right\|^2
        \notag\\
    &\qquad 
        +
        \left\|
            -v_{\psi,t}(x_t,y_t)
            +
            \sigma_t^2 s_{\theta,t}^y(x_t,y_t)
            -
             \left(
                - v_t^{x_0,x_1} (x_t,y_t)
        +
        \sigma_t^2 s_t^{y,x_0,x_1} (x_t,y_t)
            \right)
        \right\|^2
    \Bigg].
\end{align}
Equivalently:
\begin{align}\label{loss_underdamp1}
    \mathcal J_v(\psi)
    &=
    \mathbb E_{t,x_0,x_1,y_0,y_1}
    \left[
        \left\|
            v_{\psi,t}(x_t,y_t)- v_t^{x_0,x_1} (x_t,y_t)
        \right\|^2
    \right],
    \\
    \mathcal J_s(\theta)
    &=
    \mathbb E_{t,x_0,x_1,y_0,y_1}
    \left[
        \left\|
            s_{\theta,t}^y(x_t,y_t)- s_t^{y,x_0,x_1} (x_t,y_t)
        \right\|^2
    \right].
\end{align}
Since the conditional score target blows up near \(t=0\) and \(t=1\) (see \Cref{eq:cond_score_hermite}), we scale the target and the score network both by $t(1-t)$:
\begin{align}\label{loss_underdamp2}
    \mathcal J_s(\theta)
    =
    \mathbb E_{t,x_0,x_1,y_0,y_1}
    \left[
        \left\|
            t(1-t)s_{\theta,t}^y(x_t,y_t)
            -
            \left(
                t y_0+(1-t)y_1
            \right)
        \right\|^2
    \right].
\end{align}

Finally, for free energy estimation, since $N( 0,I)$ are normalized, we directly estimate the free energy between the augmented distributions:
\begin{align}
    \Pi_A(x,y)=\pi_A(x)\mathcal N(y|0,I),
    \qquad
    \Pi_B(x,y)=\pi_B(x)\mathcal N(y|0,I).
\end{align}
The work is calculated by the forward-backward RND between OA and AO discretized path, as we discussed in \Cref{eq:RND_AOOA}.

Also, similar to the diffusion case without momentum, the vector field and score do not depend on $\sigma_t$, and the above discussion holds for any $\sigma_t>0$.
Therefore, we can also tune different $\sigma_t$ to improve performance further.
This is studied in \Cref{fig:underdamped_vis_40D}.

\subsubsection{Continuous-Time Markov Chains on Finite State Spaces} \label{sec:appendix_ctmc_feat_theory}

We now discuss how to learn transport in finite discrete spaces.
As in the previous examples, we first define a conditional interpolant between two endpoint samples
\((x_0,x_1)\sim \pi(x_0,x_1)\), and then construct conditional generators whose marginalization gives a generator for the marginal path.

\paragraph{Interpolant Design}
Different from the continuous cases, we design the interpolant by a mixture, as defined by \Cref{eq:int2}.
We restate here for easier reference:
\begin{align}
    &x_t \sim  \alpha_t \delta_{x_0}(x_t) + \beta_t \delta_{x_1}(x_t) ,
\end{align}
Another opinion is to add ``noise", similar to the continuous case.
The ``noise" can be a mask token, or a uniform distribution over the entire vocabulary.
Here, we consider mask token for simplicity:
\begin{align}
  x_t \sim  \alpha_t \delta_{x_0}(x_t) + \beta_t \delta_{x_1}(x_t) + \gamma_t \delta_{M}(x_t) 
\end{align}
Interestingly, in CTMC, both choices are valid and allow us to calculate the forward backward path RND.
This is different from the continuous case, where we need to add noise to the path, otherwise, we only learn a flow, which involves Delta measure over the path and cannot be handled properly with a transition kernel.
In CTMC, even the interpolant without adding mask still involves the Categorical distribution as the transition kernel, and hence the path always has stochasticity.

We now discuss three design choices for the dynamics realizing the interpolants above.
The first design is based on the path without adding mask, while the second and third designs involve mask.

For easier reference, we restate the rate matrix and generator definitation:
throughout this section, we use the convention
\begin{align}
    \mathcal L_t f(x)
    =
    \sum_{y\in\mathcal X}
    R_t(x,y)f(y),
\end{align}
where \(R_t(x,y)\) denotes the rate of jumping from \(x\) to \(y\), and
\begin{align}
    R_t(x,x)
    =
    -\sum_{y\neq x}R_t(x,y).
\end{align}

\paragraph{``Latent" Generator and Marginalization}
Before discussing the individual cases, we first state the following proposition that allows us to marginalize a conditional CTMC defined on a ``latent" state space.

In particular, the mixture interpolant induces a ``latent" CTMC on a finite state space \(\mathcal S\). 
This ``latent" CTMC is not defined directly on the original data space $\mathcal X$. 
Rather, each state \(s\in\mathcal S\) specifies the source (e.g., $x_0$ or $x_1$ or $M$) of the current observation \(x_t\): 
for example, whether \(x_t\) is copied from \(x_0\), copied from \(x_1\), or replaced by a mask state. The observed process is therefore a projection of this latent CTMC onto the original space. 
This latent generator allows us to simplify the conditional generator.
The following proposition shows that, by averaging the conditional generator of the latent CTMC with respect to the posterior distribution over latent states, one obtains a marginalized generator acting directly on the observed space.

\begin{tcolorbox}[
    colback=gray!8,
    colframe=gray!40,
    boxrule=0.5pt,
    arc=4pt,
    left=1pt,
    right=1pt,
    top=2pt,
    bottom=2pt,
    breakable
]
\begin{proposition}[Marginalization of latent conditional CTMC]\label{prop:latent_ctmc_marginal}
We define a latent CTMC $S_t\in \mathcal S$ with generator
\begin{align}
    \mathcal A_t  g(s)
    =
    \sum_{s'\in\mathcal S}
    A_t (s,s')g(s'),
\end{align}
where
\begin{align}
    A_t (s,s)
    =
    -\sum_{s'\neq s}A_t (s,s').
\end{align}
Let the observed interpolant be obtained by a deterministic decoding map
\begin{align}
    X_t=\phi(x_0,x_1,S_t).
\end{align}

Define the marginalized generator as
\begin{align}\label{eq:ctmc_marginal_generator}
    \mathcal L_t f(x)
    =
    \mathbb E_{s_t, x_0, x_1|X_t=x}
    \left[
        \mathcal A_t 
        \left(
            f\circ \phi(x_0,x_1,\cdot)
        \right)(s_t)
    \right].
\end{align}
and we have
\begin{align}
    \partial_t p_t
    =
    \mathcal L_t^\dagger p_t, \quad p_t = \sum_{x_0, x_1} \pi(x_0, x_1)p_t(x_t|x_0, x_1) 
\end{align}
Equivalently, we have the rate matrix form:
\begin{align}\label{eq:ctmc_marginal_rate}
    R_t(x,y)
    =
    \mathbb  E_{s_t, x_0, x_1|X_t=x}
    \left[
        \sum_{s'\in\mathcal S}
        \bm 1\{\phi(x_0,x_1,s')=y\}
        A_t (s_t,s')
    \right].
\end{align}
and
\begin{align}
    \partial_t p_t(y)
    =
   \sum_{x\in \mathcal{X}}  R_t(x,y) p_t(x).
\end{align}
\end{proposition}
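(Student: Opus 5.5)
The plan is to reduce the claim to a conditional statement for each fixed endpoint pair and then average over the coupling, in the same way as the proof of Proposition~\ref{prop:marginal_reversal}.

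\textbf{Step 1: conditional forward equation.} Fix $(x_0,x_1)$ and write $\rho_t(s)$ for the law of the latent chain $S_t$. The latent Kolmogorov equation gives $\partial_t\rho_t(s')=\sum_s \rho_t(s)A_t(s,s')$. Because $X_t=\phi(x_0,x_1,S_t)$ deterministically,
\[
p_t(x\mid x_0,x_1)=\sum_s \rho_t(s)\,\bm 1\{\phi(x_0,x_1,s)=x\}.
\]
For any test function $f$, write $f_\phi:=f\circ\phi(x_0,x_1,\cdot)$. Then
\[
\partial_t\sum_x f(x)p_t(x\mid x_0,x_1)=\partial_t \mathbb E[f_\phi(S_t)]=\mathbb E[(\mathcal A_t f_\phi)(S_t)].
\]
This is the weak form of the conditional evolution.

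\textbf{Step 2: average over the coupling.} Multiply by $\pi(x_0,x_1)$ and sum. This gives
\[
\partial_t\sum_x f(x)p_t(x)=\mathbb E_{x_0,x_1,S_t}\bigl[(\mathcal A_t f_\phi)(S_t)\bigr],
\]
where the expectation is over the joint law of $(x_0,x_1,S_t)$, and hence of $X_t$. Now apply the tower property, conditioning on $X_t=x$. The inner conditional expectation is exactly $\mathcal L_t f(x)$ as defined in \Cref{eq:ctmc_marginal_generator}. Therefore
\[
\partial_t\langle f,p_t\rangle=\langle \mathcal L_t f,p_t\rangle=\langle f,\mathcal L_t^\dagger p_t\rangle.
\]
Since this holds for all $f$ on the finite space, $\partial_t p_t=\mathcal L_t^\dagger p_t$.

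\textbf{Step 3: rate-matrix form.} Expand the latent generator:
\[
\mathcal A_t f_\phi(s)=\sum_{s'}A_t(s,s')\,f(\phi(x_0,x_1,s'))=\sum_y f(y)\sum_{s'}\bm 1\{\phi(x_0,x_1,s')=y\}A_t(s,s').
\]
Pulling $\sum_y f(y)$ outside the conditional expectation identifies $R_t(x,y)$ as in \Cref{eq:ctmc_marginal_rate}. We must also check that $R_t$ is a valid rate matrix:
\begin{itemize}
\item Rows sum to zero, because $\sum_y \bm 1\{\phi=y\}=1$ and the rows of $A_t$ sum to zero.
\item Off-diagonal entries are nonnegative. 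For $y\neq x$, the only $s'$ that contribute satisfy $\phi(s')=y\neq x=\phi(s_t)$, so $s'\neq s_t$ and $A_t(s_t,s')\ge0$.
\end{itemize}
Finally, $\partial_t p_t(y)=\sum_x R_t(x,y)p_t(x)$ is the coordinate form of $\mathcal L_t^\dagger p_t$ with respect to counting measure.

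\textbf{Main obstacle.} The delicate step is the tower-property identification in Step 2. It needs the posterior of $(S_t,x_0,x_1)$ given $X_t=x$ to be well defined, which requires restricting to states with $p_t(x)>0$. On states with $p_t(x)=0$, $\mathcal L_t$ may be defined arbitrarily, and this does not affect the weak equation because those states carry zero weight. The remaining regularity needed to interchange $\partial_t$ with the finite sums is automatic, since all sums are finite.
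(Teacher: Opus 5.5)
Your proof is correct and is essentially the paper's argument in weak form. The paper computes $\sum_x R_t(x,y)p_t(x)$ directly: it removes the conditioning on $X_t=x$ by summing against $p_t(x)$, factors using $(x_0,x_1)\perp S_t$, and applies the latent Kolmogorov equation, which is your Steps 1--2 specialized to $f=\mathbf 1\{\cdot=y\}$. Conditioning on $(x_0,x_1)$ instead of invoking independence, and checking that $R_t$ has zero row sums and nonnegative off-diagonal entries, are small additions the paper does not make.
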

\end{tcolorbox}
\begin{proof}
\begin{align}
      \sum_{x\in \mathcal{X}}  R_t(x,y) p_t(x) &=   \sum_{x\in \mathcal{X}} \mathbb  E_{s_t, x_0, x_1|X_t=x}
    \left[
        \sum_{s'\in\mathcal S}
        \bm 1\{\phi(x_0,x_1,s')=y\}
        A_t (s_t,s') 
    \right]p_t(x)\\
    &= \mathbb  E_{s_t, x_0, x_1, X_t=x}
    \left[
        \sum_{s'\in\mathcal S}
        \bm 1\{\phi(x_0,x_1,s')=y\}
        A_t (s_t,s') 
    \right] \\
    &= \mathbb  E_{s_t, x_0, x_1 }
    \left[
        \sum_{s'\in\mathcal S}
        \bm 1\{\phi(x_0,x_1,s')=y\}
        A_t (s_t,s') 
    \right] \\
    &=  \sum_{s'\in\mathcal S}\mathbb  E_{s_t, x_0, x_1 }
    \left[
        \bm 1\{\phi(x_0,x_1,s')=y\}
        A_t (s_t,s') 
    \right] \\
     &=  \sum_{s' }\sum_{s_t} \mathbb  E_{x_0, x_1 |s_t }
    \left[p_t(s_t) 
        \bm 1\{\phi(x_0,x_1,s')=y\}
        A_t (s_t,s') 
    \right] \\
     &\overset{\text{$x_0, x_1\perp s_t $}}{=}  \sum_{s' }\sum_{s_t} \mathbb  E_{x_0, x_1  }
    \left[p_t(s_t) 
        \bm 1\{\phi(x_0,x_1,s')=y\}
        A_t (s_t,s') 
    \right] \\
    &= \sum_{s' } \mathbb  E_{x_0, x_1  }
    \left[\sum_{s_t} p_t(s_t) 
        \bm 1\{\phi(x_0,x_1,s')=y\}
        A_t (s_t,s') 
    \right] \\
     &= \sum_{s' } \mathbb  E_{x_0, x_1  }
    \left[\sum_{s_t} p_t(s_t)  A_t (s_t,s') 
        \bm 1\{\phi(x_0,x_1,s')=y\}
    \right] \\
    &= \sum_{s' } \mathbb  E_{x_0, x_1  }
    \left[\partial_t p_t(s')
        \bm 1\{\phi(x_0,x_1,s')=y\}
    \right]\\
    &= \partial_t  \sum_{s' } \mathbb  E_{x_0, x_1  }
    \left[p_t(s')
        \bm 1\{\phi(x_0,x_1,s')=y\}
    \right] \\
    &= \partial_t p_t(y)
\end{align}
\end{proof}

We also have a similar result as Proposition~\ref{prop:marginal_reversal} for the latent CTMC case.

\begin{tcolorbox}[
    colback=gray!8,
    colframe=gray!40,
    boxrule=0.5pt,
    arc=4pt,
    left=1pt,
    right=1pt,
    top=2pt,
    bottom=2pt,
    breakable
]
\begin{proposition}[Marginalization in latent space maintains time-reversal]
\label{prop:latent_ctmc_marginal_reversal}
Let \(S_t\in\mathcal S\) be a latent CTMC with marginal law
\(p_t(s)\). 
Suppose \(A_t \) and \(B_t \) are time reversals with respect to
\(p_t(s )\), i.e. for \(s\neq s'\),
\begin{align}
    B_t (s,s')
    =
    A_t (s',s)
    \frac{
        p_t(s' )
    }{
        p_t(s )
    } .
\end{align}
Let the observed interpolant be obtained by a deterministic decoding map
\begin{align}
    X_t=\phi(x_0,x_1,S_t),
\end{align}
and denote the marginal law of \(X_t\) by \(p_t(x)\).

Define the marginalized forward and backward rate matrices by
\begin{align}
    R_t^F(x,y)
    &=
    \mathbb E_{s_t, x_0, x_1|X_t=x}
    \left[
        \sum_{s'\in\mathcal S}
        \bm 1\{\phi(x_0,x_1,s')=y\}
        A_t (s_t,s')
    \right],
    \\
    R_t^B(x,y)
    &=
    \mathbb E_{s_t, x_0, x_1|X_t=x}
    \left[
        \sum_{s'\in\mathcal S}
        \bm 1\{\phi(x_0,x_1,s')=y\}
        B_t (s_t,s')
    \right].
\end{align}
Then \(R_t^F\) and \(R_t^B\) are time reversals with respect to \(p_t(x)\). That is, for \(x\neq y\),
\begin{align}
    R_t^B(x,y)
    =
    R_t^F(y,x)
    \frac{p_t(y)}{p_t(x)}.
\end{align}
\end{proposition}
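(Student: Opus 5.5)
The plan is a direct computation: write both sides of the claimed identity as explicit sums over latent states and endpoints, then match them term by term using the latent detailed-balance relation. As in the proof of Proposition~\ref{prop:latent_ctmc_marginal}, the latent process $S_t$ is taken independent of the endpoints $(x_0,x_1)\sim\pi$. The joint law of $(s_t,x_0,x_1,X_t)$ is therefore $p_t(s)\pi(x_0,x_1)\bm 1\{\phi(x_0,x_1,s)=x\}$, and the posterior used in the marginal rates is
\begin{align}
    p(s,x_0,x_1\mid X_t=x)=\frac{p_t(s)\,\pi(x_0,x_1)\,\bm 1\{\phi(x_0,x_1,s)=x\}}{p_t(x)} .
\end{align}

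Fix $x\neq y$ and multiply the backward rate by $p_t(x)$. The denominator cancels, and we get
\begin{align}
    p_t(x)R_t^B(x,y)=\sum_{x_0,x_1}\pi(x_0,x_1)\sum_{s,s'}\bm 1\{\phi(x_0,x_1,s)=x\}\,\bm 1\{\phi(x_0,x_1,s')=y\}\,p_t(s)B_t(s,s').
\end{align}
Since $x\neq y$, any nonzero term has $\phi(x_0,x_1,s)\neq\phi(x_0,x_1,s')$, hence $s\neq s'$. So the diagonal entries $B_t(s,s)$, where the latent reversal relation is not asserted, never enter. The off-diagonal hypothesis, in the form $p_t(s)B_t(s,s')=p_t(s')A_t(s',s)$, can then be substituted in every surviving term.

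After the substitution, I relabel $s\leftrightarrow s'$. The expression becomes
\begin{align}
    \sum_{x_0,x_1}\pi(x_0,x_1)\sum_{s,s'}\bm 1\{\phi(x_0,x_1,s)=y\}\,\bm 1\{\phi(x_0,x_1,s')=x\}\,p_t(s)A_t(s,s'),
\end{align}
which by the same posterior formula is exactly $p_t(y)R_t^F(y,x)$. Dividing by $p_t(x)>0$ gives the claim. A remark can also note that the diagonal entries match automatically, since both marginal rate matrices have zero row sums, so the identity holds as an equality of generators.

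The main obstacle is justifying the posterior formula rather than the algebra itself. I need two facts: that $S_t$ is independent of $(x_0,x_1)$, which is the same assumption used in Proposition~\ref{prop:latent_ctmc_marginal}, and that $p_t$ in the latent relation is the unconditional latent marginal. I would state both explicitly. A second point to handle is the possibility that several latent states decode to the same $x$ or $y$. The double sum over $(s,s')$ absorbs this without extra work, but it is worth flagging.
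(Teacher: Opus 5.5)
Your proposal is correct and takes essentially the same route as the paper: multiply by the observed marginal, expand the posterior using independence of $S_t$ from $(x_0,x_1)$, relabel $s\leftrightarrow s'$, and apply latent detailed balance $p_t(s)B_t(s,s')=p_t(s')A_t(s',s)$. Your observation that $x\neq y$ forces $s\neq s'$ is a small but genuine tightening. The paper applies the latent relation without restricting to off-diagonal pairs, and on the diagonal that relation need not hold.
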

\end{tcolorbox}
\begin{proof}
\begin{align}
    R_t^F(y,x)p_t(y)
    &=
    \mathbb E_{s_t,x_0,x_1|X_t=y}
    \left[
        \sum_{s'\in\mathcal S}
        \bm 1\{\phi(x_0,x_1,s')=x\}
        A_t (s_t,s')
    \right]p_t(y)
    \\
    &=
    \sum_{s'\in\mathcal S}
    \mathbb E_{x_0,x_1}
    \left[
        \sum_{s_t\in\mathcal S}
        p_t(s_t)
        \bm 1\{\phi(x_0,x_1,s_t)=y\}
        \bm 1\{\phi(x_0,x_1,s')=x\}
        A_t (s_t,s')
    \right].
\end{align}

For backward kernel:
\begin{align}
    R_t^B(x,y)p_t(x)
    &=
    \mathbb E_{s_t,x_0,x_1|X_t=x}
    \left[
        \sum_{s'\in\mathcal S}
        \bm 1\{\phi(x_0,x_1,s')=y\}
        B_t (s_t,s')
    \right]p_t(x)
    \\
    &=
    \sum_{s'\in\mathcal S}
    \mathbb E_{x_0,x_1}
    \left[
        \sum_{s_t\in\mathcal S}
        p_t(s_t)
        \bm 1\{\phi(x_0,x_1,s_t)=x\}
        \bm 1\{\phi(x_0,x_1,s')=y\}
        B_t (s_t,s')
    \right]\\
     &=
    \sum_{s_t\in\mathcal S}
    \mathbb E_{x_0,x_1}
    \left[
        \sum_{s'\in\mathcal S}
        p_t(s')
        \bm 1\{\phi(x_0,x_1,s')=x\}
        \bm 1\{\phi(x_0,x_1,s_t)=y\}
        B_t (s,s_t)
    \right].
\end{align}
The last equality is just relabeling $s$ and $s'$.

Since \(A_t \) and \(B_t \) are time reversals :
\begin{align}
    p_t(s)A_t (s,s')
    =
    p_t(s')B_t (s',s).
\end{align}
\begin{align}
     R_t^B(x,y)p_t(x) =  R_t^F(y,x)p_t(y)
\end{align}
\end{proof}

With these two propositions, we can simply design the generator in this latent space and then marginalize it to obtain the generator in the original data space.

We now consider three cases, from simple to slightly more complicated designs.

\paragraph{Case 1: no mask state and one-way jumps from \(x_0\) to \(x_1\).}\label{par:ctmc_case1}

We first consider the simplest interpolant, with no mask state.
\begin{align}
     &x_t \sim  \alpha_t \delta_{x_0}(x_t) + \beta_t \delta_{x_1}(x_t). 
\end{align}
We choose \(\alpha_0=1,\beta_0=0\) and \(\alpha_1=0,\beta_1=1\), with \(\alpha_t\) decreasing and \(\beta_t\) increasing.

Let the latent state space be
\begin{align}
    \mathcal S=\{0,1\},
\end{align}
with 
\begin{align}
    \mathbb P(S_t=0)=\alpha_t,
    \qquad
    \mathbb P(S_t=1)=\beta_t,
    \qquad
    \alpha_t+\beta_t=1.
\end{align}
and the decoding map is 
\begin{align}
    \phi(x_0,x_1,0)=x_0,
    \qquad
    \phi(x_0,x_1,1)=x_1 .
\end{align}

A conditional forward latent generator in matrix form is as follows.
Under our convention, rows are source states and columns are target states,
\begin{align} \label{eq:appendix_case1_conditional_At}
    A_t
    =
    \begin{bmatrix}
        \dot \alpha_t/\alpha_t &  -\dot \alpha_t/\alpha_t\\
        0 & 0
    \end{bmatrix}.
\end{align}
We can verify the forward Kolmogorov equation in latent space $\mathcal{S}$:
\begin{align}
  \partial_t   \begin{bmatrix}
     \alpha_t\\\beta_t
   \end{bmatrix}= \begin{bmatrix}
        \dot \alpha_t/\alpha_t &  -\dot \alpha_t/\alpha_t\\
        0 & 0
    \end{bmatrix}^\top  \begin{bmatrix}
       \alpha_t\\\beta_t
   \end{bmatrix}
\end{align}
Similarly, we have the backward generator:
\begin{align} \label{eq:appendix_case1_conditional_Bt}
    B_t
    =
    \begin{bmatrix}
        0 & 0\\
                \dot \beta_t/\beta_t &  -\dot \beta_t/\beta_t\\
    \end{bmatrix}.
\end{align}

Using \Cref{eq:ctmc_marginal_generator,eq:ctmc_marginal_rate}, we can derive the marginal generator:
\begin{align}
      R_t^F(x,y)
    =&
    \mathbb  E_{s_t, x_0, x_1|X_t=x}
    \left[
        \sum_{s'\in\mathcal S}
        \bm 1\{\phi(x_0,x_1,s')=y\}
        A_t(s_t,s')
    \right]
\end{align}

For \(y\neq x\),   the only transition that has non-zero entry in $A$ is from \(x_0\)-state to the \(x_1\)-state, with rate
\begin{align}
    A_t(0,1)
    =
    -\frac{\dot\alpha_t}{\alpha_t}.
\end{align}
Therefore, for \(y\neq x\),
\begin{align}
    R_t^F(x,y)
    &=
    \mathbb E_{s_t,x_0,x_1 \mid X_t=x}
    \left[
        \mathbf{1}\{\phi(x_0,x_1,1)=y\}
        A_t(s_t,1)
    \right]
    \\
    &=
    \mathbb E_{s_t,x_0,x_1 \mid X_t=x}
    \left[
        \mathbf{1}\{x_1=y\}
        \mathbf{1}\{s_t=0\}
        \left(-\frac{\dot\alpha_t}{\alpha_t}\right)
    \right]
    \\
     &=
    \mathbb E_{s_t,x_0,x_1 \mid X_t=x}
    \left[
        \mathbf{1}\{x_1=y\}
        \mathbf{1}\{s_t=0\}
        \left(-\frac{\dot\alpha_t}{\alpha_t}\right)
    \right] \\ &\quad \quad \quad \quad \quad \quad  +  \underbrace{\mathbb E_{s_t,x_0,x_1 \mid X_t=x}
    \left[
        \mathbf{1}\{x_1=y\}
        \mathbf{1}\{s_t=1\}
        \left(-\frac{\dot\alpha_t}{\alpha_t}\right)
    \right]}_{=0 \text{ due to } y\neq x}
    \\
     &=
    \mathbb E_{s_t,x_0,x_1 \mid X_t=x}
    \left[
        \mathbf{1}\{x_1=y\}
        \left(-\frac{\dot\alpha_t}{\alpha_t}\right)
    \right]\\
    &=
    \mathbb E_{ x_1 \mid X_t=x}
    \left[
        \mathbf{1}\{x_1=y\}
        \left(-\frac{\dot\alpha_t}{\alpha_t}\right)
    \right]\\
    &=
    -\frac{\dot\alpha_t}{\alpha_t}
    \mathbb P(x_1=y \mid X_t=x)
    \\
    &=
    -\frac{\dot\alpha_t}{\alpha_t}
    p_{1|t}(y|x).
\end{align}

The diagonal entry is then fixed by the generator row-sum condition:
\begin{align}
    R_t^F(x,x)
    &=
    -\sum_{y\neq x} R_t^F(x,y)
    \\
    &=
    \frac{\dot\alpha_t}{\alpha_t}
    \sum_{y\neq x} p_{1|t}(y|x)
    \\
    &=
    \frac{\dot\alpha_t}{\alpha_t}
    \left(1-p_{1|t}(x|x)\right)
    \\
    &=
    -\frac{\dot\alpha_t}{\alpha_t}
    \left(p_{1|t}(x|x)-1\right).
\end{align}
Combining the off-diagonal and diagonal cases gives
the marginal forward generator is
\begin{align} \label{eq:appendix_case1_marginal_RF}
    R_t^F(x,y)
    =
    -\frac{\dot\alpha_t}{\alpha_t}
    \left[
        p_{1|t}(y|x)-\delta_x(y)
    \right],
\end{align}
Similarly, the marginal backward generator is
\begin{align} \label{eq:appendix_case1_marginal_RB}
    R_t^B(x,y)
    =
    \frac{\dot\beta_t}{\beta_t}
    \left[
        p_{0|t}(y|x)-\delta_x(y)
    \right],
\end{align}
In practice, this case suggests a simple parameterization. 
We can directly parameterize \(q_{\theta,t}^F(\cdot|x)\), \(q_{\phi,t}^B(\cdot|x)\)  for posterior endpoint distribution \(p_{1|t}(\cdot|x)\), \(p_{0|t}(\cdot|x)\).
Then learn the generator by Cross-entropy loss:
\begin{align}  \label{eq:appendix_case1_celoss_1}
    \mathcal J_F(\theta)
    &=
    \mathbb E_{t,x_0,x_1,s_t}
    \left[
        -\log q_{\theta,t}^F(x_1|x_t)
    \right],
    \\ \label{eq:appendix_case1_celoss_2}
    \mathcal J_B(\phi)
    &=
    \mathbb E_{t,x_0,x_1,s_t}
    \left[
        -\log q_{\phi,t}^B(x_0|x_t)
    \right].
\end{align}
Note that this case coincides with the discrete flow matching formulation of
\citet[DFM]{gat2024discrete}. However, DFM mainly considers one transport direction in its experiments, and does not make explicit that the two directional formulations are time reversals of one another.

DFM also proposes a more general construction for the forward and
backward processes, involving mixtures with additional components
\citep[Theorem~3, Eq.~21]{gat2024discrete}. 
However, the forward and backward processes designed by this construction are NOT time-reversal of each other in general.
This is not an issue for the application in generative modeling, as we only care about the endpoint marginal of the transport.
However, for our application, we require the forward and backward \emph{path} to be time-reversal in optimality to reduce the variance of the estimator.
Therefore, the form of DFM is not general enough for our usage.

Our design framework is hence more general and stronger: 
it gives a concrete recipe for designing forward and backward discrete dynamics that are explicit time reversals of one another. 
In the following case 2, we consider the second case where we introduce noise, and we can see that this is distinct from the form of DFM.

Before considering the second case, we also note that the objective by \Cref{eq:appendix_case1_celoss_1} is not the only way to learn the forward and backward rate matrices.
We can also directly use generator matching and the conclusion of Proposition~\ref{prop:latent_ctmc_marginal_reversal} allows us to directly marginalize the latent forward and backward generators.

Precisely, for a sampled tuple \((t,x_0,x_1,s_t)\), define the sample-level forward and backward rate targets
\begin{align}
    \widehat R_t^F(y|x_t,x_0,x_1,s_t)
    &:=
    \sum_{s'\in\mathcal S}
    \bm 1\{\phi(x_0,x_1,s')=y\}
    A_t(s_t,s'),
    \\
    \widehat R_t^B(y|x_t,x_0,x_1,s_t)
    &:=
    \sum_{s'\in\mathcal S}
    \bm 1\{\phi(x_0,x_1,s')=y\}
    B_t(s_t,s').
\end{align}
Then
\begin{align}
    R_t^F(x,y)
    =
    \mathbb E
    \left[
        \widehat R_t^F(y|x_t,x_0,x_1,s_t)
        \,\middle|\,
        X_t=x
    \right],
\end{align}
and similarly for \(R_t^B\).

The generator-matching loss following \citep{holderrieth2024generator} is then
\begin{align} \label{eq:case1_gm}
    \mathcal J_F(\theta)
    =
    \mathbb E_{t,x_0,x_1,s_t}
    \left[
        \sum_{y\neq x_t}
        \left(
            R_{\theta,t}^F(x_t,y)
            -
            \widehat R_t^F(y|x_t,x_0,x_1,s_t)
            \log R_{\theta,t}^F(x_t,y)
        \right)
    \right],
\end{align} 
and
\begin{align} \label{eq:case1_gm2}
    \mathcal J_B(\phi)
    =
    \mathbb E_{t,x_0,x_1,s_t}
    \left[
        \sum_{y\neq x_t}
        \left(
            R_{\phi,t}^B(x_t,y)
            -
            \widehat R_t^B(y|x_t,x_0,x_1,s_t)
            \log R_{\phi,t}^B(x_t,y)
        \right)
    \right].
\end{align}

\paragraph{Case 2: mask state with time-reversal latent dynamics.}\label{par:ctmc_case2}
We now consider the interpolant with a mask state:
\begin{align}
    p_t(x_t|x_0,x_1)
    =
    \alpha_t\delta_{x_0}(x_t)
    +
    \beta_t\delta_{x_1}(x_t)
    +
    \gamma_t\delta_M(x_t).
\end{align}
In our experiments and also in this discussion, we choose the schedule for simplicity.
\emph{We note that our discussion is not limited to this design choice; we choose it to avoid overloading of notations.}
\begin{align}
    \alpha_t=(1-a_t)^2,
    \qquad
    \beta_t=a_t^2,
    \qquad
    \gamma_t=2a_t(1-a_t),
\end{align}
where \(a_0=0\), \(a_1=1\), and \(\dot a_t>0\).

The latent state space is
\begin{align}
    \mathcal S=\{0,M,1\},
\end{align}
with decoding map
\begin{align}
    \phi(x_0,x_1,0)=x_0,
    \qquad
    \phi(x_0,x_1,M)=M,
    \qquad
    \phi(x_0,x_1,1)=x_1 .
\end{align}
The latent marginal is
\begin{align}
    p_t(S_t=0)=\alpha_t,
    \qquad
    p_t(S_t=M)=\gamma_t,
    \qquad
    p_t(S_t=1)=\beta_t .
\end{align}
We then design a conditional latent generator realizing this latent path.
The latent generator ($3 \times 3$ matrix) has $9$ elements, with 6 out of them to be free (diagonal elements are not free).
We then need the generator to satisfy the forward Kolmogorov Equation.
This Equation has 3 states, with 2 of them to be free (as they need to sum to one).
Therefore, we still have 4 degrees of freedom to choose.
For simplicity, we design the following  forward latent generator:
\begin{align} \label{eq:appendix_case2_conditional_At}
    A_t
    =
    \frac{2\dot a_t}{1-a_t}
    \begin{bmatrix}
        -1 & 1-a_t & a_t\\
        0 & -\frac{a_t}{2} & \frac{a_t}{2}\\
        0 & 0 & 0
    \end{bmatrix}.
\end{align}
Equivalently, the nonzero forward transitions are
\begin{align}
    &0 \to M:
    &&\text{with rate } 2\dot a_t,
    \\
    &0 \to 1:
    &&\text{with rate } \frac{2a_t\dot a_t}{1-a_t},
    \\
   & M \to 1:
    &&\text{with rate } \frac{a_t\dot a_t}{1-a_t}.
\end{align}
One can verify the forward Kolmogorov Equation:
\begin{align}
    \partial_t
    \begin{bmatrix}
        \alpha_t\\
        \gamma_t\\
        \beta_t
    \end{bmatrix}
    =
    A_t^\top
    \begin{bmatrix}
        \alpha_t\\
        \gamma_t\\
        \beta_t
    \end{bmatrix}.
\end{align}

The corresponding backward latent generator is
\begin{align} \label{eq:appendix_case2_conditional_Bt}
    B_t
    =
    \frac{2\dot a_t}{a_t}
    \begin{bmatrix}
        0 & 0 & 0\\
        \frac{1-a_t}{2} & -\frac{1-a_t}{2} & 0\\
        1-a_t & a_t & -1
    \end{bmatrix}.
\end{align}
Equivalently, the nonzero backward transitions are
\begin{align}
   & 1  \to M:
    &&\text{with rate } 2\dot a_t,
    \\
    &1 \to 0:
    &&\text{with rate } \frac{2(1-a_t)\dot a_t}{a_t},
    \\
    &M \to 0:
    &&\text{with rate } \frac{(1-a_t)\dot a_t}{a_t}.
\end{align}
By direct calculation, we can verify that for \(s\neq s'\), we have
\begin{align}
    B_t(s,s')
    =
    A_t(s',s)
    \frac{p_t(s')}{p_t(s)}.
\end{align}
i.e., the latent forward and backward generators are time reversals w.r.t the latent marginal \(p_t(s)\). 

Unfortunately, unlike case 1, we cannot the clean form for the forward and backward generator involving only \(p_{1|t}(\cdot|x)\) and  \(p_{0|t}(\cdot|x)\).
Therefore, we directly parameterize the marginal generators in the form of nonnegative forward and backward rates
\begin{align}
    R_{\theta,t}^F(x,y),
    \qquad
    R_{\phi,t}^B(x,y).
\end{align}
More precisely, given the current state \(x\), the network's output is one nonnegative vector over possible next states for each token position, corresponding to the off-diagonal entries of
\(R_{\theta,t}^F(x,\cdot)\) and \(R_{\phi,t}^B(x,\cdot)\).
The diagonal entries are then reconstructed by the CTMC convention
$
    R_{\theta,t}^F(x,x)
    =
    -\sum_{y\neq x}R_{\theta,t}^F(x,y),
    R_{\phi,t}^B(x,x)
    =
    -\sum_{y\neq x}R_{\phi,t}^B(x,y).$

By Proposition~\ref{prop:latent_ctmc_marginal_reversal}, we want to learn the generator as marginalizations of the latent forward and backward generators.
We train this marginalization by generator matching.
This yields the same loss as those we discuss at the end of case 1.
For easy reference, we restate them here.

For a sampled tuple \((t,x_0,x_1,s_t)\), define the sample-level forward and backward rate targets
\begin{align}
    \widehat R_t^F(y|x_t,x_0,x_1,s_t)
    &:=
    \sum_{s'\in\mathcal S}
    \bm 1\{\phi(x_0,x_1,s')=y\}
    A_t(s_t,s'),
    \\
    \widehat R_t^B(y|x_t,x_0,x_1,s_t)
    &:=
    \sum_{s'\in\mathcal S}
    \bm 1\{\phi(x_0,x_1,s')=y\}
    B_t(s_t,s').
\end{align}
Then
\begin{align}
    R_t^F(x,y)
    =
    \mathbb E
    \left[
        \widehat R_t^F(y|x_t,x_0,x_1,s_t)
        \,\middle|\,
        X_t=x
    \right],
\end{align}
and similarly for \(R_t^B\).

The generator-matching loss following \citep{holderrieth2024generator} is then
\begin{align} \label{eq:appendix_case2_GMF}
    \mathcal J_F(\theta)
    =
    \mathbb E_{t,x_0,x_1,s_t}
    \left[
        \sum_{y\neq x_t}
        \left(
            R_{\theta,t}^F(x_t,y)
            -
            \widehat R_t^F(y|x_t,x_0,x_1,s_t)
            \log R_{\theta,t}^F(x_t,y)
        \right)
    \right],
\end{align} 
and
\begin{align} \label{eq:appendix_case2_GMB}
    \mathcal J_B(\phi)
    =
    \mathbb E_{t,x_0,x_1,s_t}
    \left[
        \sum_{y\neq x_t}
        \left(
            R_{\phi,t}^B(x_t,y)
            -
            \widehat R_t^B(y|x_t,x_0,x_1,s_t)
            \log R_{\phi,t}^B(x_t,y)
        \right)
    \right].
\end{align}

After learning, during simulation, we discretize the CTMC with an Euler step, and given the current state \(x\) at time \(t\), the forward one-step transition kernel is
\begin{align}
    K_{\theta,t}^F(x,y)
    =
    \delta_x(y)
    +
    \Delta t\, R_{\theta,t}^F(x,y),
\end{align}

Similarly, the backward one-step transition kernel is
\begin{align}
    K_{\phi,t}^B(x,y)
    =
    \delta_x(y)
    +
    \Delta t\, R_{\phi,t}^B(x,y),
\end{align}
In practice, \(\Delta t\) is chosen small enough so that the diagonal probabilities are mostly nonnegative; otherwise, the transition probabilities are clipped and renormalized for numerical stability.
The forward-backward path RND can then be calculated using these categorical distribution probabilities.

\paragraph{Case 3: mask state with only the path \(x_0\to M\to x_1\).}
In the above case, we mentioned that there are many different generator designs to realize the same path (with 4 extra DOF). 
In this section, we discuss the simplest case, where we only allow the path \(x_0\to M\to x_1\) for the latent generator.
However, as we will see, this design is problematic, and we can only rescue it with a special design of the interpolant schedule.

We now still consider the following masked interpolant:
\begin{align}
    p_t(x_t|x_0,x_1)
    =
    \alpha_t\delta_{x_0}(x_t)
    +
    \beta_t\delta_{x_1}(x_t)
    +
    \gamma_t\delta_M(x_t).
\end{align}
We choose a latent forward generator that only enables the edges
\begin{align}
    0\to M,
    \qquad
    M\to 1 .
\end{align}
This generator matrix is
\begin{align}
    A_t
    =
    \begin{bmatrix}
        -r_{0\to M}(t) & r_{0\to M}(t) & 0\\
        0 & -r_{M\to 1}(t) & r_{M\to 1}(t)\\
        0 & 0 & 0
    \end{bmatrix}.
\end{align}
The forward Kolmogorov equation in latent space gives
\begin{align}
    \dot\alpha_t
    &=
    -r_{0\to M}(t)\alpha_t,
    \\
    \dot\beta_t
    &=
    r_{M\to 1}(t)\gamma_t .
\end{align}
Therefore,
\begin{align}
    r_{0\to M}(t)
    =
    -\frac{\dot\alpha_t}{\alpha_t},
    \qquad
    r_{M\to 1}(t)
    =
    \frac{\dot\beta_t}{\gamma_t}.
\end{align}
Thus,
\begin{align}
    A_t
    =
    \begin{bmatrix}
        \dfrac{\dot\alpha_t}{\alpha_t}
        &
       \tcboxmath[
    colback=red!30,
    colframe=red!0,
    boxrule=0pt,
    arc=1pt,
    left=1pt,
    right=1pt,
    top=1pt,
    bottom=1pt
  ]{ -\dfrac{\dot\alpha_t}{\alpha_t}}
        &
        0
        \\[8pt]
        0
        &
        -\dfrac{\dot\beta_t}{\gamma_t}
        &
        \dfrac{\dot\beta_t}{\gamma_t}
        \\[8pt]
        0 & 0 & 0
    \end{bmatrix}.
\end{align}
So far so good.
However, let's take a closer look at the highlighted number in the above matrix.
The denominator $\alpha_t = 0$, when $t=1$, ensuring that the marginal density at $t=1$ is $\pi_1=\pi_B$.
However, the numerator is typically non-zero unless $\alpha_t = 0$ when $t$ is larger than some threshold.

\emph{What does this mean?}
In the latent space, the interpretation is simple: any trajectory that remains in latent state \(0\) is forced to jump to the mask state as \(t\to 1\).
This is harmless at the latent level, because the probability of the latent state \(0\) also vanishes.

However, after projecting to the original data space, this behavior can become problematic.
The observed state \(x_t=x\neq M\) does not necessarily reveal whether the latent state is \(0\) or \(1\), especially when the same token can appear in both \(x_0\) and \(x_1\).
Therefore, near the endpoint, the observed generator may assign a very large masking rate to any sample that \emph{looks compatible} with having come from \(x_0\).
Therefore, this marginalized generator on the observed space assigns a large transition probability towards masks and will not be stabilized.
This makes the dynamics unstable.

A simple rescue is to design a cutoff, for example:
\begin{align}
    \alpha_t
    =
    \max\{0,1-2a_t\},
    \qquad
    \beta_t
    =
    \max\{0,2a_t-1\},
    \qquad
    \gamma_t
    =
    \min\{2a_t,2-2a_t\},
\end{align}
where \(a_0=0\), \(a_1=1\), and \(\dot a_t>0\). 
This stops the transition \(0\to M\) before its rate blows up.
In fact, this schedule also implies that \(\alpha_t\) and \(\beta_t\) are never nonzero at the same time.
Therefore, we can view this construction as concatenating two masked diffusion models \citep{lou2023discrete,shi2024simplified} back-to-back:
first \(x_0\to M\), and then \(M\to x_1\).

\section{Composition of Local Generators}
\label{appendix:composition}

In the previous section, we mainly discussed generators for a single modality, or even for a single coordinate in the CTMC case.
In practice, however, the systems of interest may contain multiple coordinates or coordinates in different modalities.
Without loss of generality, suppose the full state is 
\begin{align}
    Z=(X,Y)\in  \mathcal{Z} = X\times\mathcal Y.
\end{align}
where \(X\) and \(Y\) may represent different modalities or different parts of the system.
In this section, we show that it is sufficient to define a generator for each component, conditioned on the full state, and then combine these components to obtain the full generator on \(\mathcal Z\).

Let's define the \(X\)-component generator  by freezing \(Y_t=y\) and updating only \(X_t\):
\begin{align}
    \mathcal L_{X, t} f(x,y)
    :=
    \lim_{\delta\downarrow 0}
    \frac{
        \mathbb E\!\left[
            f(X_{t+\delta},y)
            \,\middle|\,
            X_t=x,Y_t=y
        \right]
        -
        f(x,y)
    }{\delta}.
\end{align}
Similarly, the \(Y\)-component generator is defined by freezing \(X_t=x\) and updating only \(Y_t\):
\begin{align}
    \mathcal L_{Y, t} f(x,y)
    :=
    \lim_{\delta\downarrow 0}
    \frac{
        \mathbb E\!\left[
            f(x,Y_{t+\delta})
            \,\middle|\,
            X_t=x,Y_t=y
        \right]
        -
        f(x,y)
    }{\delta}.
\end{align}
Then, we can define the full generator as the summation of these local generators.
\begin{align}
     \mathcal L_t f(x,y) =      \mathcal L_{X, t} f(x,y)  +  \mathcal L_{Y, t} f(x,y) 
\end{align}
This means that we can learn a generator for each component separately, while conditioning each generator on the full joint state.
After learning, we simply add these component generators to obtain the full generator and evolve the whole state simultaneously.
Importantly, this does not mean the two components are independent.
The generators \(\mathcal L_{X,t}\) and \(\mathcal L_{Y,t}\) may both depend on the full state \((x,y)\), so the dynamics can still couple the two through their rate matrices or drifts.

Additionally, we note that this summation is not the general form. 
In fact, the most general form is $ \mathcal L_t
    =
    \mathcal L_{X,t}
    +
    \mathcal L_{Y,t}
    +
    \mathcal L_{XY,t}$, where we need an extra interaction term $\mathcal L_{XY,t}$.
    Here, we make this modeling choice to simplify our parameterization and learning.

    \emph{Why can we make this design choice?}
    This is valid because our learned transport dynamics are determined by our designed conditional dynamics.
 If we design the conditional generator to be ``separable" across components, then its marginalization is also ``separable", conditional on the joint states, i.e.,
\begin{align}
    &\mathcal L_t^{x_0,x_1}
    =
    \mathcal L_{X,t}^{x_0,x_1}
    +
    \mathcal L_{Y,t}^{x_0,x_1}
    \quad
    \Longrightarrow
    \quad
    \mathcal L_t
    =
    \mathcal L_{X,t}
    +
    \mathcal L_{Y,t},\\
   \text{where}\quad   &\mathcal L_{X,t} f
    =
    \mathbb E_{x_0,x_1|z_t}
    \left[
        \mathcal L_{X,t}^{x_0,x_1} f
    \right],
    \mathcal L_{Y,t} f
    =
    \mathbb E_{x_0,x_1|z_t}
    \left[
        \mathcal L_{Y,t}^{x_0,x_1} f
    \right].
\end{align}
We also note that this component-wise ``conditional-independent" construction has also been used in graph generation \citep{jo2022score}, and is a common choice in discrete diffusion
\citep{lou2023discrete,shi2024simplified}.
In discrete diffusion, one typically learns a rate matrix (typically in the form of the concrete score, or posterior distribution) for each coordinate, which corresponds to a sparse joint rate matrix where only transitions between states that differ in one token are allowed.
This construction is also widely used for designing multimodal diffusion models\citep{holderrieth2024generator,rojas2025diffuse}.

During simulation, the component updates do not need to be applied sequentially.
Changes in different components can occur simultaneously, known as \(\tau\)-leaping \citep{gillespie2001approximate}.

\section{Simplified Forward and Backward Ratio for Mask Diffusion }\label{app:mask_dm_simple}

Assume we are not learning between two complicated targets.
Instead, we are learning from a fully masked target to an unmasked target.
We do not need to learn both the forward and backward generators.
This is because the masking process has a known generator, and we only need to learn its marginal score, or directly learn the backward.
This covers the case of the commonly used masked diffusion \citep{shi2024simplified}.

Interestingly, in the setup of mask diffusion following \citet{shi2024simplified}, we do not need to explicitly calculate the mask and unmask probabilities in the work.
Instead, we only need to calculate the unmask posterior.
\par
More precisely, consider the unnormalized importance weight (negative work) calculated by forward and backward kernel ratios:
\begin{align}
W(x_{0:N}) = \tilde{P}_0(x_0)\frac{P^F(x_1|x_0)\cdots P^F(x_N|x_{N-1})}{P^B(x_0|x_1)\cdots P^B(x_N-1|x_N) }\frac{1}{P_M(x_N)}
\end{align}
Following \citet{shi2024simplified} and defining $\alpha_n$ in the same way as \citet{shi2024simplified} (e.g., ($1-t$ or  $1-\cos(\frac{\pi}{2}(1-t))$), the transition kernel can have the following types:
\begin{enumerate}
    \item both $x_n$ and $x_{n+1}$ are masks:
    \begin{align}
        &P^F(x_{n+1}|x_n) 
        =1\\
       & P^B(x_{n}|x_{n+1}) = 1-\mathbb{P}\{
        \text{unmask}
        \}
        =  \frac{1-\alpha_n}{1-\alpha_{n+1}}
    \end{align}
    \item Neither $x_n$ or  $x_{n+1}$ is mask:
    \begin{align}
       & P^F(x_{n+1}|x_n) = 1-\mathbb{P}\{
        \text{mask}
        \}
        =\frac{\alpha_{n+1}}{\alpha_n}\\
        &P^B(x_{n}|x_{n+1}) = 1
    \end{align}
    \item 
    $x_n$ is not mask, while $x_{n+1}$ is mask:
    \begin{align}
       & P^F(x_{n+1}|x_n) = \mathbb{P}\{
        \text{mask}
        \}
        =\frac{\alpha_n-\alpha_{n+1}}{\alpha_n}\\
        &P^B(x_{n}|x_{n+1}) = P(x_0=x_n|x_{n+1}) \mathbb{P}\{\text{unmask}\} = P(x_0=x_n|x_{n+1}) \frac{\alpha_n-\alpha_{n+1}}{1-\alpha_{n+1}}
    \end{align}
\end{enumerate}
Assume for a trajectory $x_{0:N}$, the case 3 happen at step $m$ to $m+1$.
Then the importance weight is given by
\begin{multline}
     W(x_{0:N}) = \tilde{P}_0(x_0)
    \frac{\alpha_1/\alpha_0}{1}\frac{\alpha_2/\alpha_1}{1}\cdots \\\frac{\alpha_m/\alpha_{m-1}}{1}\frac{(\alpha_m-\alpha_{m+1})/\alpha_m}{P(x_0=x_n|x_{n+1}) (\alpha_m-\alpha_{m+1})/ (1-\alpha_{m+1})}\frac{1}{(1-\alpha_{m+1}) / (1-\alpha_{m+2})}\cdots \\ \frac{1}{(1-\alpha_{N-2}) / (1-\alpha_{N-1})}  \frac{1}{(1-\alpha_{N-1}) / (1-\alpha_{N})} \frac{1}{P_M(x_N)}
\end{multline}
Hence
\begin{multline}
     W(x_{0:N}) = \tilde{P}_0(x_0)
    \frac{\alpha_1}{\alpha_0}\frac{\alpha_2}{\alpha_1}\cdots \\\frac{\alpha_m}{\alpha_{m-1}}\frac{(\alpha_m-\alpha_{m+1})/\alpha_m}{P(x_0=x_n|x_{n+1}) (\alpha_m-\alpha_{m+1})/ (1-\alpha_{m+1})}\frac{1-\alpha_{m+2}}{1-\alpha_{m+1} }\cdots \\ \frac{1-\alpha_{N-1}}{1-\alpha_{N-2} }\frac{1-\alpha_{N}}{1-\alpha_{N-1} } \frac{1}{P_M(x_N)}
\end{multline}
After cancellation, we obtain:
\begin{align}
     W(x_{0:N}) = \tilde{P}_0(x_0)
    \frac{1-\alpha_{N}}{\alpha_0}  \frac{1}{P(x_0=x_n|x_{n+1})} \frac{1}{P_M(x_N)} = \tilde{P}_0(x_0)
  \frac{1}{P(x_0=x_n|x_{n+1})} \frac{1}{P_M(x_N)} 
\end{align}
Therefore, we do not need to track the probability of masking/unmasking when estimating free energy with standard mask diffusion.

In fact, one may realize that this simplification leads to a weight similar to the AR case, but along an arbitrary reveal order. 
Therefore, using a masked diffusion as a transport can be interpreted as a random-order autoregressive transport, also for the free energy estimation case. Conversely, the AR free energy formulation can be viewed as a special case of this construction, where the random reveal order happens to coincide with the fixed autoregressive order.

\section{A Note on Heavy-tailed Distributions}\label{app:heavy_tailed}
By Remark~\ref{remark:variance_bound}, if we learn a diffusion process with finite time and a bounded, Lipschitz drift function, then we may obtain an estimator with infinite variance if the distribution on one side is heavy-tailed:
\begin{remark}[Heavy-tailed endpoints can force infinite path divergence]  Consider the forward process
\begin{align}
    \mathrm d X_t
    =
    f_t(X_t)\,\mathrm dt
    +
    \mathrm d W_t,
    \qquad
    X_0\sim p_0,
\end{align}
and a backward path law $\bwd Q$ whose terminal marginal is \(q_1\). Assume that
\(f_t\) is bounded and Lipschitz on the finite time interval \([0,1]\). If \(p_0\)
has infinite second moment while \(q_1\) is sub-Gaussian, then
\begin{align}
    D_{\mathrm{KL}}(\fwd P\,\|\,\bwd Q)
=
    \infty.
\end{align}
\end{remark}

\begin{proof}
By the chain rule for KL divergence on path space,
\begin{align}
    D_{\mathrm{KL}}(\fwd P\,\|\,\bwd Q)
    =
    D_{\mathrm{KL}}(p_1\,\|\,q_1)
    +
    \mathbb E_{X_1\sim p_1}
    \left[
        D_{\mathrm{KL}}
        \bigl(
            \fwd P(\,\cdot\,|X_1)
            \,\|\,
            \bwd Q(\,\cdot\,|X_1)
        \bigr)
    \right].
\end{align}
Therefore,
\begin{align}
    D_{\mathrm{KL}}(\fwd P\,\|\,\bwd Q)
    \geq
    D_{\mathrm{KL}}(p_1\,\|\,q_1).
\end{align}

It remains to show that \(D_{\mathrm{KL}}(p_1\,\|\,q_1)=\infty\).
To simplify the proof, we first write down the corresponding ODEs of the forward  processes:
\begin{align}
    \mathrm{d} x_t = a_t(x_t)\mathrm{d} t , \quad  x_0 \sim p_0 
\end{align}
where $a_t = f - \frac12 \nabla \log p_t$.
As the ODEs have the same marginals as the SDEs, we will focus our discussion on the ODEs.
\par
We now consider $x_t $ and a fixed  $ y_t = \mathbb{E}_{p_t}[X_t]$ for the first ODE. We denote $x_0, y_0 $ as the results by solving the ODE backward from $t$ to $0$.
\begin{align}
    x_0 - y_0 = (x_t - y_t) - \int \left[a_s(x_s) - a_s(y_s)\right]\mathrm{d}t
\end{align}
From the Lipschitz assumption, we get
\begin{align}
  \|  x_0 - y_0\| &\leq  \|x_t - y_t\| + \|\int  \left[a_s(x_s) - a_s(y_s)\right]\mathrm{d}t\|\\
  &\leq \|x_t - y_t\| + L\|\int  \left[ x_s - y_s\right]\mathrm{d}t\|
  \\
  &\leq \|x_t - y_t\| + L\int  \| x_s - y_s\|\mathrm{d}t
\end{align}
From Grönwall’s inequality and take square on both sides, we get
\begin{align}
      \|  x_0 - y_0\|^2&\leq  \|x_t - y_t\|^2 e^{2Lt}
\end{align}
We now take expectation over $x_t\sim p_t$, and then we have
\begin{align}
    \mathbb{E} \|x_t - y_t\|^2 \geq e^{-2Lt}    \mathbb{E}_{x_0\sim p_0} [\|  x_0 - y_0\|^2] = \infty
\end{align}

Now use the sub-Gaussian assumption on \(q_1\). In particular, there exists
\(\lambda>0\) such that
\begin{align}
    \mathbb E_{y\sim q_1}
    \left[
        \exp\left(\lambda \|y\|^2\right)
    \right]
    <
    \infty.
\end{align}
 Assume \(p_1\ll q_1\), and we write
\begin{align}
    r(x)=p_1(x)/q_1(x)
\end{align}
We use the  inequality
\begin{align}
    uv
    \leq
    u\log u - u + e^v,
    \qquad
    u\geq 0,\ v\in\mathbb R .
\end{align}
Let \(u=r(x)\) and \(v=\lambda\|x\|^2\), we obtain
\begin{align}
    r(x)\lambda\|x\|^2
    \leq
    r(x)\log r(x)
    -
    r(x)
    +
    \exp\left(\lambda\|x\|^2\right).
\end{align}
Taking expectation under \(q_1\) gives
\begin{align}
    \lambda\mathbb E_{p_1}\|x\|^2
    &\leq
    \mathbb E_{q_1}[r(X)\log r(X)]
    -
    \mathbb E_{q_1}[r(X)]
    +
    \mathbb E_{q_1}
    \left[
        \exp\left(\lambda\|x\|^2\right)
    \right]
    \\
    &=
    D_{\mathrm{KL}}(p_1\|q_1)
    -
    1
    +
  \mathbb E_{q_1}
    \left[
        \exp\left(\lambda\|x\|^2\right)
    \right]  .
\end{align}  
LHS is unbounded, while $ -
    1
    +
  \mathbb E_{q_1}
    \left[
        \exp\left(\lambda\|x\|^2\right)
    \right] $ is bounded.
Therefore, 
\begin{align}
    D_{\mathrm{KL}}(p_1\|q_1)=\infty .
\end{align}
\end{proof}

This conclusion is a generalized version of the results discussed by \citet{ren2025unified}, where they show that the heavy-tailed property will be maintained with an affine drift, while we show that this is also a potential issue for a Lipschitz drift function.

\section{Additional Experimental Details}
\label{sec:hyperparameters}

\subsection{CTMC FEAT: Lattice Ising Models} \label{sec:appendix_disc_feat_ising}
\subsubsection{System}
We consider estimating the free energy differences between two Ising models on lattices of size of $15\times15$, $25\times25$, and $32\times32$.
Samples are generated via Gibbs sampling by running 10,000 parallel Markov chains for 1,000,000 steps, yielding 10,000 samples in total.
Of these, 2,000 are reserved for testing, and the remaining samples are used for training.

The lattice Ising model is defined as $p(x) \propto \exp(- \beta H(x))$, where $\beta$ denotes the inverse temperature of the system, and $H(x): \{-1, 1\}^{L \times L} \rightarrow \mathbb{R}$ is the Hamiltonian given by $H(x) = - \sum_{\langle i, j \rangle} x_i x_j$.
Here, $\langle i, j \rangle$ runs over all neighboring spin pairs on a 2D cyclic lattice.
The task of CTMC FEAT is then to estimate the free energy difference:
\begin{align}
    \Delta F = F_{\beta_a} - F_{\beta_b}, \quad F_{\beta} = - \frac{1}{\beta} \log Z, \quad Z = \sum_x \exp(-\beta H(x)), 
\end{align}
in which we consider two settings of $(\beta_a, \beta_b) = (0.2, 0.4)$ and $(\beta_a, \beta_b) = (0.2, 0.6)$.

\subsubsection{CTMC FEAT Variants}

Following \cref{sec:appendix_ctmc_feat_theory}, we propose three CTMC FEAT variants.

\begin{wraptable}{r}{0.57\linewidth}
    \vspace{-0.5em}
    \centering
    \setlength{\tabcolsep}{3pt}
    \caption{Hyperparameters of CTMC FEAT.}
    \label{tab:hyperparam_ctmc}
    \resizebox{\linewidth}{!}{%
        \begin{tabular}{@{}lccc@{}}
            \toprule
            \textbf{Hyperparameter} & \textbf{FM} & \textbf{GM-no mask} & \textbf{GM-w.\ mask} \\
            \midrule
            \multicolumn{4}{c}{\textbf{System}} \\
            \midrule
            Lattice sizes & \multicolumn{3}{c}{$15\!\times\!15$, $25\!\times\!25$, $32\!\times\!32$} \\
            Inverse temperatures & \multicolumn{3}{c}{$\beta = 0.2 \leftrightarrow 0.4$, $\beta = 0.2 \leftrightarrow 0.6$} \\
            \midrule
            \multicolumn{4}{c}{\textbf{Scheduler}} \\
            \midrule
            Discrete bridges & \multicolumn{2}{c}{\makecell[c]{
            $x_t \sim \alpha_t \delta_{x_0}(x_t)$
            $+\, \beta_t \delta_{x_1}(x_t)$
            }} & \makecell[c]{
            $x_t \sim \alpha_t \delta_{x_0}(x_t)$ \\
            $+\, \beta_t \delta_{x_1}(x_t) + \gamma_t \delta_M(x_t)$
            } \\ [0.8em]
            Schedulers & \multicolumn{2}{c}{$\alpha_t = 1-t, \beta_t = t$} & \makecell[c]{
            $\alpha_t = (1-a_t)^2, \beta_t=a_t^2$ \\
            $\gamma_t = 2a_t (1-a_t); a_t = t$
            } \\
            \midrule
            \multicolumn{4}{c}{\textbf{Model}} \\
            \midrule
            Parameterization & \(q_{\theta,t}^F(\cdot|x)\), \(q_{\phi,t}^B(\cdot|x)\) & \multicolumn{2}{c}{$R_{\theta,t}^F(x_t,y)$, $R_{\phi,t}^B(x_t,y)$} \\
            Network architecture & \multicolumn{3}{c}{2D CNN} \\
            Hidden channels & 256 & 256 & 384 \\
            Depth (layers) & \multicolumn{3}{c}{12} \\
            Kernel size & \multicolumn{3}{c}{$5\times5$} \\
            Batch normalization & \multicolumn{3}{c}{Yes} \\
            Residual connections & \multicolumn{3}{c}{Yes} \\
            Input encoding & scalar & scalar & one-hot \\
            \midrule
            \multicolumn{4}{c}{\textbf{Training}} \\
            \midrule
            Optimizer & \multicolumn{3}{c}{AdamW} \\
            Learning rate & $3\times10^{-4}$ & $3\times10^{-4}$ & $2\times10^{-4}$ \\
            Batch size & \multicolumn{3}{c}{256} \\
            Iterations & 10,000 & 10,000 & 20,000 \\
            Weight decay & 0 & 0 & $10^{-4}$ \\
            LR schedule & No & No & Cosine \\
            EMA decay & \multicolumn{3}{c}{0.995} \\
            OT pair & \multicolumn{3}{c}{Yes} \\
            \midrule
            \multicolumn{4}{c}{\textbf{Simulation and Estimation}} \\
            \midrule
            Step size $\Delta t$ & \multicolumn{3}{c}{0.05} \\
            Evaluation sample size & \multicolumn{3}{c}{2,000} \\
            Estimator & \multicolumn{3}{c}{BAR} \\
            \bottomrule
            \vspace{-2em}
        \end{tabular}%
    }
    \vspace{-1em}
\end{wraptable}
\textbf{Flow Matching (FM).}
\textit{FM} follows \hyperref[par:ctmc_case1]{Case 1} in \cref{sec:appendix_ctmc_feat_theory}, where the discrete bridge is constructed as $x_t \sim  \alpha_t \delta_{x_0}(x_t) + \beta_t \delta_{x_1}(x_t)$ with $\alpha_t = 1-t$ and $\beta_t = t$.
The method directly parameterizes the endpoint posterior distributions \(p_{1|t}(\cdot|x)\) and \(p_{0|t}(\cdot|x)\) using \(q_{\theta,t}^F(\cdot|x)\) and \(q_{\phi,t}^B(\cdot|x)\), respectively, and learns them via the cross-entropy loss in \cref{eq:appendix_case1_celoss_1,eq:appendix_case1_celoss_2}.
After training, the marginal generators are obtained according to \cref{eq:appendix_case1_marginal_RF,eq:appendix_case1_marginal_RB}.

\textbf{Generator Matching without Masking (GM-no mask).}
\textit{GM-no mask} employs the same discrete bridge of \textit{FM}, namely $x_t \sim  \alpha_t \delta_{x_0}(x_t) + \beta_t \delta_{x_1}(x_t)$ with $\alpha_t = 1-t$ and $\beta_t = t$, which induces the conditional generators defined in \cref{eq:appendix_case1_conditional_At,eq:appendix_case1_conditional_Bt}.
Unlike \textit{FM}, which parametrizes the endpoint posterior distributions, \textit{GM-no mask} directly parameterizes the marignal generators $R_t^F(x,y)$ and $R_t^B(x,y)$ as $R_{\theta,t}^F(x_t,y)$ and $R_{\phi,t}^B(x_t,y)$, respectively.
These generators are trained using the generator-matching loss in \cref{eq:appendix_case2_GMF,eq:appendix_case2_GMB}.

\textbf{Generator Matching with Masking (GM-w. mask).}
\textit{GM-w. mask} follows \hyperref[par:ctmc_case2]{Case 2} in \cref{sec:appendix_ctmc_feat_theory}, where the discrete bridge is constructed as $x_t \sim  \alpha_t \delta_{x_0}(x_t) + \beta_t \delta_{x_1}(x_t) + \gamma_t\delta_M(x_t)$ with $\alpha_t=(1-a_t)^2, \beta_t=a_t^2, \gamma_t=2a_t(1-a_t)$ and $a_t = t$, which induces the conditional generators defined in \cref{eq:appendix_case2_conditional_At,eq:appendix_case2_conditional_Bt}.
Similar to \textit{GM-no mask}, \textit{GM-w. mask} parameterizes the marignal generators directly and learns them using the generator-matching loss in \cref{eq:appendix_case2_GMF,eq:appendix_case2_GMB}.

\subsubsection{Hyperparameters and Settings for CTMC FEAT}

\textbf{Network.}
For all three CTMC FEAT variants, we use a convolutional neural network (CNN) that treats each spin configuration as a 2D spatial feature map.
The input spin state $x_t \in \{0,1\}^{L \times L}$ (or $\{0,1,M\}^{L \times L}$ for the masking variant) is reshaped into a grid and combined with a sinusoidal time embedding via channel concatenation before being passed through the convolutional backbone.
The backbone consists of depth-12 convolutional layers with kernel size $5 \times 5$, batch normalization, ReLU activations, and residual connections.
The network produces two output heads, one for the forward direction and one for the backward direction, via $1\times1$ convolutions.
For FM and GM-no mask, the input is encoded as a scalar (yielding 2 input channels), while for GM-w.\ mask the input is encoded as a one-hot vector over the extended vocabulary $\{0, 1, M\}$ (yielding 4 input channels).

\textbf{Training details.}
We train all models using the AdamW optimizer with a batch size of 256. Models are trained for 10,000 iterations on the $15\times 15$ and $25\times 25$ lattice, and for 20,000 iterations on the $32\times 32$ lattices.
For FM and GM-no mask, we use a learning rate of $3\times10^{-4}$ with no weight decay and no learning rate schedule.
For GM-w.\ mask, we use a learning rate of $2\times10^{-4}$, a weight decay of $10^{-4}$, and a cosine annealing learning rate schedule.
We maintain an exponential moving average (EMA) of the model weights with decay rate 0.995 for all variants, and use the EMA model for evaluation.
To facilitate training, we sample paired data $(x_0, x_1)$ via an optimal transport (OT) plan computed within each mini-batch, following \citet{he2025feat}.
Gradients are clipped to unit norm.

\textbf{Estimation details.}
We use 2,000 held-out samples from each state for evaluation.
The forward and backward trajectories are discretized with step size $\Delta t = 0.05$, corresponding to 20 discretization steps.
The free energy difference is estimated using the bidirectional importance-weighted estimator and refined with the Bennett Acceptance Ratio (BAR) method as discussed in \cref{sec:inf_est_deltaF}.
The hyperparameters are summarized in \Cref{tab:hyperparam_ctmc}.
All results are run three times, and the mean and standard deviation are calculated.

\subsection{AR FEAT: Lattice Ising Models}

\subsubsection{System}

We consider the same lattice Ising system as CTMC FEAT for the AR setting.
Unlike CTMC FEAT, which learns a transport between two target distributions, AR FEAT learns an autoregressive proposal distribution for each target:
\begin{align}
    q_\theta(x) = \prod_{i=1}^{L^2} q_\theta(x_i \mid x_{<i}),
\end{align}
where the sites are ordered in raster scan order.

We evaluate inverse temperature \(\beta \in \{0.2,0.4,0.6\}\), on lattices of size \(15\times15\), \(25\times25\), and \(32\times32\).
For each lattice size and temperature, we use the same data as the CTMC case.
Each dataset contains 100,000 samples, of which 98,000 are used for training and 2,000 are held out for evaluation.

\subsubsection{Hyperparameters and Settings for AR FEAT}

\textbf{Network.}
We use an autoregressive convolutional neural network to model the Ising distribution in raster order.
For a configuration \(x \in \{0,1\}^{L\times L}\), the model outputs one Bernoulli logit for each lattice site, where the logit at site \(i\) parameterizes \(q_\theta(x_i=1\mid x_{<i})\).
The architecture follows a PixelCNN-style \citep{van2016conditional} masked convolutional design: the first layer uses a type-A mask, so the current site and all future sites are hidden, while subsequent layers use type-B masks, which allow the current hidden representation but still exclude future sites.
The backbone has 12 masked convolutional layers, hidden width 384, kernel size \(5\times5\), and GELU activations.
In addition, we include explicit causal periodic-neighbor features so that periodic boundary information that is already available in raster order can be used without leaking future spins.

\textbf{Training details.}
We train the autoregressive model by teacher forcing, minimizing the cross-entropy,
All models are trained using AdamW with learning rate \(2\times10^{-4}\), batch size 128, and no weight decay.
We train for 5 epochs and clip gradients to norm 5.
For each setting, 98,000 samples are used for training, and 2,000 samples are held out for evaluation.

\textbf{Estimation details.}
After training, we estimate the absolute free energy using the trained autoregressive model as a tractable proposal.
We draw 2,000 samples from \(q_\theta\) by ancestral sampling, and also use 2,000 held-out samples from the target distribution.
We combine the forward and reverse estimates using Bennett's Acceptance Ratio (BAR), initialized from the average of the two one-sided estimates.
We report the estimated free energy normalized by the number of spins, \(F/L^2\), and in the main table scale it as \(F/L^2\times 10^3\).
All results are run three times, and the mean and standard deviation are calculated.

\subsection{Multi FEAT: Ising Fluids}\label{app_details_ising_fluid}

\subsubsection{System}
We consider a system with both continuous components (coordinates) and discrete components (spin state).
We take the system from \citep{omelyan2004ising}.
Precisely,  denote the particle positions
\(\mathbf r=(r_1,\ldots,r_N)\), and the spin  \(s_i\in\{-1,+1\}\), and pairwise
distances \(r_{ij}=\|r_i-r_j\|\). The target  energy is
\begin{align}
    U(\mathbf r,\mathbf s)
    =
    \sum_{1\leq i<j\leq N}
    \left[
        \phi_{\mathrm{sc}}(r_{ij})
        -
        I_R(r_{ij})
        -
        J(r_{ij})s_i s_j
    \right].
    \label{eq:ising_fluid_energy}
\end{align}
where
\begin{align}
    J(r)
    &=
    \frac{2(z_1\sigma)^2}{z_1\sigma+1}
    \frac{\epsilon_J\sigma}{r}
    \exp[-z_1(r-\sigma)],
    \label{eq:ising_fluid_J}
    \\
    I(r)
    &=
    \frac{2(z_2\sigma)^2}{z_2\sigma+1}
    \frac{\epsilon_I\sigma}{r}
    \exp[-z_2(r-\sigma)].
    \label{eq:ising_fluid_I}
\end{align}
and
\begin{align}
    \phi_{\mathrm{sc}}(r)
    =
    \begin{cases}
    4\epsilon_{\rm LJ}
    \left[
        \left(\dfrac{\sigma}{r}\right)^{12}
        -
        \left(\dfrac{\sigma}{r}\right)^6
    \right]
    +\epsilon_{\rm LJ},
    & r < 2^{1/6}\sigma,
    \\[6pt]
    0,
    & r\geq 2^{1/6}\sigma.
    \end{cases}
    \label{eq:ising_fluid_soft_core}
\end{align}
In our experiments, we use
\begin{align}
    \sigma=\epsilon_J=\epsilon_{\rm LJ}=z_1=z_2=\lambda=1,
    \qquad
    H=0,
\end{align}
and parameterize the nonmagnetic attraction by
\begin{align}
    R=\frac{\epsilon_J}{\epsilon_I},
    \qquad
    \epsilon_I=\frac{1}{R}.
\end{align}
Under this parameterization,
\begin{align}
    J(r)
    &=
    \frac{\exp[-(r-1)]}{r},
    \\
    I(r)
    &=
    \frac{1}{R}\frac{\exp[-(r-1)]}{r}.
\end{align}
We then vary the value $R$ to obtain different targets.

We also found that this target might easily diffuse when $R$ is large. Therefore, we add a harmonic potential to avoid such diffusion:
\begin{align}
    U_h(\mathbf{r}) = k\frac12 \|\bar{r} - r_i\|^2
\end{align}
where we set $k=0.1$ and $\bar{r}$ represents the center of the system.

We visualize some samples in \Cref{fig:IsingFluid_55}.

We estimate for the 55-particle systems.
We estimated $\Delta F$ between $R=0.5$ and $R=1.0$, $R=0.5$ and $R=1.5$ as well as $R=0.5$ and $R=2.0$.
For the reference value, we draw samples from intermediate targets $R=0.5, 0.6,  \cdots, 1.9, 2.0$, and run MBAR \citep{shirts2008statistically}.

\subsubsection{Hyperparameters and Settings}

For simplicity of learning, we did not directly bridge between the two targets. Instead, we learn transport from a Gaussian and mask distribution to the target with coordinates and $\pm 1$ spins. 

\textbf{Network.}
We use a mixed continuous-discrete EGNN to jointly model the particle positions and spin variables.
The model takes as input the continuous particle positions, the discrete spin tokens, and the time variable.
The discrete state uses two output tokens corresponding to the two spin values.
For the 55-particle system, token embeddings have dimension $32$, and the time variable is embedded using a $32$-dimensional sinusoidal time embedding.
The EGNN backbone uses 8 layers with hidden dimension 96 and SiLU activations.
We also include explicit spin features ($\pm1$), consisting of the current spin value and a binary indicator for whether the token is masked.
Edges are fully connected, with squared pairwise distances used as edge features.
The model outputs both a continuous vector for the particle positions and discrete logits for the spin variables.

\textbf{Training details.}
When one side is Gaussian and mask, this means that the forward process has been already fixed.
We learn the backward process directly by generator matching or score matching.
Here, we train the model using the sum of a continuous denoising score-matching loss and a discrete cross-entropy loss.
For the continuous variables, we corrupt the data as
\begin{align}
    x_t = (1-t)x_0 + t\epsilon,
\end{align}
where \(\epsilon\) is centered Gaussian noise, and train the model to predict the target vector field \(\epsilon-x_0\).
For the discrete variables, we use a masking corruption process corresponding to \citet{shi2024simplified}.
At time \(t\), each token is kept with probability
\begin{align}
    \alpha_t = 1 - \cos\left(\frac{\pi}{2}(1-t)\right),
\end{align}
and otherwise replaced by the mask token.
The discrete loss is a weighted cross-entropy loss on the masked tokens, with weight
\begin{align}
    \pi \tan\left(\frac{\pi}{2}(1-t)\right).
\end{align}

All models are trained with AdamW using learning rate \(1\times 10^{-4}\), batch size 64, and no weight decay.
We train for 200,000  iterations with gradient clipping at norm 1.
We maintain an exponential moving average of the model parameters with decay 0.995 and use the EMA model for evaluation.
We also scale the target by a factor of $2$ to make learning more stable.

\textbf{Estimation details.}
For free-energy estimation, we use the BAR estimator in \Cref{eq:bar} with 2,000 held-out samples to estimate the free energy difference between the centered Gaussian particle positions and fully masked spin variables, to the target distribution.
We discretize the path using 300 time steps.
For the stochastic continuous updates used in work estimation, we directly obtain the score from vector field, and then set the noise scale to \(\sigma_t=0.5\).  
For the discrete part, following the discussion in \Cref{app:mask_dm_simple}, we directly use the posterior density.

The free energy difference between two states is given by the difference between their own free energy difference against the centered Gaussian with fully masked spin.

\subsection{Momentum-Augmented FEAT: Gaussian Mixture Models}

\subsubsection{System}

We next consider a continuous-state experiment on Gaussian mixture models (GMMs).
The task is to estimate the free energy difference between two 40-dimensional GMM targets.
The first target contains 40 Gaussian components, while the second target contains 16 Gaussian components.
Both targets use location scaling 2.0 and log-variance scaling \(-3.0\), with different random seeds for the two mixtures.

Here, we transport the joint state
\begin{align}
    z_t = (x_t, y_t) \in \mathbb{R}^{40} \times \mathbb{R}^{40},
\end{align}
where \(y_t\) plays the role of momentum.

\subsubsection{Hyperparameters and Settings}

\textbf{Network.} We parameterize two neural networks.
The first network learns a momentum-space score \(s_\theta(x_t,y_t,t)\), and the second network learns a vector field \(v_\phi(x_t,y_t,t)\).
Both networks are MLPs with sinusoidal time embeddings, 5 hidden layers, hidden width 400, GELU activations.
The networks take the concatenated state \((x_t,y_t)\) and time \(t\) as input and output a 40-dimensional vector.

\textbf{Training details.} We use Adam to optimize both networks jointly.
We maintain an exponential moving average of both networks with decay rate of 0.995 and use the EMA networks for evaluation.
Gradients are clipped to norm 5.

\textbf{Estimation Details.} For free energy estimation, we simulate forward and backward trajectories in the augmented state space using an OA integrator.
We use 5,000 samples for evaluation.

For the 1D visualization on \Cref{fig:underdamped_vis_1D}, we use the same setting, but apply it to GMM in 1D.

\section{Licenses for Existing Assets}\label{licence}
Momentum FEAT was modified from FEAT's codebase: \url{https://github.com/jiajunhe98/FEAT}, released under the MIT License.

 CTMC FEAT was modified from \url{https://github.com/J-zin/DNFS}, and the computation of the reference value is adapted from \url{https://github.com/ml-jku/DiffUCO/blob/main/IsingTheoryBaselines/IsingTheory.py}.
Both are released under the MIT License.

\section{Visualizations and Additional Results}

\subsection{CTMC FEAT on Ising Models}

\begin{figure}[H]
    \centering
    \begin{minipage}{\linewidth}
        \begin{minipage}{0.03\linewidth}
            \centering
            \vspace{3mm}
            \rotatebox{90}{\small Data}
        \end{minipage}
        \begin{minipage}{0.95\linewidth}
            \centering
            \begin{minipage}{0.32\linewidth}
                \centering
                {\small Ising 15x15}
                \includegraphics[width=1.\linewidth]{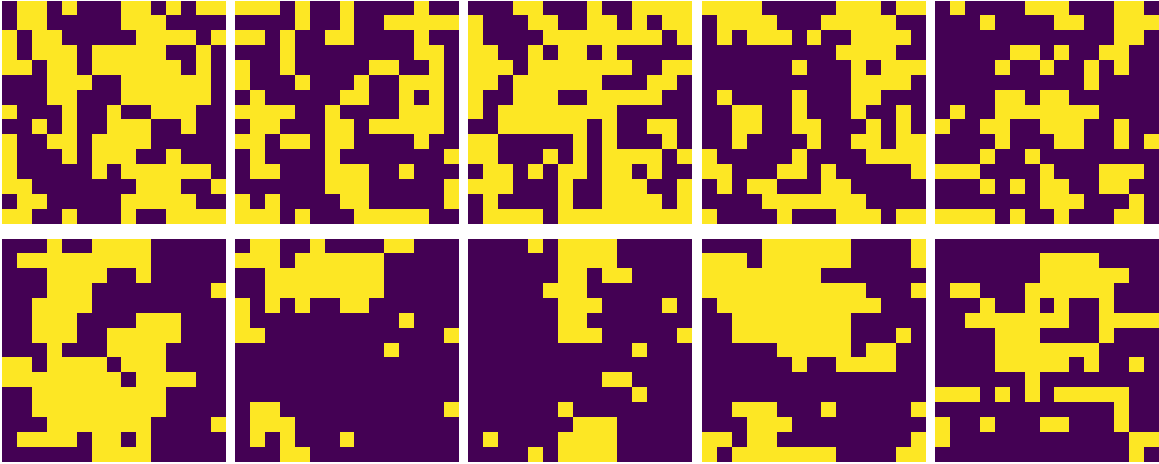}
            \end{minipage}
            \hfill
            \begin{minipage}{0.32\linewidth}
                \centering
                {\small Ising 25x25}
                \includegraphics[width=1.\linewidth]{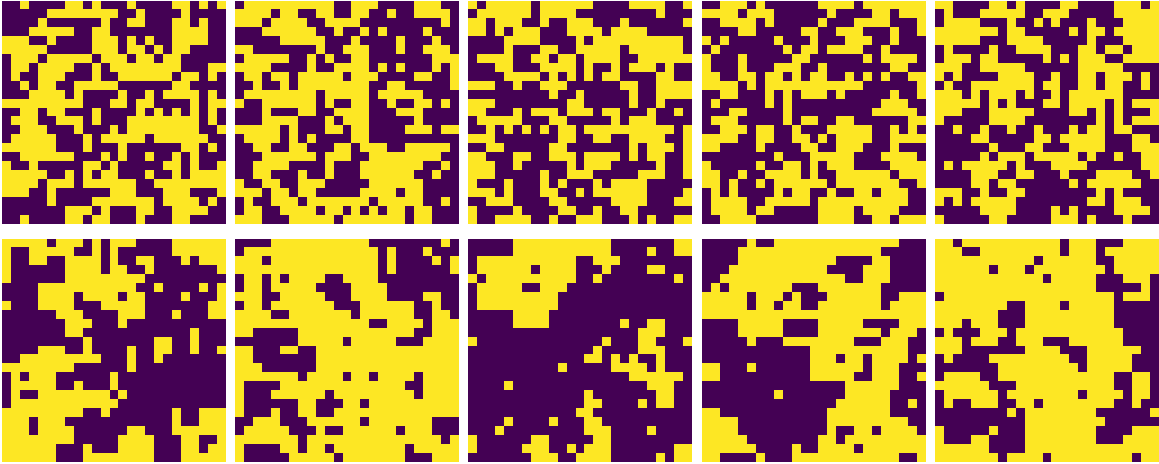}
            \end{minipage}
            \hfill
            \begin{minipage}{0.32\linewidth}
                \centering
                {\small Ising 32x32}
                \includegraphics[width=1.\linewidth]{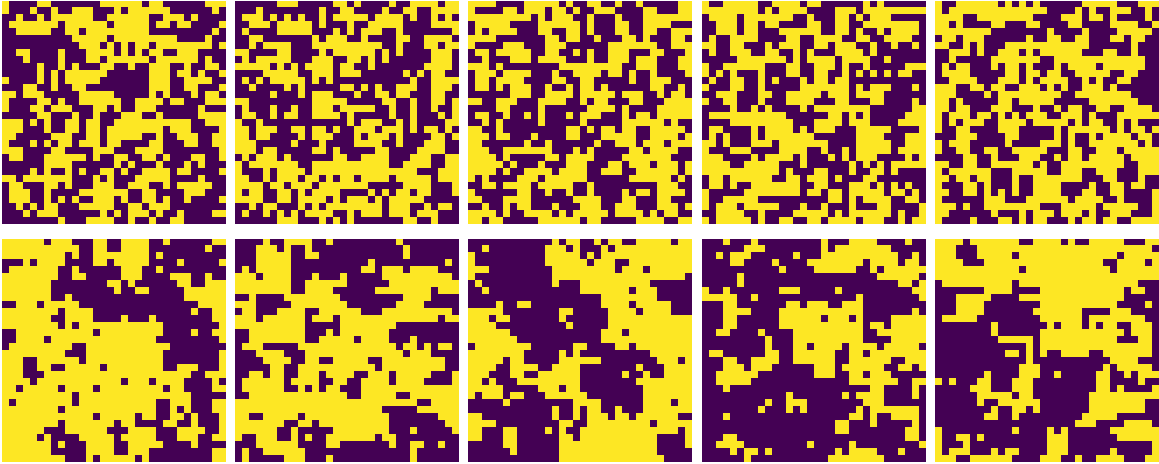}
            \end{minipage}
        \end{minipage}
    \end{minipage} \\
    \vspace{1mm}
    \begin{minipage}{\linewidth}
        \begin{minipage}{0.03\linewidth}
            \centering
            \rotatebox{90}{\small DFM}
        \end{minipage}
        \begin{minipage}{0.95\linewidth}
            \centering
            \begin{minipage}{0.32\linewidth}
                \centering
                \includegraphics[width=1.\linewidth]{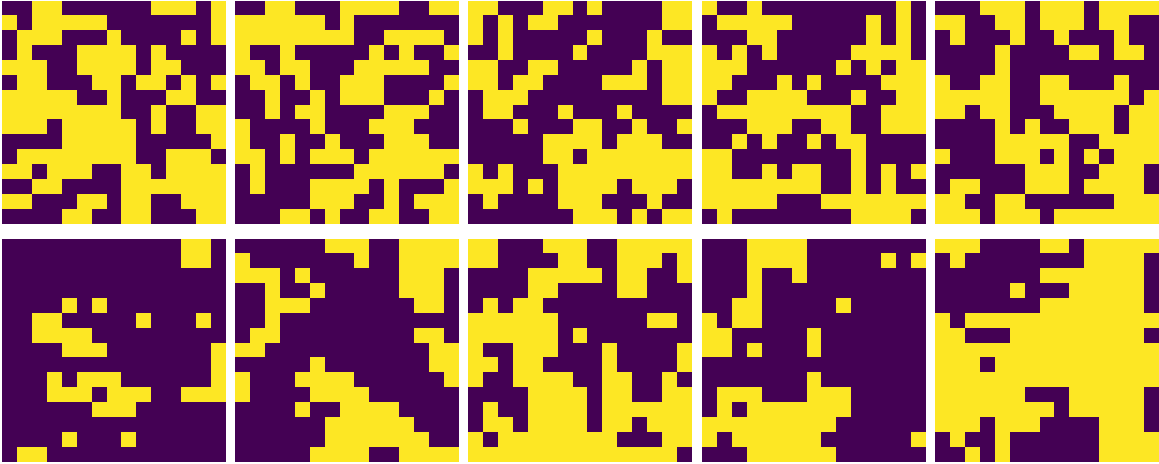}
            \end{minipage}
            \hfill
            \begin{minipage}{0.32\linewidth}
                \centering
                \includegraphics[width=1.\linewidth]{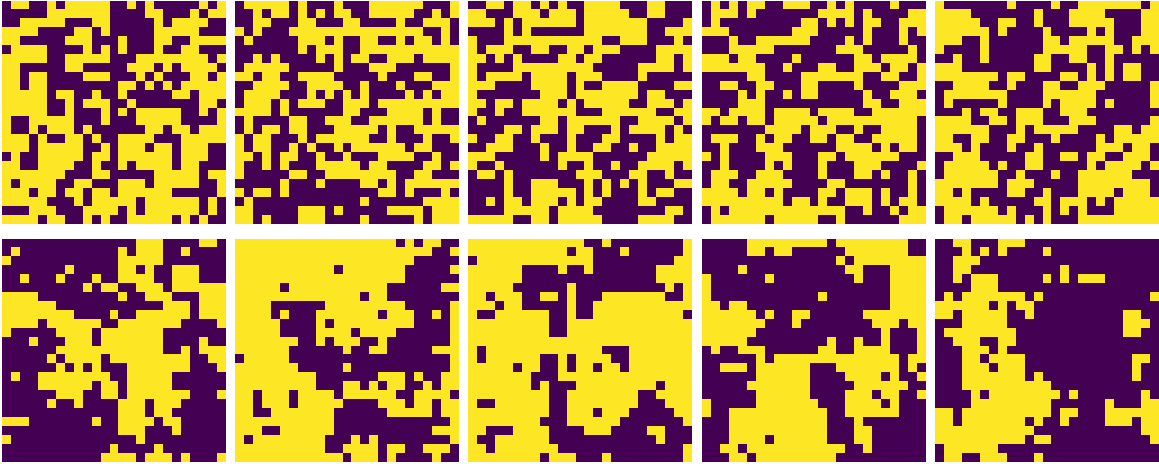}
            \end{minipage}
            \hfill
            \begin{minipage}{0.32\linewidth}
                \centering
                \includegraphics[width=1.\linewidth]{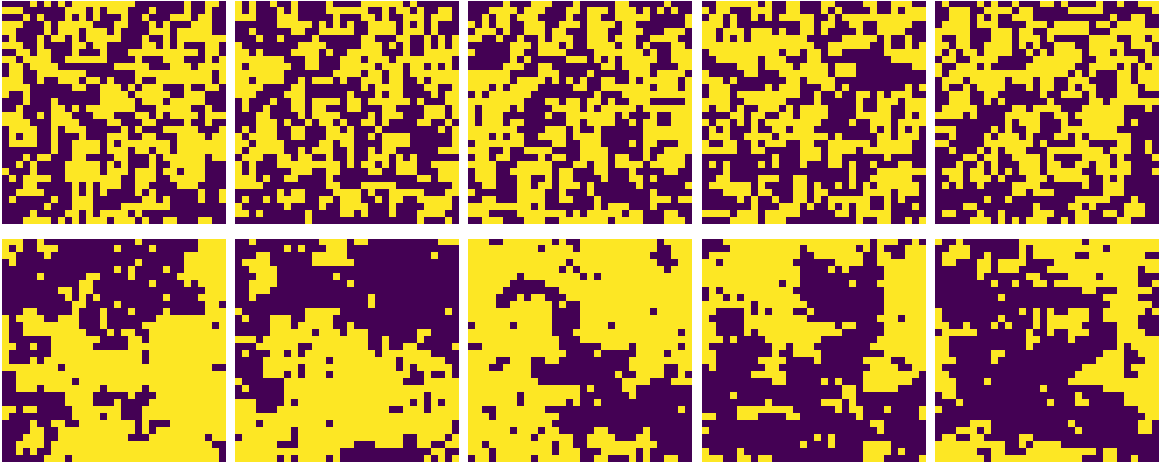}
            \end{minipage}
        \end{minipage}
    \end{minipage} \\
    \vspace{1mm}
    \begin{minipage}{\linewidth}
        \begin{minipage}{0.03\linewidth}
            \centering
            \rotatebox{90}{\small GM-no mask}
        \end{minipage}
        \begin{minipage}{0.95\linewidth}
            \centering
            \begin{minipage}{0.32\linewidth}
                \centering
                \includegraphics[width=1.\linewidth]{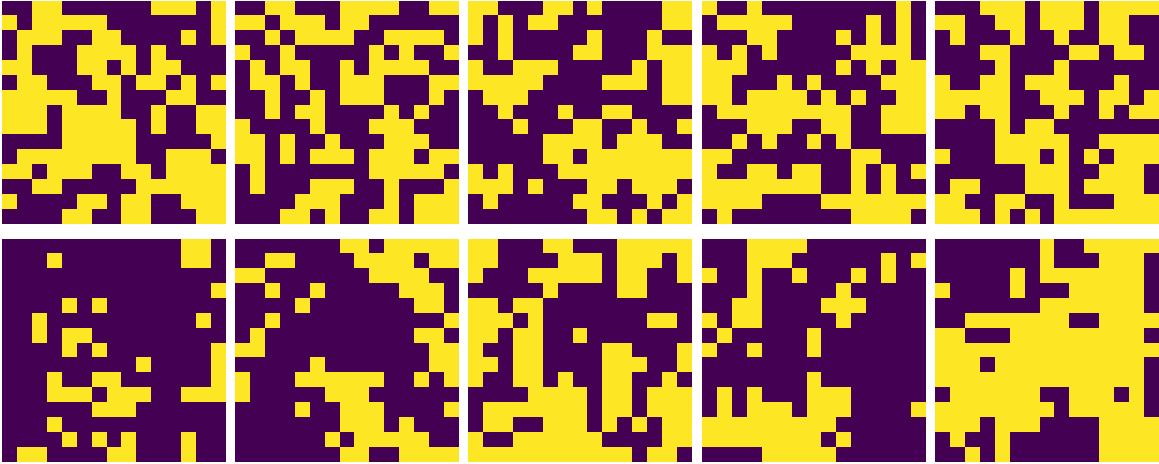}
            \end{minipage}
            \hfill
            \begin{minipage}{0.32\linewidth}
                \centering
                \includegraphics[width=1.\linewidth]{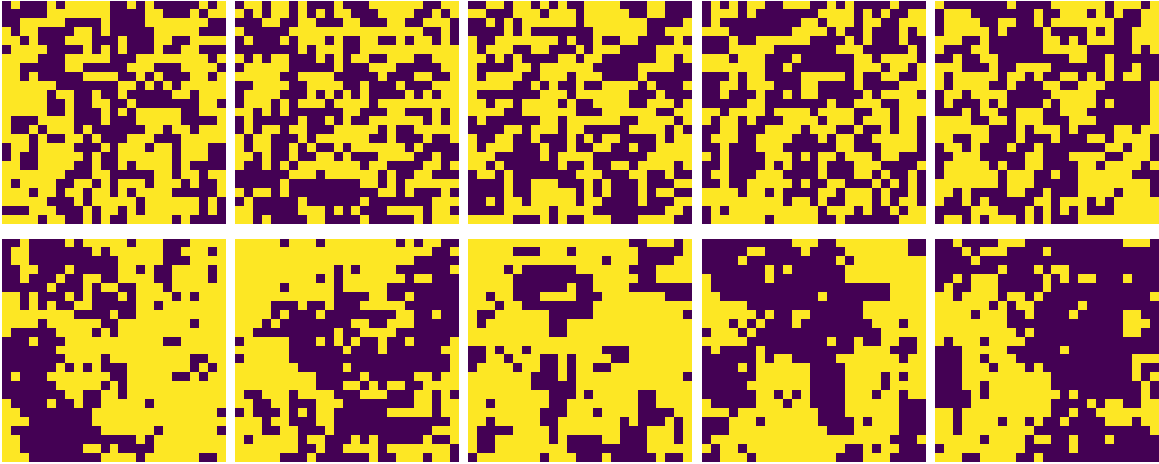}
            \end{minipage}
            \hfill
            \begin{minipage}{0.32\linewidth}
                \centering
                \includegraphics[width=1.\linewidth]{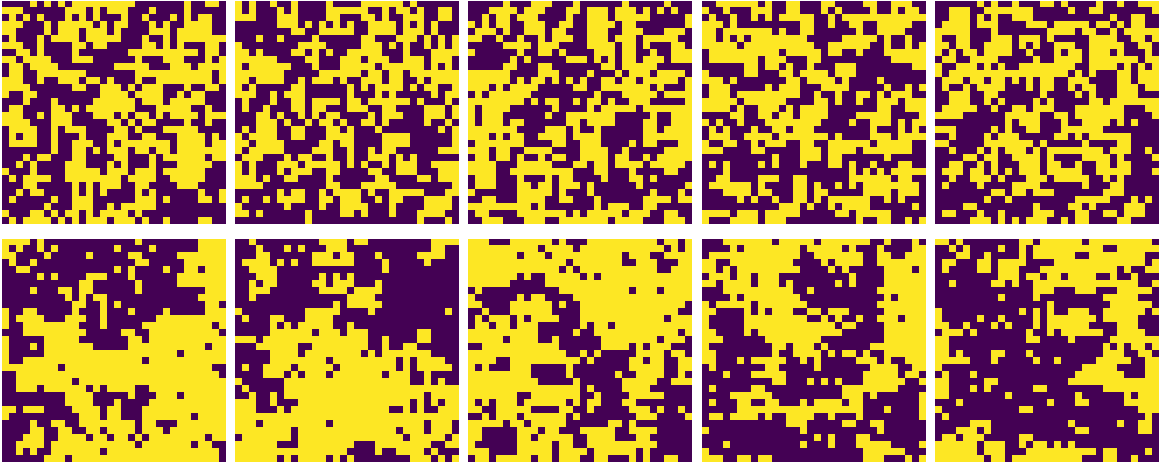}
            \end{minipage}
        \end{minipage}
    \end{minipage} \\
    \vspace{1mm}
    \begin{minipage}{\linewidth}
        \begin{minipage}{0.03\linewidth}
            \centering
            \rotatebox{90}{\small GM-w. mask}
        \end{minipage}
        \begin{minipage}{0.95\linewidth}
            \centering
            \begin{minipage}{0.32\linewidth}
                \centering
                \includegraphics[width=1.\linewidth]{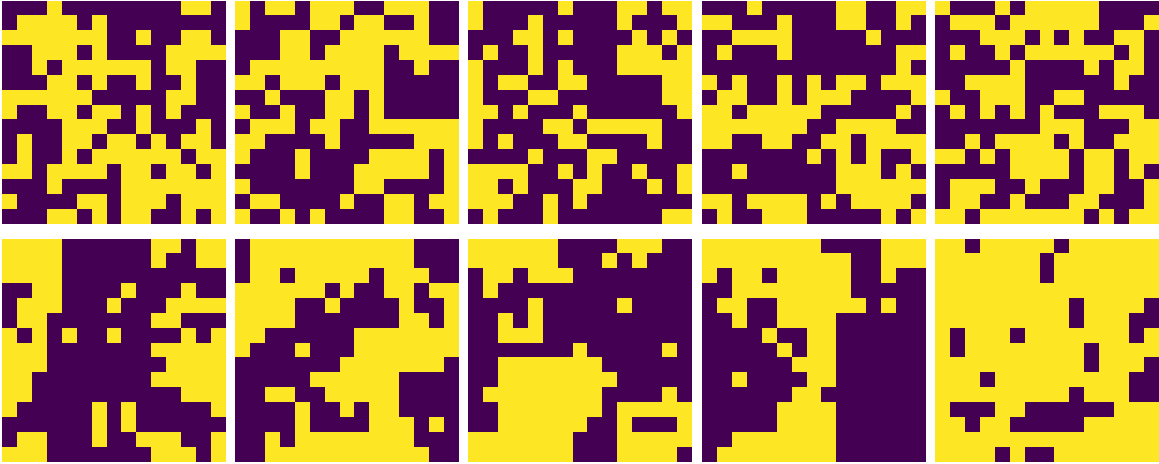}
            \end{minipage}
            \hfill
            \begin{minipage}{0.32\linewidth}
                \centering
                \includegraphics[width=1.\linewidth]{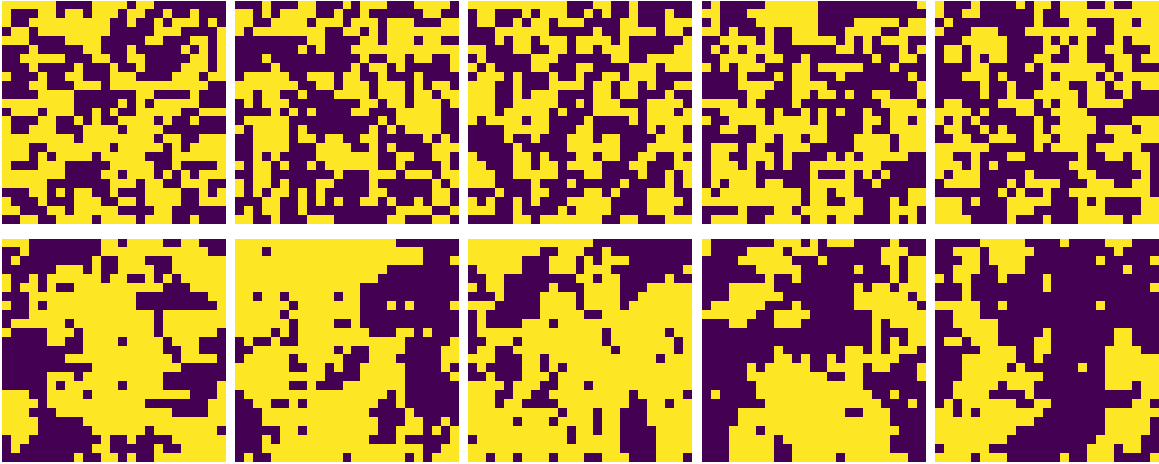}
            \end{minipage}
            \hfill
            \begin{minipage}{0.32\linewidth}
                \centering
                \includegraphics[width=1.\linewidth]{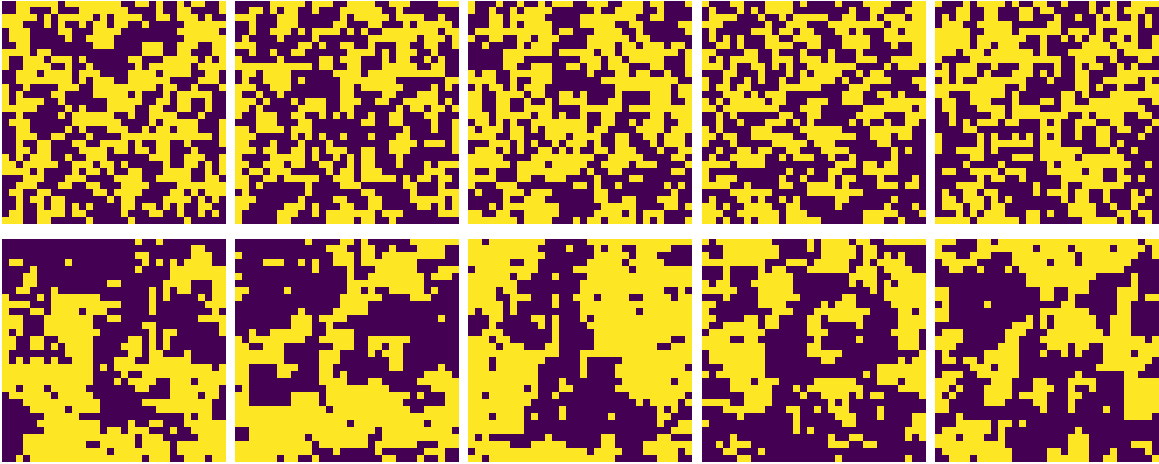}
            \end{minipage}
        \end{minipage}
    \end{minipage}
    \caption{Generated samples from CTMC FEAT variants vs.\ ground truth on Ising model 
    transport ($\beta = 0.2 \leftrightarrow 0.4$) across different lattice sizes. For each method, the two sub-rows show samples from backward and forward transport, both initialized from test data.}
    \label{fig:ising_samples_.2_.4}
\end{figure}

\begin{figure}[H]
    \centering
    \begin{minipage}{\linewidth}
        \begin{minipage}{0.03\linewidth}
            \centering
            \vspace{3mm}
            \rotatebox{90}{\small Data}
        \end{minipage}
        \begin{minipage}{0.95\linewidth}
            \centering
            \begin{minipage}{0.32\linewidth}
                \centering
                {\small Ising 15x15}
                \includegraphics[width=1.\linewidth]{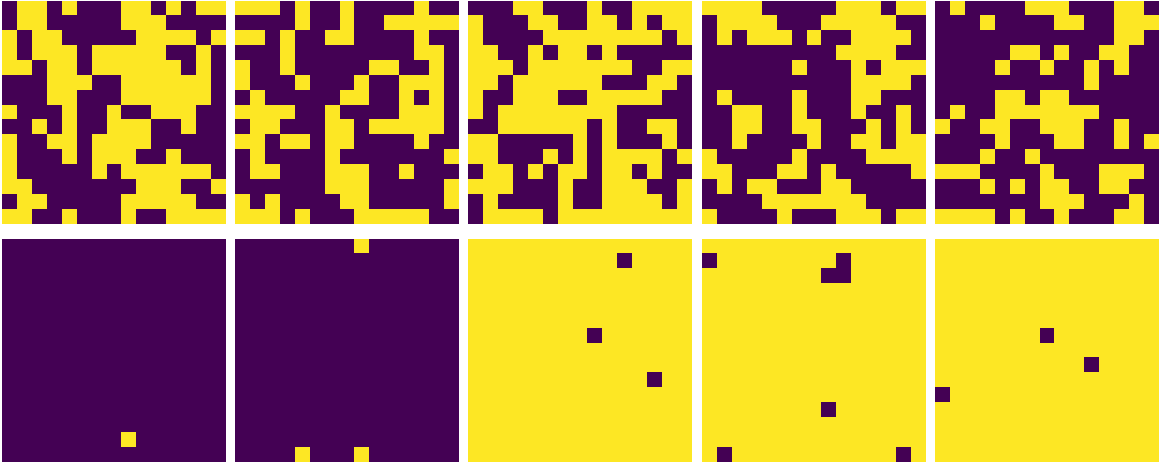}
            \end{minipage}
            \hfill
            \begin{minipage}{0.32\linewidth}
                \centering
                {\small Ising 25x25}
                \includegraphics[width=1.\linewidth]{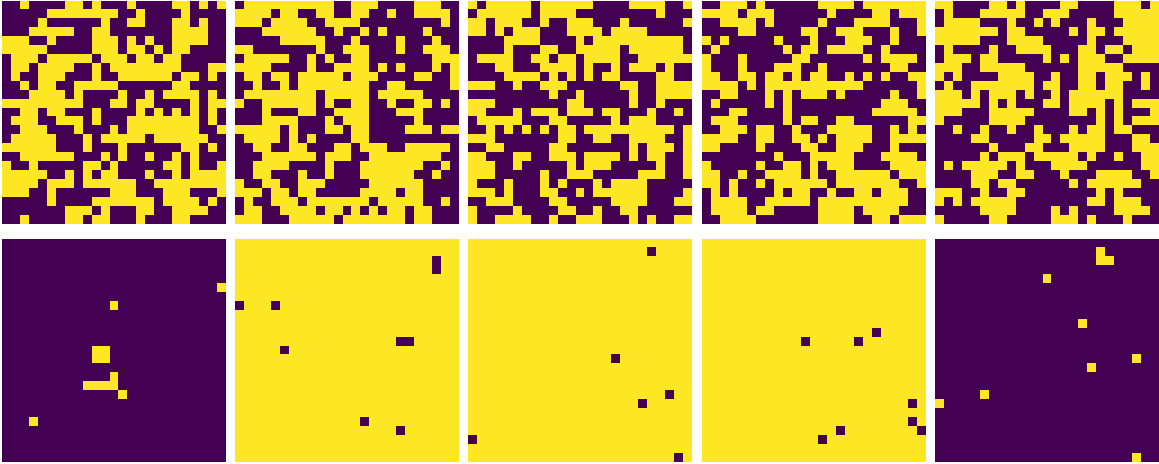}
            \end{minipage}
            \hfill
            \begin{minipage}{0.32\linewidth}
                \centering
                {\small Ising 32x32}
                \includegraphics[width=1.\linewidth]{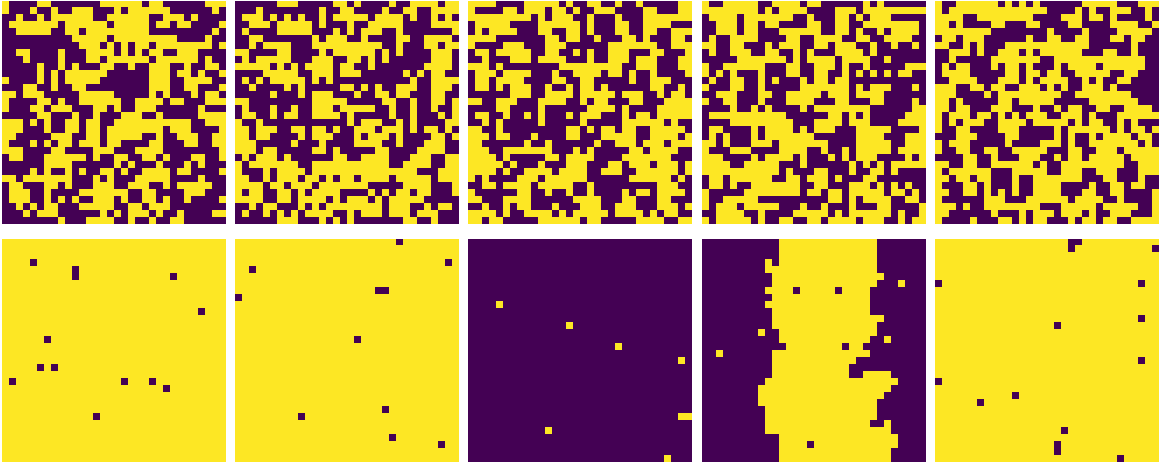}
            \end{minipage}
        \end{minipage}
    \end{minipage} \\
    \vspace{1mm}
    \begin{minipage}{\linewidth}
        \begin{minipage}{0.03\linewidth}
            \centering
            \rotatebox{90}{\small DFM}
        \end{minipage}
        \begin{minipage}{0.95\linewidth}
            \centering
            \begin{minipage}{0.32\linewidth}
                \centering
                \includegraphics[width=1.\linewidth]{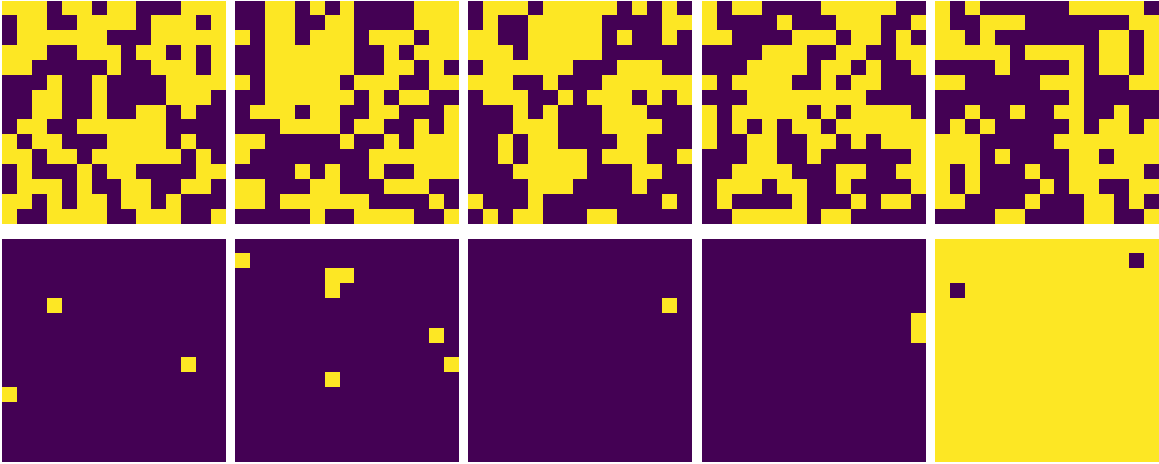}
            \end{minipage}
            \hfill
            \begin{minipage}{0.32\linewidth}
                \centering
                \includegraphics[width=1.\linewidth]{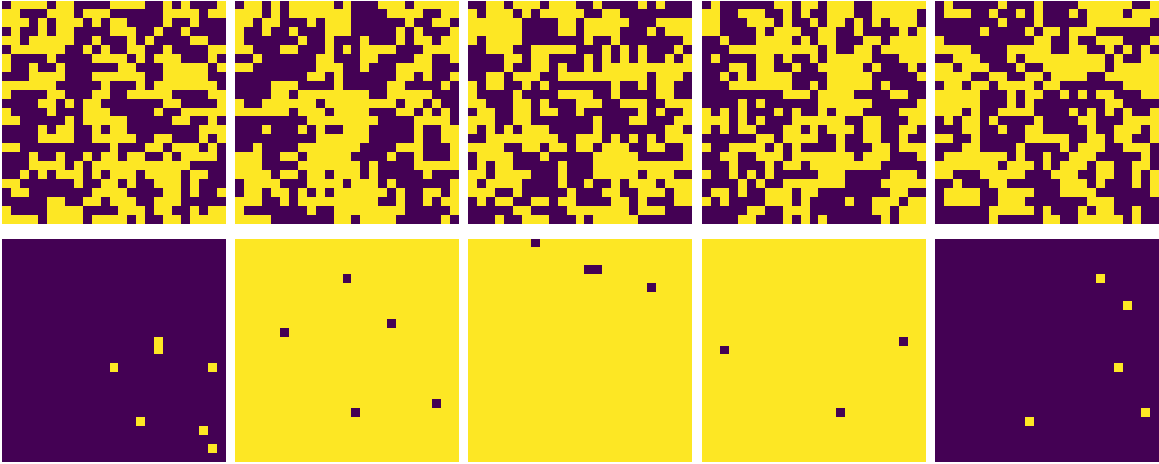}
            \end{minipage}
            \hfill
            \begin{minipage}{0.32\linewidth}
                \centering
                \includegraphics[width=1.\linewidth]{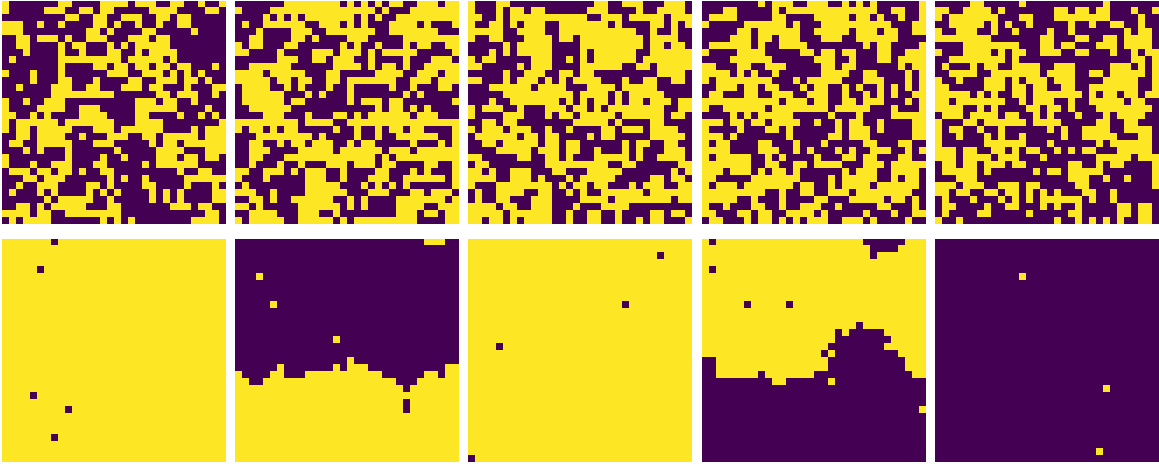}
            \end{minipage}
        \end{minipage}
    \end{minipage} \\
    \vspace{1mm}
    \begin{minipage}{\linewidth}
        \begin{minipage}{0.03\linewidth}
            \centering
            \rotatebox{90}{\small GM-no mask}
        \end{minipage}
        \begin{minipage}{0.95\linewidth}
            \centering
            \begin{minipage}{0.32\linewidth}
                \centering
                \includegraphics[width=1.\linewidth]{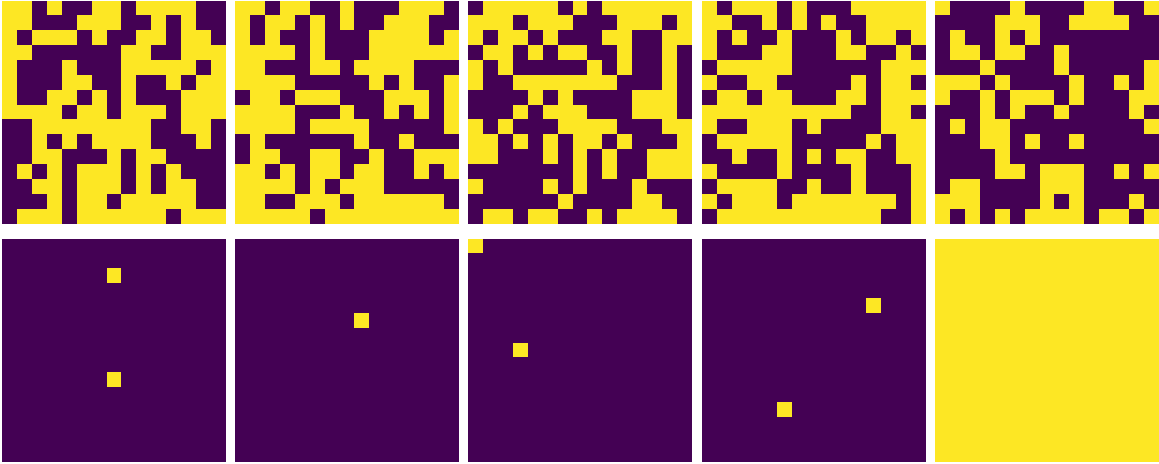}
            \end{minipage}
            \hfill
            \begin{minipage}{0.32\linewidth}
                \centering
                \includegraphics[width=1.\linewidth]{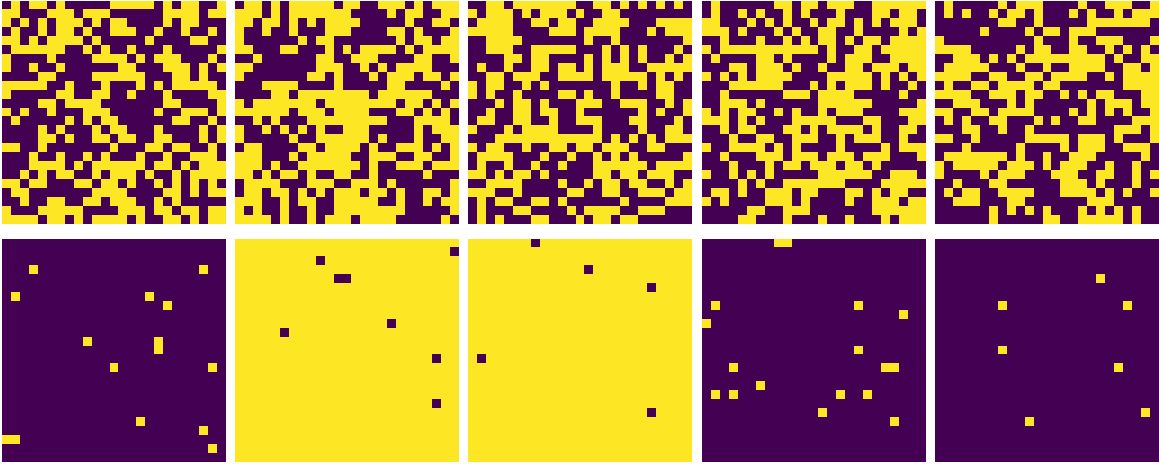}
            \end{minipage}
            \hfill
            \begin{minipage}{0.32\linewidth}
                \centering
                \includegraphics[width=1.\linewidth]{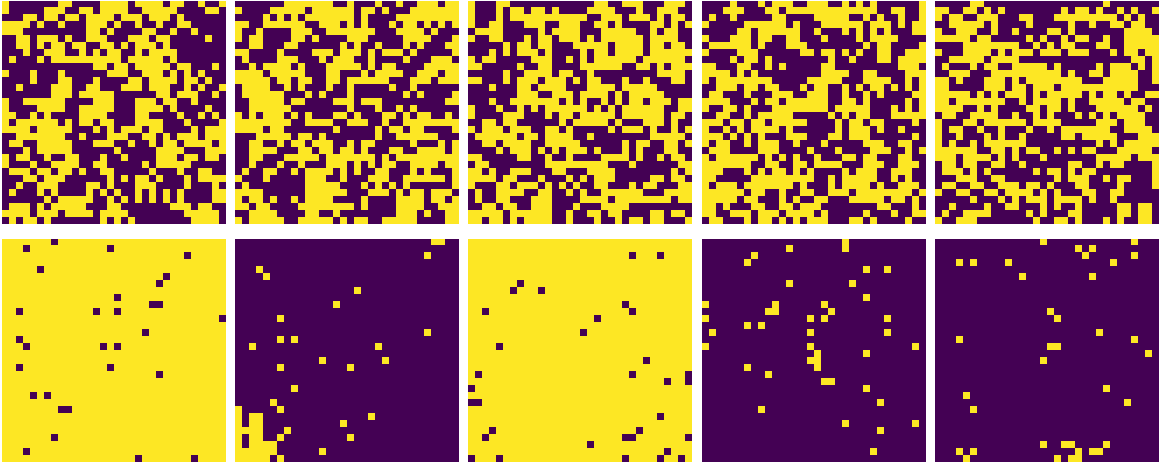}
            \end{minipage}
        \end{minipage}
    \end{minipage} \\
    \vspace{1mm}
    \begin{minipage}{\linewidth}
        \begin{minipage}{0.03\linewidth}
            \centering
            \rotatebox{90}{\small GM-w. mask}
        \end{minipage}
        \begin{minipage}{0.95\linewidth}
            \centering
            \begin{minipage}{0.32\linewidth}
                \centering
                \includegraphics[width=1.\linewidth]{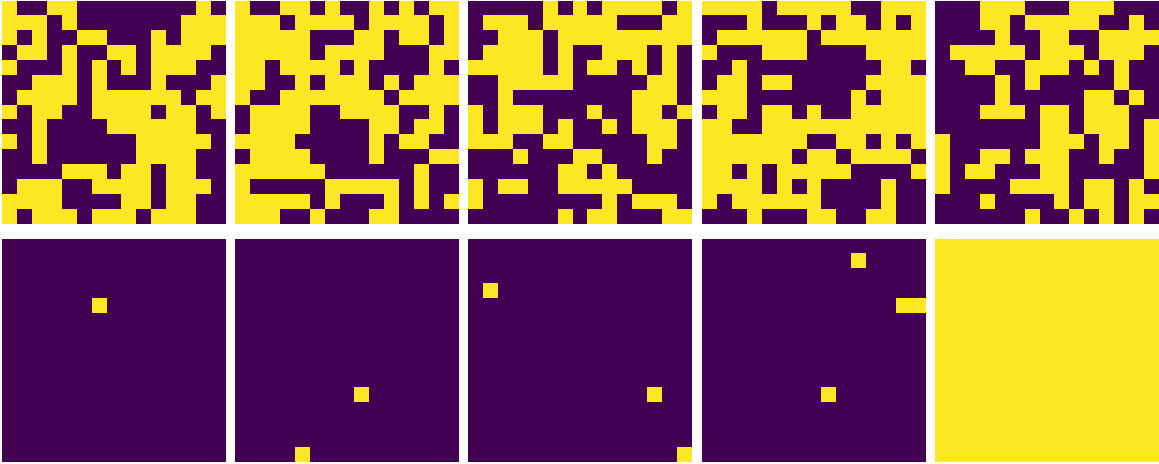}
            \end{minipage}
            \hfill
            \begin{minipage}{0.32\linewidth}
                \centering
                \includegraphics[width=1.\linewidth]{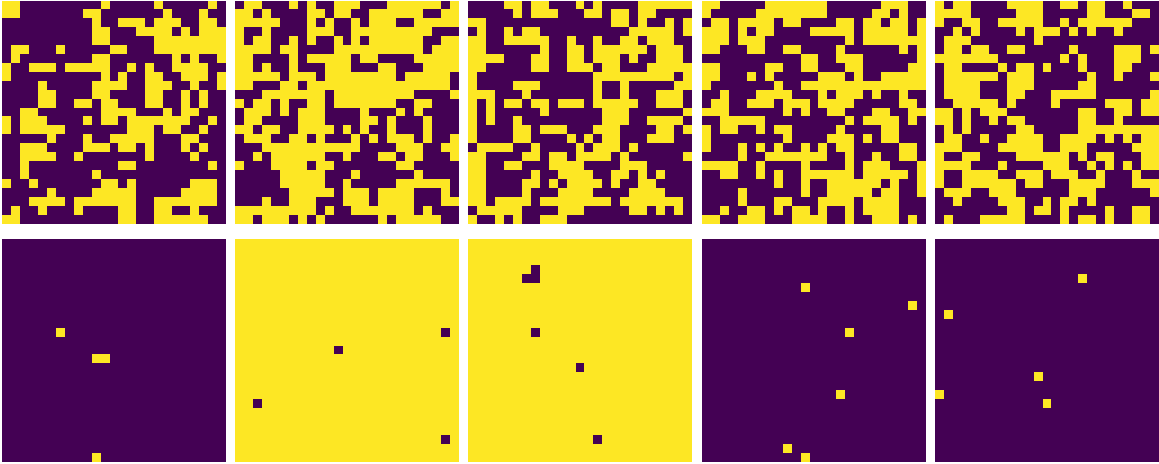}
            \end{minipage}
            \hfill
            \begin{minipage}{0.32\linewidth}
                \centering
                \includegraphics[width=1.\linewidth]{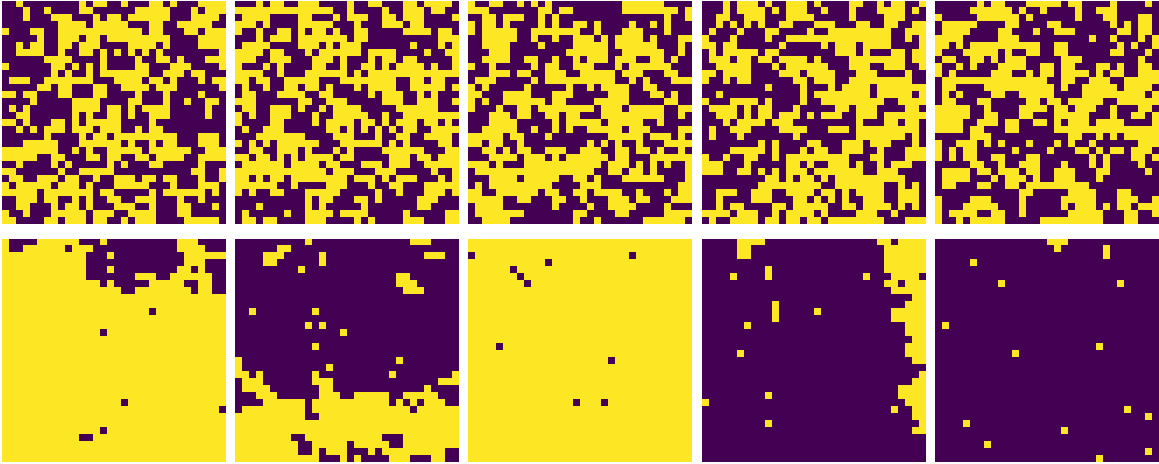}
            \end{minipage}
        \end{minipage}
    \end{minipage}
    \caption{Generated samples from CTMC FEAT variants vs.\ ground truth on Ising model 
    transport ($\beta = 0.2 \leftrightarrow 0.6$) across different lattice sizes. For each method, the two sub-rows show samples from backward and forward transport, both initialized from test data.}
    \label{fig:ising_samples_.2_.6}
\end{figure}

\begin{figure}[H]
    \centering
    \begin{minipage}{\linewidth}
        \begin{minipage}{0.03\linewidth}
            \centering
            \vspace{3mm}
            \rotatebox{90}{\small DFM}
        \end{minipage}
        \begin{minipage}{0.95\linewidth}
            \centering
            \begin{minipage}{0.32\linewidth}
                \centering
                \hspace{3mm} {\small Ising 15x15}
                \includegraphics[width=1.\linewidth]{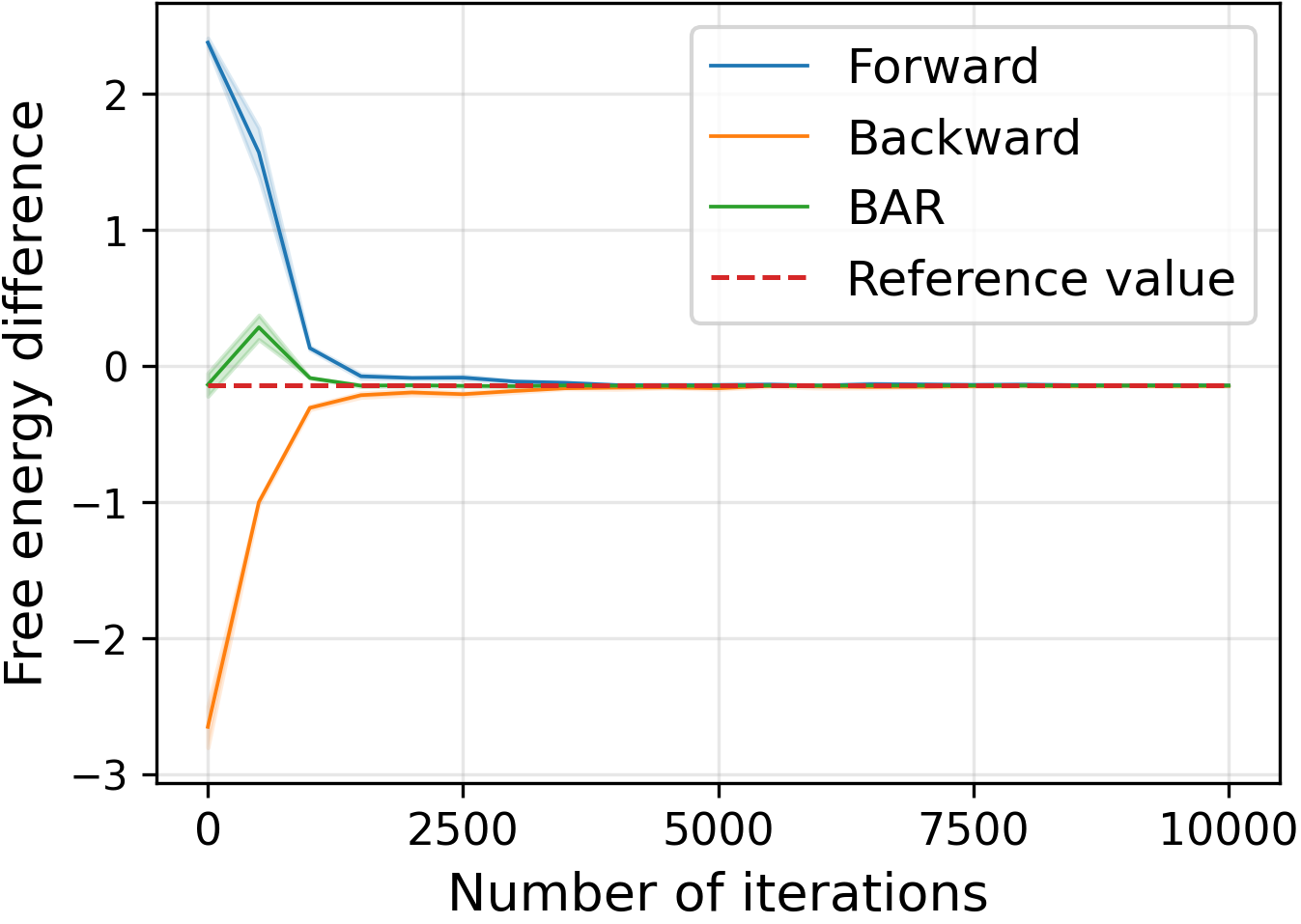}
            \end{minipage}
            \hfill
            \begin{minipage}{0.32\linewidth}
                \centering
                \hspace{3mm} {\small Ising 25x25}
                \includegraphics[width=1.\linewidth]{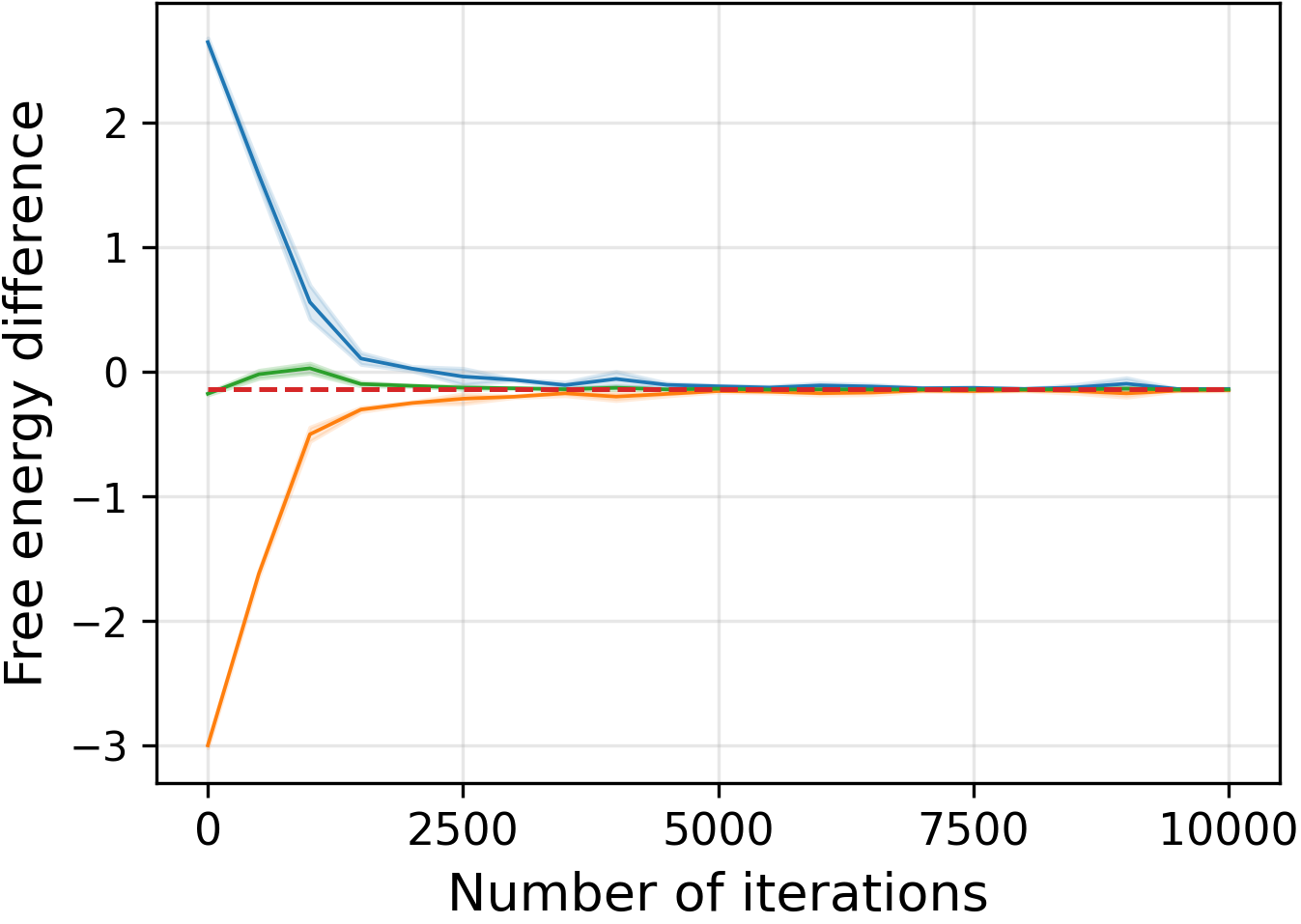}
            \end{minipage}
            \hfill
            \begin{minipage}{0.32\linewidth}
                \centering
                \hspace{3mm} {\small Ising 32x32}
                \includegraphics[width=1.\linewidth]{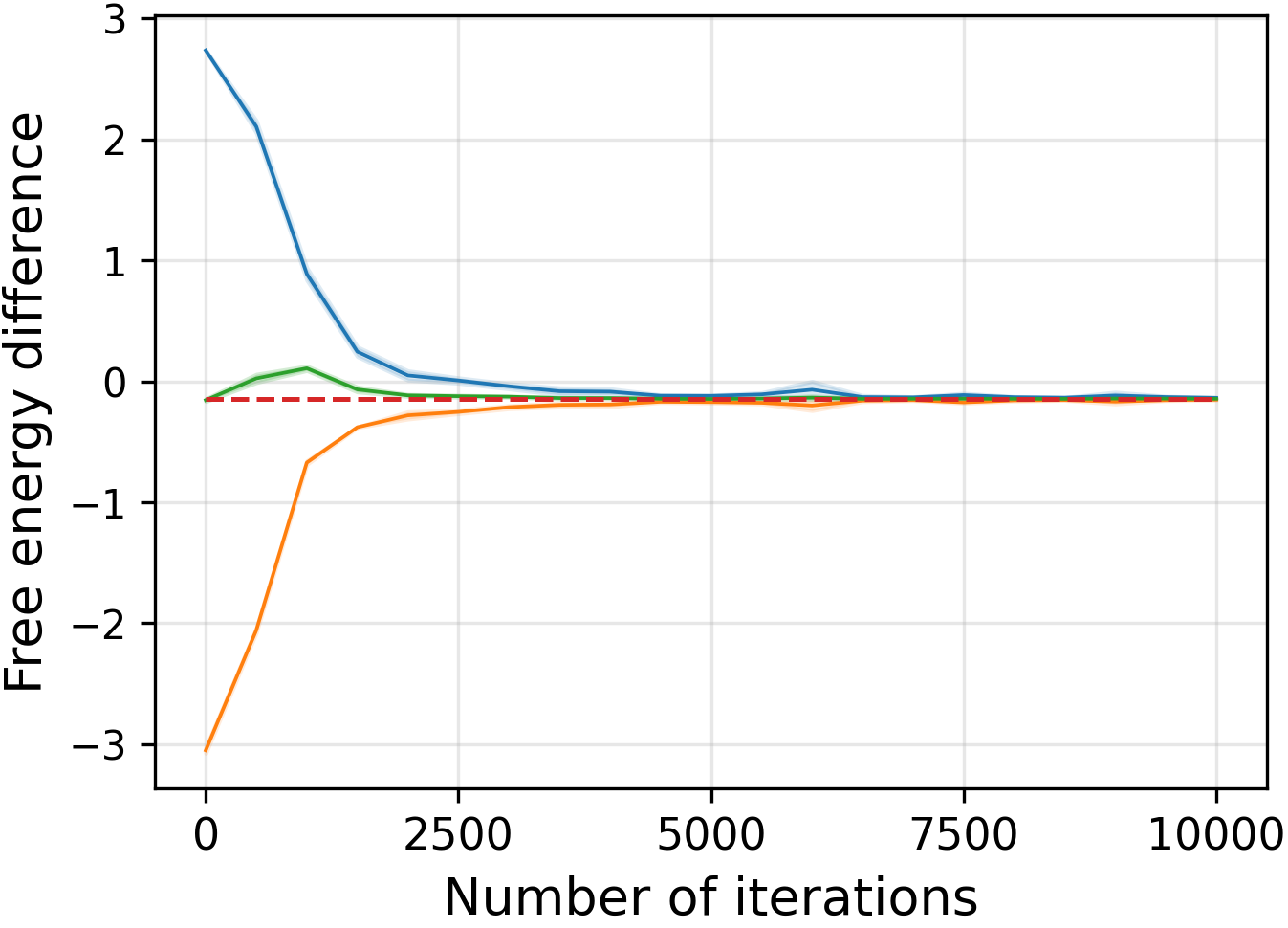}
            \end{minipage}
        \end{minipage}
    \end{minipage}
    \begin{minipage}{\linewidth}
        \begin{minipage}{0.03\linewidth}
            \centering
            \rotatebox{90}{\small GM-no mask}
        \end{minipage}
        \begin{minipage}{0.95\linewidth}
            \centering
            \begin{minipage}{0.32\linewidth}
                \centering
                \includegraphics[width=1.\linewidth]{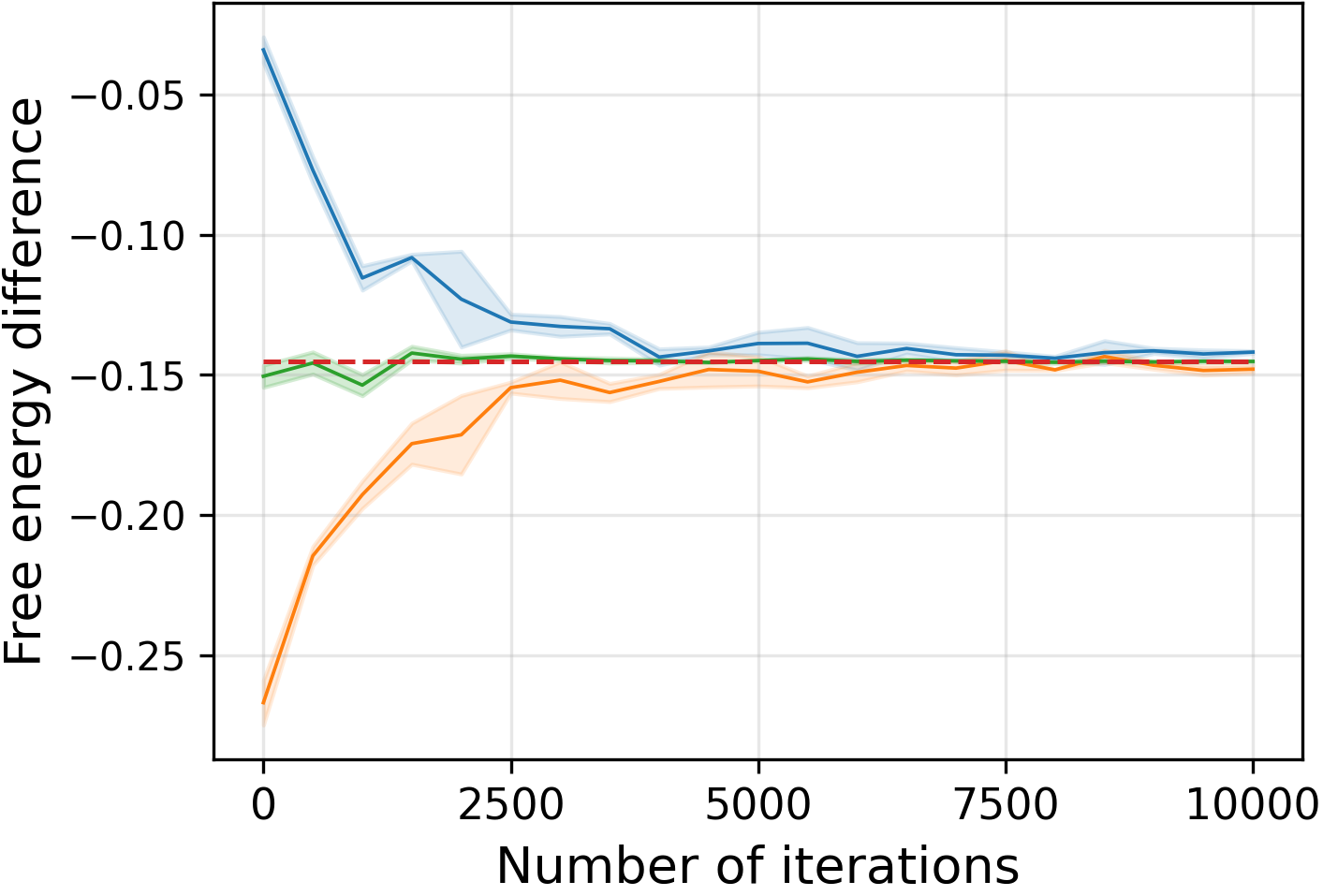}
            \end{minipage}
            \hfill
            \begin{minipage}{0.32\linewidth}
                \centering
                \includegraphics[width=1.\linewidth]{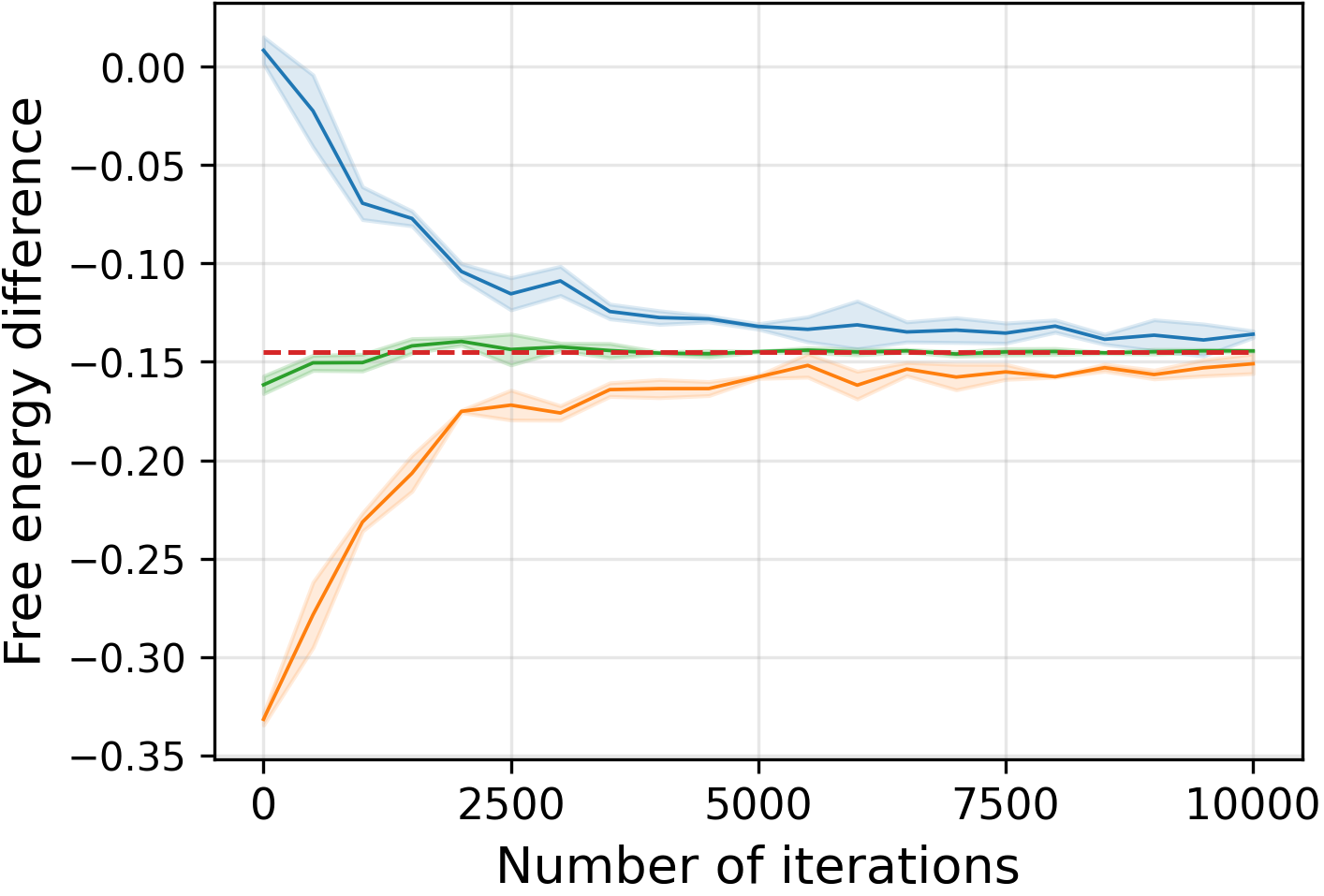}
            \end{minipage}
            \hfill
            \begin{minipage}{0.32\linewidth}
                \centering
                \includegraphics[width=1.\linewidth]{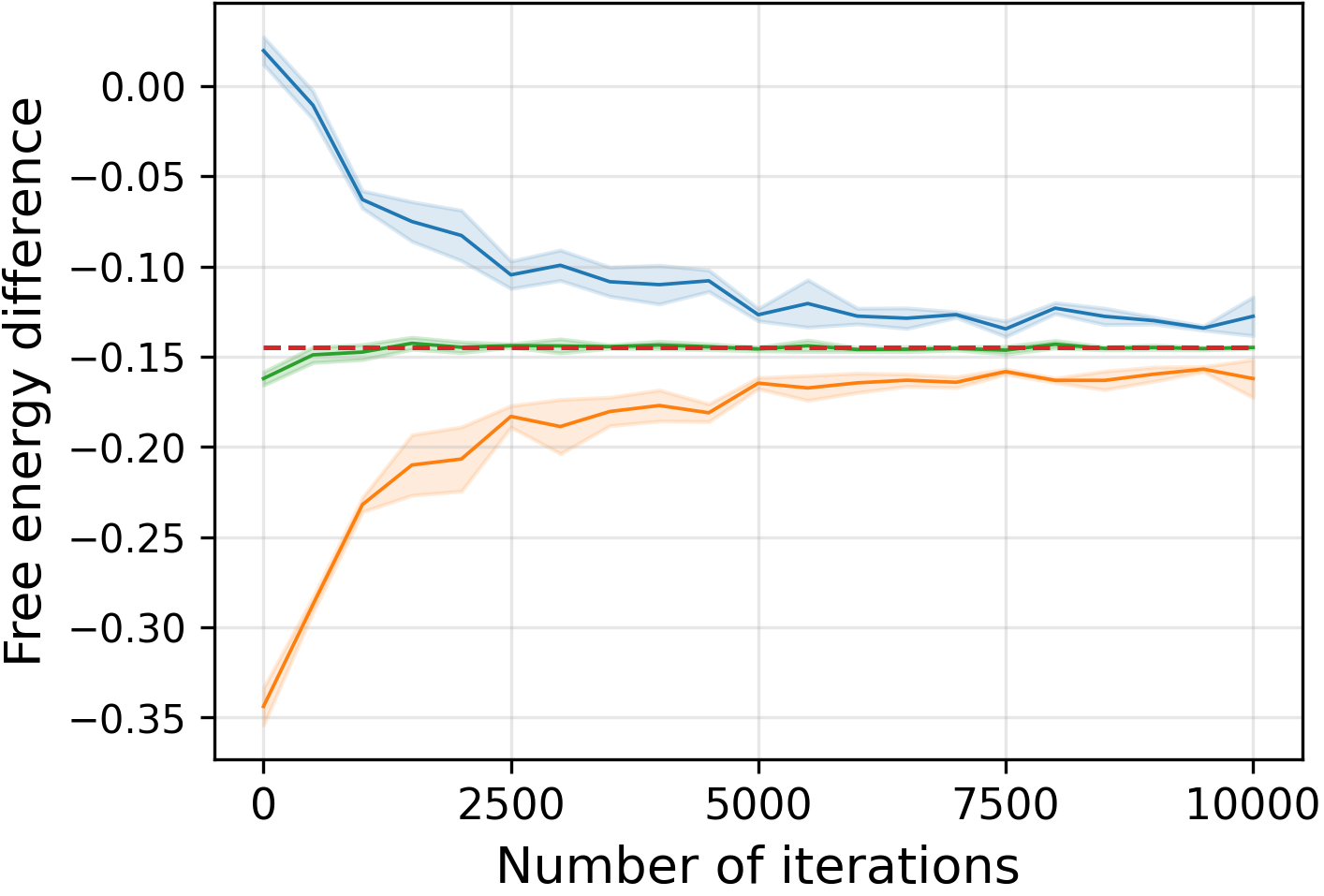}
            \end{minipage}
        \end{minipage}
    \end{minipage}
    \begin{minipage}{\linewidth}
        \begin{minipage}{0.03\linewidth}
            \centering
            \rotatebox{90}{\small GM-w. mask}
        \end{minipage}
        \begin{minipage}{0.95\linewidth}
            \centering
            \begin{minipage}{0.32\linewidth}
                \centering
                \includegraphics[width=1.\linewidth]{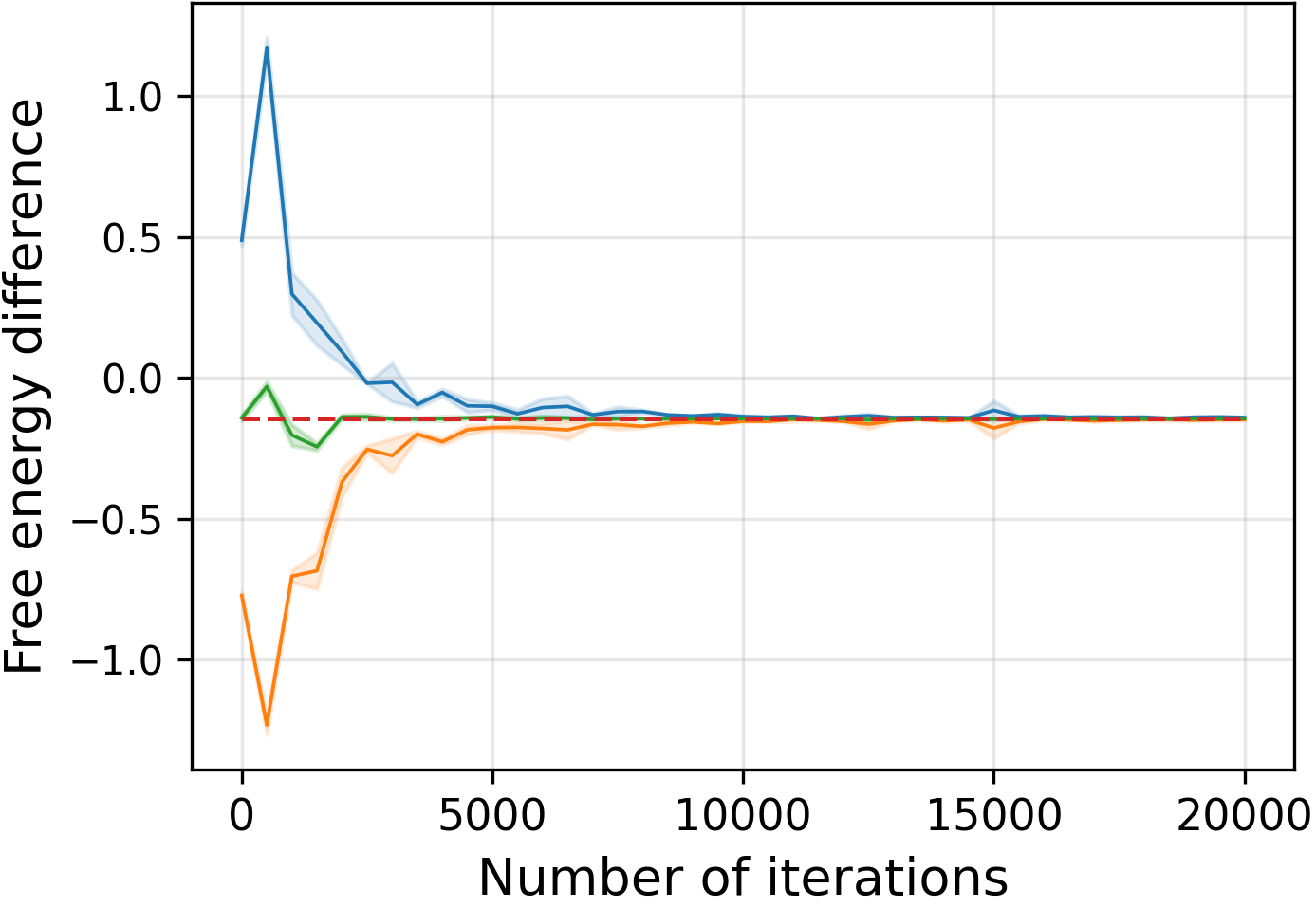}
            \end{minipage}
            \hfill
            \begin{minipage}{0.32\linewidth}
                \centering
                \includegraphics[width=1.\linewidth]{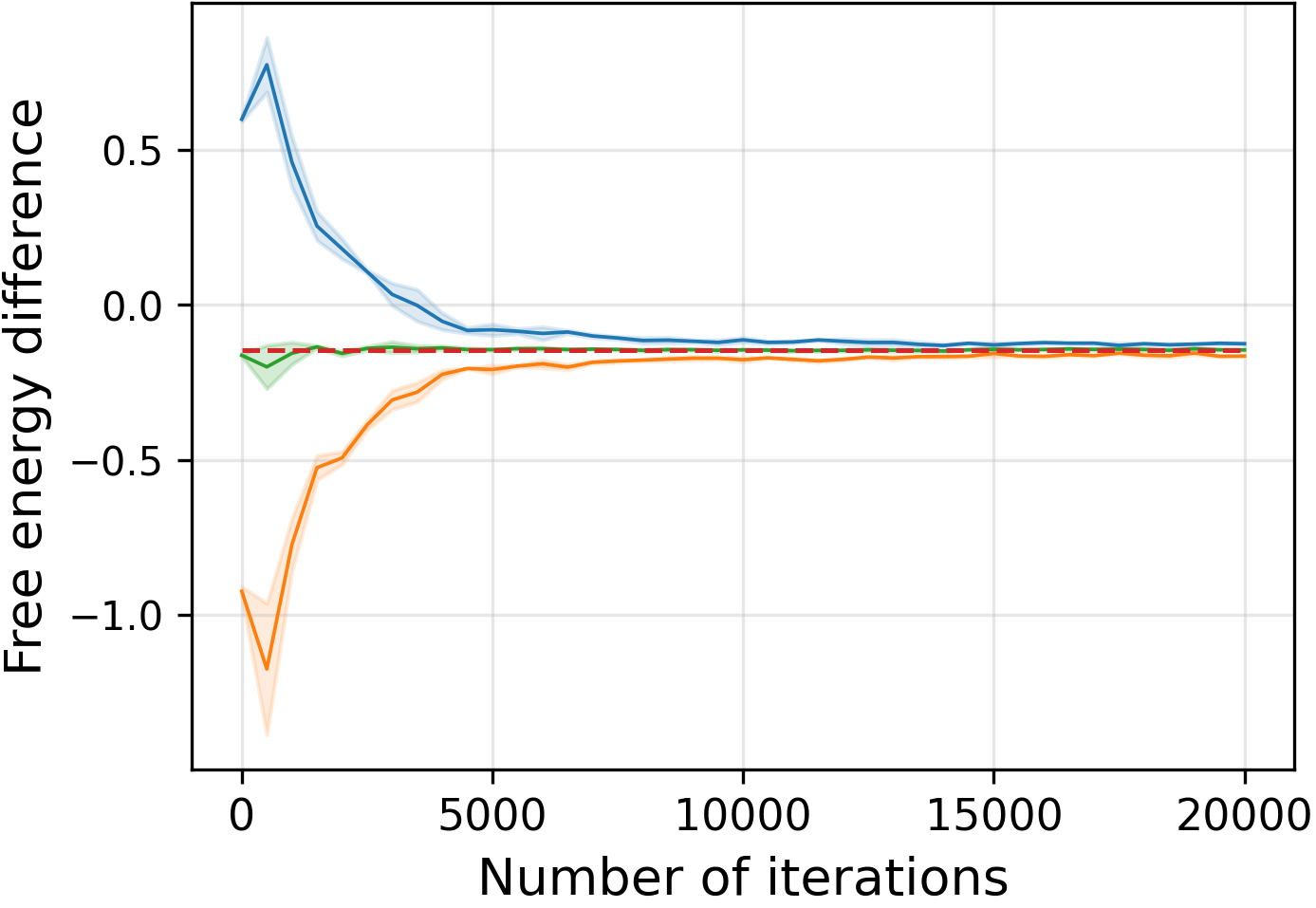}
            \end{minipage}
            \hfill
            \begin{minipage}{0.32\linewidth}
                \centering
                \includegraphics[width=1.\linewidth]{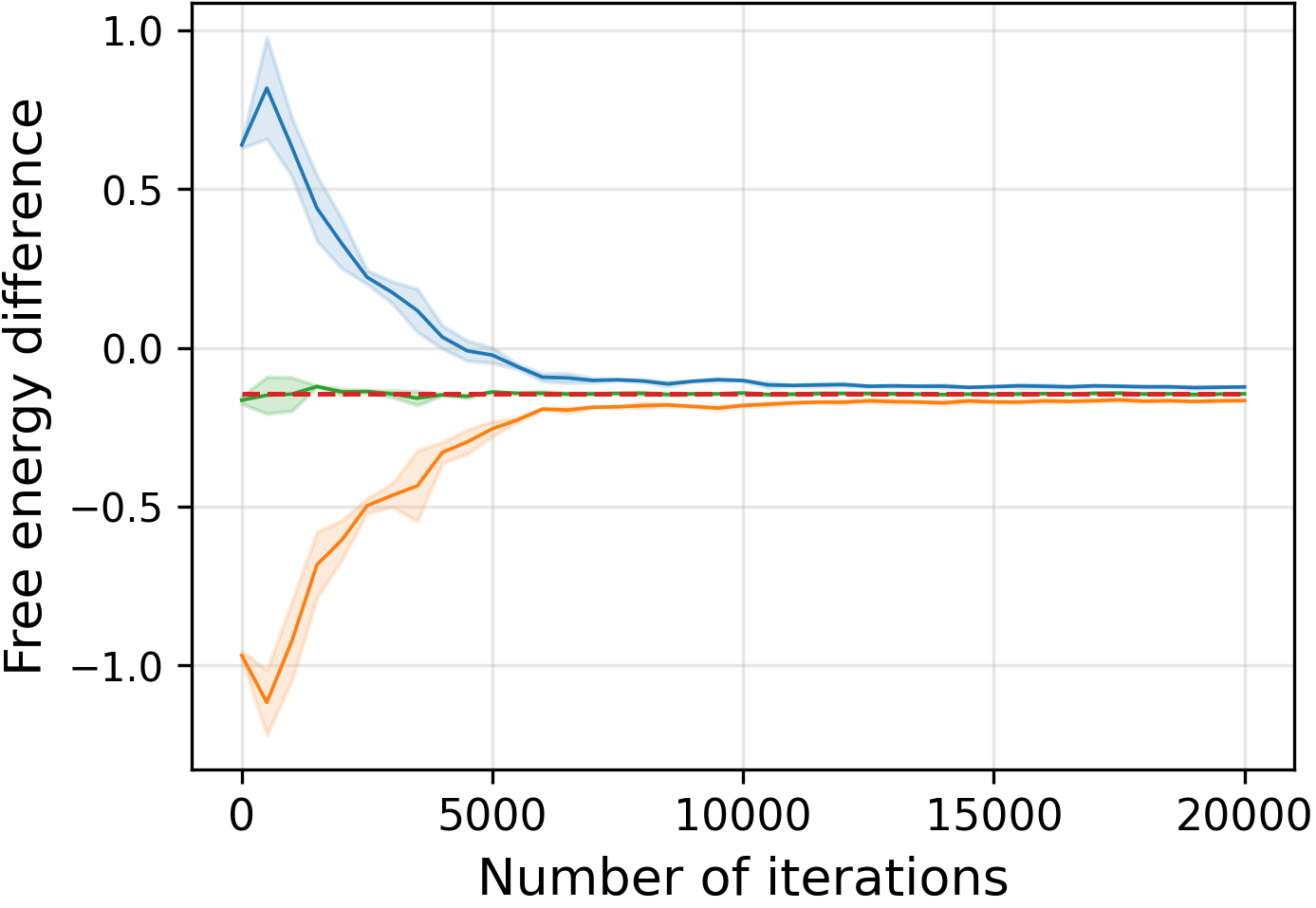}
            \end{minipage}
        \end{minipage}
    \end{minipage}
    \caption{Free energy difference estimates ($\widehat{\Delta F}_F$, $\widehat{\Delta F}_B$, 
    $\widehat{\Delta F}_\mathrm{BAR}$) over training iterations for CTMC FEAT variants 
    on Ising model transport ($\beta = 0.2 \leftrightarrow 0.4$) across lattice sizes. The dashed red line indicates the reference value.}
    \label{fig:deltaF_curve_.2_.4}
\end{figure}

\begin{figure}[H]
    \centering
    \begin{minipage}{\linewidth}
        \begin{minipage}{0.03\linewidth}
            \centering
            \vspace{3mm}
            \rotatebox{90}{\small DFM}
        \end{minipage}
        \begin{minipage}{0.95\linewidth}
            \centering
            \begin{minipage}{0.32\linewidth}
                \centering
                \hspace{3mm} {\small Ising 15x15}
                \includegraphics[width=1.\linewidth]{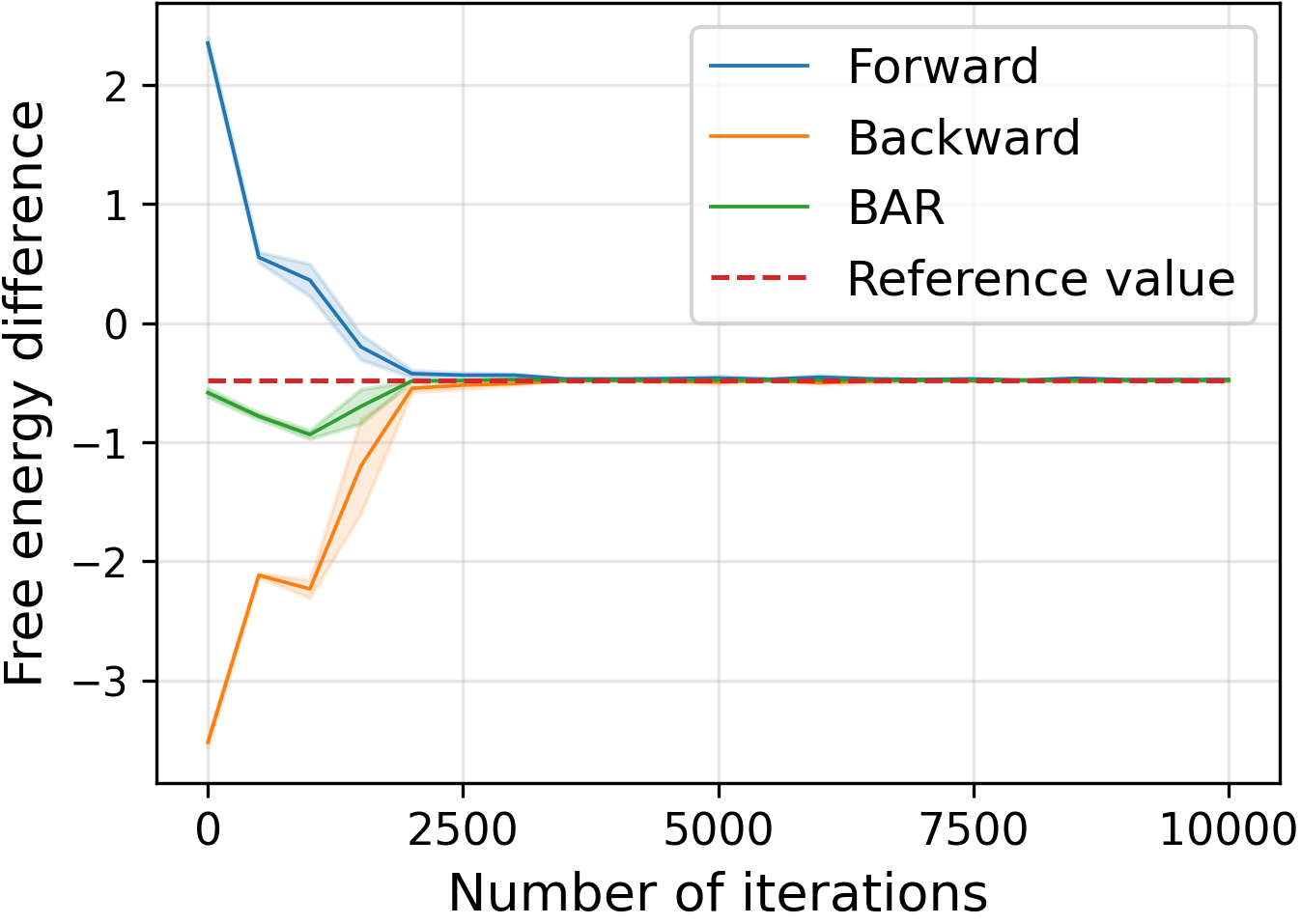}
            \end{minipage}
            \hfill
            \begin{minipage}{0.32\linewidth}
                \centering
                \hspace{3mm} {\small Ising 25x25}
                \includegraphics[width=1.\linewidth]{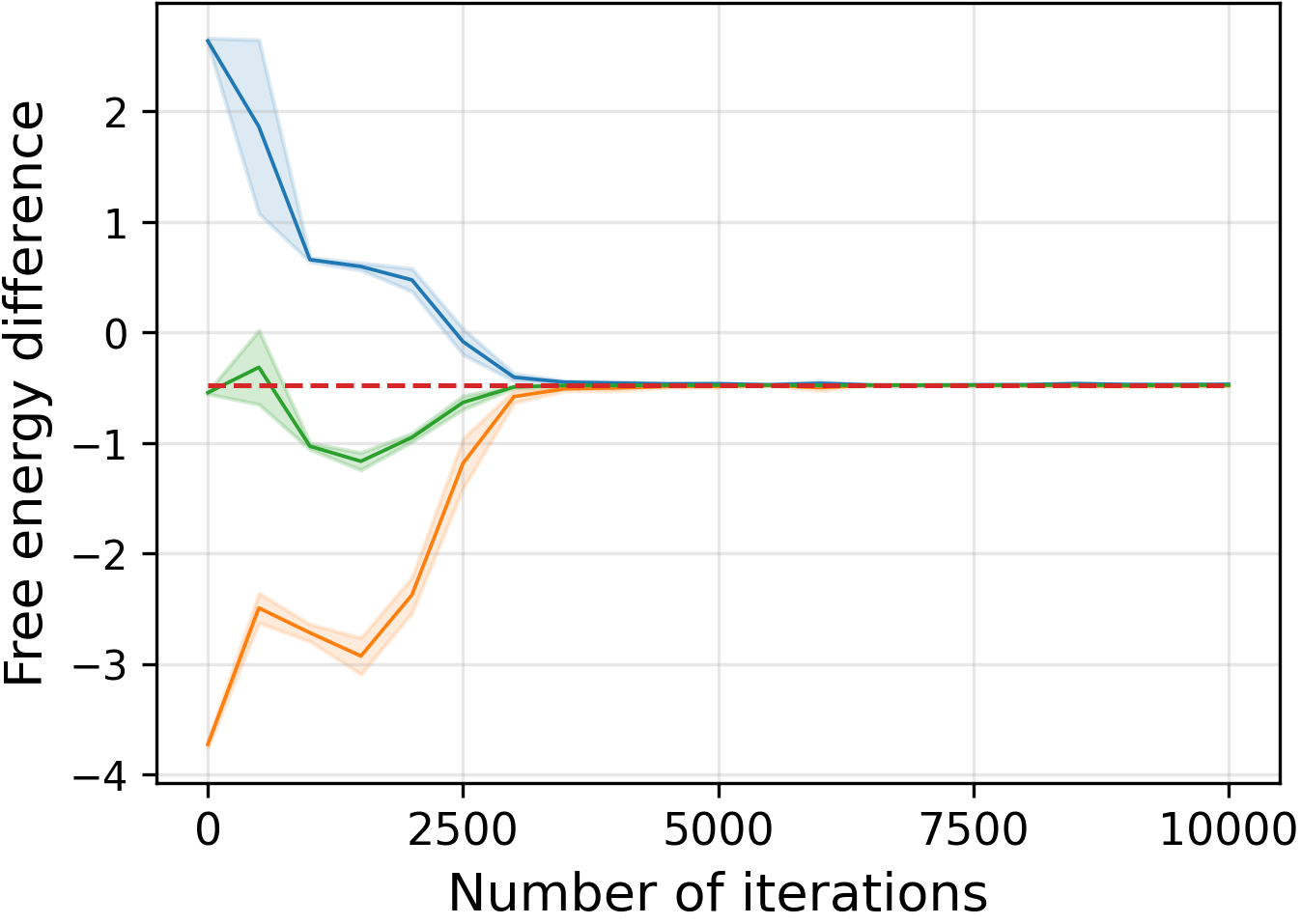}
            \end{minipage}
            \hfill
            \begin{minipage}{0.32\linewidth}
                \centering
                \hspace{3mm} {\small Ising 32x32}
                \includegraphics[width=1.\linewidth]{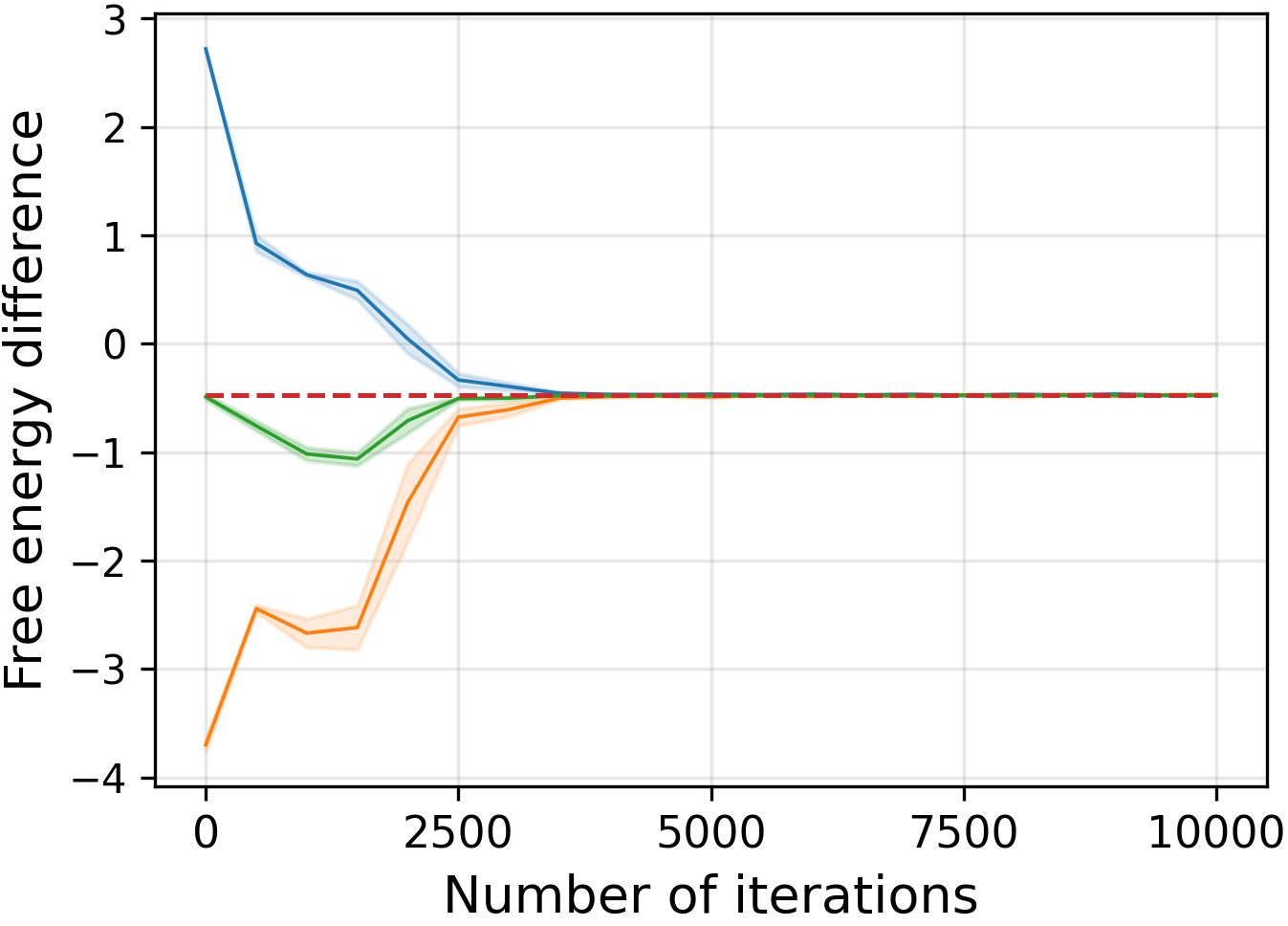}
            \end{minipage}
        \end{minipage}
    \end{minipage}
    \begin{minipage}{\linewidth}
        \begin{minipage}{0.03\linewidth}
            \centering
            \rotatebox{90}{\small GM-no mask}
        \end{minipage}
        \begin{minipage}{0.95\linewidth}
            \centering
            \begin{minipage}{0.32\linewidth}
                \centering
                \includegraphics[width=1.\linewidth]{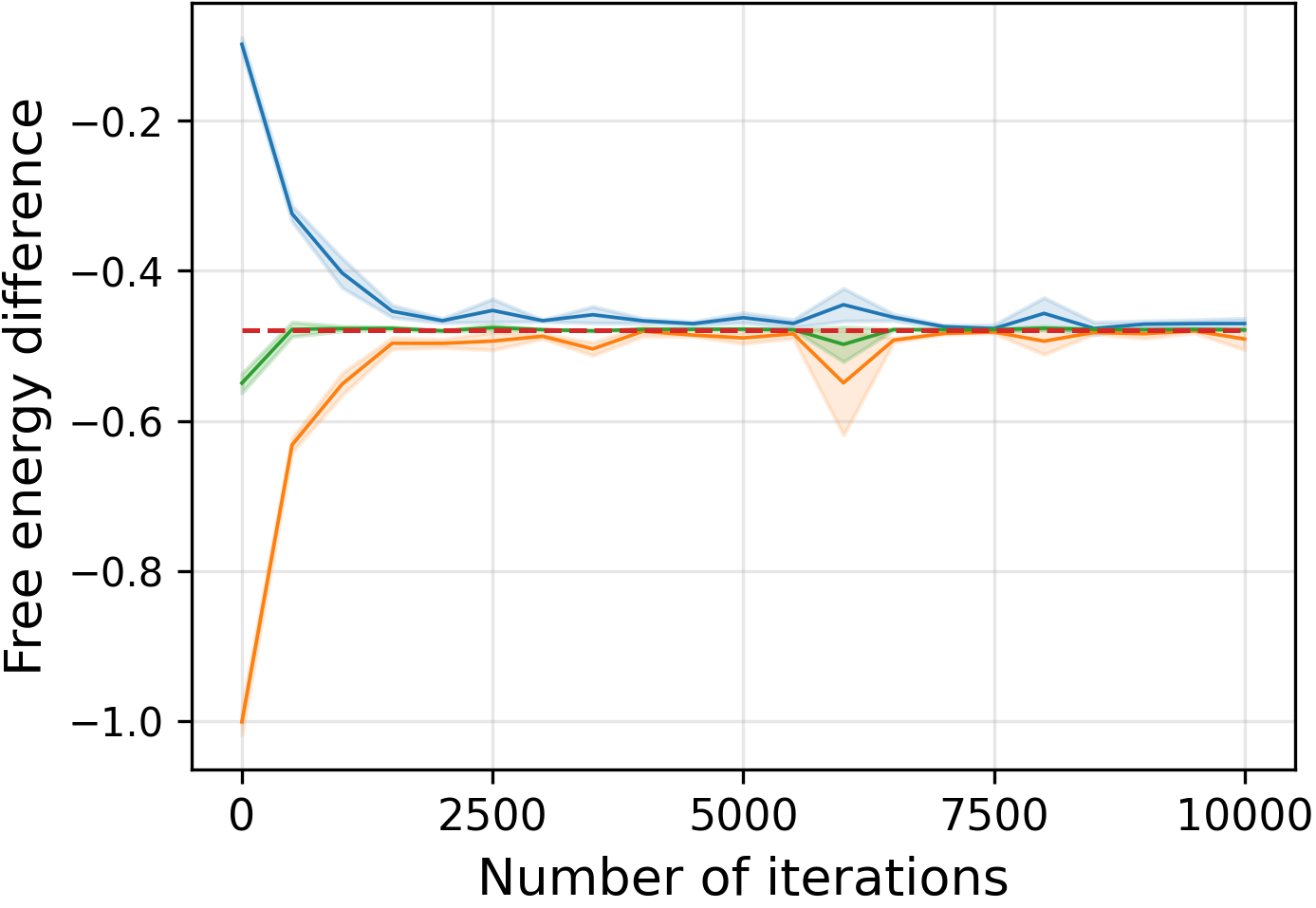}
            \end{minipage}
            \hfill
            \begin{minipage}{0.32\linewidth}
                \centering
                \includegraphics[width=1.\linewidth]{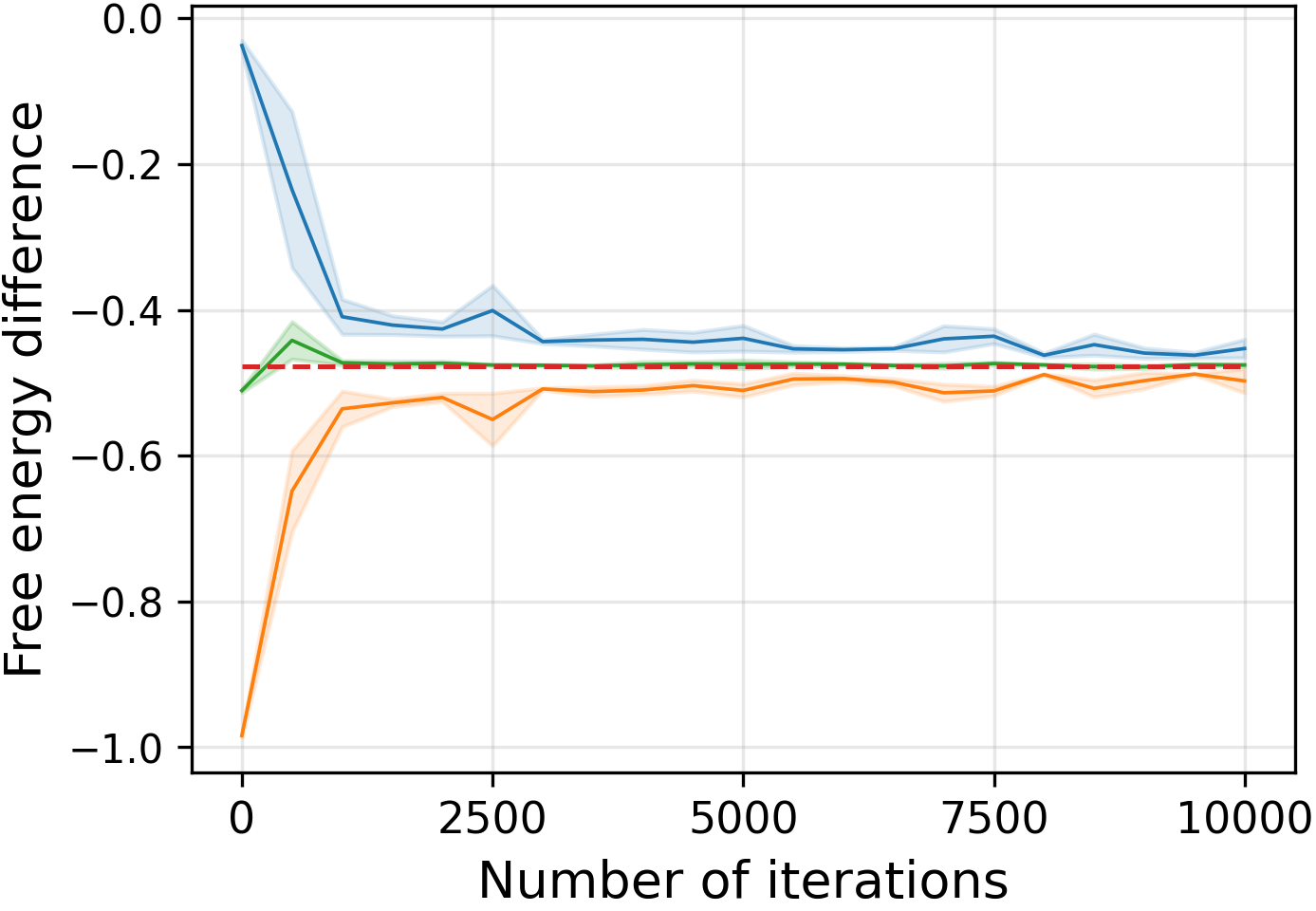}
            \end{minipage}
            \hfill
            \begin{minipage}{0.32\linewidth}
                \centering
                \includegraphics[width=1.\linewidth]{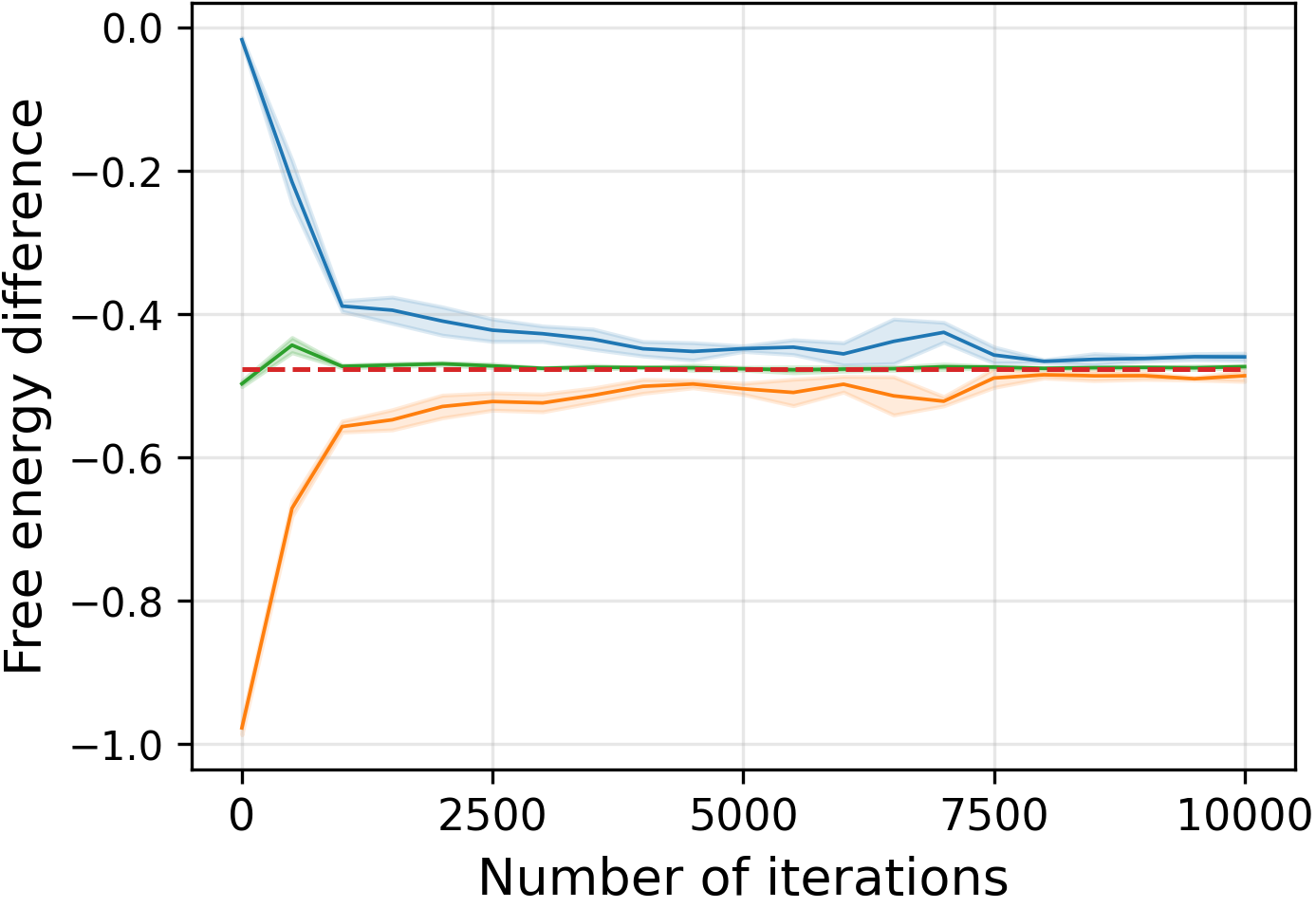}
            \end{minipage}
        \end{minipage}
    \end{minipage}
    \begin{minipage}{\linewidth}
        \begin{minipage}{0.03\linewidth}
            \centering
            \rotatebox{90}{\small GM-w. mask}
        \end{minipage}
        \begin{minipage}{0.95\linewidth}
            \centering
            \begin{minipage}{0.32\linewidth}
                \centering
                \includegraphics[width=1.\linewidth]{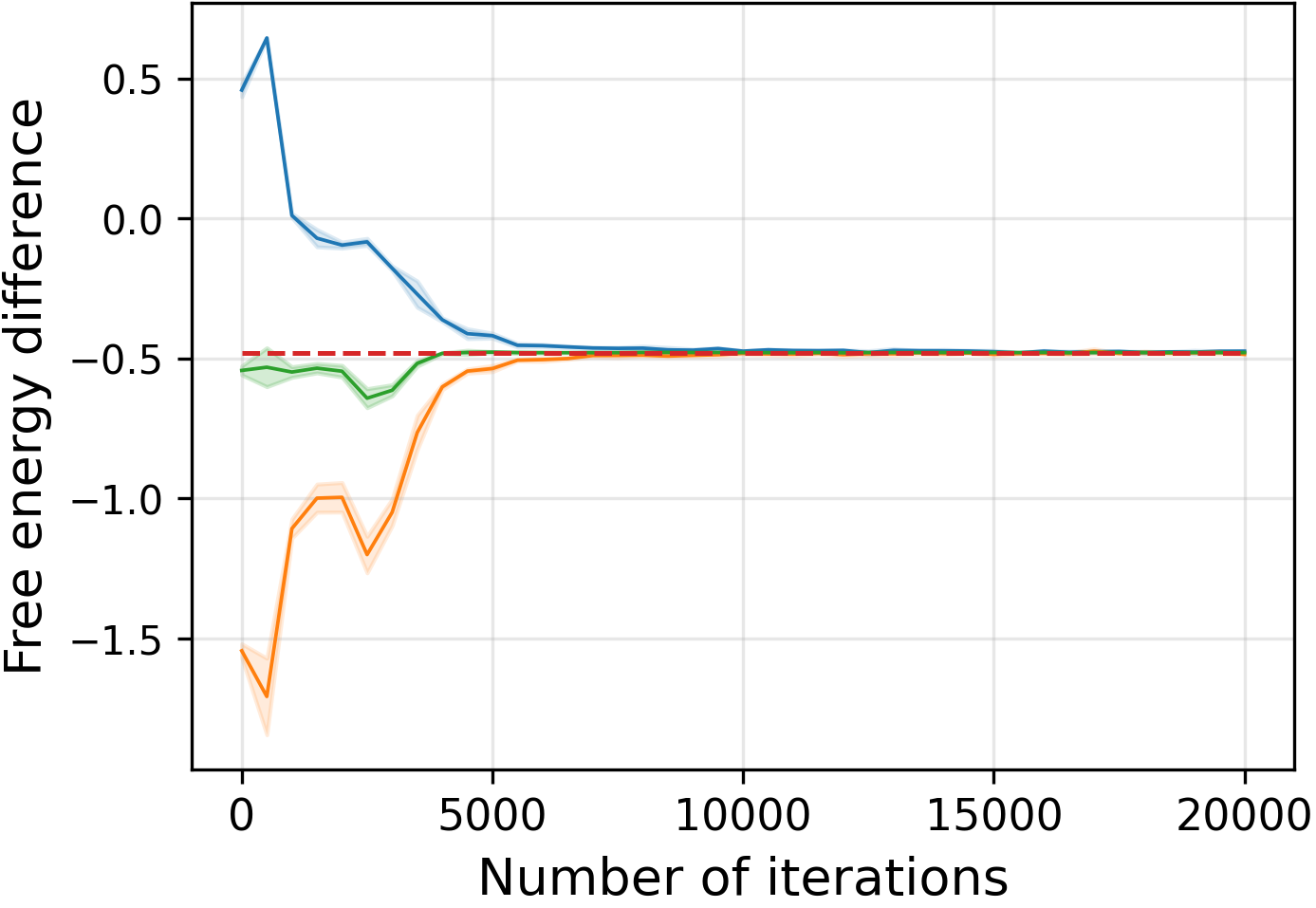}
            \end{minipage}
            \hfill
            \begin{minipage}{0.32\linewidth}
                \centering
                \includegraphics[width=1.\linewidth]{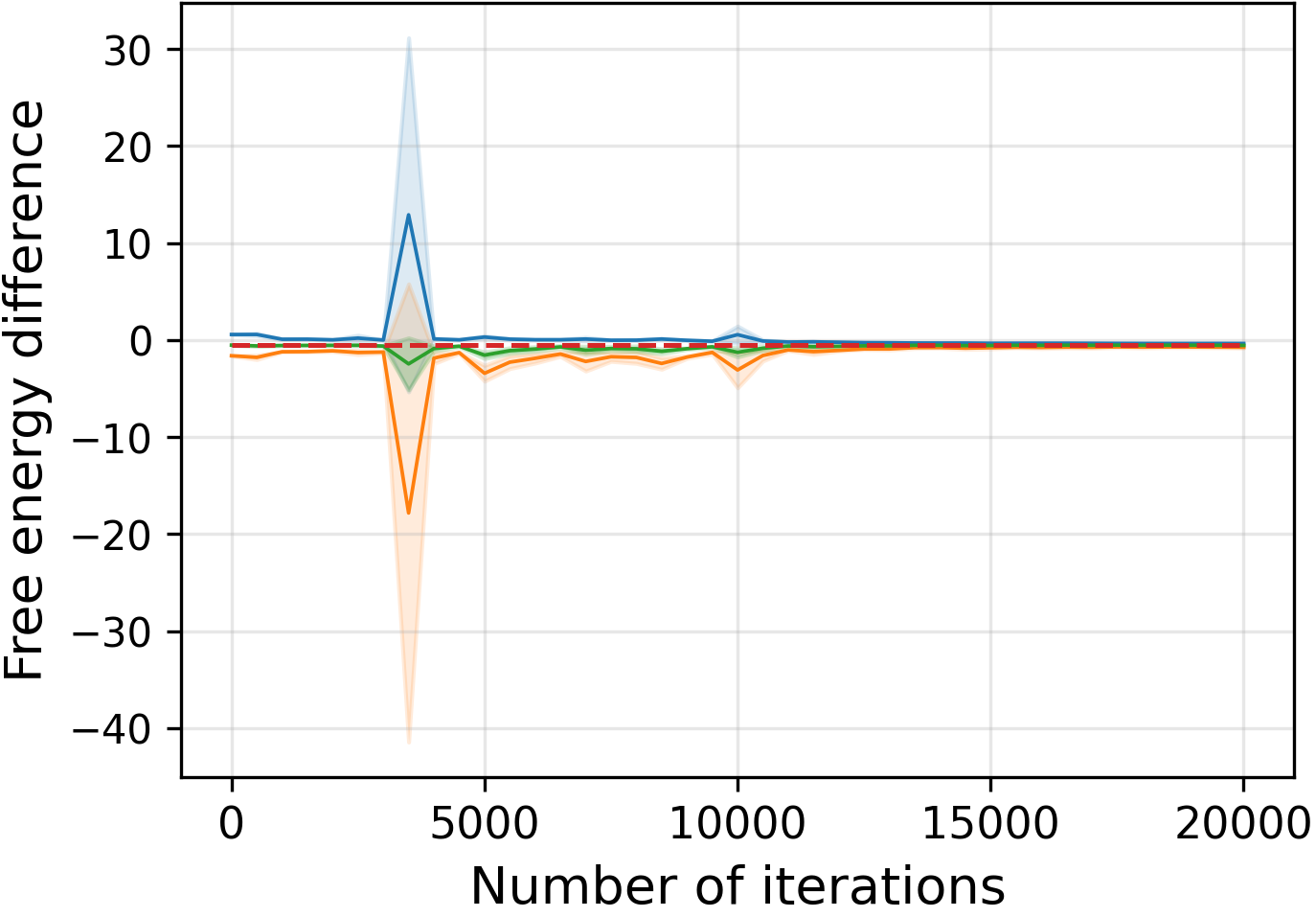}
            \end{minipage}
            \hfill
            \begin{minipage}{0.32\linewidth}
                \centering
                \includegraphics[width=1.\linewidth]{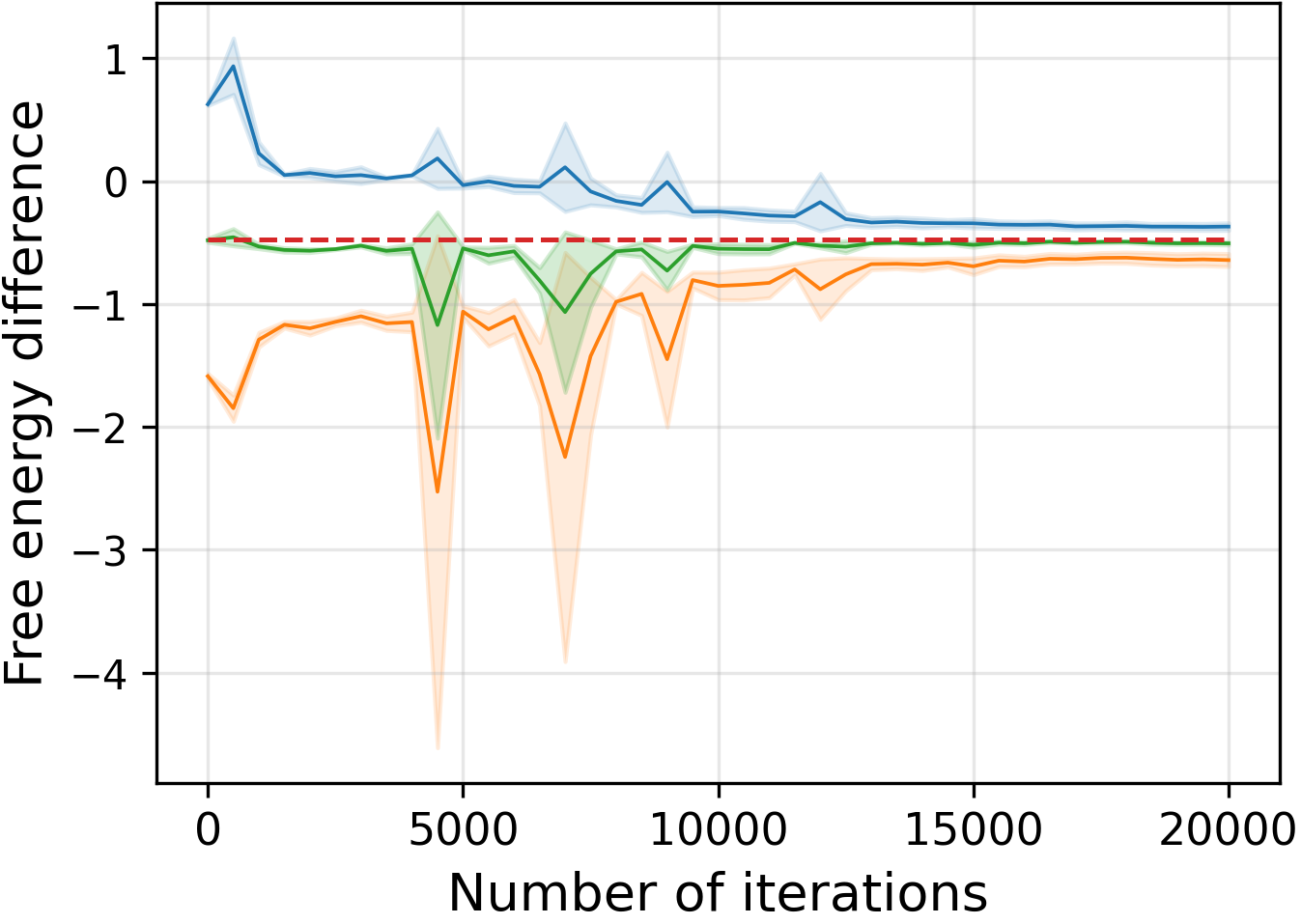}
            \end{minipage}
        \end{minipage}
    \end{minipage}
    \caption{Free energy difference estimates ($\widehat{\Delta F}_F$, $\widehat{\Delta F}_B$, 
    $\widehat{\Delta F}_\mathrm{BAR}$) over training iterations for CTMC FEAT variants 
    on Ising model transport ($\beta = 0.2 \leftrightarrow 0.6$) across lattice sizes. The dashed red line indicates the reference value.}
    \label{fig:deltaF_curve_.2_.6}
\end{figure}

\begin{table}[H]
\centering
\centering
\caption{Ising model using CTMC-FEAT $\beta=0.2  \leftrightarrow  0.6$.}
\label{tab:discrete_feat_0.2_0.6_estimated_value}
\scriptsize
\setlength{\tabcolsep}{4pt}
\renewcommand{\arraystretch}{0.88}
\resizebox{0.7\linewidth}{!}{%
\begin{tabular}{llcc}
\toprule
\textbf{Dim} $D$ & \textbf{Method} & \makecell{\textbf{Estimates}\\($\Delta F / D \times 10^3$)}& \makecell{\textbf{Reference value}\\($\Delta F / D \times 10^3$)} \\
\midrule
\multirow{3}{*}{15$\times$15 (225)}
    & DFM         & -478.68 {\tiny $\pm 0.19$} & \multirow{3}{*}{-478.68} \\
    & GM-no mask  & -478.81 {\tiny $\pm 1.03$} &  \\
    & GM-w.\ mask & -478.91 {\tiny $\pm 0.80$} &  \\
\midrule
\multirow{3}{*}{25$\times$25 (625)}
    & DFM         & -476.21 {\tiny $\pm 0.81$} & \multirow{3}{*}{-476.71} \\
    & GM-no mask  & -475.53 {\tiny $\pm 2.69$} &  \\
    & GM-w.\ mask & -539.61 {\tiny $\pm 35.3$} &  \\
\midrule
\multirow{3}{*}{32$\times$32 (1024)}
    & DFM         & -474.32 {\tiny $\pm 0.60$} & \multirow{3}{*}{-476.28} \\
    & GM-no mask  & -473.16 {\tiny $\pm 1.75$} &  \\
    & GM-w.\ mask & -506.83 {\tiny $\pm 15.9$} &  \\
\bottomrule
\end{tabular}%
}
\end{table}

\subsection{AR FEAT on Ising Model}

\begin{figure}[H]
    \centering
    \begin{minipage}{\linewidth}
        \begin{minipage}{0.95\linewidth}
            \centering
            \begin{minipage}{0.325\linewidth}
                \centering
                {\small Ising 15x15}
                \includegraphics[width=1.\linewidth]{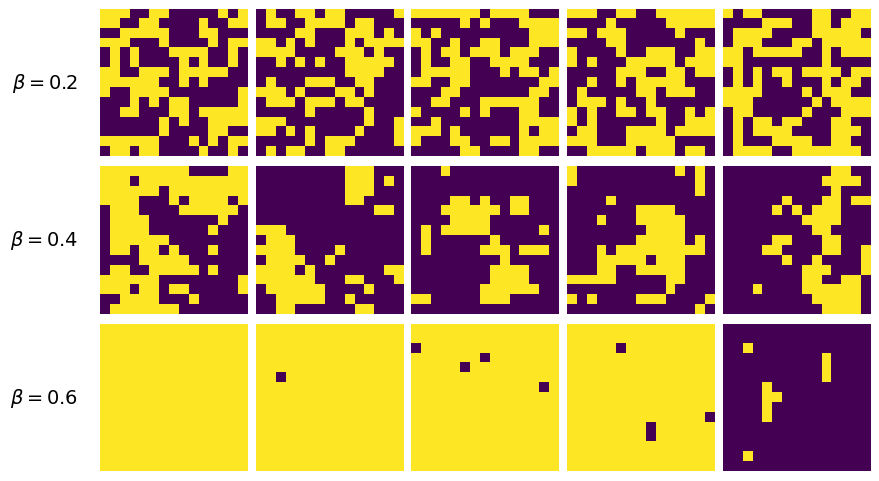}
            \end{minipage}
            \hfill
            \begin{minipage}{0.325\linewidth}
                \centering
                {\small Ising 25x25}
                \includegraphics[width=1.\linewidth]{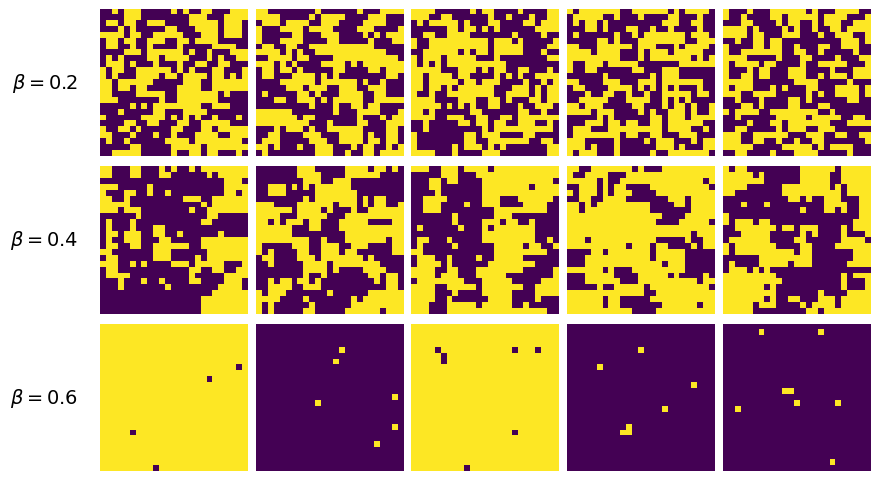}
            \end{minipage}
            \hfill
            \begin{minipage}{0.325\linewidth}
                \centering
                {\small Ising 32x32}
                \includegraphics[width=1.\linewidth]{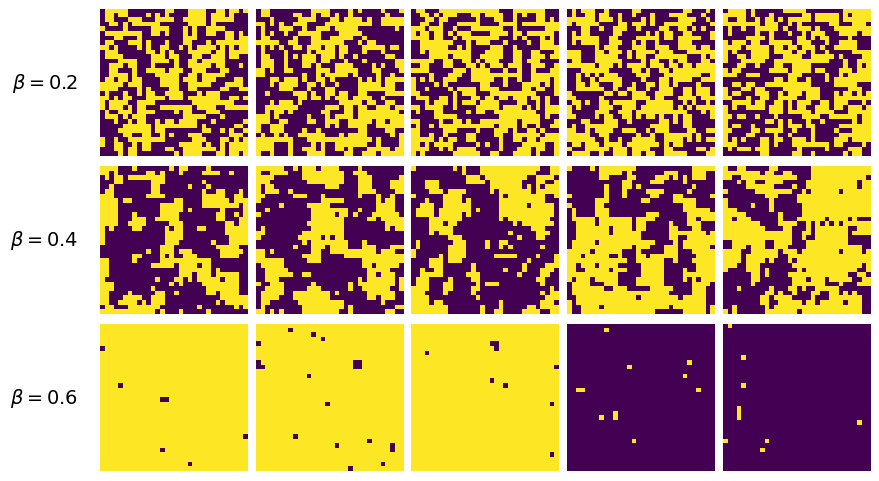}
            \end{minipage}
        \end{minipage}
    \end{minipage} 
    \caption{Samples generated by AR model at different $\beta$.}
\end{figure}
\subsection{Ising Fluid}

\begin{figure}[H]
    \centering
    \includegraphics[width=0.95\linewidth]{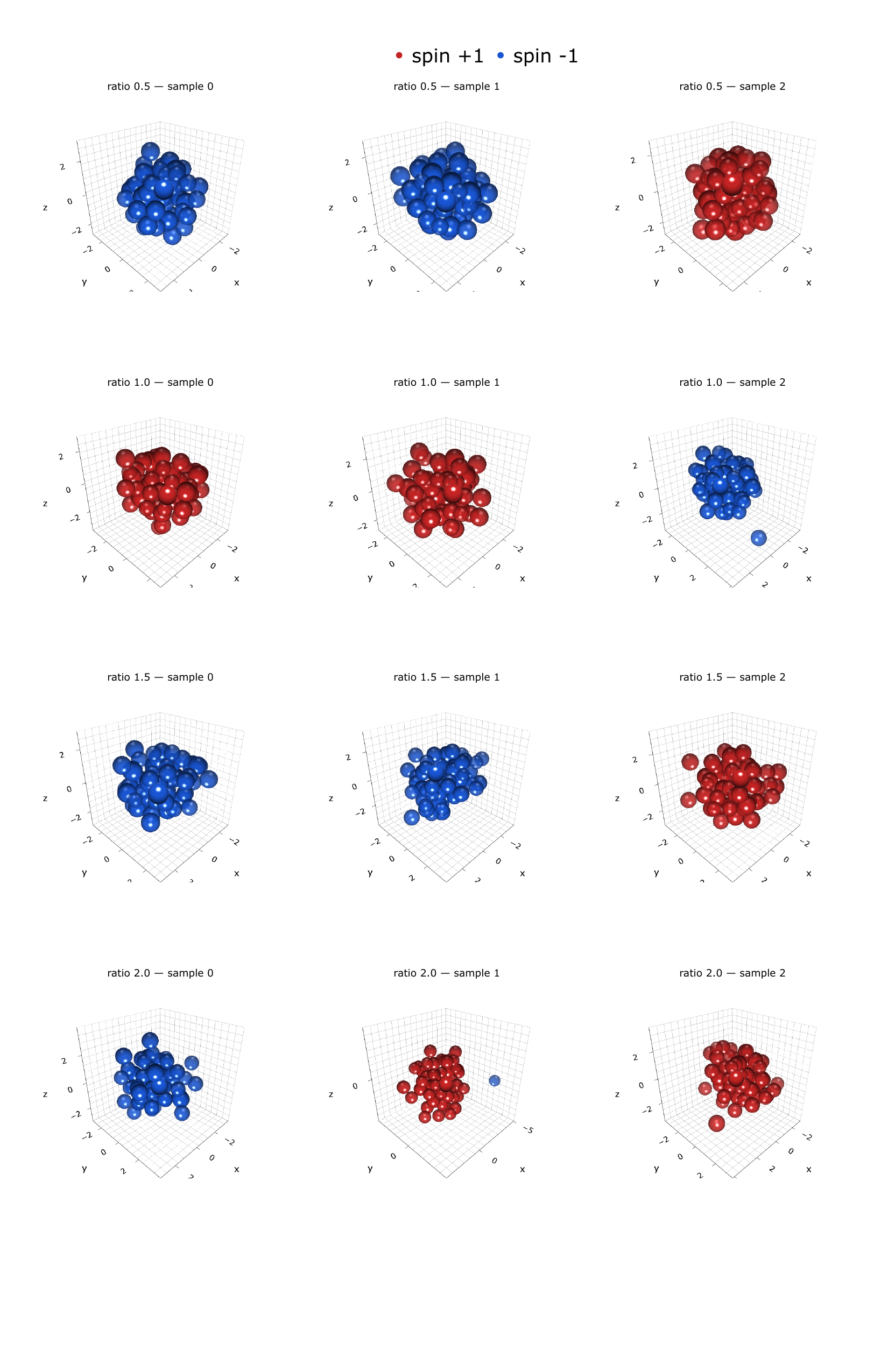}
    \caption{Visualization of 55 particles of Ising fluid (first row: $R=0.5$, second row: $R=1.0$, third row: $R=1.5$, last row: $R=2.0$). Particle size is fixed across plots. }
    \label{fig:IsingFluid_55}
\end{figure}
\newpage

\begin{figure}[H]
    \centering

    \begin{subfigure}[t]{\textwidth}
        \centering
        \includegraphics[width=\linewidth]{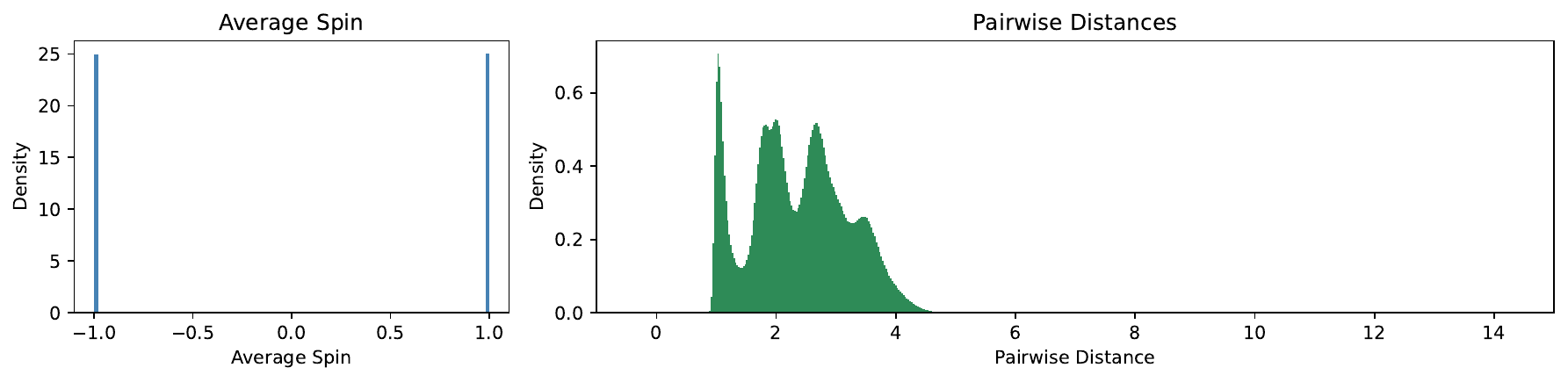}
        \caption{$R = 0.5$}
        \label{fig:plot-55-05}
    \end{subfigure}
    \hfill
    \begin{subfigure}[t]{\textwidth}
        \centering
        \includegraphics[width=\linewidth]{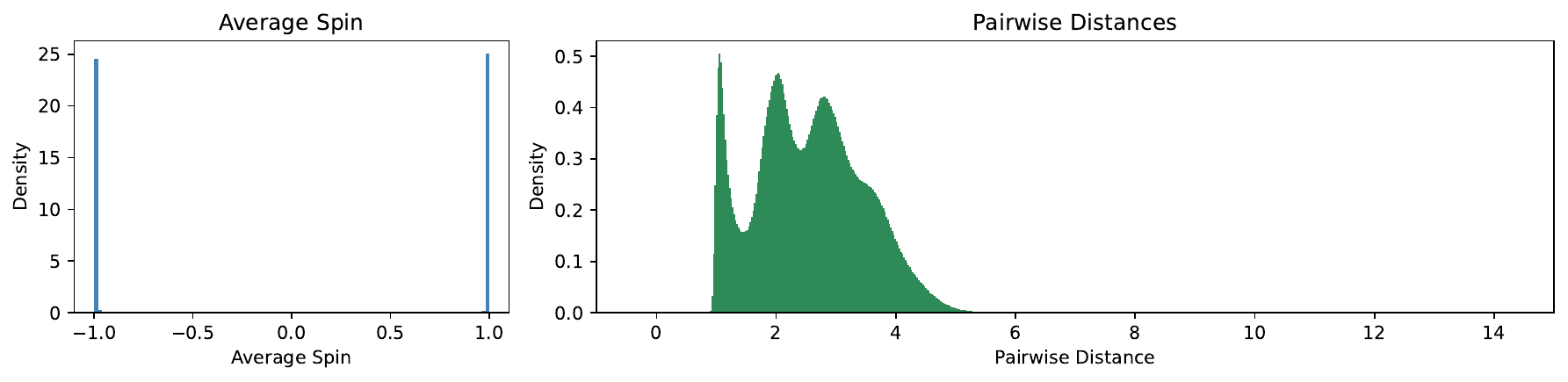}
        \caption{$R = 1.0$}
        \label{fig:plot-55-10}
    \end{subfigure}

    \vspace{0.5em}

    \begin{subfigure}[t]{\textwidth}
        \centering
        \includegraphics[width=\linewidth]{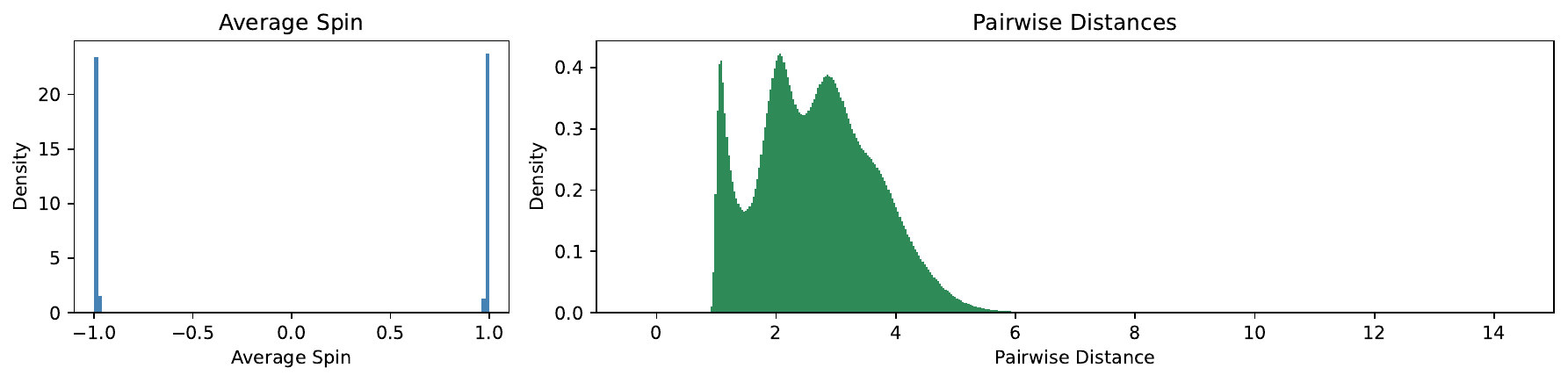}
        \caption{$R = 1.5$}
        \label{fig:plot-55-15}
    \end{subfigure}
    \hfill
    \begin{subfigure}[t]{\textwidth}
        \centering
        \includegraphics[width=\linewidth]{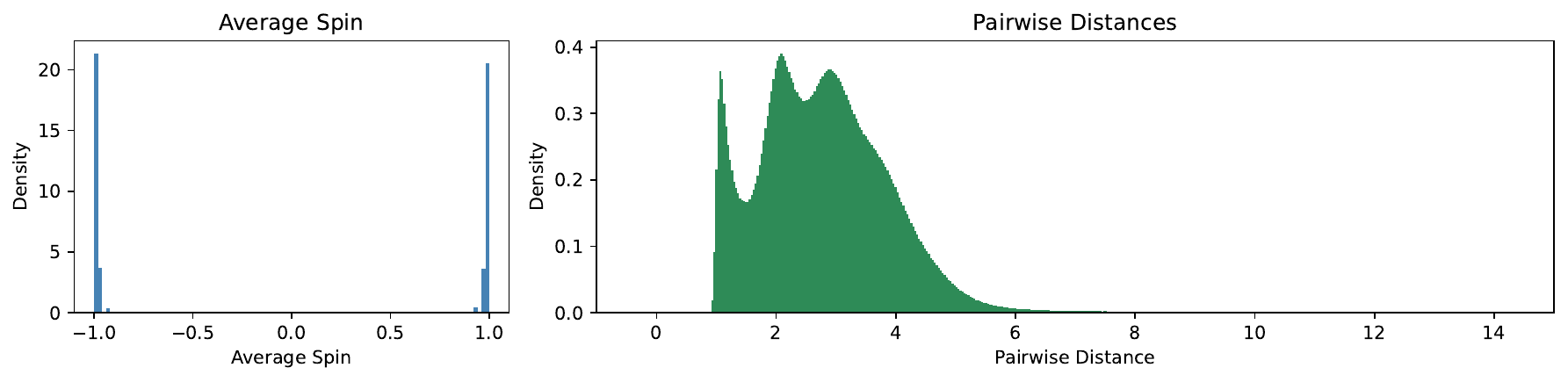}
        \caption{$R = 2.0$}
        \label{fig:plot-55-20}
    \end{subfigure}

    \caption{Average spin and pairwise distances for different values of $R$ for 55-particle Ising fluid.}
    \label{fig:spin-distance-lambda}
\end{figure}

\end{document}